\documentclass[11pt]{article}
\usepackage{arxiv}

\definecolor{lw}{RGB}{11,0,249}

\let\tilde\widetilde

\usepackage{cleveref}
\usepackage{caption}
\usepackage{mdframed}
\usepackage{wrapfig}
\usepackage{esvect}

\newcommand{\smax}{{\mathtt{smax}}}
\newcommand{\atan}{\mathop{\mathrm{atan2}}}

\newmdenv[
  backgroundcolor=gray!10,
  skipabove=1em,
  skipbelow=1em,
  leftline=true,
  topline=false,
  bottomline=false,
  rightline=false,
  linecolor=gray!88,
  linewidth=4pt
]{githubquote}

\usepackage{stmaryrd}
\usepackage{newpxtext}
\usetikzlibrary{calc}

\title{On the Mechanism and Dynamics of Modular Addition: Fourier Features, Lottery Ticket, and Grokking}

\author{Jianliang He \qquad Leda Wang \qquad Siyu Chen \qquad Zhuoran Yang
\vspace{3pt}
\and
{\small\textit{Department of Statistics and Data Science, Yale University}} 
\and
{
    \small\texttt{\{jianliang.he, leda.wang, siyu.chen.sc3226, zhuoran.yang\}@yale.edu}
}
}
\date{}

\begin{document}

\maketitle

\begin{abstract}
We present a comprehensive analysis of how two-layer neural networks learn features to solve the modular addition task.
Our work provides a \highlight{full mechanistic interpretation} of the learned model and a theoretical explanation of its training dynamics.
While prior work has identified that individual neurons learn single-frequency Fourier features and phase alignment, it does not fully explain how these features combine into a global solution. 
We bridge this gap by formalizing a diversification condition that emerges during training when overparametrized, consisting of two parts: \highlight{phase symmetry} and \highlight{frequency diversification}. 
We prove that these properties allow the network to collectively approximate a flawed indicator function on the correct logic for the modular addition task.
While individual neurons produce noisy signals, the phase symmetry enables a \highlight{majority-voting scheme} that cancels out noise, allowing the network to robustly identify the correct sum.
Furthermore, we explain the emergence of these features under random initialization via a \highlight{lottery ticket mechanism}. 
Our gradient flow analysis proves that frequencies compete within each neuron, with the ``winner'' determined by its initial spectral magnitude and phase alignment.
From a technical standpoint, we provide a rigorous characterization of the layer-wise phase coupling dynamics and formalize the competitive landscape using the ODE comparison lemma.
Finally, we use these insights to demystify grokking, characterizing it as a three-stage process involving memorization followed by two generalization phases, driven by the competition between loss minimization and weight decay.\footnote{Our code is available at \href{https://github.com/Y-Agent/modular-addition-feature-learning}{GitHub}. For interactive visualizations and further experimental results, see our Hugging Face Space at \href{https://huggingface.co/spaces/y-agent/modular-addition-feature-learning}{Hugging Face}.}
\end{abstract}

\vspace{10pt}

{
\tableofcontents
}
\newpage
 
\section{Introduction}

A central mystery in deep learning is how neural networks learn to generalize. 
While these models are trained to find patterns in data, the precise way they build internal representations through gradient-based training and make predictions on new, unseen data is not fully understood. 
The sheer complexity of modern networks often obscures the fundamental principles at work. 
To gain a clearer view, researchers often simplify the problem by studying how networks solve simple but rich tasks that can be precisely analyzed. 
By meticulously analyzing the learning process in these controlled "toy" settings, we can uncover basic mechanisms that may apply more broadly. The modular addition task, $(x, y) \mapsto (x+y) \bmod{p}$ has emerged as a  canonical problem for this approach, as it is simple to define yet reveals surprisingly complex and insightful learning dynamics.

\begin{figure}[!ht]
\centering
\includegraphics[width=0.85\textwidth]{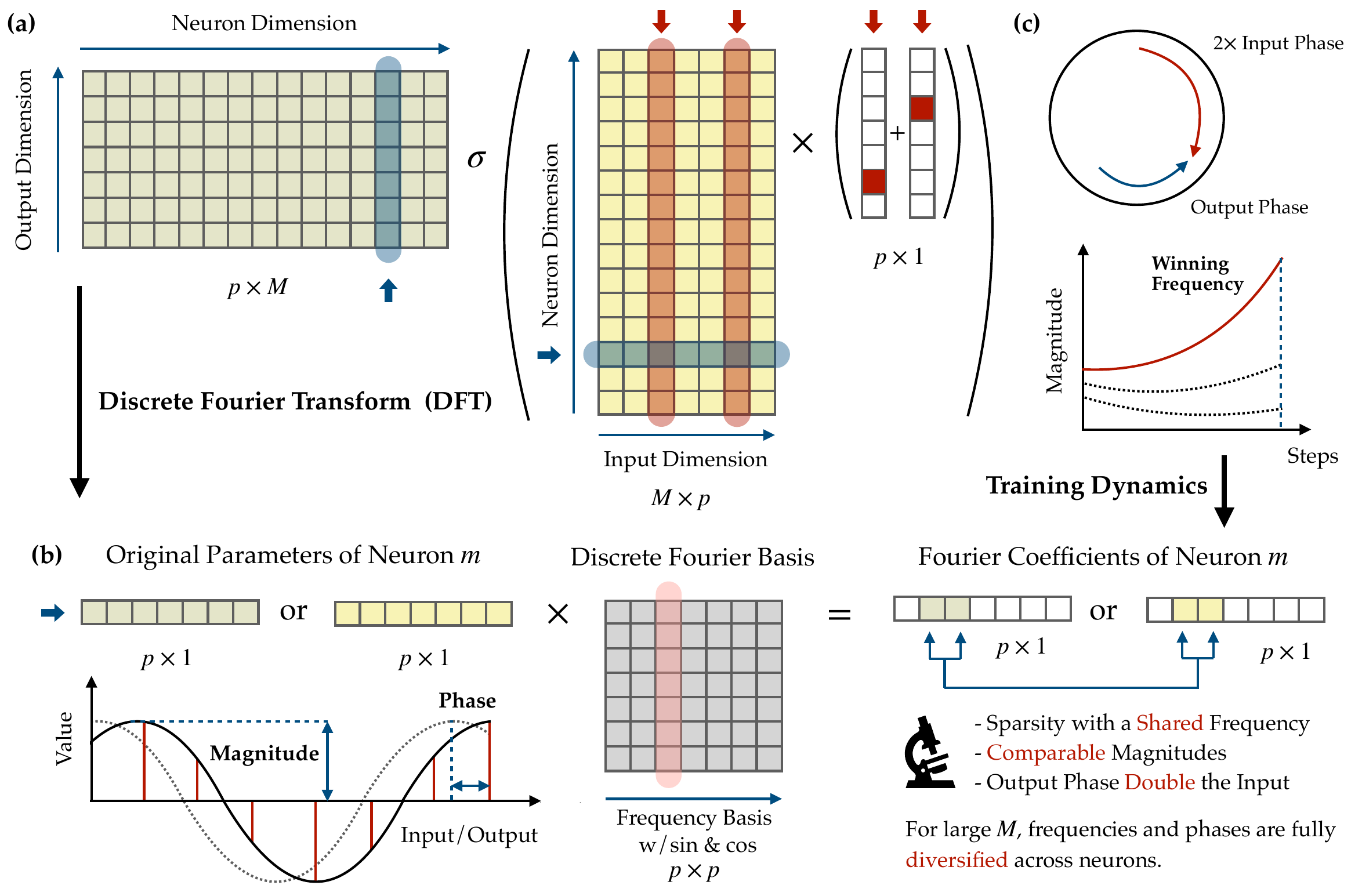}
\caption{An illustration of the primary analytical technique and results. Discrete Fourier Transform (DFT) is utilized to quantitatively interpret the mechanism of learned models within the feature space, revealing the training dynamics that result in consistent feature learning. 
Figure (a) shows the neural network architecture --- we adopt a two-layer fully connected neural network to learn the modular addition task. The inputs $x$ and $y$ are represented as one-hot vectors in $\mathbb{R}^p$,  $\sigma (\cdot )$ denotes the  activation function, and the width of the neural network is denoted by $M$. Figure (b) illustrates the technique of DFT. We apply DFT to the weights at the input and output layers, respectively. Each neuron involves two weight vectors, which lead to two magnitudes and phases. (See Observation 1 in \S\ref{sec:experiments}.) Figure (c) illustrates some of our key empirical observations --- phase alignment (Observation 2), phase symmetry (Observation 3), and lottery ticket mechanism (Observation 6). 
}
\vspace{-3mm}
\end{figure}

Prior work has established that neural networks trained on modular arithmetic discover a \emph{Fourier feature} representation, embedding inputs onto a circle to transform addition into geometric rotation \citep{nanda2023progress, zhong2023clock}. These studies have also highlighted the intriguing \textit{grokking} phenomenon, where a model suddenly generalizes long after it has memorized the training data \citep{power2022grokking, liu2022towards}. While these observations are foundational, prior work has not yet offered a conclusive, end-to-end explanation of the learning process. Existing theoretical accounts often rely on mean-field approximations \citep{wang2025neural} or analyze non-standard loss functions 
\citep{morwani2023feature,tian2024composing}, leaving a gap in our understanding of the finite-neuron dynamics under standard training. 
This leaves fundamental questions unanswered: 
\begin{enumerate}
\setlength{\itemsep}{-1pt}
    \item [({\sf Q1})] \textbf{Mechanistic Interpretability:} How does the trained network leverage its learned Fourier features to implement the modular addition algorithm precisely?
    \item [({\sf Q2})] \textbf{Training Dynamics:} How do these specific Fourier features reliably emerge from gradient-based training with random initialization?
\end{enumerate}

In this paper, we provide comprehensive answers to these questions via systematic experiments and a rigorous theoretical analysis of two-layer networks. 
For ({\sf Q1}), while prior work has identified that neurons learn \emph{single-frequency} features and exhibit \emph{phase alignment}, we \emph{quantitatively} characterize how these local features are synthesized into a global mechanism.
Specifically, we demonstrate that the network develops a collective diversification condition (see Observation 3 and 4, formalized in Definition \ref{def:diversification}) characterized by two key properties:
(i) \highlight{frequency diversification}: The network ensures that the full spectrum of necessary Fourier components is represented across the neuron population.
(ii) \highlight{phase symmetry}: Within each frequency group, neurons exhibit high-order symmetry to ensure the balance required for noise cancellation.
We rigorously prove that this dual condition allows the network to aggregate the noisy, biased signals of individual neurons into a collective approximation of a \highlight{flawed indicator function} (see Theorem \ref{prop:mech}) and how these patterns emerge from gradient training from a mean-field perspective driven by the \emph{layer-wise phase coupling dynamics} (see Theorem \ref{thm:single_freq}, \ref{thm:phase_align_informal} and Proposition \ref{prop:relu_dyn} with proof sketch).

To address ({\sf Q2}), we explain the emergence of these features via \highlight{lottery ticket mechanism} (see Observation 6). 
Our analysis of the gradient flow reveals a competitive dynamic in which multiple frequency components compete within each individual neuron during training. 
Specifically, by applying the \emph{ODE comparison lemma}, we prove that the frequency component with the largest initial magnitude and the smallest phase misalignment grows exponentially faster than its competitors, eventually becoming the single dominant ``winner'' (see Corollary \ref{cor:lottery_informal}). 
This provides a rigorous, neuron-wise explanation for the learned single-frequency structure, demonstrating how random initialization determines which specific Fourier features the network ultimately adopts.

Finally, having established the underlying mechanism and training dynamics, we can address the final bonus question regarding the \highlight{grokking} phenomenon:
\begin{enumerate}
    \item [({\sf Q3})] \textbf{Memorization to Generalization:} How do these mechanisms and dynamics explain the full timeline of grokking, from memorization to delayed generalization?
\end{enumerate}
We characterize it as a three-stage process driven by the competition between loss minimization and weight decay. We demonstrate that the model first memorizes training data through a ``perturbed'' version of the lottery ticket mechanism, followed by two generalization stages where weight decay prunes residual noise and refines the learned features into the sparse Fourier representation required for generalization. 
By providing a complete, end-to-end theoretical and empirical account of this learning problem, our work offers a concrete foundation for understanding the interplay between feature learning, training dynamics, and generalization in neural networks. 

\subsection{Related Work}
\paragraph{Modular Addition and Grokking Phenomenon.}
Studying simple tasks like modular addition has revealed deep insights into neural network mechanisms \citep[e.g.,][]{power2022grokking}. 
Reverse-engineering has shown models learn a Fourier feature, converting addition into a geometric rotation by embedding numbers on a circle \citep{nanda2023progress,zhong2023clock,gromov2023grokking,doshi2024grokking,yip2024modular,mccracken2025uncovering}.  
This discovery is central to understanding grokking, a phenomenon where generalization suddenly emerges long after overfitting, which these papers study using specific train-test data splits \citep[e.g.,][]{liu2022towards,doshi2023grok,yip2024modular,mallinar2024emergence, wu2025towards}.

Theoretical understanding of this modular addition task, however, remains incomplete.
\citet{morwani2023feature} characterize the loss landscape under the max-margin framework using a non-standard $\ell_{2,3}$-regularization. The work 
\citet{tian2024composing} further analyzes the landscape of a modified $\ell_2$-loss within the Fourier space, generalized these results to data with semi-ring structures on Abelian groups, and provided a heuristic derivation for the mean-field dynamics of frequencies.
Recently, \citet{wang2025neural} formalize and extende  these mean-field results by analyzing the Wasserstein gradient flow under a geometric equivariance constraint, and \citet{kunin2025alternating} characterize  the Fourier feature emergence as a trade-off between maximizing a utility function over the dormant neurons and minimizing a cost function over active ones.
While \citet{tian2024composing} and \citet{wang2025neural} provide a characterization of a simpler, mean-field dynamics, a full analytical result explaining the alignment and competition dynamics at the finite, neuron-wise level remains an open problem.
A different approach studies grokking modular arithmetic via the average gradient outer product for backpropagation-free models \citep{mallinar2024emergence}.
Another line of research focuses on grokking dynamics and frames it as a two-phase process, transitioning from an initial lazy (kernel) regime to a later rich (feature) regime \citep{kumargrokking,lyu2023dichotomy,mohamadi2024you,ding2024survival}, which are broadly related to our work.
Recently, the work of \citet{tian2025mathbf} proposes  a three-stage theoretical framework for grokking dynamics that includes lazy learning, independent feature learning, and interactive feature learning. 
This three-stage process echoes our own observations for modular addition in \S\ref{sec:grokking}, \S\ref{ap:grokking}.
A more detailed comparison with related work is provided in \S\ref{ap:comparison_with_existing_results}.
 
\paragraph{Training Dynamics of Neural Networks.}
To understand how neural networks perform feature learning, a significant body of work has analyzed the training dynamics of neural networks under gradient-based optimization. 
This research typically focuses on settings where the target function exhibits a low-dimensional structure, such as single-index \citep{ba2022high,lee2024neural,berthier2024learning,chen2025can} and multi-index models \citep{damian2022neural,arnaboldi2024repetita,ren2025emergence}.
Taking a step further, \citet{allen2019can,shi2022theoretical,shi2023provable} have considered more general cases, analyzing function classes that encode latent features rather than relying on the explicit structure of index models.
While insightful, these works assume well-structured target functions and clearly defined features, leaving the feature learning from natural data largely unclear.

\paragraph{Notation.} 
For any positive integer $n\in\mathbb{N}^+$, let $[n]=\{i\in\ZZ:1\leq i\leq n\}$. 
Let $\mathbb{Z}_p$ denote the set of integers modulo $p$.
The $\ell_p$-norm is denoted by $\|\cdot\|_p$.
For a vector $\nu\in\RR^d$, its $i$-th entry is denoted by $\
\upsilon[i]$.
The softmax operator, $\smax(\cdot)$, maps a vector to a probability distribution, where the $i$-th component is given by $\smax(\upsilon)_i=\exp(\upsilon_i)/\sum_j\exp(\upsilon_j)$. 
For two non-negative functions $f(x)$ and $g(x)$ defined on  $x\in\RR^+$, we write $f(x) \lesssim g(x)$  or $f(x)$ as $ O(g(x))$ if there exists two constants $c> 0$  such that $f(x) \leq c \cdot g(x)$, and write $f(x) \gtrsim g(x)$ or $f(x)$ if there exists two constants $c> 0$  such that $f(x) \geq c \cdot g(x)$. We write $f(x) \asymp g(x)$ or $f(x)=\Theta(g(x))$ if $f(x) \lesssim g(x)$ and $g(x) \lesssim f(x)$.

\section{Preliminaries}
\paragraph{Modular Addition.}
In a \emph{modular addition} task, we aim to learn 
whose form is given by $(x,y)\mapsto (x + y) \bmod p$ for $(x,y)\in\ZZ_p^2$. 
The complete dataset is given by $\cD_{\sf full}=\{(x, y, z) \mid x, y \in \mathbb{Z}_p, z = (x+y)\bmod p\}$ which consists of all possible input pairs $(x,y)$ and their corresponding modular sums $z$. 
This dataset is then partitioned into a training set for learning and a disjoint test set for evaluation.
The performance of learned model is assessed on the test set by evaluating how accurately it predicts $(x+y) \bmod p$ for unseen input pairs.
Such a training setup is widely used in the literature to study phase transition phenomena, such as grokking \citep[e.g.,][]{nanda2023progress}, and feature learning \citep[e.g.,][]{morwani2023feature} in modular arithmetic tasks.

\paragraph{Two-Layer Neural Network.} 
We consider a two-layer neural network with $M$ hidden neurons and no bias terms. 
Each input $x$ is assigned to embedding vectors $h_x\in\RR^d$, where $h:\ZZ_p\mapsto\RR^d$ is an embedding function of dimension $d\in\NN$.
The embedding can be either the canonical embedding $e_x\in\RR^p$ in which case $d=p$ or a trainable one $\{h_x\}_{x\in\ZZ_p}\subseteq\RR^d$.
Let $\theta=\{\theta_m\}_{m\in[M]}$ and $\xi=\{\xi_m\}_{m\in[M]}$ denote the parameters, where $\theta_m\in\RR^d$ is the parameter vector of the $m$-th hidden neuron and $\xi_m\in\RR^p $ is its corresponding output-layer weight. The network output is then given by
\begin{equation}
	f(x,y;\xi,\theta)=\sum_{m=1}^M\xi_m\cdot\sigma(\langle h_x+h_y,\theta_m\rangle)\in\RR^p, 
	\label{eq:def_2NN}
\end{equation}
where $\sigma(\cdot)$ is a nonlinear activation. 
In this paper, we primarily focus on the ReLU activation $\sigma(x) = \max\{x, 0\}$ for experiments and the quadratic activation $\sigma(x) = x^2$ for theoretical interpretations.
Since the modular addition is essentially a classification problem, we apply the softmax function $\smax:\RR^d\mapsto\RR^d$ to the network output and consider the cross-entropy (CE) loss:
 \begin{equation}
 	\ell_\cD(\xi,\theta)=-\sum_{(x,y)\in\cD}\big\langle\log\circ \smax\circ f(x,y;\xi,\theta),e_{(x+y)\bmod p}\big\rangle.
 	\label{eq:def_CE_loss}
 \end{equation}
 Here, $\log(\cdot)$ is applied entrywise and $e_{(x+y)\bmod p}$ is the one-hot vector that corresponds to the correct label.
 Intuitively, each input pair $(x,y)$ is mapped to a hidden representation by $\sigma(\langle h_x+h_y,\theta_m\rangle)$ for each neuron $m$, then linearly combined by $\xi_m$'s to produce the logits $f(x,y;\xi,\theta)$, and finally processed via the softmax function to yield a categorical distribution for classification.
\section{Empirical Findings}\label{sec:experiments}

 \begin{figure}[t]
  \centering
  \begin{minipage}[t]{0.33\textwidth}
    \centering
    \includegraphics[width=\linewidth]{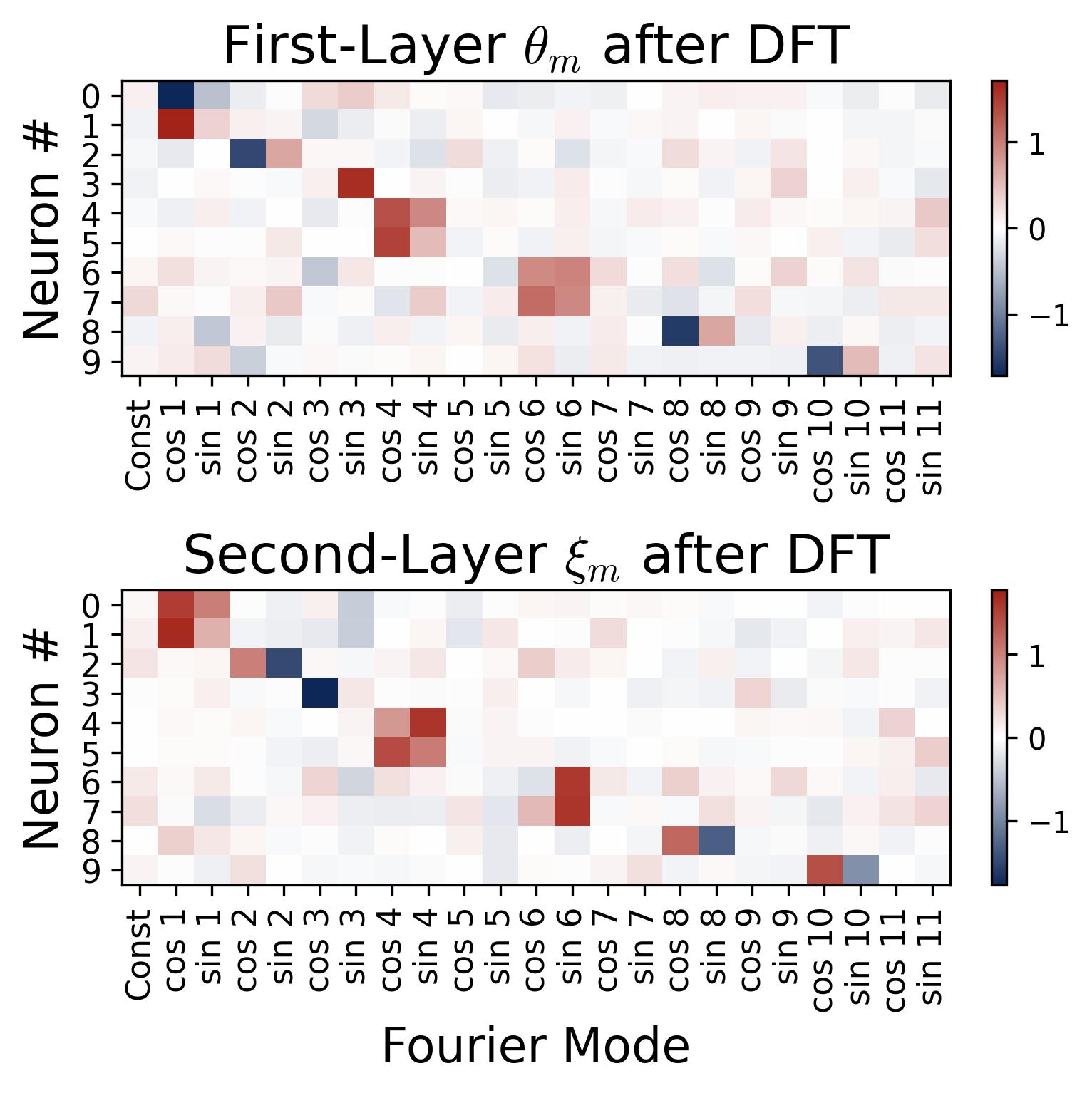}
    \subcaption{Heatmap of Learned Parameters.}
  \end{minipage}
  \begin{minipage}[t]{0.66\textwidth}
    \centering
    \includegraphics[width=0.49\linewidth]{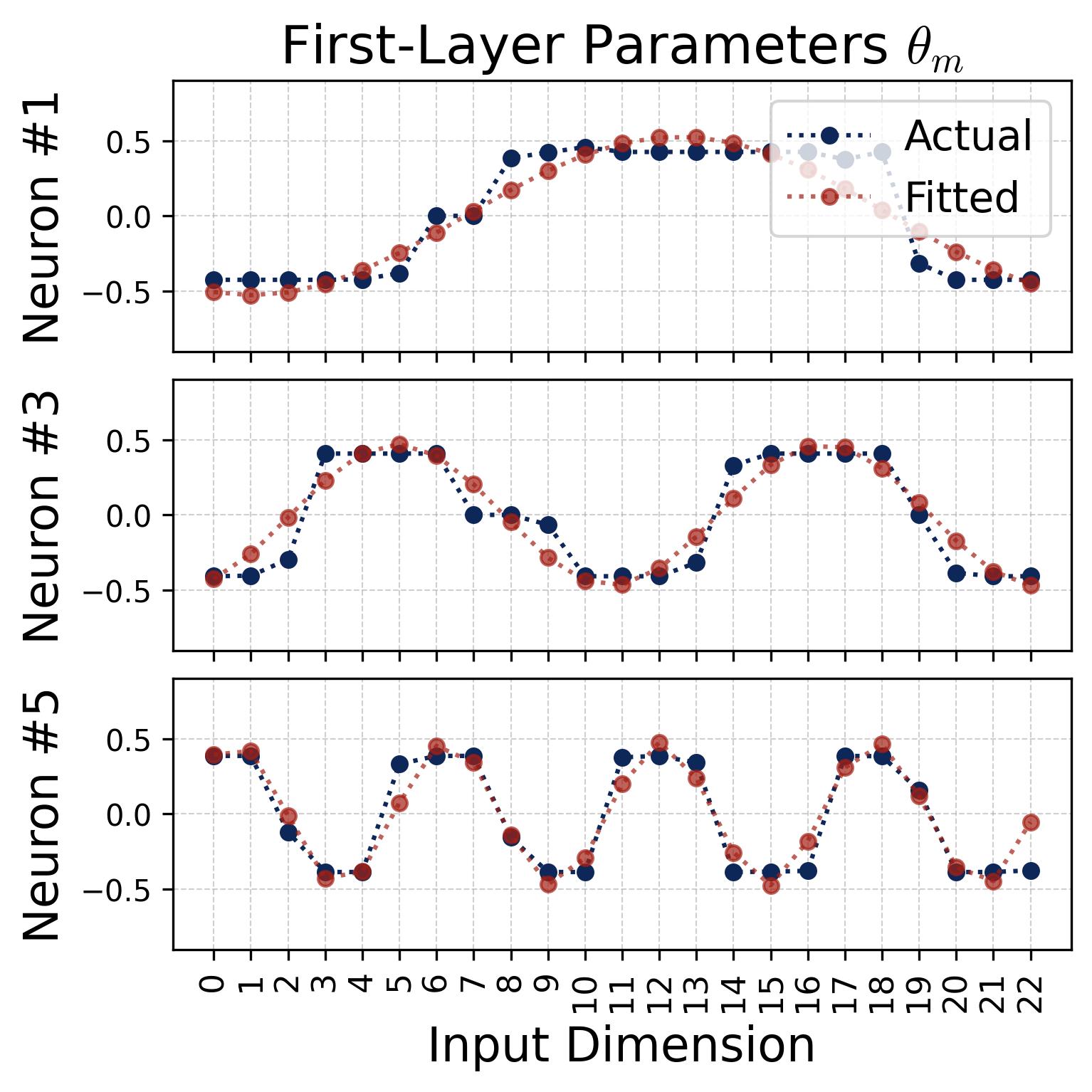}
    \includegraphics[width=0.49\linewidth]{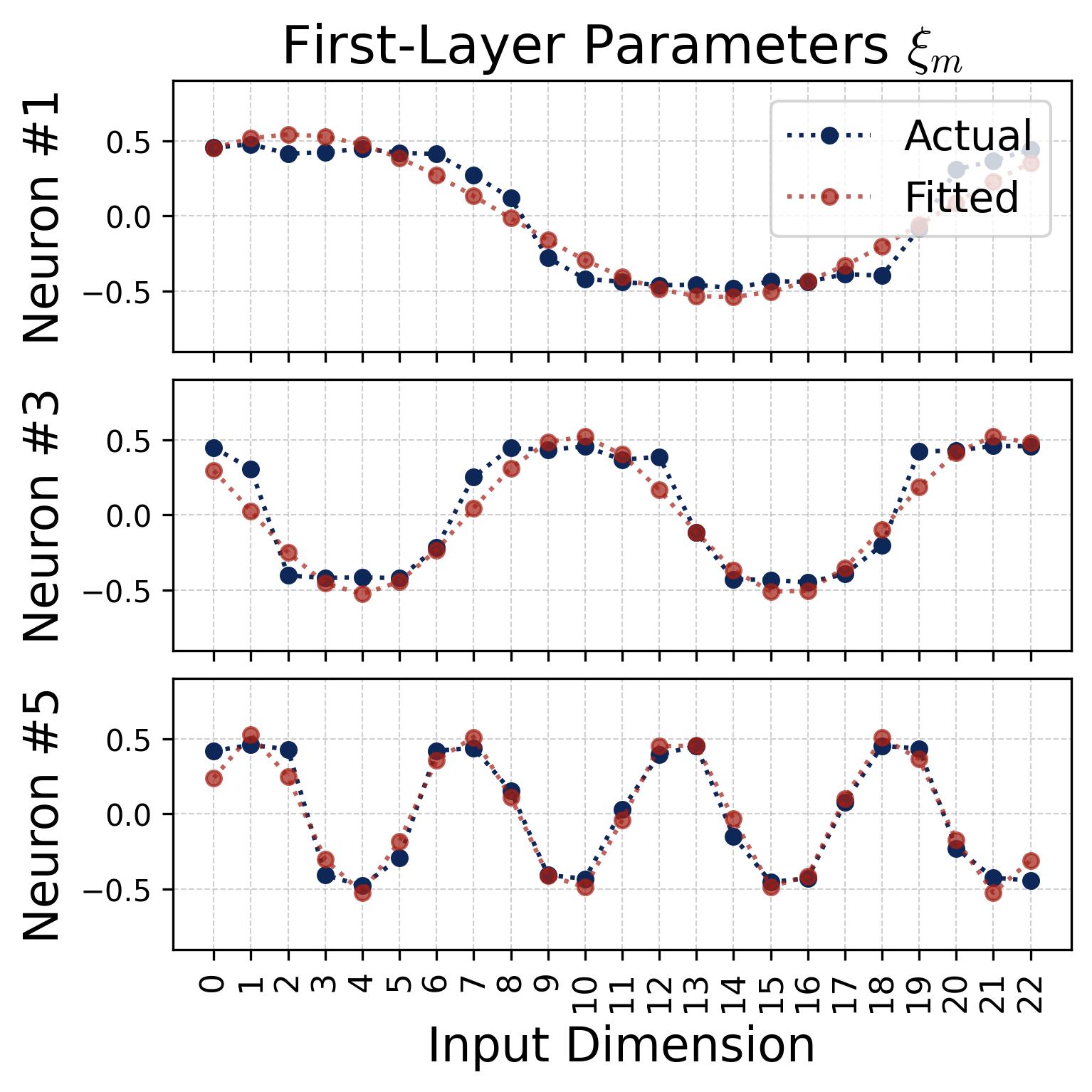}
    \subcaption{Actual Learned and Fitted Parameters of Each Neuron.}
    \label{fig:main_trig_zoom}
  \end{minipage}
  \caption{Learned parameters under the full random initialization with $p=23$ and ReLU activation using AdamW. Figure (a) plots a heatmap of the learned parameters for the first 10 neurons after Discrete Fourier Transform (DFT, see \S\ref{sec:dft}) grouped with frequency. Each row in the heatmap corresponds to the Fourier components of a single neuron's parameters. The plot clearly reveals a single-frequency pattern: each neuron exhibits a large, non-zero value focused on only one specific frequency component, confirming a highly sparse and specialized frequency encoding. We remark that since only 10 out of 512 neurons are shown, not all $(p-1)/2 = 11$ frequencies appear in this sample. The same single-frequency pattern holds across all 512 neurons, which collectively cover all 11 frequencies (see Observation 3). Figure (b) further examines the periodicity by plotting line plots of the learned parameters for three neurons, each overlaid with a trigonometric curve fitted via DFT. The fitted curve aligns almost perfectly with the actual one.}
  \label{fig:main_experiment_trig}
\end{figure}

In this section, we present the empirical findings. 
We set $p=23$ without loss of generality,  and use a two-layer neural network with width $M=512$ and ReLU activation. 
The network is trained using the AdamW optimizer with a constant step size of $\eta=10^{-4}$.
We initialize all parameters using PyTorch's default method \citep{paszke2019pytorch}. 
For stable training, we then normalize these initial values and use the average loss over the dataset. 
We note that all of our empirical findings below are robust to the choice of $p$, and they appear as long as $M$ is sufficiently large and the neural network is properly optimized. 

Following prior work \citep{morwani2023feature, tian2024composing}, we primarily focus on training the model with the complete dataset 
$\cD_{\text{full}}$ (without train-test splitting), as this yields more stable training dynamics and enhances model interpretability. 
While the train-test split setup exhibits the intriguing grokking behavior \citep[e.g.,][]{nanda2023progress,doshi2023grok,gromov2023grokking}, wherein models suddenly achieve generalization after extensive training despite initial overfitting, we defer this
analysis to \S\ref{sec:grokking}, building upon the foundational results presented in subsequent sections.

\subsection{Mechanistic Pattern: Experimental Observations on Learned Weights}
We first summarize the main empirical findings of our experiments using ReLU activation (see Figures~\ref{fig:main_experiment_trig} and \ref{fig:main_experiment_phase}), formalized as four key observations. 
The first two --- \highlight{trigonometric parameterization} and \highlight{phase alignment} --- have been previously explored in the literature \citep{gromov2023grokking, nanda2023progress,yip2024modular}, and are included for completeness.
For clarity, we focus on the case where inputs are one-hot embedded. 
We begin with the most striking observation: a global \highlight{trigonometric pattern} in parameters that consistently emerges across all training runs with random initialization.

\begin{tcolorbox}[frame empty, left = 0.1mm, right = 0.1mm, top = 0.1mm, bottom = 1.5mm]
\textbf{Observation 1 (Fourier Feature).} There exists a frequency mapping $\varphi: [M] \rightarrow [\frac{p - 1}{2}]$, along with magnitudes $\alpha_m$, $\beta_m \in \RR^+$ and phases $\phi_m$, $\psi_m \in [-\pi, \pi)$, such that
	\begin{equation}
 	\theta_{m}[j]=\alpha_m\cdot\cos(\omega_{\varphi(m)}j+\phi_m),\quad\xi_{m}[j]=\beta_m\cdot\cos(\omega_{\varphi(m)} j+\psi_m),\quad\forall (m,j)\in[M]\times[p],
 	\label{eq:trig_pattern}
	\end{equation}
  where we denote $\omega_k=2\pi k/p$ for all $k\in[\frac{p-1}{2}]$.
\end{tcolorbox}
This observation shows that the parameter vectors $\theta_m$ and $\xi_m$ simplify during training into a clean trigonometric pattern. 
In the frequency domain, this corresponds to a \highlight{sparse signal}. 
After applying a Discrete Fourier Transform (DFT, see \S\ref{sec:dft}), each neuron is represented by a \highlight{single active frequency} $\varphi(m)$.
Given this single-frequency structure, we will henceforth refer to $\alpha_m$ and $\phi_m$ as the \emph{input magnitude} and \emph{phase}, and to $\beta_m$ and $\psi_m$ the \emph{output magnitude} and \emph{phase} for neuron $m$.

This observation is illustrated in Figure~\ref{fig:main_experiment_trig}. 
In Figure~\ref{fig:main_trig_zoom}, we zoom in on the learned parameters of the first three neurons, with each entry corresponding to the input or output value $j \in \ZZ_p$. 
The plots show that these parameters are well approximated by cosine curves, shifted by phases $\phi_m$ and $\psi_m$, and scaled by magnitudes $\alpha_m$ and $\beta_m$, respectively. 
This suggests that the trained neural network learns to solve modular addition by {embedding a trigonometric structure into its parameters}, where each dimension $j \in [p]$ corresponds to the value of a cosine function at $j$.
Next, we examine the local structure of individual neurons, and observe a highly structured \highlight{phase alignment} behavior.

\begin{figure}[t]
  \centering
  \begin{minipage}[t]{0.26\textwidth}
    \centering
    \includegraphics[width=\linewidth]{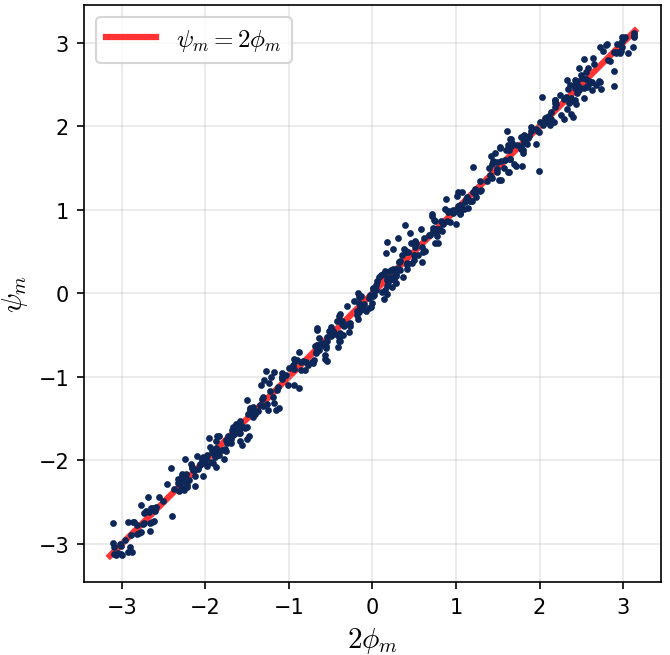}
    \subcaption{Scatter of $(2\phi_m,\psi_m)$.}
    \label{fig:phase_align_line}
  \end{minipage}
  \hfill
  \begin{minipage}[t]{0.46\textwidth}
    \centering
    \begin{minipage}[t]{0.52\textwidth}
      \includegraphics[width=\linewidth]{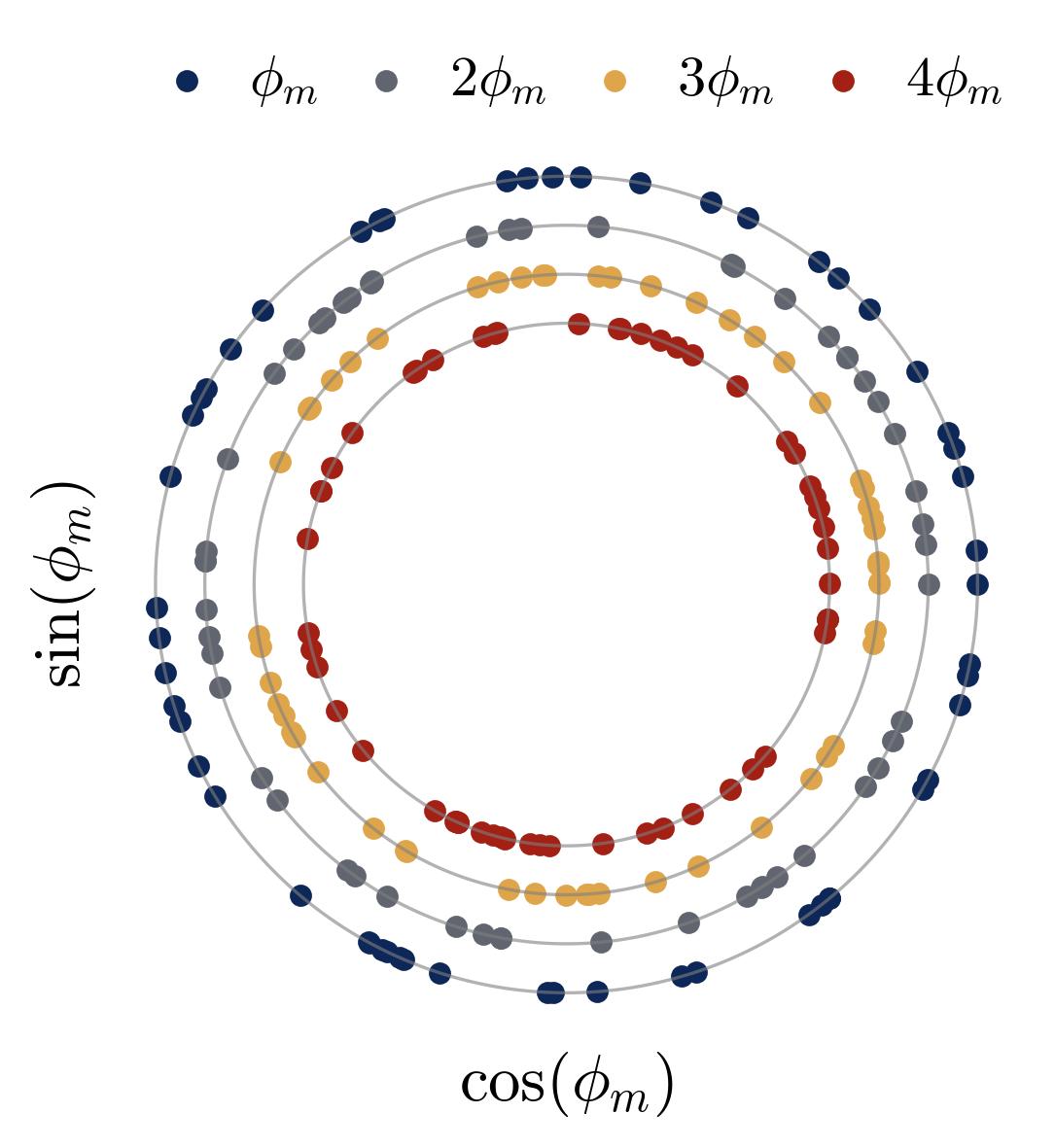}
    \end{minipage}~~
    \begin{minipage}[t]{0.45\textwidth}
      \vspace{-115pt}
      \centering
      \footnotesize
      \renewcommand*{\arraystretch}{1.17}
       \setlength{\tabcolsep}{2pt}
      \begin{tabular}{c|rr}
        \toprule
        \multicolumn{1}{c}{} & \multicolumn{2}{c}{average value}\\
        \cmidrule(lr){2-3}
        \multicolumn{1}{c}{$\iota$} & $\cos(\iota\phi_m)$  & $\sin(\iota\phi_m)$ \\
        \midrule
        $\times1$ & 0.0123 & -0.0500 \\
        $\times2$ & -0.0234 & 0.0319 \\
        $\times3$ & -0.0531 & -0.0032 \\
        $\times4$ & -0.0235 & -0.0451 \\
        $\times5$ & -0.0505 & -0.0372 \\
        \bottomrule
      \end{tabular}
    \end{minipage}
    \subcaption{Phase Symmetry within Frequency Group $\cN_k$.}
    \label{fig:phase_symmetry}
  \end{minipage}
  ~
  \begin{minipage}[t]{0.25\textwidth}
    \centering
    \includegraphics[width=\linewidth]{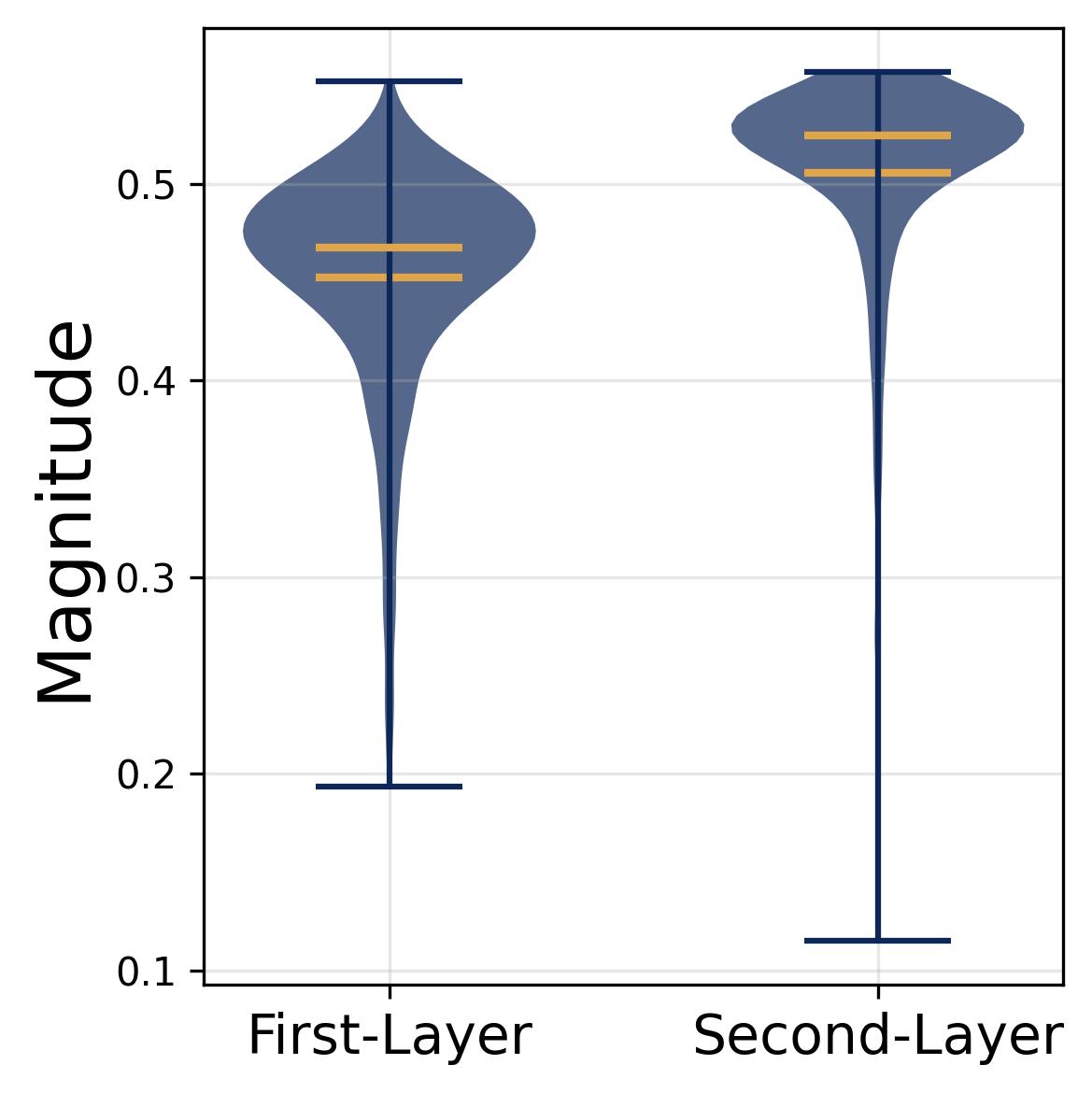}
    \subcaption{Distribution of $\alpha_m$, $\beta_m$.}
    \label{fig:magnitude_distribution}
  \end{minipage}

  \caption{Visualizations of learned phases with $M=512$ neurons. Figure (a) plots the relationship among the normalized $2\phi_m$ and $\psi_m$, with all points lying around the line $y = x$. Figure (b) shows the uniformity of the learned phases within a specific group $\mathcal{N}_k$. The left panel displays $\iota\phi_m$ for $\iota \in \{1, 2, 3, 4\}$ on unit circles, and the points are nearly uniformly distributed. The right panel quantifies this symmetry by computing the averages of $\cos(\iota\phi_m)$ and $\sin(\iota\phi_m)$, all of which are close to zero. Figure (c) presents violin plots of the magnitudes $\alpha_m$ and $\beta_m$. The tight distribution of these values around their mean suggests that the neurons learn nearly identical magnitudes.}
  \label{fig:main_experiment_phase}
\end{figure}

 \begin{tcolorbox}[frame empty, left = 0.1mm, right = 0.1mm, top = 0.1mm, bottom = 1.5mm]
 \textbf{Observation 2 (Doubled Phase).} For each neuron $m\in[M]$, the parameter exhibits a doubled phase relationship, where the output phase is twice the input phase, i.e., $(2\phi_m-\psi_m)\bmod 2\pi=0$.	
\end{tcolorbox}

We visualize the relationship between $\phi_m$ and $\psi_m$ in Figure~\ref{fig:phase_align_line}. 
Specifically, the dots represent the pairs $(2\phi_m, \psi_m)$, which lie precisely on the line $y = x$, confirming the claim made in Observation 2.
This indicates that the first-layer $\theta_m$ and second-layer $\xi_m$ learns to couple in the feature space, specifically the Fourier space, through training.
Having studied both global and neuron-wise local parameter patterns, we now examine how neurons coordinate their collective operation.
Consider a network with a sufficiently large number of neurons, then the phases exhibit clear \highlight{within-group uniformity} and the magnitudes display nearly \highlight{homogeneous scaling} across neurons.

\begin{tcolorbox}[frame empty, left = 0.1mm, right = 0.1mm, top = 0.1mm, bottom = 1.5mm]
 \textbf{Observation 3 (Model Symmetry).} Let $\cN_k$ be the set of neurons for frequency $k$, defined as $\cN_k=\{m\in[M]:\varphi(m)=k\}$. For large $M$, (\romannumeral1) phases are approximately uniform over $(-\pi,\pi)$ within frequency group $\cN_k$, i.e., $\phi_m,\psi_m\iid{\rm Unif}(-\pi,\pi)$, (\romannumeral2)  every frequency $k$ is represented among the neurons, and (\romannumeral3) the magnitudes $\alpha_m $'s and $\beta_m$ remains close across all neurons.
\end{tcolorbox}

Figure~\ref{fig:phase_symmetry} illustrates the uniformity of phases within a specific frequency group $\mathcal{N}_k$ by examining the higher-order symmetry, i.e., the symmetry of $\iota\phi_m$ for $\iota \in \{1, 2, 3,  4\}$. 
Both the visualizations and the quantitative averages of sine and cosine values support the within-group uniformity claim stated in Observation 3.
In addition, the learned magnitudes are similar across all neurons, preventing any single neuron from becoming dominant (see Figure~\ref{fig:magnitude_distribution}).
Although a large $M$ is not required for successful model training, it significantly aids in interpreting the mechanism of the learned model (see \S\ref{sec:mech_intepret} for details).
While previous work \citep[e.g.,][]{kumargrokking}, has introduced the phase uniformity to provide a constructive model that solves modular addition, our findings significantly refine the understanding. 
Through empirical validations, we show that this phase uniformity is a consistent when $M$ is large. 
Furthermore, in \S\ref{sec:mech_intepret}, we derive and utilize a substantially weaker condition than strict uniformity to enable a more precise, joint analysis of noise cancellation across a diversified, finite set of neurons.
Finally, we report a surprising \highlight{adaptivity} in the learned parametrization: the network continues to perform perfectly when ReLU is replaced by  a broad class of alternative activations at the test time. See Table \ref{tab:replace_activation} for details. 

As shown in Table \ref{tab:replace_activation}, when we replace the ReLU activation to other activation functions that has nonzero even-order components, e.g., $|x|$, $x^2$, and $x^4$, the resulting models still have perfect prediction accuracy. However, suppose we replace ReLU to an activation wihout any even-order component, e.g., $x$ and $x^3$, the prediction accuracy is close to zero. This suggests that the key property of ReLU activation is that it has even-order components. \footnote{Due to phase symmetry, the output of a ReLU neural network is fundamentally determined by the $\frac{1}{2}|x|$ term. This follows from the identity $\text{ReLU}(x) = \frac{1}{2}(x + |x|)$. Under phase symmetry, the linear components cancel out across the network's operations, leaving the absolute value term as the primary contribution to the output.} 


\begin{tcolorbox}[frame empty, left = 0.1mm, right = 0.1mm, top = 0.0mm, bottom = 0.5mm]
 \textbf{Observation 4 (Robustness to Activation Swapping).} A model trained with ReLU is robust to changes of activation function at inference time. This is because learning a good solution only relies on the activation's dominant \highlight{even-order components}. Consequently, functions with strong even components, such as the absolute value and quadratic, can be used interchangeably after training, all while maintaining perfect accuracy with a negligible change in loss.
\end{tcolorbox}

Motivated by this key observation, in the sequel, we analyze the training dynamics of how two-layer neural networks solve modular addition using the more tractable quadratic activation. 


\begin{table}[t]

\centering
\renewcommand*{\arraystretch}{1}
\resizebox{0.98\linewidth}{!}{
\begin{tabular}{l|l|cccccc|cc}
    \toprule
    $\sigma(x)$ & $\max\{x,0\}$  & $|x|$ & $x^2$ & $x^4$ & $x^8$ & $\log(1+e^{2x})$ & $e^x$ & $x$ & $x^3$ \\
    \midrule
    Loss & $1.194\times10^{-8}$ & 0.000 & 0.000 & $3.1\times10^{-5}$ & 0.051 & $1.2\times10^{-3}$ & $6.5\times10^{-4}$ & 4.246 & 3.891\\
    Accuracy & 1.000 & \colorbox{dblue!10}{1.000} & \colorbox{dblue!10}{1.000} & \colorbox{dblue!10}{1.000} & \colorbox{dblue!10}{1.000} & \colorbox{dblue!10}{1.000} & \colorbox{dblue!10}{1.000} & \colorbox{dred!10}{0.041} & \colorbox{dred!10}{0.036} \\
    \bottomrule
\end{tabular}}
\caption{
Adaptivity of the learned parameterization. 
We evaluate the robustness of the trained model by replacing the ReLU with various alternative functions at test time.
As shown in the table, the model maintains perfect prediction accuracy when using the absolute value function, even-order polynomials, or the exponential function.
This demonstrates that the learned features are not strictly dependent on the original activation but rather on its underlying even-order components.}
\label{tab:replace_activation}
\end{table}

\subsection{Dynamical Perspective: Phase Alignment and Feature Emergence}\label{sec:exp_lottery}

\begin{figure}[t]
  \centering
  \begin{minipage}[t]{0.68\textwidth}
    \centering
    \includegraphics[width=0.4\linewidth]{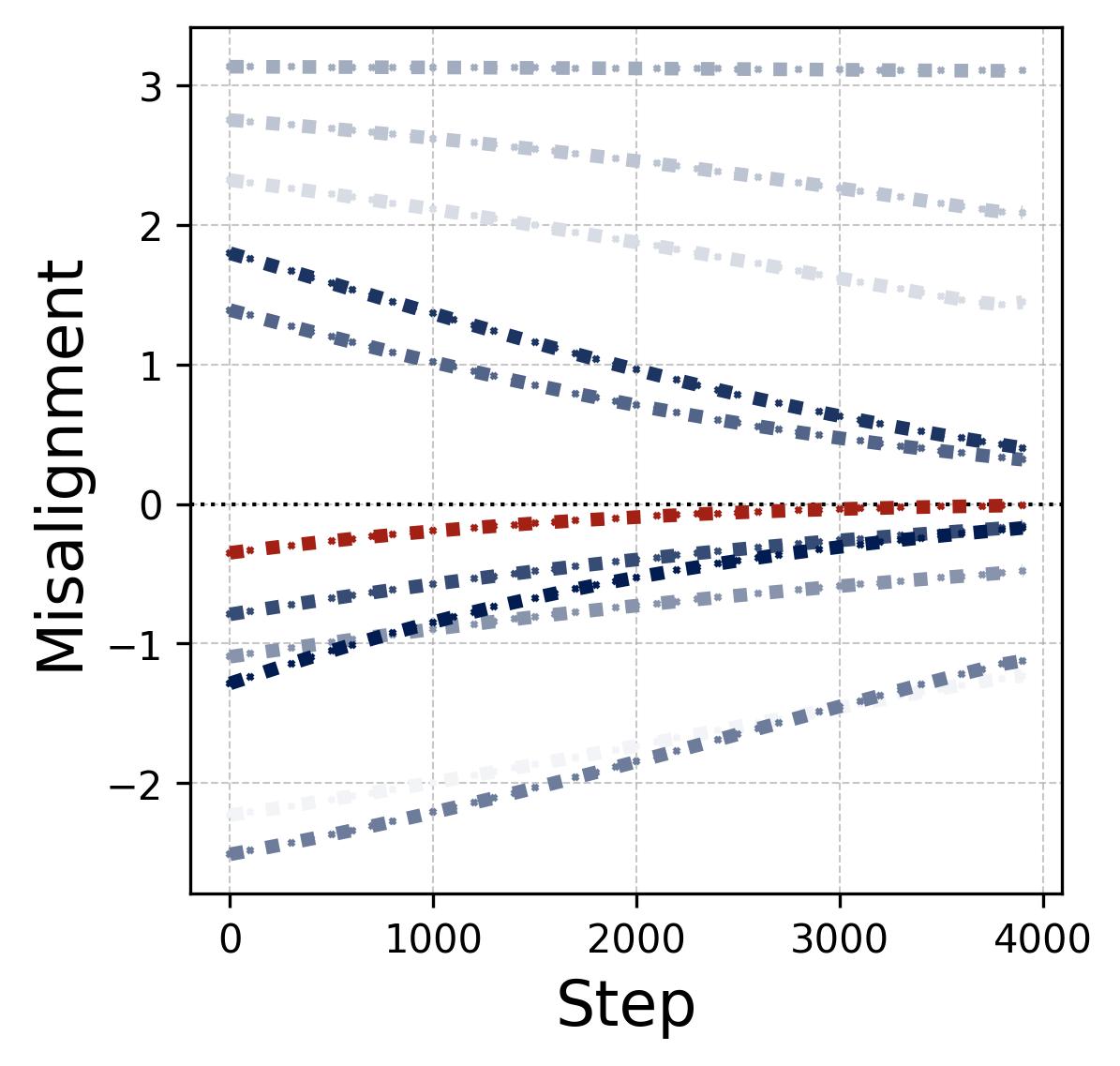}
    \includegraphics[width=0.55\linewidth]{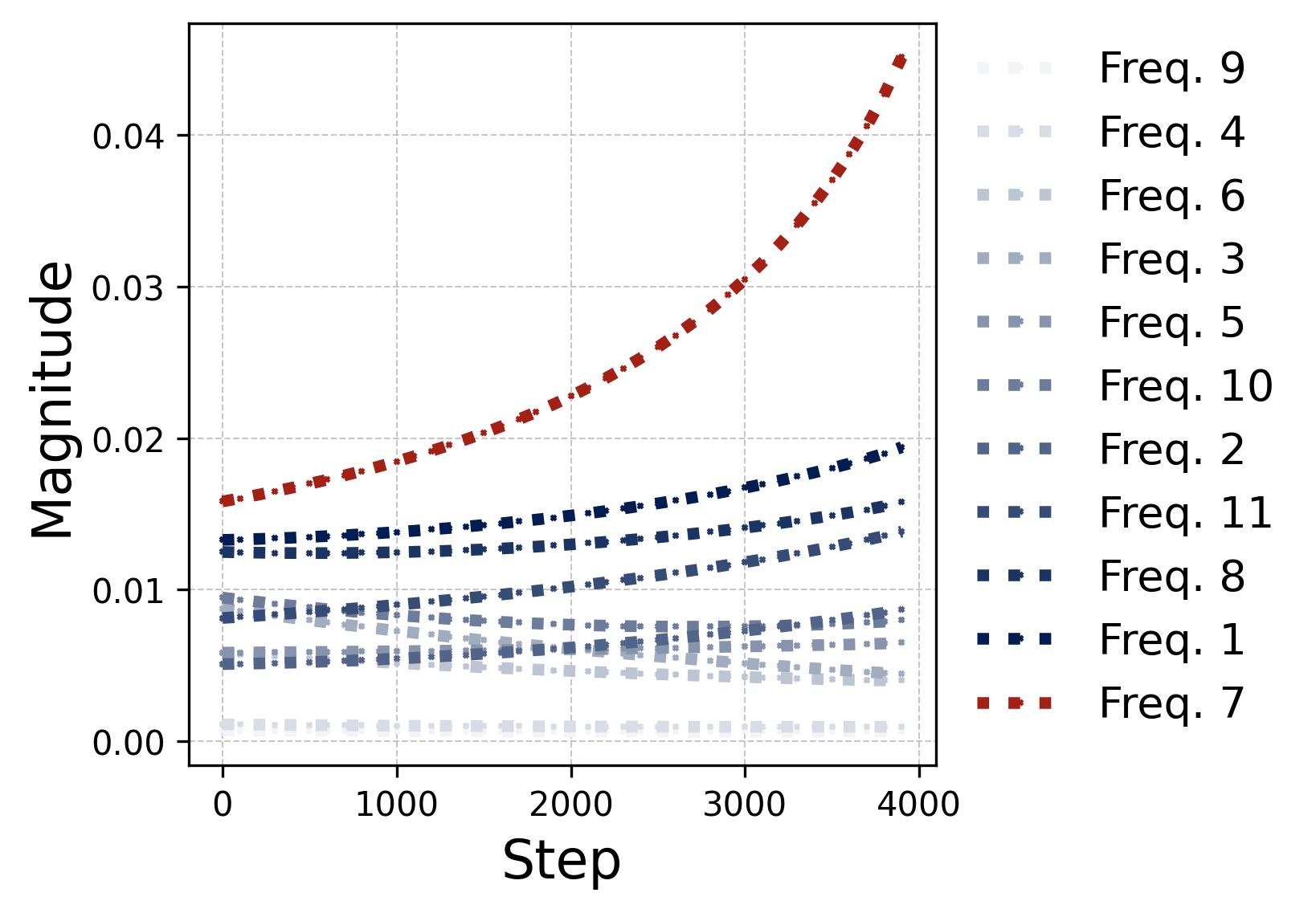}
    \subcaption{Evolution of misalignment level ${\eu D}_m^k$ and magnitude $\beta_m^k$ of a specific neuron under gradient flow under small random initialization.}
    \label{fig:lottery_dynamics}
  \end{minipage}
  \begin{minipage}[t]{0.31\textwidth}
    \centering
    \includegraphics[width=\linewidth]{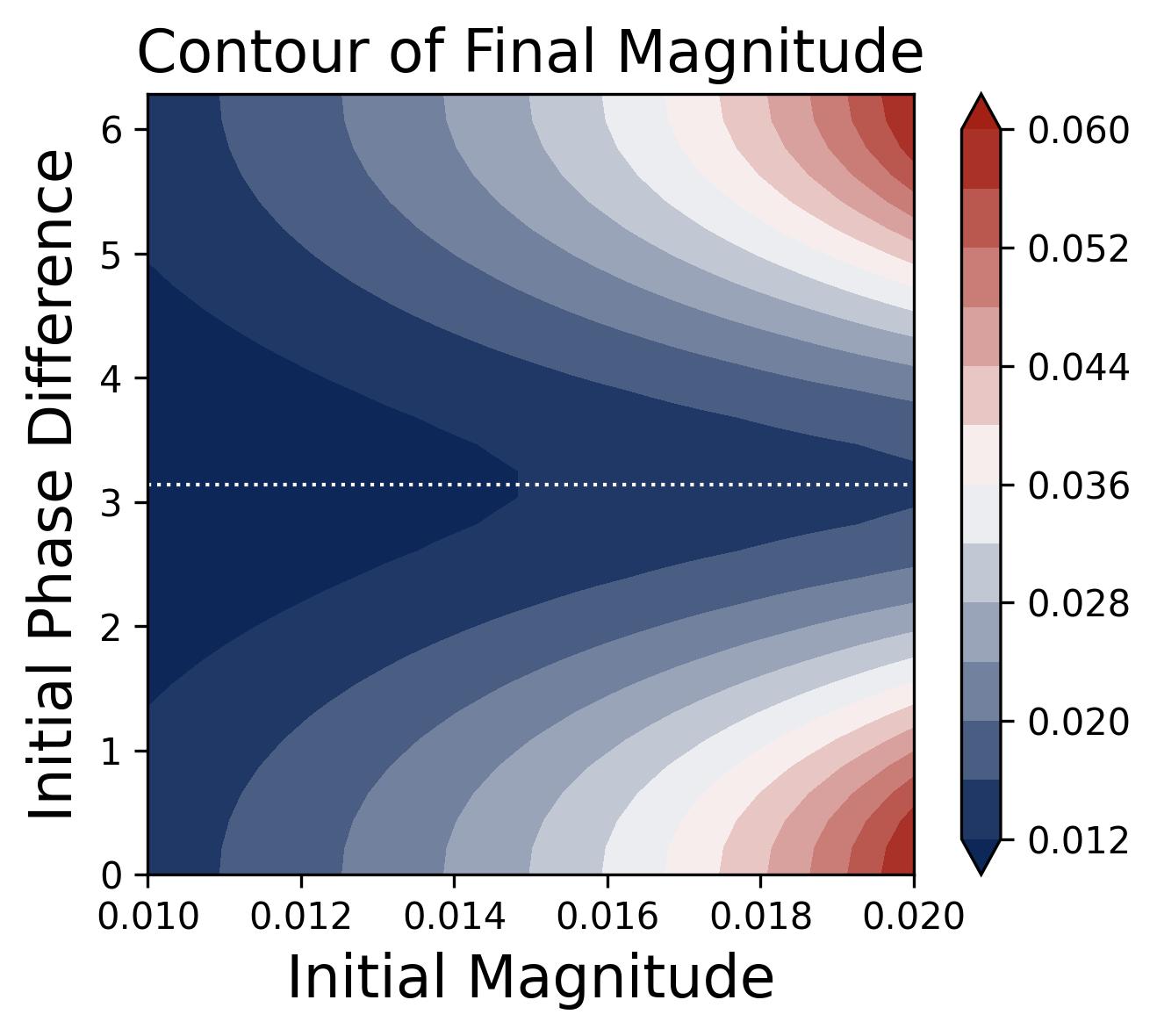}
    \subcaption{Leaned magnitude $\beta_m^k$ under different initializations.}
    \label{fig:lottery_fullvis}
  \end{minipage}

  \caption{Illustration of the lottery-ticket mechanism under the fully random initialization. Figure (a) plots the dynamics of every frequency $k$ for a specific neuron, with the \dred{red} curve tracing the trajectory of the frequency that eventually dominates. In the left-hand plot, misalignment levels ${\eu D}_m^k$ are rescaled to $[-\pi, \pi)$ for clarity. Typically, the winning frequency is the one that starts with a comparatively larger initial magnitude and a smaller misalignment. Figure (b) plots the contour of the magnitude $\beta_m^k$ with various  $(\beta_m^k(0),{\eu D}_m^k(0))$ after $10,000$ steps. The contours are symmetric about $\pi$, and reproduce the trend seen in Figure (a): initial states with larger magnitude and lower phase misalignment yield higher final magnitudes after the same training duration.}
  \label{fig:lottery_mechanism}
\end{figure}

We conduct an analysis of training dynamics in an analytically tractable setting, using quadratic activation with small random initialization, and focus on the early stages of training.
Motivated by Observation 1, our analysis hinges on studying the training dynamics within the frequency domain. 
To do this, we use the Discrete Fourier Transform (DFT), which is formalized in \S\ref{sec:dft}, to decompose the model's parameters.
Without loss of generality, any random initial parameter vector can be exactly represented by its frequency components --- magnitudes $(\alpha_m^k,\beta_m^k)$'s and phases $(\phi_m^k,\psi_m^k)$'s.
This allows us to express the parameters as: for each entry $j\in[p]$,
\begin{equation}
\begin{aligned}
\theta_m[j]&=\alpha_m^0+\sum_{k=1}^{(p-1)/2}\alpha_m^{k}\cdot\cos(\omega_{k}j+\phi_m^k),\quad
\xi_m[j]=\beta_m^0+\sum_{k=1}^{(p-1)/2}\beta_m^{k}\cdot\cos(\omega_{k}j+\psi_m^k),
\end{aligned}
\label{eq:para_fourier_form}
\end{equation}
As we will show in \S\ref{sec:theory}, under small initialization, the neurons and frequencies are \highlight{fully decoupled}. That is, the evolution of each neuron's Fourier frequency  components (magnitudes and phases) only depends on the dynamics of themselves. 
This results in the \highlight{parallel growth} of the magnitudes and phases for each neuron-frequency pair $(m,k)$. 
The central question is how the training process evolves this \emph{complex, multi-frequency initial state} into the \emph{simple, single-frequency pattern} observed at the end of training.
Our finding is surprising:
\emph{The final, dominant frequency learned by each neuron is \textcolor{highlight}{entirely determined} by a small subset of Fourier components in its initial parameters.}

It arises from a competitive dynamics among frequencies, as shown in Figure \ref{fig:lottery_dynamics}. 
A frequency's success is determined by its initial conditions, primarily two key factors: its \highlight{initial magnitudes} and its \highlight{initial phase misalignment level}. 
To gain a more detailed understanding of the dynamics, we begin by tracking the evolution of phases. Motivated by the double phase phenomenon in Observation 2, we monitor the normalized phase difference ${\eu D}_m^k$, defined as ${\eu D}_m^k = (2\phi_m^k - \psi_m^k) \bmod{2\pi}\in[0,2\pi)$.
This quantity plays a central role throughout our analysis: as we will show in \S\ref{sec:theory}, the magnitude growth rate is governed by $\cos({\eu D}_m^k)$ and the phase rotation speed by $\sin({\eu D}_m^k)$, making it the key variable that simultaneously controls both alignment and amplification.
In the left-hand side of Figure \ref{fig:lottery_dynamics}, we plot the dynamics of this phase difference, rescaling its range to $(-\pi,\pi]$ for visual clarity.
This analysis leads to the following observation.

\begin{tcolorbox}[frame empty, left = 0.1mm, right = 0.1mm, top = 0.1mm, bottom = 1.5mm]
\textbf{Observation 5 (Dynamics of Phase-Aligning).} 
    The phase difference ${\eu D}_m^k(t)$ for each frequency converges monotonically to ``zero'' without crossing the axis. 
    Generally, frequencies that start with an initial phase difference ${\eu D}_m^k(0)$ closer to zero converge faster.
\end{tcolorbox}
To formalize the closeness of phase difference to zero, we define the phase misalignment $\tilde{\eu D}_m^k$  as $\tilde{\eu D}_m^k = \max\{{\eu D}_m^k, 2\pi - {\eu D}_m^k\}$.
In the following, we outline the core dynamics of the training process. 
It reveals that the single-frequency pattern in Observation 1 is the direct result of a frequency competition, a process governed by the interplay of phase misalignment and magnitude.
\begin{tcolorbox}[frame empty, left = 0.1mm, right = 0.1mm, top = 0.1mm, bottom = 1.5mm]
\textbf{Observation 6 (Lottery Ticket Mechanism).} 	
Under small random initialization, neurons are decoupled.
Each frequency $k$ draws a ``lottery ticket''  specified by its initial magnitudes $\alpha^k_m(0)$, $\beta^k_m(0)$ and misalignment level $\tilde{\eu D}^k_m(0)$. 
All frequencies grow in parallel, and the one with \highlight{the largest $\alpha^k_m(0)$ and $\beta_m^k(0)$} and \highlight{the smallest $\tilde{\eu D}_m^k(0)$} ultimately wins --- dominating the feature of specific neuron --- due to the rapid acceleration once magnitudes become larger and $\tilde{\eu D}_m^k(t)$ reaches zero.
\end{tcolorbox}

Figure \ref{fig:lottery_dynamics} provides a clear empirical illustration of the mechanism. 
The winning frequency, highlighted in \dred{red}, begins with a highly advantageous initialization: a competitively large magnitude and a misalignment value close to zero. 
While other frequencies exhibit slow growth, the holder of this winning ticket undergoes a distinct phase of rapid, exponential acceleration in its magnitude. 
Figure \ref{fig:lottery_fullvis} plots the magnitude under different initializations after a fixed time $t=10$, verifying that frequencies with a larger magnitude and a smaller misalignment take advantage.

\subsection{Grokking: From Memorization to Generalization}\label{sec:grokking}

In this section, we provide empirical insights into grokking by analyzing the model's training dynamics using a progress measure designed based on our prior observations.
Prior work, such as \citet{nanda2023progress}, identifies two key factors for inducing grokking: a distinct train-test data split and the application of weight decay.
Here, we randomly partition the entire dataset of $p^2$ points, using a training fraction of 0.75, and apply a weight decay of 2.0. 
The experimental results with detailed progress measure is plotted in Figure \ref{fig:grokk}

As shown in Figure \ref{fig:grokk_loss}, this elicits a clear grokking phenomenon: the training loss drops quickly to zero. 
In contrast, the test loss initially remains high before gradually decreasing, signaling a delayed generalization.
We track four key progress measures:
\begin{itemize}
\setlength{\itemsep}{-1pt}
\item[(a)] \textbf{Train-Test Loss and Accuracy.} Standard indicators used to differentiate between the memorization phase and the onset of generalization;
\item[(b)] \textbf{Phase Difference.} We monitor$|\sin({\eu D}_m^\star)|$, where ${\eu D}_m^\star:=2\phi_m^\star-\psi_m^\star\bmod{2\pi}$, to evaluate the degree of layer-wise phase alignment; 
\item [(c)] \textbf{Frequency Sparsity.} Measured via the Inverse Participation Ratio (IPR), defined as ${\tt IPR}(\nu)=(\|\nu\|_{2r}/\|\nu\|_2)^{2r}$ with $r=2$, to capture the single-frequency emergence of Fourier coefficients; 
\item [(d)] \textbf{$\ell_2$-norm of parameters.} Utilized as a proxy to monitor the structural evolution of weights and the specific influence of weight decay on the model's complexity.
\end{itemize}

\begin{figure}[t]
  \centering
  \begin{minipage}[t]{0.24\textwidth}
    \centering
    \includegraphics[width=\linewidth]{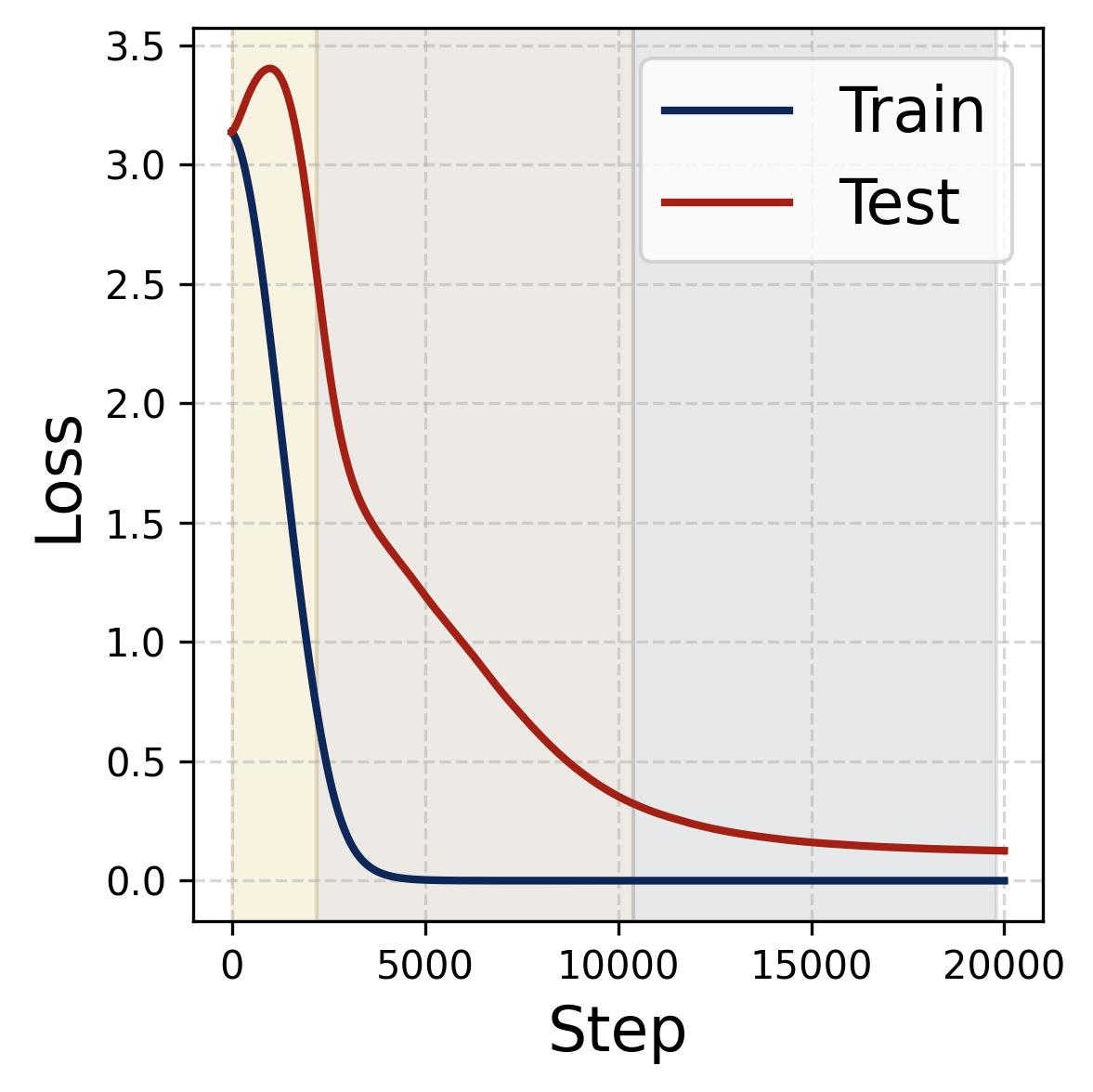}
    \subcaption{Train-test Loss.}
    \label{fig:grokk_loss}
  \end{minipage}
  \begin{minipage}[t]{0.24\textwidth}
    \centering
    \includegraphics[width=\linewidth]{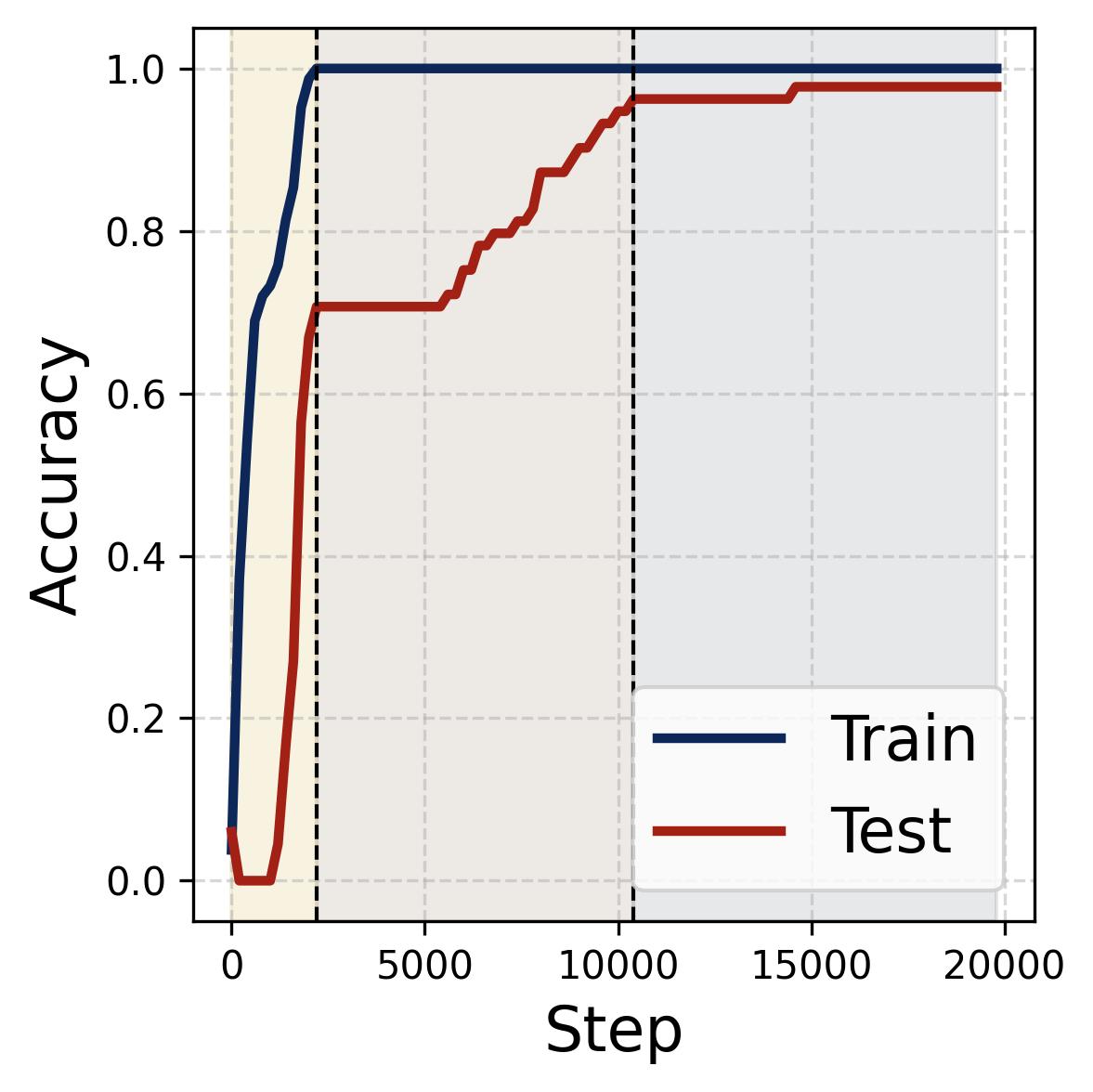}
    \subcaption{Train-test Accuracy.}
    \label{fig:grokk_acc}
  \end{minipage}
  \begin{minipage}[t]{0.24\textwidth}
    \centering
    \includegraphics[width=\linewidth]{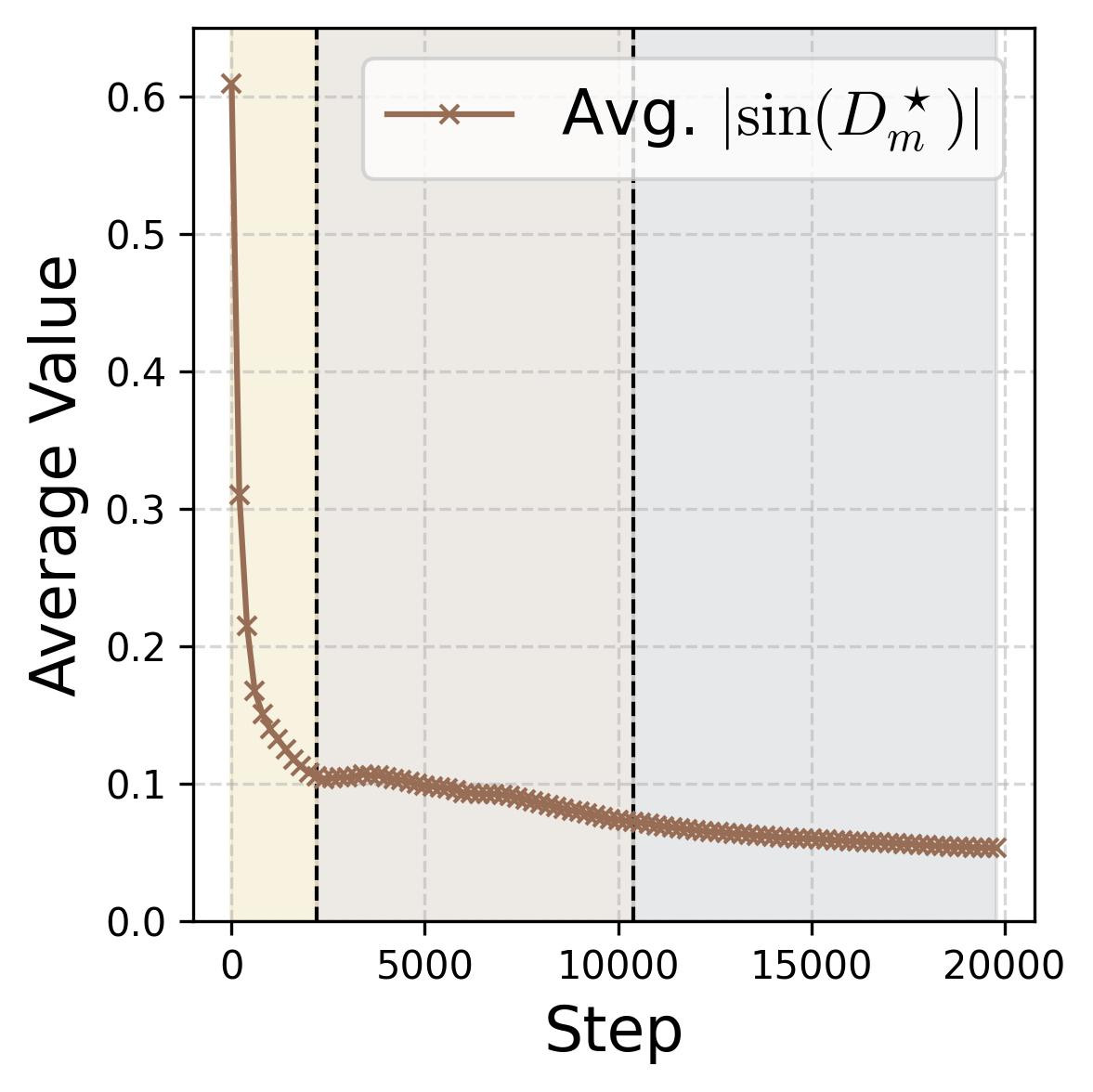}
    \subcaption{Phase Difference.}
    \label{fig:grokkk_phase}
  \end{minipage}
  \begin{minipage}[t]{0.24\textwidth}
    \centering
    \includegraphics[width=\linewidth]{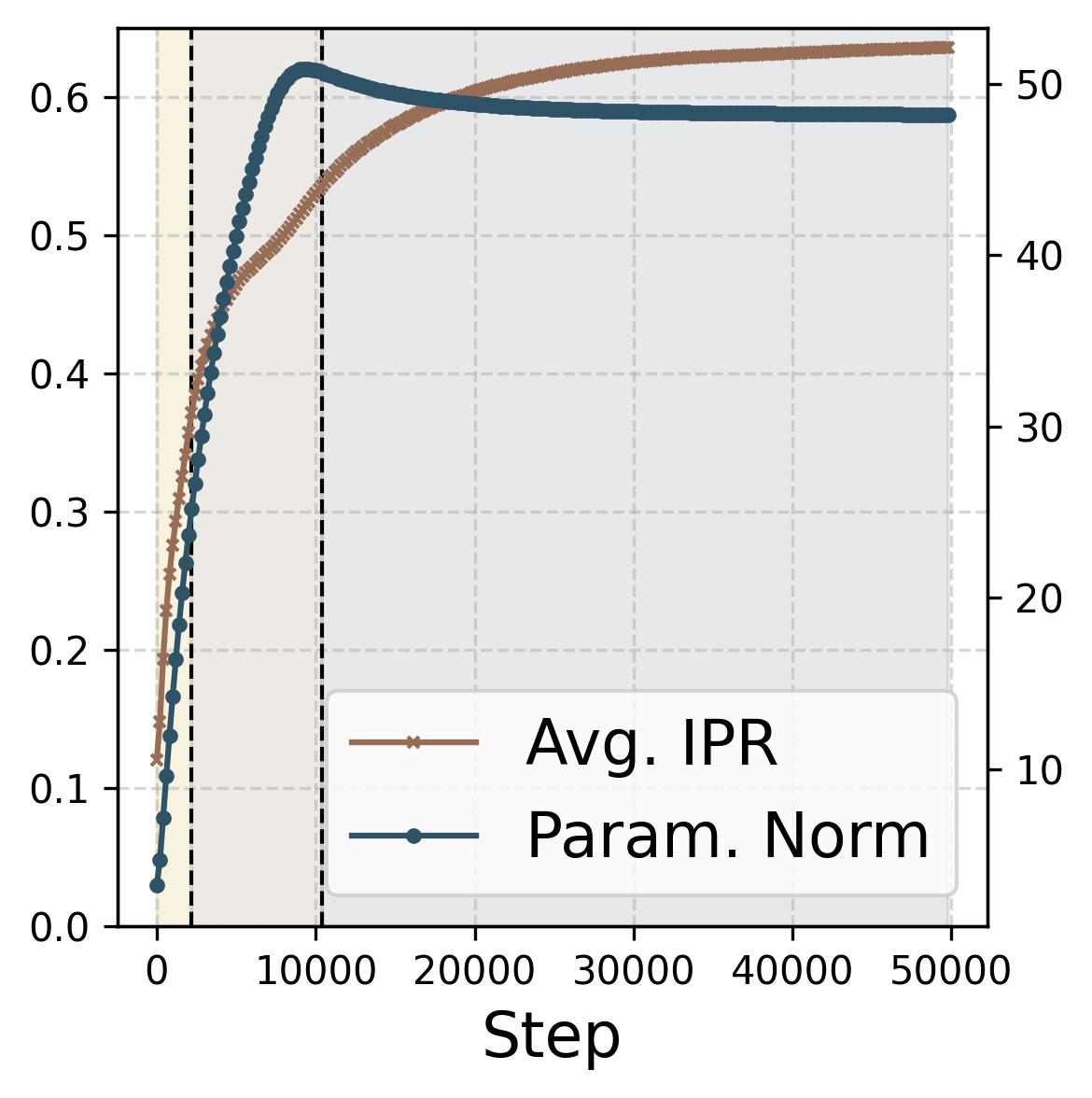}
    \subcaption{Norm \& Freq. Sparsity.}
    \label{fig:grokk_norm_sparse}
  \end{minipage}

  \caption{Progress measure of grokking behavior. The shaded regions mark three distinct phases: an initial memorization phase, followed by two generalization phases. Figures (a) and (b) plot the train-test loss and accuracy curve, where the network first overfits the training data to achieve a near-zero training loss while the test loss remains high. Figure (c) visualizes the dynamics of average phase alignment level, measured by $ m^{-1}\sum_{m=1}^M|\sin({\eu D}_m^\star)|$. Figure (d) tracks the evolution of the average neuron-wise frequency sparsity level, as measured by the inverse participation ratio (IPR) of the Fourier coefficients, alongside the $\ell_2$-norm of the parameter.}
  \label{fig:grokk}
\end{figure}

Building upon Figure \ref{fig:grokk}, we identify two primary driving forces behind the dynamics: \highlight{loss minimization} and \highlight{weight decay}. 
These forces guide the training process through an initial memorization phase followed by two generalization stages.

The memorization phase is dominated by loss minimization, causing the model to fit the training data with its parameter norms increasing rapidly.
As a result, the model achieves perfect accuracy on the training data and their symmetric counterparts in the test set (due to the exchangability of the two input numbers), but completely fails to generalize to truly ``unseen'' test points (see Figure \ref{fig:grokk_memorization}). 
At this phase, all the frequency components in one neuron keep growing but at different pace similar to the \highlight{lottery ticket mechanism} described previously, resulting in a \highlight{perturbed Fourier solution} that overfits the training data.

Next, the model enters the first generalization stage, which is characterized by a precise interplay between the two forces. 
We conclude that both forces are active because the parameter norms continue to grow, which is a clear indicator of ongoing loss minimization. 
At the same time, weight decay induces a \highlight{sparsification effect} in the frequency domain. 
Specifically, the one frequency component that dominates in the lottery ticket mechanism continues growing, while weight decay refines the learned sparse features by pruning the remaining components, making it closer to the clean single-frequency solution for each neuron and causing the test loss to drop sharply.
Specifically, the weight decay refines the learned sparse features, making it closer to the clean single-frequency solution for each neuron, causing the test loss to drop sharply.
This dynamic culminates in a turning point around step 10,000, which marks the onset of the second and final generalization stage. 
From this point, weight decay becomes the dominant force, slowly pushing the test accuracy toward a perfect score.
\begin{figure}[!ht]
    \centering
    \includegraphics[width=0.85\linewidth]{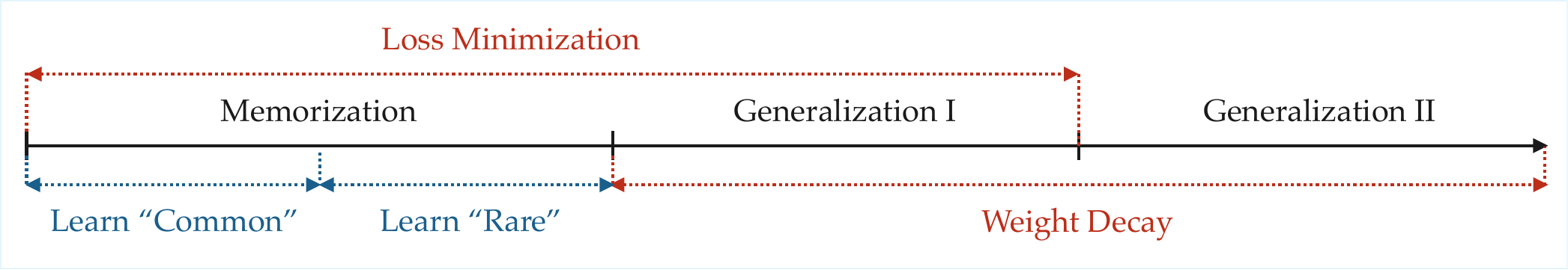}
    \caption{An illustration of the three stages of grokking dynamics and their main driving force.}
\end{figure}

\paragraph{Principle of Memorization: Common-to-Rare.}
Early in training, as training accuracy rises, test accuracy falls from an initial ~5\% (due to small random initialization) to 0\% (see Figure
 \ref{fig:grokk_acc}).
By Step 1000, when training accuracy peaks, the first phase is evident: the model prioritizes memorizing common data, specifically symmetric pairs where both $(i,j)$ and its counterpart $(j,i)$ are in the training set. 
This intense focus comes at a cost, as the model actively \highlight{suppresses performance on rare examples} within the same training set, driving their accuracy to zero. 
Only after mastering the common data does the model shift its focus to the second phase: memorizing these rare examples that appear only once.
Please refer to \S\ref{ap:grokking} for a more detailed interpretation of grokking dynamics.

\section{Mechanistic Interpretation of Learned Model}\label{sec:mech_intepret}
In this section, we first tackle the interpretability question in a slightly idealized setting, leveraging the trigonometric patterns in Observations 1-3 and, motivated by Observation 4, adopting a quadratic activation for analytical convenience. 
We show that the trained model effectively approximates an \highlight{indicator function} via a \highlight{majority-voting} scheme within the Fourier space.

\paragraph{Single-Neuron Contribution and Majority Voting.}
Under the parametrization of \eqref{eq:trig_pattern} in Observation 1 and the phase-alignment condition $2\phi_m-\psi_m=0\bmod{2\pi}$ for all $m$ in Observation 2, the contribution of each neuron $m$, i.e., $f^{[m]}(x,y)=\xi_m\cdot\sigma(\langle e_x+e_y,\theta_m\rangle)$, to the logit at dimension $j\in[p]$ can be expressed as:
 \begin{align}
 	f^{[m]}(x,y;\xi,\theta)[j]\propto
 	&\cos(\omega_{\varphi(m)}(x-y)/2)^2\cdot\{\underbrace{\cos(\omega_{\varphi(m)}(x+y-j))}_{\displaystyle\text{\small primary signal}}\notag\\
 	&\qquad+2\cos(\omega_{\varphi(m)}j+2\phi_m)+\cos(\omega_{\varphi(m)}(x+y+j)+4\phi_m)\}.
 	\label{eq:neuron_mech}
 \end{align}
Here, $\cos(\omega_k(x+y-j))$ provides the primary signal, whose value peaks exactly at $j = (x+y)\bmod p$, while the remaining terms act as residual noise whose amplitude and sign depend on the chosen frequency $k$, phase $\phi_m$, and input pair $(x,y)$.
Similar results have also been reported in \citet{gromov2023grokking,zhong2023clock,nanda2023progress,doshi2023grok}.


Although each neuron's contribution is biased by its own frequency-phase ``view'', the network as a whole can attain perfect accuracy via a \highlight{majority-voting} mechanism: every neuron votes based on its individual view, the model then aggregates these biased yet diverse votes to distill the correct answer.
Despite this intuitive diversification argument, two questions remain unanswered:
(a) How should we define ``diversification''? 
(b) To what extent can the residual noise be canceled by aggregating over a diverse set of frequency-phase pairs $(\varphi(m),\phi_m)$?

\paragraph{Majority-Voting Approximates Indicator via Overparameterization.}

\begin{wrapfigure}{r}{0.38\textwidth}  
	\centering  
    \vspace{-5mm} 
	\includegraphics[width=0.38\textwidth,trim={0 0 0cm 0},clip]{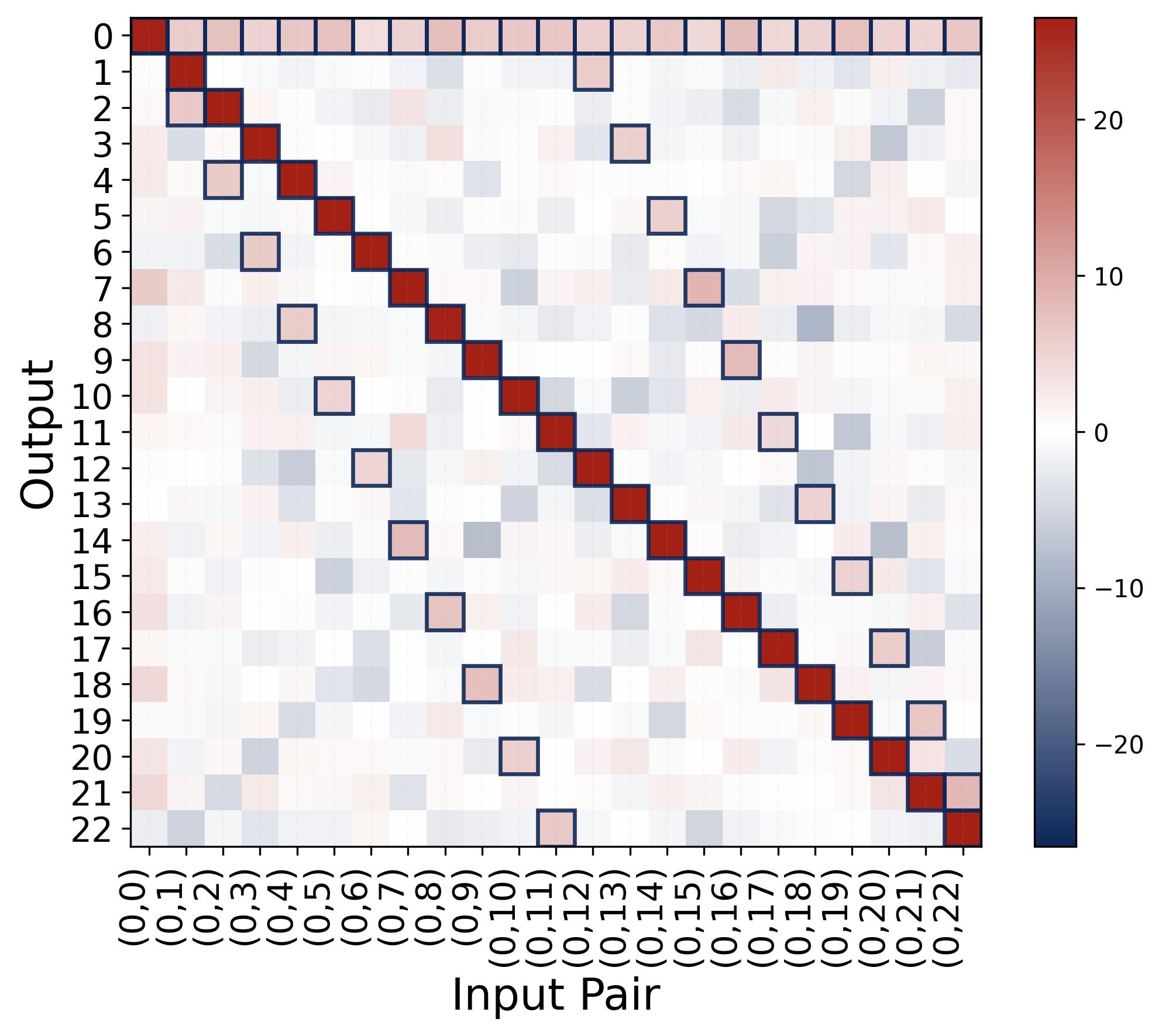}  
	\vspace{-5mm} 
	\caption{Heatmap of the output logits with quadratic activation, with boxes indicating predicted higher values.}
	\vspace{-10mm}   
\end{wrapfigure}
Motivated by Observation 3, when $M$ is sufficiently large, the model naturally learns completely diversified neurons: every frequency $k$ is represented, and the phases exhibit uniform symmetry. 
We formalize this below.

\begin{definition}[Full Diversification]\label{def:diversification}
Neurons is called fully diversified if the frequency-phase pairs $\{(\varphi(m),\phi_m)\}_{m\in[M]}$ satisfy the following properties: (\romannumeral1) for every frequency $k\in[\frac{p-1}{2}]$, there are exactly $N$ neurons $m$ with $\varphi(m)=k$, (\romannumeral2) there exists a constant $a>0$ such that $\alpha_m\beta_m^2=a$ for all $m\in[M]$, and (\romannumeral3) for each $k$ and $\iota\in\{2,4\}$, $\exp\big(i\cdot\iota\sum_{m\in\cN_k}\phi_m\big)=0$.
\end{definition}

Note that Definition \ref{def:diversification} is primarily a formal restatement of Observation 3.
In particular, Condition (\romannumeral2) follows from the homogeneous scaling of magnitudes, and Condition (\romannumeral3) captures the high-order phase symmetry implied by the uniformity within the frequency group.
Condition (\romannumeral1) assumes an exact frequency balance --- an idealization that holds approximately under random initialization (see \S\ref{sec:lottery}).
We are now ready to present the main results regarding the interpretation of the learned model.

\begin{proposition}\label{prop:mech}
Suppose that the neurons are completely diversified as per Definition \ref{def:diversification}. Under the  parametrization in \eqref{eq:trig_pattern} and the phase-alignment condition $2\phi_m-\psi_m=0\bmod{2\pi}$ for all $m\in[M]$, the output logit at dimension $j\in[p]$ takes the form:
 \begin{align}
	f(x,y;\xi,\theta)[j]&=aN/2\cdot\big\{-1+p/2\cdot\underbrace{\ind(x+y\bmod p=j)}_{\displaystyle\text{\small\rm signal term}}+p/4\cdot\underbrace{\sum_{z\in\{x,y\}}\ind(2z\bmod p=j)}_{\displaystyle\text{\small\rm noise terms}}\big\}.
	\label{eq:mech_indicator}
\end{align}	
For any $\epsilon\in(0,1)$, by taking $a\gtrsim(Np)^{-1}\cdot\log(p/\epsilon)$, it holds that $\|\smax\circ f(\cdot,\cdot;\xi,\theta)-e_{m_p(\cdot,\cdot)}\|_{1,\infty}\leq\epsilon$.
\end{proposition}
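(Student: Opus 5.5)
The plan is a direct computation in two stages: first reduce the logit to a closed form by expanding each neuron's contribution in Fourier terms and summing, then read off a softmax margin from that closed form.

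\textbf{Single-neuron expansion.} Fix a neuron $m$ with $k=\varphi(m)$, write $\omega=\omega_k$, and set $\psi_m=2\phi_m$ using phase alignment. Sum-to-product gives
\[
\theta_m[x]+\theta_m[y]=2\alpha_m\cos\big(\omega(x-y)/2\big)\cos\big(\omega(x+y)/2+\phi_m\big).
\]
Squaring and using $\cos^2 u=(1+\cos 2u)/2$ gives
\[
\sigma(\langle e_x+e_y,\theta_m\rangle)=2\alpha_m^2\cos^2\big(\omega(x-y)/2\big)\big(1+\cos(\omega(x+y)+2\phi_m)\big).
\]
Multiplying by $\xi_m[j]=\beta_m\cos(\omega j+2\phi_m)$ and applying product-to-sum yields \eqref{eq:neuron_mech}: the bracket is $\cos(\omega j+2\phi_m)+\tfrac12\cos(\omega(x+y-j))+\tfrac12\cos(\omega(x+y+j)+4\phi_m)$. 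The neuron's overall scale is a fixed power product of $\alpha_m,\beta_m$. I will take this to be the constant $a$ of Definition~\ref{def:diversification}(ii), absorbing numerical factors into the normalization of $a$.

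\textbf{Noise cancellation within a frequency group.} Next sum over $m\in\cN_k$. The terms $\cos(\omega j+2\phi_m)$ and $\cos(\omega(x+y+j)+4\phi_m)$ are real parts of $e^{i(\cdot)}\,e^{i\iota\phi_m}$ with $\iota\in\{2,4\}$. The $m$-independent factor pulls out, so these terms vanish by condition (iii), which I read as $\sum_{m\in\cN_k}e^{i\iota\phi_m}=0$. By conditions (i) and (ii), each group therefore contributes $aN\cos^2(\omega_k(x-y)/2)\cos(\omega_k(x+y-j))$, up to the normalization above. Rewriting $\cos^2(\omega_k(x-y)/2)=\tfrac12(1+\cos(\omega_k(x-y)))$ and applying product-to-sum once more gives
\[
\tfrac{aN}{2}\Big[\cos(\omega_k(x+y-j))+\tfrac12\cos(\omega_k(2x-j))+\tfrac12\cos(\omega_k(2y-j))\Big].
\]

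\textbf{Summing over frequencies.} Use the character identity $\sum_{k=1}^{(p-1)/2}\cos(\omega_k u)=\tfrac12\big(p\,\ind(u\equiv0)-1\big)$, valid for $p$ odd. Applied to the three terms, the constants collect as $-\tfrac12-\tfrac14-\tfrac14=-1$, and the indicators carry coefficients $p/2$ and $p/4$. This is exactly \eqref{eq:mech_indicator}. The only delicate part of the whole argument is the bookkeeping of constants between the stated normalization $\alpha_m\beta_m^2=a$ and the factor produced by the quadratic activation. I would fix this by defining $a$ as the per-neuron amplitude that actually appears, which affects only the absolute constant in the final condition on $a$.

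\textbf{Softmax accuracy.} Since $p$ is odd, $2z\equiv j$ has a unique solution $z$ for each $j$. Let $j^\star=(x+y)\bmod p$. The logit at $j^\star$ is at least $\tfrac{aN}{2}(-1+p/2)$. Any $j\neq j^\star$ receives at most one noise indicator, because if $2x\equiv 2y\equiv j$ then $x=y$ and $j=2x=x+y=j^\star$. Hence the logit at $j\neq j^\star$ is at most $\tfrac{aN}{2}(-1+p/4)$, so the margin is at least $aNp/8$. It follows that $1-\smax(f)_{j^\star}\le (p-1)e^{-aNp/8}$, and the $\ell_1$ distance to $e_{j^\star}$ is at most $2(p-1)e^{-aNp/8}$, uniformly over $(x,y)$. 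This is at most $\epsilon$ once $a\ge 8(Np)^{-1}\log(2p/\epsilon)$, which is the claimed $a\gtrsim (Np)^{-1}\log(p/\epsilon)$.
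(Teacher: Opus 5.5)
Your proposal is correct and follows the paper's proof essentially step for step. It uses the single-neuron sum-to-product expansion, then cancels the $2\phi_m$ and $4\phi_m$ terms within each $\cN_k$ via condition (iii), read as $\sum_{m\in\cN_k}e^{i\iota\phi_m}=0$, then applies the identity $\sum_{k=1}^{(p-1)/2}\cos(\omega_k z)=-\tfrac12+\tfrac p2\ind(z\equiv 0)$, and finishes with a softmax margin of $aNp/8$. Your bookkeeping of constants is slightly more careful than the paper's on two points: the per-neuron amplitude produced by the quadratic activation is really $\alpha_m^2\beta_m$ rather than $\alpha_m\beta_m^2$, and the $\ell_1$ bound carries the factor $2(p-1)e^{-aNp/8}$. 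Neither point changes the conclusion.
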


Please refer to \S\ref{ap:proof_mech} for a detailed proof of Proposition \ref{prop:mech}.
The proposition states that although each neuron individually implements a trigonometric mechanism as shown in \eqref{eq:neuron_mech}, the diversified neurons indeed collectively approximate the indicator function $\ind(x+y\bmod p=j)$.
As noted in \cite{zhong2023clock}, the $\cos(\omega_{\varphi(m)}(x-y)/2)^2$ term in \eqref{eq:neuron_mech} is the Achilles' heel of this strategy. 
We show that even under complete diversification, it would still introduce spurious peaks at $2x\bmod p$ and $2y\bmod p$.
However, from \eqref{eq:mech_indicator}, we see that the true-signal peak exceeds these noise peaks by $aNp/8$.
Hence, after the softmax operation, the model's output would concentrate on the correct sum $x+y\bmod p$ as long as the magnitude grows large enough during the training.

In \S\ref{ap:ablation_diversification}, we present ablation studies on full diversification, evaluating the performance of neural network predictors with limited frequencies and non-uniformly distributed phases under the same neuron budget constraint.
The results show that fully diversified parameterization is the most \highlight{parameter-efficient} approach, yielding the largest logit gap between the ground-truth index and incorrect labels.
\section{Training Dynamics for Feature Emergence}
\label{sec:theory}
In this section, we provide a theoretical understanding of how features emerge during standard gradient-based training.
Unlike previous theoretical works that focused on loss landscape analysis \citep[e.g.,][]{morwani2023feature}, we offer a more complete view from the perspective of training dynamics. 
To achieve this, we track the evolution of the model's parameters directly in the Fourier space.

\subsection{Background: Discrete Fourier Transform}\label{sec:dft}
Motivated by empirical observations in \S\ref{sec:experiments},  it is natural to apply the Fourier transform to model parameters and to track the evolution of the Fourier coefficients throughout the training process. 
This allows us to investigate how these Fourier features are learned. 
We begin by defining the Fourier basis matrix over $\ZZ_p$ by $B_p=[b_1,\dots,b_p]\in\RR^{p\times p}$, where each column is given by
$$
b_1=\frac{\one_p}{\sqrt{p}},\qquad b_{2k}=\sqrt{\frac{2}{p}}\cdot[\cos(\omega_k ),\dots,\cos(\omega_kp)],\qquad b_{2k+1}=\sqrt{\frac{2}{p}}\cdot[\sin(\omega_k ),\dots,\sin(\omega_kp)],
$$
where $w_k=2k\pi/p$ for all $k\in[\frac{p-1}{2}]$\footnote{We choose $p$ as a prime number greater than 2 to simplify the analysis.}. 
We then project the model parameters, $\xi_m$'s and $\theta_m$'s, onto this basis. 
This change of basis is equivalent to applying the Discrete Fourier Transform \citep[DFT,][]{sundararajan2001discrete}, yielding the Fourier coefficients:
\begin{align*}
	&g_m=B_p^\top\theta_m,\qquad r_m=B_p^\top\xi_m,\qquad\forall m\in[M].
\end{align*}
To better interpret these coefficients, we group the sine and cosine components for each frequency $k$ and reparameterize them by their magnitude and phase.
Denote by $g_m^k=(g_m[2k],g_m[2k+1])$ and $r_m^k=(r_m[2k],r_m[2k+1])$ the coefficient vector in correspondence to frequency $k$.
Their magnitudes and phases are defined as follows.
For the \emph{input layer}, $\alpha_m^k$ denotes the magnitude and $\phi_m^k$ the phase of the $k$-th frequency component of $\theta_m$.
For the \emph{output layer}, $\beta_m^k$ and $\psi_m^k$ are the corresponding magnitude and phase of $\xi_m$.
These can be formalized as
\begin{align*}
	&\alpha_m^k=\sqrt{\frac{2}{p}}\cdot\|g_m^k\|,\qquad\phi_m^k={\rm atan}(g_m^k),\qquad \beta_m^k=\sqrt{\frac{2}{p}}\cdot\|r_m^k\|,\qquad\psi_m^k={\rm atan}(r_m^k).
\end{align*}
Here, ${\rm atan}(x)=\atan(-x[2],x[1])$ where ${\rm atan2}:\RR\times\RR\mapsto(-\pi,\pi]$ is the 2-argument arc-tangent.
This polar representation is intuitive, as it directly relates the coefficients to a phase-shifted cosine, e.g.,  
$g_m[2k]\cdot b_{2k}[j]+g_m[2k+1]\cdot b_{2k+1}[j]=\alpha_m^k\cdot\cos(w_kj+\phi_m^k)$.
By setting constant coefficients as $\alpha_m^0=g_m[1]/\sqrt{p}$ and $\beta_m^0=r_m[1]/\sqrt{p}$, we can recover the expanded form in \eqref{eq:para_fourier_form}. 

\subsection{A Dynamical Perspective on Feature Emergence}\label{sec:dynamics}
In the following, we provide a theoretical explanation of how the features --- \highlight{single-frequency} and \highlight{phase alignment} patterns, i.e, Observation 1 and 2, emerge during training.
For theoretical convenience, we adopt the quadratic activation \citep{arous2025learning} and focus on the training over a complete dataset $\cD_{\sf full}$, a familiar setting in prior work \citep[e.g.,][]{morwani2023feature,tian2024composing}.
To better understand the training dynamics using the gradient-based optimization methods, we analyze the continuous-time limit of gradient descent --- gradient flow, which is introduced below.

\paragraph{Gradient Flow.}
Consider training a two-layer neural network as defined in \eqref{eq:def_2NN} with one-hot input embeddings, i.e., $h_x=e_x\in\RR^p$, parameterized by $\Theta=\{\xi,\theta\}$, and the loss $\ell$ is given by the cross-entropy (CE) loss in \eqref{eq:def_CE_loss}, evaluated over the full dataset $\cD_{\sf full}$.
When training the parameter $\Theta$ using the gradient flow, the dynamics are governed by the following ODE:
$$
\partial_t\Theta_t=\nabla\ell(\Theta_t),\qquad\ell(\Theta)=-\sum_{x\in\ZZ_p}\sum_{y\in\ZZ_p}\big\langle\log\circ \smax\circ f(x,y;\xi,\theta),e_{(x+y)\bmod p}\big\rangle.
$$
We consider gradient flow under an initialization that satisfies the following conditions.
\begin{assumption}[Initialization]\label{as:initialization}
	For each neuron $m\in[M]$, the network parameters $(\xi_m,\theta_m)$ are initialized as $\theta_m\sim\kappa_{\sf init}\cdot\sqrt{p/2}\cdot(\varrho_1[1]\cdot b_{2k}+\varrho_1[2]\cdot b_{2k+1})$ and $\xi_m\sim\kappa_{\sf init}\cdot\sqrt{p/2}\cdot(\varrho_2[1]\cdot b_{2k}+\varrho_2[2]\cdot b_{2k+1})$ where $\varrho_1,\varrho_2\iid{\rm Unif}(\mathbb{S}^1)$, $k\sim{\rm Unif}([\frac{p-1}{2}])$ and $\kappa_{\sf init}>0$ denotes a sufficiently small initialization scale.
\end{assumption}
Assumption \ref{as:initialization} posits that each neuron $m$ is initialized randomly but contains a single-frequency component, all at the same small scale, i.e., $\alpha_m^k(0)=\beta_m^k(0)=\kappa_{\sf init}$.
This specialized initialization is adopted for theoretical convenience, allowing us to sidestep the chaotic frequency competition induced by entirely random initialization and study the evolution of one specific frequency. 
Specifically, the single-frequency is \emph{sufficient to capture the overall behavior} as each frequency component evolves within its own \emph{orthogonal subspace}. 
In \S\ref{sec:lottery}, we will extend to the case where each neuron is initialized with multiple frequencies. 


\subsection{Properties at the Initial Stage}\label{sec:initial_stage_property}

Given a sufficiently small initialization in Assumption \ref{as:initialization}, a key property at the initial stage is that the parameter magnitudes remain small, resulting in the softmax output being nearly uniform over. 
Formally, $\|\theta_m\|_\infty$ and $\|\xi_m\|_\infty$ are small such that the following equality holds approximately:
\begin{equation}
	\smax\circ f(x,y;\xi,\theta)\approx\frac{1}{p}\cdot\one_p.
	\label{eq:smax_unif_stage1}
\end{equation}
While \eqref{eq:smax_unif_stage1} suggests that the neural network behaves as a poorly performing uniform predictor at the initial stage due to the small parameter magnitudes, this does not imply that the model learns nothing.
Instead, the model can learn the "feature direction" of the data under the guidance of the gradient. 
In what follows, we examine the key components of the gradient and define the time threshold $t_{\sf init}$ to ensure all parameters remain within a small scale.

\paragraph{Neuron Decoupling.}
We first show that the neurons are \highlight{decoupled} at the initial stage, meaning the evolution of parameters $\theta_m$ and $\xi_m$ depends solely on $(\theta_m,\xi_m)$---the parameters of neuron $m$ itself---by using the approximation in \eqref{eq:smax_unif_stage1}. 
To establish this, we compute the gradient and simplify it using periodicity. We derive that the gradient flow for each neuron $m\in[M]$ at the initial stage admits the following simplified form: for each entry $j\in[p]$, we have
\begin{subequations}
\begin{align}
	\partial_t \theta_m[j](t)&\approx2p\cdot\sum_{k=1}^{(p-1)/2}\alpha_m^{k}(t)\cdot\beta_m^{k}(t)\cdot\cos\bigl(\omega_{k} j+\psi_m^{k}(t)-\phi_m^{k}(t) \bigr),\label{eq:decouple_dynamics_theta}\\
	\partial_t \xi_m[j](t)&\approx p\cdot\sum_{k=1}^{(p-1)/2}\alpha_m^{{k}}(t)^2\cdot\cos(\omega_{k}j+2\phi_m^{k}(t) ).\label{eq:decouple_dynamics_xi}
\end{align}
\end{subequations}
Here, we use the Fourier expansion of parameters $\theta_m(t)$ and $\xi_m(t)$ as given in \eqref{eq:para_fourier_form}.
In words, the first equation states that $\theta_m$ evolves as a superposition of cosines, where each frequency $k$ contributes with a rate proportional to the product of the input and output magnitudes $\alpha_m^k\cdot\beta_m^k$, modulated by the phase difference $\psi_m^k-\phi_m^k$ between the two layers.
The second equation shows that $\xi_m$ evolves similarly, but its rate depends only on the input magnitude $\alpha_m^k$ squared, with a phase of $2\phi_m^k$.
Crucially, the dynamics, i.e., $\partial_t \theta_m(t)$ and $\partial_t \xi_m(t)$, only depends on $\{(\alpha_m^{k},\beta_m^{k},\phi_m^k,\psi_m^k)\}_{k\in[(p-1)/2]}$ and $r_m[1]$ that corresponds to neuron $m$.
This demonstrates a decoupled evolution among neurons.
Hence, in the remaining section, we can focus on a fixed neuron $m$.
Similar decoupling technique with a similar small output scale is also seen in \citet{lee2024neural,chen2025can} for $\ell_2$-loss.

\begin{remark}[Equivalence to Margin Maximization under Small Initialization]
Notice that the modular addition task is a multi-class classification problem. To understand the feature emergence, \cite{morwani2023feature} considers an average margin maximization problem, where the margin is defined by
\begin{align*}
	\max_{\xi,\theta}\ell_{\sf AM} (\xi,\theta)\text{~~with~~}\ell_{\sf AM}(\xi,\theta)=\sum_{x\in\ZZ_p}\sum_{y\in\ZZ_p}\biggl\{f(x,y;\xi,\theta)[(x+y)\bmod p]-\frac{1}{p}\sum_{j\in\ZZ_p}f(x,y;\xi,\theta)[j]\biggl\}.
\end{align*}
In comparison, given the small scale of parameters during the initial stage, we can show that, similar to the approximation in \eqref{eq:smax_unif_stage1}, the loss takes the approximate form:
\begin{align*}
    \ell(\xi,\theta)&=-\sum_{x\in\ZZ_p}\sum_{y\in\ZZ_p}f(x,y;\xi,\theta)[(x+y)\bmod p]+\sum_{x\in\ZZ_p}\sum_{y\in\ZZ_p}\log\biggl(\sum_{j=1}^p\exp(f(x,y;\xi,\theta)[j])\biggl)\\
    &\approx\underbrace{-\sum_{x\in\ZZ_p}\sum_{y\in\ZZ_p}f(x,y;\xi,\theta)[(x+y)\bmod p]+\frac{1}{p}\sum_{x\in\ZZ_p}\sum_{y\in\ZZ_p}\sum_{j=1}^pf(x,y;\xi,\theta)[j]}_{\displaystyle =-\ell_{\sf AM} (\xi,\theta)}+p^2\log p,
\end{align*}
where we use the first-order approximations $\exp(x)\approx1+x$ and $\log(1+x)\approx x$ for small $x$.
Following this, we observe that during the initial stage, minimizing the loss in \eqref{eq:def_CE_loss} is equivalent to optimizing the average margin.
This connection underpins the theoretical insights in \cite{morwani2023feature}, which links the margin maximization problem to empirical observations.
\end{remark}

\paragraph{Section Roadmap.}
With   slight abuse of notation, we let $k^\star$ denote the initial frequency of each neuron (see Assumption \ref{as:initialization}) and use the superscript $\star$ instead of $k^\star$ to simplify the notation further.
In the following, we aim to show that (\romannumeral1) the single-frequency pattern, i.e., $g_m[j]=r_m[j]=0$ for all $j\neq 2k^\star,2k^\star+1$, is preserved throughout the gradient flow (see \S\ref{sec:single_freq_dyn}), and (\romannumeral2) the phases of the first and second layers will align such that $2\phi_m^\star(t)-\psi_m^\star(t)\bmod2\pi$ converges to $0$ (see \S\ref{sec:phase_alignment_dyn}). 

\subsection{Preservation of Single-Frequency Pattern}\label{sec:single_freq_dyn}
Recall that the dynamics of the parameters are approximately given by the entry-wise ODEs in \eqref{eq:decouple_dynamics_theta} and \eqref{eq:decouple_dynamics_xi}.
Our goal is to lift these entry-wise dynamics into the Fourier domain and show that the single-frequency pattern is preserved.
The argument proceeds in three steps:
(\romannumeral1) project the entry-wise ODEs onto the Fourier basis $B_p$ to obtain the dynamics of the Fourier coefficients $g_m$ and $r_m$;
(\romannumeral2) convert to polar coordinates $(\alpha_m^k,\phi_m^k)$ and $(\beta_m^k,\psi_m^k)$ via the chain rule; and
(\romannumeral3) show that the orthogonality of the Fourier basis ensures different frequencies decouple, so that non-feature frequencies initialized at zero remain negligible.
We begin with the constant component.
Note the constant frequency, i.e., $g_m[1]$ and $r_m[1]$, remains almost $0$ due to the centralized dynamics:
\begin{align}
\partial_t \theta_m[j](t),~\partial_t \xi_m[j](t)\in{\rm span}(\{b_\tau\}_{\tau=2}^p),\qquad\forall j\in[p].
\label{eq:decouple_dynamics_span}
\end{align}
By definition, we can show that $\partial_t g_{m}[1](t)=\langle b_1,\partial_t\theta_{m}(t)\rangle$ and $\partial_t r_{m}[1](t)=\langle b_1,\partial_t\xi_{m}(t)\rangle$.
Given the zero-initialization $g_m[1]=r_m[1]=0$ (see Assumption \ref{as:initialization}), and utilizing  \eqref{eq:decouple_dynamics_span}, it follows that
\begin{equation}\label{eq:preserve_constant}
\partial_tg_m[1](t)\approx\partial_tr_m[1](t)\approx0\text{~~~s.t.~~~}g_m[1](t)\approx r_m[1](t)\approx0,
\end{equation}
holds
throughout the first stage.
Moreover, to establish frequency preservation, we track the magnitudes of each frequency, i.e., $\{\alpha_m^k\}_{k\in[(p-1)/2]}$ and $\{\beta_m^k\}_{k\in[(p-1)/2]}$.
Thanks to the orthogonality of the Fourier basis, by applying the chain rule, for each frequency $k$, it holds that
\begin{align*}
	&\partial_t\alpha_m^k(t)\approx2p\cdot\alpha_m^k(t)\cdot\beta_m^k(t)\cdot\cos\big(2\phi_m^k(t)-\psi_m^k(t)\big),\\
    &\partial_t\beta_m^k(t)\approx p\cdot\alpha_m^k(t)^2\cdot\cos\big(2\phi_m^k(t)-\psi_m^k(t)\big),
\end{align*}
where the evolution of the magnitudes for frequency $k$ only depends on $(\alpha_m^k,\beta_m^k,\phi_m^k,\psi_m^k)$.
Given the initial value $\alpha_m^k(0)=\beta_m^k(0)=0$ for $k\neq k^\star$ (see Assumption \ref{as:initialization}), we have
\begin{equation}\label{eq:preserve_nonfeature}
\alpha_m^k(t)\approx \beta_m^k(t)\approx0,\qquad\forall k\neq k^\star.
\end{equation}
Recall that we define $\alpha_m^k=\sqrt{2/p}\cdot\|g_m^k\|$ and $\beta_m^k=\sqrt{2/p}\cdot\|r_m^k\|$. 
By combining \eqref{eq:preserve_constant} and \eqref{eq:preserve_nonfeature}, we can establish the preservation of \highlight{single-frequency} pattern (see Figure \ref{fig:single_freq} for experimental results):
\begin{tcolorbox}[frame empty, left = 0.1mm, right = 0.1mm, top = 0.1mm, bottom = 1.5mm]
	\begin{equation}
	g_m[j](t)\approx r_m[j](t)\approx0,\qquad\forall j\neq2k^\star,2k^\star+1.
	\label{eq:single_freq}
	\end{equation}	
\end{tcolorbox}
\noindent Based on \eqref{eq:single_freq}, we can further simplify  \eqref{eq:decouple_dynamics_theta} and \eqref{eq:decouple_dynamics_xi} as follows
\begin{equation}
\begin{aligned}
	\partial_t \theta_m[j](t)&\approx2p\cdot\alpha_m^\star(t)\cdot\beta_m^\star(t)\cdot\cos(\omega_\star j+\psi_m^\star(t)-\phi_m^\star(t)),\\
	\partial_t \xi_m[j](t)&\approx p\cdot\alpha_m^\star(t)^2\cdot\cos(\omega_\star j+2\phi_m^\star(t)).
\end{aligned}
\label{eq:decouple_simple}
\end{equation}
For each neuron, its evolution can be approximately characterized by a four-particle dynamical system consisting of magnitudes $\alpha_m^\star(t)$ and $\beta_m^\star(t)$ and phases $\phi_m^\star(t)$ and $\psi_m^\star(t)$.
We formalize the result in \eqref{eq:single_freq} and the approximate arguments above into the following theorem.
\begin{theorem}[Informal]\label{thm:single_freq}
	Under the initialization in Assumption \ref{as:initialization}, for a given threshold $C_{\sf end} > 0$, we define the initial stage as $(0, t_{\sf init}]$, where $t_{\sf init}:=\inf\{t\in\RR^+:\max_{m\in[M]}\|\theta_m(t)\|_\infty\vee\|\xi_m(t)\|_\infty\leq C_{\sf end}\}$.
Suppose that $\log M/M\lesssim c^{-1/2}\cdot(1+o(1))$, $\kappa_{\sf init}=o(M^{-1/3})$ and $C_{\sf end}\asymp\kappa_{\sf init}$, given sufficiently small $\kappa_{\sf init}$, we have
$\max_{k\neq k^\star}\inf_{t\in(0,t_{\sf init}]}\alpha_m^k(t)\vee\beta_m^k(t)=o(\kappa_{\sf init})$.
\end{theorem}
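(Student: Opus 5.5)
The plan is to make the heuristic derivation of \S\ref{sec:single_freq_dyn} rigorous by a bootstrap (continuity) argument on $(0,t_{\sf init}]$. I read the definition of $t_{\sf init}$ as the first exit time of $\max_m\|\theta_m\|_\infty\vee\|\xi_m\|_\infty$ from the ball of radius $C_{\sf end}\asymp\kappa_{\sf init}$. All estimates are made on this interval, where every logit is $O(M\kappa_{\sf init}^3)=o(1)$ because $\kappa_{\sf init}=o(M^{-1/3})$.

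\textbf{Step 1: exact gradient and its error.} Write the exact gradient of the CE loss with quadratic activation. Per sample it is
\[
\partial_{\xi_m}\ell=-\sum_{x,y}\bigl(e_{x+y}-s(x,y)\bigr)\langle e_x+e_y,\theta_m\rangle^2,
\]
with the analogous expression for $\theta_m$, where $s=\smax\circ f$. Split $s=\frac1p\one_p+\Delta(x,y)$. Expanding the softmax to first order gives $\|\Delta(x,y)\|_\infty\lesssim\|f(x,y)\|_\infty/p\lesssim M\kappa_{\sf init}^3/p$.

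\textbf{Step 2: the main term.} Keep only the uniform part $\frac1p\one_p$ and sum over $(x,y)\in\ZZ_p^2$ using the Fourier expansion \eqref{eq:para_fourier_form} and the orthogonality of characters. This yields exactly \eqref{eq:decouple_dynamics_theta}--\eqref{eq:decouple_dynamics_xi}, and the $\one_p$ part is annihilated because $e_{x+y}-\frac1p\one_p$ is centered, which gives \eqref{eq:decouple_dynamics_span}. The projection of this main term onto frequency $k$ involves only $(g_m^k,r_m^k)$ and the constants $g_m[1],r_m[1]$. Consequently, any component with frequency $k\neq k^\star$ that starts at zero has zero main-term drift whenever all such components are zero.

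\textbf{Step 3: Gronwall bounds on the off-frequency components.} Define $E(t)=\max_m\sum_{k\neq k^\star}(\|g_m^k\|+\|r_m^k\|)+|g_m[1]|+|r_m[1]|$. The main-term drift of $E$ is linear in $E$, with coefficient $\lesssim p\,C_{\sf end}$. The error term coming from $\Delta$ couples neurons: by Cauchy--Schwarz over the $p^2$ samples it is bounded by $p^2\cdot\frac{M\kappa^3}{p}\cdot C_{\sf end}^2\cdot\sqrt p$, i.e.\ of order $M\kappa_{\sf init}^5\,\mathrm{poly}(p)$. Gronwall then gives
\[
E(t)\lesssim M\kappa_{\sf init}^5\,\mathrm{poly}(p)\,t\,e^{pC_{\sf end}t}.
\]

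\textbf{Step 4: controlling the time horizon.} We still need to bound $t_{\sf init}$. Under the main dynamics the star component evolves by the four-particle system, so $\partial_t\alpha\lesssim p\,\alpha\beta$ and $\partial_t\beta\lesssim p\,\alpha^2$. Leaving a ball of radius $C_{\sf end}=c\,\kappa_{\sf init}$ with $c>1$ therefore takes time $O(1/(p\kappa_{\sf init}))$ in the favorable-phase case. Plugging in, $E(t_{\sf init})\lesssim M\kappa_{\sf init}^4\,\mathrm{poly}(p)=o(\kappa_{\sf init})$, using $M\kappa_{\sf init}^3=o(1)$. The condition $\log M/M\lesssim c^{-1/2}$ presumably enters via a union bound over neurons on the random phases. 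It ensures that, for every neuron, the misalignment $\cos(2\phi-\psi)$ does not make the exit time blow up, or else that the bound is taken only up to $t_{\sf init}$, which is defined as a max over $m$.

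\textbf{Main obstacle.} The hardest part is this last step. For badly misaligned neurons (misalignment near $\pi$) the magnitudes first shrink before the phases rotate, so the time horizon can be long, and the factor $e^{pC_{\sf end}t}$ must still be dominated by the $\kappa_{\sf init}^4$ smallness. My first attempt would be to use the fact that phase rotation runs at speed $\asymp p\kappa_{\sf init}\sin(\mathrm D)$, which bounds the alignment time by $O(\log(1/|\sin \mathrm D(0)|)/(p\kappa_{\sf init}))$. I would then control $\min_m|\sin\mathrm D_m(0)|$ over $M$ uniform draws, with high probability at level $\gtrsim 1/M$ up to logs, which is where the $\log M/M$ hypothesis would be consumed. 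Finally, the statement's $\inf_t$ is weaker than $\sup_t$, so bounding $\sup_{t\le t_{\sf init}}$ suffices.
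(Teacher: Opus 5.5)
Your Steps 1--3 coincide with the paper's proof. The paper also uses the softmax-near-uniform error bound (Lemma~\ref{lem:bound_approx_loss}), the exactly frequency-diagonal main flow of Lemma~\ref{lem:central_flow}, and a Gronwall bound on the off-frequency and constant components started from zero, giving roughly $Mp\,C_{\sf end}^5\,t\,e^{\sqrt2 pC_{\sf end}t}$.

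The gap is in Step~4, which is the crux, since everything hinges on $t_{\sf init}\lesssim(p\kappa_{\sf init})^{-1}$. First, the inequalities $\partial_t\alpha\lesssim p\alpha\beta$ and $\partial_t\beta\lesssim p\alpha^2$ bound growth from above. They therefore give a \emph{lower} bound on the exit time and cannot show that exit happens by time $O((p\kappa_{\sf init})^{-1})$. Second, the obstacle you single out is not one. The time $t_{\sf init}$ is the first time the \emph{maximum} over neurons reaches $C_{\sf end}$. Your Step~3 bound holds for every neuron, aligned or not, using only that all parameters lie in the ball. So badly misaligned neurons cannot lengthen the horizon: it ends as soon as a single neuron exits. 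Your proposed remedy would actually not close. Bounding the worst neuron's alignment time via $\min_m|\sin{\eu D}_m^\star(0)|\gtrsim M^{-1-c}$ gives a horizon of order $\log M/(p\kappa_{\sf init})$. Then $e^{pC_{\sf end}t}$ becomes $M^{\Theta(1)}$, and the final bound $M^{1+\Theta(1)}\kappa_{\sf init}^4$ need not be $o(\kappa_{\sf init})$ under only $\kappa_{\sf init}=o(M^{-1/3})$.

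The missing idea is to use the \emph{best} neuron rather than the worst.

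\begin{itemize}
\item Lemma~\ref{lem:init_concen} is an extreme-value bound on $M$ i.i.d.\ uniform phase gaps; this is where the $\log M/M$ scale comes from. With probability $\ge1-M^{-c}$, some neuron has $\cos{\eu D}_m^\star(0)\ge C_D:=1-O((\log M/M)^2)$.
\item The paper then \emph{assumes} that this alignment persists on $(0,t_{\sf init}]$. This is event $\mathcal{E}_{\sf phase}$, the formal content of ``sufficiently small $\kappa_{\sf init}$''. It is natural because ${\eu D}_m^\star$ decreases monotonically under the main flow.
\item For that neuron you need a \emph{lower} bound on growth. The approximate conservation law $\alpha_m^\star(t)^2-2\beta_m^\star(t)^2\ge-\kappa_{\sf init}^2-O(MpC_{\sf end}^6t)$ turns $\partial_t\beta_m^\star\ge pC_D(\alpha_m^\star)^2-|\tilde{\sf Err}_m^{(1)}|$ into the Riccati inequality $\partial_t\beta_m^\star\ge2pC_D(\beta_m^\star)^2-p(1+o(1))\kappa_{\sf init}^2$.
\item The condition $2C_D>1$ is the actual role of the hypothesis $\log M/M\lesssim c^{-1/2}$. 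It makes $\beta_m^\star$ grow like $e^{2pC_D\kappa_{\sf init}t}$ from $\beta_m^\star(0)=\kappa_{\sf init}$, so it reaches $C_{\sf end}\asymp\kappa_{\sf init}$ within time $O((p\kappa_{\sf init})^{-1})$.
\item On that horizon $e^{\sqrt2pC_{\sf end}t}=O(1)$, and your Step~3 bound becomes $MpC_{\sf end}^5\cdot(p\kappa_{\sf init})^{-1}\asymp M\kappa_{\sf init}^4=o(\kappa_{\sf init})$.
\end{itemize}
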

The formal statement and proof of Theorem \ref{thm:single_freq} is provided in \S\ref{ap:single_freq_preserve}.
The theorem states that under a small random initialization, during the initial training stage where the feature magnitudes remain within a constant factor of their starting values, the non-feature frequencies, which are initialized at zero, will not grow beyond a negligible $o(\kappa_{\sf init})$.
We remark that the initial stage is sufficient to understand the dynamics of feature emergence.
As we will show in the next section (\S\ref{sec:phase_alignment_dyn}), a constant-order growth of the parameter norms, i.e., $\max_{m\in[M]}\|\theta_m(t)\|_\infty\vee\|\xi_m(t)\|_\infty\lesssim \kappa_{\sf init}$, is sufficient to achieve the desired phase alignment.

\subsection{Neuron-Wise Phase Alignment}\label{sec:phase_alignment_dyn}

\begin{figure}[t]
  \centering
  \begin{minipage}[t]{0.44\textwidth}
    \centering
    \begin{tikzpicture}[scale=1.1, >=stealth]
      \draw[->, thick] (-2.7,0) -- (2.7,0);
      \draw[->, thick] (0,-2.7) -- (0,2.7);
      
      \draw[thick] (0,0) circle (2);
      \node[draw, circle, fill=luolan, inner sep=1.5pt] at (2,0) {};

      \node at (-2.02,-0.25) [left]   {\footnotesize $\pi$};
      \node at (2.45,-0.25) [left]   {\footnotesize $0$};

      \node at (1,2.4)   {\footnotesize $\dblue{\sin({\eu D}_m^\star)\geq 0}$};
      \node at (-1,-2.5) {\footnotesize $\dred{\sin({\eu D}_m^\star)< 0}$};
      
      \coordinate (P1) at (150:2);
      \draw[fill] (P1) circle (1.8pt) 
        node[above left] {\footnotesize ${\eu D}_m^\star(t)$};
      \draw[->, thick] (P1) -- ++(60:1.6)
        node[pos=1, left, xshift=-3mm, yshift=-2mm]
        {\footnotesize $\dblue{+}({\eu D}_m^\star(t)-\luolan{\tfrac{\pi}{2}})$};
      \draw[dashed] (P1) -- (0,0);
      
      \coordinate (P2) at (-80:2);
      \draw[fill] (P2) circle (1.8pt)
        node[below right] {\footnotesize ${\eu D}_m^\star(t)$};
      \draw[->, thick] (P2) -- ++(10:1.6)
        node[pos=1, right, xshift=-5mm, yshift=3mm]
        {\footnotesize $\dred{-}({\eu D}_m^\star(t)-\luolan{\tfrac{\pi}{2}})$};
      \draw[dashed] (P2) -- (0,0);
      
      \draw[->, thick, dblue]  (150:1.5) arc (150:5:1.5);
      \draw[->, thick, dred]  (-80:1.5) arc (-80:-5:1.5);
    \end{tikzpicture}
    \subcaption{Illustration of Phase Alignment Behavior.}
    \label{fig:phase_alignment_dynamics}
  \end{minipage}
	\begin{minipage}[t]{0.51\textwidth}
    \centering
    \includegraphics[width=\linewidth]{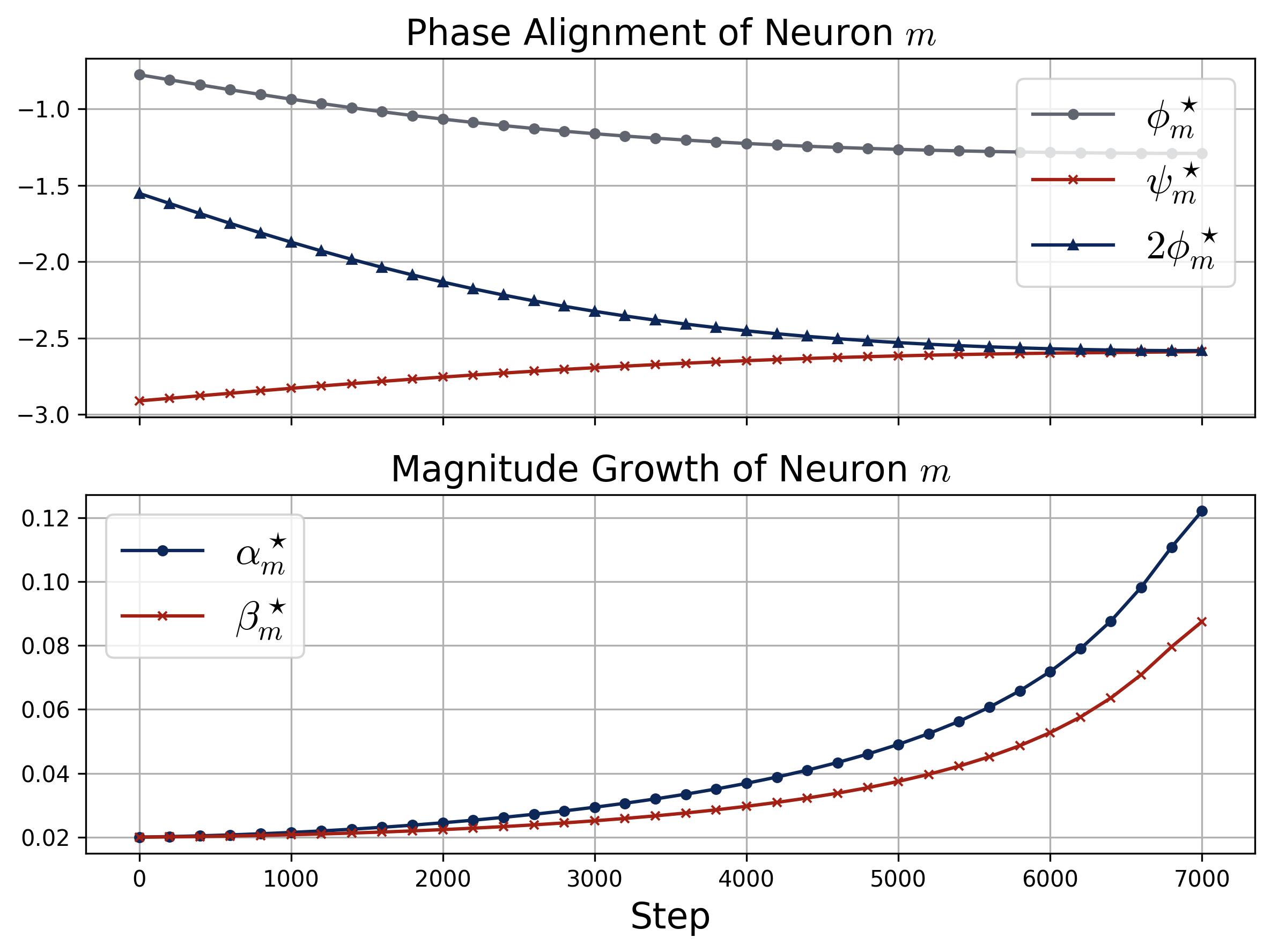}
    \subcaption{Dynamics of Magnitudes and Phases for Neuron $m$.}
    \label{fig:phase_align_mag_and_phase}
  \end{minipage}
  \caption{Visualizations of the alignment behavior and neuron evolution dynamics with $\kappa_{\sf init}=0.02$. Figure (a) illustrates the dynamics of the normalized phase difference ${\eu D}_m^\star(t)$ given by \eqref{eq:align_dynamic}. Initialized randomly on the unit circle, the gradient flow will always drive ${\eu D}_m^\star(t)$ to $0$, regardless of the initial half-space. Figure (b) plots the dynamics of magnitudes and phases of the feature frequency for a specific neuron $m$ during the initial stage of training. $2\phi_m^\star$ and $\psi_m^\star$ evolves to align, and magnitudes $\alpha_m^\star$ and $\beta_m^\star$ starts growing rapidly once the phases are well-aligned.}
\end{figure}

We proceed to investigate the emergence of the phase alignment phenomenon.
To build intuition, we first consider a special stationary point $\psi_m^\star=2\phi_m^\star$.
According to the dynamics given by \eqref{eq:decouple_simple},  it is straightforward to observe the stationarity, as:
$$
\partial_t \theta_m[j](t)\propto \cos(\omega_\star j+\phi_m^\star(t)),\qquad \partial_t \xi_m[j](t)\propto \cos(\omega_\star j+2\phi_m^\star(t))=\cos(\omega_\star j+\psi_m^\star(t)).
$$
This implies that at the stationary point where $\theta_m[j](t)\propto \cos(\omega_\star j+\phi_m^\star(t))$ and $\xi_m[j](t)\propto \cos(\omega_\star j+\psi_m^\star(t))$, $\theta_m[j](t)$ and $\xi_m[j](t)$ evolve in the same direction as themselves. 
Hence, the phases cease to rotate and remain stationary.
Formally, by applying the chain rule over \eqref{eq:decouple_simple}, we have
\begin{equation}
\begin{aligned}
	&\partial_t\exp(i\phi_m^\star(t))\approx2p\cdot\beta_m^\star(t)\cdot\sin\big(2\phi_m^\star(t) -\psi_m^\star(t)\big)\cdot\exp\left(i\left\{\phi_m^\star(t)-\pi/2\right\}\right),\\
	&\partial_t\exp(i\psi_m^\star(t))\approx p\cdot{\alpha_m^\star(t)^2}/{\beta_m^\star(t)}\cdot\sin\big(2\phi_m^\star(t) -\psi_m^\star(t)\big)\cdot\exp\left(i\left\{\psi_m^\star(t)+\pi/2\right\}\right).
\end{aligned}\label{eq:align_euler}
\end{equation}
The first line tracks the input phase $\phi_m^\star$ and the second tracks the output phase $\psi_m^\star$. Both are driven by the shared misalignment factor $\sin(2\phi_m^\star - \psi_m^\star)$: when phases are misaligned this factor is nonzero and drives rotation, while at alignment it vanishes and both phases freeze.
The $-\pi/2$ versus $+\pi/2$ in the exponential indicates that the two phases rotate in \emph{opposite} directions on the unit circle, converging toward each other. See Figure \ref{fig:phase_alignment_dynamics} for an illustration.
Thus, phases $\phi_m^\star$ and $\psi_m^\star$ evolve in the opposite directions, with rotation speed primarily determined by the magnitudes and misalignment level, quantified by $|\sin(2\phi_m^\star(t) - \psi_m^\star(t))|$.
This suggests that $2\phi_m^\star$ will eventually ``meet'' $\psi_m^\star$.
To understand the dynamics of the alignment behavior, we track ${\eu D}_m^\star(t)=2\phi_m^\star(t)-\psi_m^\star(t)\mod2\pi\in[0,2\pi)$.
Using \eqref{eq:align_euler}, the chain rule gives that
\begin{align}
	\partial_t\exp(i{\eu D}_m^\star(t))
	&\approx\left(4\beta_m^\star(t)-{\alpha_m^\star(t)^2}/{\beta_m^\star(t)}\right)\cdot p\cdot\underbrace{\sin\big({\eu D}_m^\star(t)\big)\cdot\exp\left(i\{{\eu D}_m^\star(t)-\pi/2\}\right)}_{\displaystyle\text{zero-attractor term}}.
    \label{eq:align_dynamic}
\end{align}
Notably, though $\{0,\pi\}$ are both stationary points of \eqref{eq:align_dynamic}, the evolution of ${\eu D}_m^\star(t)$ is consistently directed toward $0$.
This is due to the sign of $\sin({\eu D}_m^\star(t))$, which adaptively ensures $\partial_t\exp(i{\eu D}_m^\star(t))$ converges only to zero (see Figure \ref{fig:phase_alignment_dynamics}).
Thus, we can establish the \highlight{phase alignment} behavior below:
\begin{tcolorbox}[frame empty, left = 0.1mm, right = 0.1mm, top = 0.1mm, bottom = 1.5mm]
$$
	2\phi_m^\star(t)-\psi_m^\star(t)\,\bmod\,2\pi\rightarrow0\text{~~when~~}t\rightarrow\infty.
$$	
\end{tcolorbox}
\paragraph{Magnitude Remains Small after Alignment.}
Note the above analysis hinges on the parameter scale being sufficiently small, ensuring that the dynamics can be fully decoupled neuron-wise and that the approximation error remains negligible, as discussed in \S\ref{sec:initial_stage_property}.
To complete the argument, it remains to show that $\alpha_m^\star(t)$ and $\beta_m^\star(t)$ remain small even after the phase is well-aligned.

Under the initialization specified in Assumption \ref{as:initialization}, we can establish the following relationship:
\begin{align*}
	\sin({\eu D}_m^\star(t))&=\sin({\eu D}_m^\star(0))\cdot\{{\eu R}_m^\star(t)\cdot(2{\eu R}_m^\star(t)^2-1)\}^{-1},\quad\text{where~~}{\eu R}_m^\star(t):={\beta_m^\star(t)}/{\kappa_{\sf init}}.
\end{align*}
Here, ${\eu R}_m^\star(t)$ measures how much the output magnitude $\beta_m^\star$ has grown relative to its initial value $\kappa_{\sf init}$.
The identity is an exact conservation law that couples phase alignment to magnitude growth: the product ${\eu R}_m^\star\cdot(2{\eu R}_m^{\star2}-1)$ on the right-hand side is monotonically increasing in ${\eu R}_m^\star$, so a increase in magnitude must be accompanied by a proportional decrease in misalignment $\sin({\eu D}_m^\star)$.
Therefore, when misalignment level $\sin({\eu D}_m^\star(t))$ reaches a small threshold $\delta > 0$, the ratio ${\eu R}_m^\star(t)$ is bounded by $\{\sin({\eu D}_m^\star(0))/\delta\}^{1/3}$.
Since $\alpha_m^\star(t)\asymp\beta_m^\star(t)$, when the neuron is well-aligned, the parameter scales remain on the same order as at initialization.
This aligns with experimental results in Figure \ref{fig:phase_align_mag_and_phase}.
We summarize these findings in the theorem below.
\begin{theorem}\label{thm:phase_align_informal}
    Consider the main flow dynamics under the initialization in Assumption \ref{as:initialization}. 
	  For any initial misalignment ${\eu D}_m^\star(0) \in [0, 2\pi)$ and small tolerance level $\delta\in(0,1)$, the minimal time $t_\delta$ required for the phase to align such that $|{\eu D}_m^\star(t)|\leq \delta$ satisfies that
$$
t_\delta\asymp(p\kappa_{\sf init})^{-1}\cdot\big(1-\{\sin({\eu D}_m^\star(0))/\delta\}^{-1/3}+\max\{\pi/2-|{\eu D}_m^\star(0)-\pi|,0\}\big),
$$
and the magnitude at this time is given by $\beta_m^\star(t_\delta)\asymp\kappa_{\sf init}\cdot\{\sin({\eu D}_m^\star(0))/\delta\}^{1/3}$. 
Moreover, in the mean-field regime $m\rightarrow\infty$, let $\rho_t = \mathsf{Law}\big(\phi_m^\star(t),\psi_m^\star(t)\big)$ for all $t\in\RR^+$.
Then, given $\rho_0=\lambda_{\sf unif}^{\otimes2}$, we have
$$
\rho_\infty=T_{\#}\lambda_{\sf unif}\text{~~with~~}T:\varphi\mapsto(\varphi,2\varphi)\bmod2\pi,
$$
where we let $\lambda_{\sf unif}$ denote the uniform law on $(0,2\pi]$.
\end{theorem}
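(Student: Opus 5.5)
The plan is to reduce the neuron's dynamics to a planar ODE in $(\beta_m^\star,{\eu D}_m^\star)$ via a conservation law, integrate it explicitly, and read off both $t_\delta$ and $\beta_m^\star(t_\delta)$; the mean-field claim then follows from a symmetry argument. I work with the main flow, i.e.\ the approximate dynamics \eqref{eq:decouple_simple} taken as exact. By Theorem~\ref{thm:single_freq} this is justified on the initial stage. In polar coordinates it reads
\[
\partial_t\alpha=2p\,\alpha\beta\cos{\eu D},\qquad \partial_t\beta=p\,\alpha^2\cos{\eu D},\qquad \partial_t{\eu D}=-p\,(4\beta-\alpha^2/\beta)\sin{\eu D},
\]
where the last equation is \eqref{eq:align_dynamic}, and I drop the indices $m,\star$. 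The first step is the conservation law. A direct computation gives $\partial_t(\alpha^2-2\beta^2)=4p\alpha^2\beta\cos{\eu D}-4p\alpha^2\beta\cos{\eu D}=0$. Since Assumption~\ref{as:initialization} gives $\alpha(0)=\beta(0)=\kappa_{\sf init}$, we get $\alpha^2=2\beta^2-\kappa_{\sf init}^2$ for all $t$. In particular $\alpha\asymp\beta$ whenever $\beta$ is bounded away from $\kappa_{\sf init}/\sqrt2$.

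Substituting into the other two equations leaves a planar system in $(\beta,{\eu D})$. Dividing $\partial_t\sin{\eu D}$ by $\partial_t\beta$ makes the factor $\cos{\eu D}$ cancel, so $\log\sin{\eu D}$ is an explicit rational integral in ${\eu R}=\beta/\kappa_{\sf init}$. Integrating produces the invariant relating $\sin{\eu D}(t)/\sin{\eu D}(0)$ to an explicit function of ${\eu R}(t)$, which is the conservation identity quoted before the theorem. I will recheck the exact exponent from this integration: a naive computation seems to give a factor ${\eu R}/(2{\eu R}^2-1)$, and the stated $1/3$ power has to come out of it. Setting $\sin{\eu D}=\delta$ (so $|{\eu D}|\asymp\delta$ near $0$) and inverting the monotone function of ${\eu R}$ then gives $\beta(t_\delta)$ in the claimed form.

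For the time estimate I split into two regimes according to the initial sign of $\cos{\eu D}(0)$. If ${\eu D}(0)$ lies in the half-circle around $\pi$, so $\cos{\eu D}<0$, then $\beta$ decreases, but it stays of order $\kappa_{\sf init}$ by the invariant. Meanwhile ${\eu D}$ rotates toward $0$ at angular speed $\asymp p\kappa_{\sf init}|\sin{\eu D}|$ and eventually crosses $\pi/2$ or $3\pi/2$. The sign of $\sin{\eu D}$ guarantees the motion is always toward $0$ and never crosses $\pi$. This regime contributes the term $(p\kappa_{\sf init})^{-1}\max\{\pi/2-|{\eu D}(0)-\pi|,0\}$. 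Once $\cos{\eu D}>0$, I use $\beta$ as the clock: $dt=d\beta/(p\alpha^2\cos{\eu D})\asymp d{\eu R}/(p\kappa_{\sf init}{\eu R}^2)$. Integrating from ${\eu R}\asymp1$ to ${\eu R}(t_\delta)$ gives $(p\kappa_{\sf init})^{-1}(1-{\eu R}(t_\delta)^{-1})$. Plugging in ${\eu R}(t_\delta)\asymp\{\sin{\eu D}(0)/\delta\}^{1/3}$ yields the stated formula.

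I expect this regime-splitting to be the main obstacle: making the two-sided bounds uniform in ${\eu D}(0)$. Near ${\eu D}(0)\approx\pi$ the rotation is slow, and near ${\eu D}(0)\approx\pi/2$ the quantity $\alpha^2=2\beta^2-\kappa_{\sf init}^2$ could approach $0$. I will handle both with ODE comparison against the linearizations at these points.

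For the mean-field statement, the per-particle result above gives ${\eu D}(t)\to0$ for every ${\eu D}(0)\neq\pi$, a $\rho_0$-null exception, so $\rho_\infty$ is supported on the graph $\{\psi=2\phi\}$. The dynamics is equivariant under $(\phi,\psi)\mapsto(\phi+c,\psi+2c)$, since the vector field depends on the phases only through ${\eu D}$, and $\lambda_{\sf unif}^{\otimes2}$ is invariant under this map. Hence $\rho_t$ is invariant for every $t$, and so is $\rho_\infty$. The $\phi$-marginal of $\rho_\infty$ is therefore rotation invariant, hence uniform, and since $\rho_\infty$ lives on the graph it equals $T_\#\lambda_{\sf unif}$. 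To pass to the limit $t\to\infty$ I will use dominated convergence on test functions, which is justified by the pointwise convergence of the limiting phases; the phase velocity decays like $|\sin{\eu D}|$, so the phases converge.
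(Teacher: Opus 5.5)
\paragraph{The gap: the sign in the ${\eu D}$-equation.} You copy $\partial_t{\eu D}=-p(4\beta-\alpha^2/\beta)\sin{\eu D}$ from \eqref{eq:align_dynamic}, but that display has a sign typo. The appendix version, Lemma~\ref{lem:simple_dynamics_approx}, has $+$. Rederive it from \eqref{eq:decouple_simple}. In complex form,
\[
\partial_t(\alpha e^{i\phi})=2p\alpha\beta e^{i(\psi-\phi)},\qquad \partial_t(\beta e^{i\psi})=p\alpha^2e^{2i\phi}.
\]
Hence $\partial_t\phi=-2p\beta\sin{\eu D}$ and $\partial_t\psi=+p(\alpha^2/\beta)\sin{\eu D}$; the two phases rotate in opposite directions, as the $\mp\pi/2$ in \eqref{eq:align_euler} indicates. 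Therefore
\[
\partial_t{\eu D}=2\partial_t\phi-\partial_t\psi=-p\,(4\beta+\alpha^2/\beta)\sin{\eu D}.
\]

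Your ``naive'' integration is not a slip. With your sign, $d\log\sin{\eu D}/d\beta=-4\beta/\alpha^2+1/\beta$ really does integrate to $\sin{\eu D}(t)=\sin{\eu D}(0)\,{\eu R}/(2{\eu R}^2-1)$. That gives ${\eu R}(t_\delta)\asymp\sin{\eu D}(0)/\delta$, i.e.\ power $1$ instead of $1/3$ in both $\beta(t_\delta)$ and $t_\delta$. Rechecking that integral cannot produce the stated exponent.

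With the correct sign, $d\log\sin{\eu D}/d\beta=-4\beta/\alpha^2-1/\beta$. This integrates to the paper's second invariant $\sin{\eu D}\cdot\beta\cdot\alpha^2=C_{\sf prod}$ (Lemma~\ref{lem:ode_constant}), i.e.\ $\sin{\eu D}(t)=\sin{\eu D}(0)/\{{\eu R}(2{\eu R}^2-1)\}$. Then ${\eu R}(t_\delta)^3\asymp\sin{\eu D}(0)/\delta$, and your $\beta$-clock $(p\kappa_{\sf init})^{-1}(1-{\eu R}(t_\delta)^{-1})$ reproduces the claimed $t_\delta$. Once this is fixed, your route (cancel $\cos{\eu D}$ and integrate in $\beta$) is essentially the paper's.

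\paragraph{Secondary issue: the hand-off at $\pi/2$.} This arises when ${\eu D}(0)$ lies in the half-circle around $\pi$. There $\beta$ has been decreasing, ${\eu R}$ can be close to $1/\sqrt2$, and $\alpha^2=\kappa_{\sf init}^2(2{\eu R}^2-1)$ nearly vanishes. So the estimate $dt\asymp d{\eu R}/(p\kappa_{\sf init}{\eu R}^2)$ ``from ${\eu R}\asymp1$'' is not valid on that leg. Instead of linearizing, the paper uses the time-reversal symmetry of Lemma~\ref{lem:ode_symmetry}: the system is invariant under $(\beta,{\eu D},t)\mapsto(\beta,\pi-{\eu D},-t)$. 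After twice the time needed to reach $\pi/2$, the trajectory is back at $\beta=\kappa_{\sf init}$ with ${\eu D}=\pi-{\eu D}(0)\in(0,\pi/2)$, and the good-regime estimate applies verbatim.

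\paragraph{Mean-field claim.} Your equivariance argument under $(\phi,\psi)\mapsto(\phi+c,\psi+2c)$ is correct and somewhat cleaner than the paper's, which writes $\phi(t)=\phi(0)+G({\eu D}(0))\bmod 2\pi$ and uses $\phi(0)\perp{\eu D}(0)$. One correction: the phase speed is $2p\beta|\sin{\eu D}|=2p|C_{\sf prod}|/\alpha^2$. It decays because $\alpha$ grows, not ``like $|\sin{\eu D}|$'', and this is what makes $\int|\partial_t\phi|\,dt$ finite.
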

Theorem \ref{thm:phase_align_informal} provides two key insights into the learning dynamics. 
First, it establishes that the convergence time depends on three key factors:
(\romannumeral1) the initial misalignment level, measured by $|\sin({\eu D}_m^\star(0))|$, 
(\romannumeral2)  the extent to which ${\eu D}_m^\star(0)$ deviates from the intermediate stage $\frac{\pi}{2}$ or $\frac{3\pi}{2}$ for ${\eu D}_m^\star(0)\in\big(\frac{\pi}{2},\frac{3\pi}{2}\big)$, and 
(\romannumeral3) the initialization scale $\kappa_{\sf init}$ and modulus $p$. 
Second, the theorem provides a theoretical justification for the emergence of phase symmetry (Observation 3) in the mean-field regime.
For the formal theorem, a proof sketch, and the complete proof, see Theorem \ref{thm:formal_phase_alignment}, \S\ref{ap:proof_sketch_dyn}, and \S\ref{ap:phase_alignment}, respectively.
This result is derived from an analysis of a simplified "main flow" of the dynamics, which neglects approximation errors and represents the limiting case as $\kappa_{\sf init}\mapsto 0$. This simplified flow is compared visually to the full training dynamics in Figures \ref{fig:phase_align_mag_and_phase} and \ref{fig:phase_align_mag_and_phase_2}.

\section{Theoretical Extensions}\label{sec:extention}
In this section, we extend the results from \S\ref{sec:theory} to two more general scenarios: lottery mechanism under multi-frequency initialization in \S\ref{sec:lottery} and the dynamics with ReLU activation in \S\ref{sec:quad_dyn}.

\subsection{Theoretical Underpinning of Lottery Ticket Mechanism}\label{sec:lottery}
To understand why a single frequency pattern emerges from a random, multi-frequency initialization (Observation 1), we can analyze the training dynamics for each frequency within a specific neuron. 
The ODEs capture the dynamics of competition in \eqref{eq:decouple_ODE_restate}, which are fully derived in \S\ref{sec:dynamics}. 
\begin{equation}
\begin{aligned}
    &\partial_t\alpha_m^k(t)\approx2p\cdot\alpha_m^k(t)\cdot\beta_m^k(t)\cdot\cos({\eu D}_m^k(t)),\\
    &\partial_t\beta_m^k(t)\approx p\cdot\alpha_m^k(t)^2\cdot\cos({\eu D}_m^k(t)),\\
	&\partial_t{\eu D}_m^k(t)\approx-\big(4\beta_m^k(t)-{\alpha_m^k(t)^2}/{\beta_m^k(t)}\big)\cdot p\cdot\sin({\eu D}_m^k(t)),\qquad\forall k\neq0,
	\end{aligned}
\label{eq:decouple_ODE_restate}
\end{equation}
and $\partial_t\alpha_m^0(t)\approx\partial_t\beta_m^0(t)\approx0$.
In words, the first two equations state that the magnitudes $\alpha_m^k$ and $\beta_m^k$ grow at rates proportional to $\cos({\eu D}_m^k)$: when the phases are well-aligned (${\eu D}_m^k\approx0$), $\cos({\eu D}_m^k)\approx1$ and magnitudes grow rapidly, when misaligned (${\eu D}_m^k<\pi/2$), magnitudes decrease.
The third equation governs the misalignment itself: ${\eu D}_m^k$ decreases at a rate proportional to $\sin({\eu D}_m^k)$, so it is attracted toward zero.
Together, these form a self-reinforcing loop: 
\begin{center}
    \emph{Better alignment accelerates growth, and larger magnitudes speed up alignment.}
\end{center}
A key insight from \eqref{eq:decouple_ODE_restate} is that the dynamics are fully \highlight{decoupled}. 
The evolution of each frequency is \highlight{self-contained}, proceeding orthogonally without cross-frequency interaction. 
This structural independence establishes the competitive environment required for the lottery ticket mechanism. 
The ODEs also reveal a powerful \highlight{reinforcing dynamic}: the growth rate, proportional to the alignment term $\cos({\eu D}_m^k(t))$, is amplified by the magnitudes
This creates a ``larger-grows-faster''positive feedback loop that drives the winner's dominance.

As introduced in \S\ref{sec:exp_lottery}, this process is not chaotic but is instead a predictable competition governed by a "Lottery Ticket Mechanism". 
Applying an \emph{ODE comparison} lemma \citep{smith1995monotone}, we can compare the evolution of frequency magnitudes based on their initial conditions. 
This allows us to formally prove that the "lottery ticket" drawn at initialization determines which frequency will ultimately dominate.
We formalize the results into the following corollary.

\begin{corollary}\label{cor:lottery_informal}
	Consider a multi-frequency initialization akin to Assumption \ref{as:initialization}.
    For a given dominance level $\varepsilon\in(0,1)$ and fixed neuron $m$, let $t_\varepsilon$ be the minimal time required for the winning frequency $k^\star$ to dominate all others, such that $\max_{k\neq k^\star}\beta_m^k(t)/\beta_m^\star(t)\leq\varepsilon$. 
    Then, it holds that
    $$
    k^\star=\min_k\tilde{\eu D}_m^k(0),\quad t_\varepsilon\lesssim \frac{\pi^2 p^{-(2c+3)}}{\kappa_{\sf init}}+\frac{(c+1)\log p+\log\frac{1}{1-\varepsilon}}{p\kappa_{\sf init}\cdot\{1-2c^2\pi^2\cdot(\log p/p)^2\}},
    $$
    where the bound holds under mild conditions and with a high probability of at least $1-\tilde\Theta(p^{-c})$.
\end{corollary}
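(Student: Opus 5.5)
The plan is to reduce the multi-frequency problem to a family of independent copies of the single-frequency system. Then we compare these copies using an ODE comparison argument and control the initial randomness with a union bound.

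\textbf{Step 1 (decoupling and reduction).} Under the small-initialization regime of \S\ref{sec:initial_stage_property}, the per-frequency dynamics \eqref{eq:decouple_ODE_restate} hold for every $k$ with $\alpha_m^k(0)=\beta_m^k(0)=\kappa_{\sf init}$. The only randomness is then the misalignment ${\eu D}_m^k(0)$, i.i.d.\ uniform on $[0,2\pi)$. Each frequency is thus a copy of the three-dimensional system $(\alpha,\beta,{\eu D})$ analyzed in Theorem~\ref{thm:phase_align_informal}. The first task is to check that this system is order-preserving in the appropriate sense: it should be cooperative in the variables $(\alpha,\beta,-\tilde{\eu D})$ on the region where $\alpha\asymp\beta$. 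Two facts support this. The growth rates $\cos({\eu D})$ are increasing as $\tilde{\eu D}$ decreases, and the alignment rate $4\beta-\alpha^2/\beta$ is increasing in $\beta$ when $\alpha\asymp\beta$. The ODE comparison lemma of \citet{smith1995monotone} then implies that a copy with smaller $\tilde{\eu D}(0)$ dominates componentwise for all $t$: $\beta^{k^\star}(t)\ge\beta^k(t)$ and $\tilde{\eu D}^{k^\star}(t)\le\tilde{\eu D}^k(t)$. This identifies the winner as $\arg\min_k\tilde{\eu D}_m^k(0)$.

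\textbf{Step 2 (quantitative separation).} To get $t_\varepsilon$, I split time into two phases.
\begin{itemize}
\item \emph{Alignment phase.} Using the conservation law $\sin{\eu D}(t)=\sin{\eu D}(0)/\{{\eu R}(2{\eu R}^2-1)\}$ and the timescale of Theorem~\ref{thm:phase_align_informal}, the winner becomes nearly aligned after time $O(\tilde{\eu D}^{\star}(0)^2/\kappa_{\sf init})$-type. With high probability $\tilde{\eu D}^\star(0)\lesssim p^{-(c+1)}$-ish, which gives the first term $\pi^2p^{-(2c+3)}/\kappa_{\sf init}$.
\item \emph{Growth phase.} With $\cos{\eu D}^\star\approx1$ and $\alpha\asymp\beta$, the winner grows like the blow-up ODE $\dot\beta\approx p\beta^2$. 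Competitors have misalignment at least the second-smallest order statistic, $\gtrsim c\pi\log p/p$ with high probability. Their effective rate is reduced by the factor $\cos\tilde{\eu D}^k\le1-\tilde{\eu D}^2/2$, which produces the factor $1-2c^2\pi^2(\log p/p)^2$ in the denominator.
\end{itemize}
Comparing the explicit solutions $\beta(t)=\kappa/(1-p\kappa\gamma t)$ for the two rates $\gamma$ gives the ratio bound. Solving $\max_k\beta^k/\beta^\star\le\varepsilon$ yields a time of the order $\{(c+1)\log p+\log\frac{1}{1-\varepsilon}\}/(p\kappa_{\sf init}\gamma)$. The $(c+1)\log p$ term comes from union-bounding over the $(p-1)/2$ competitors.

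\textbf{Step 3 (probability).} Order statistics of $(p-1)/2$ uniform phases give two events, each with probability $1-\tilde\Theta(p^{-c})$: a minimum gap of order $\log p/p$ between the smallest and second-smallest $\tilde{\eu D}$, and a smallest $\tilde{\eu D}$ that is small. A union bound over these events yields the stated probability.

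\textbf{Main obstacle.} The hardest step is Step 2. The explicit solutions blow up in finite time, so the ratio estimate must be taken before blow-up and must stay within $t_{\sf init}$ so that the decoupled approximation remains valid. The approximation errors from Theorem~\ref{thm:single_freq} must also not overturn the comparison. To handle this, I would first prove the comparison for the exact main flow. I would then add error terms of size $o(\kappa_{\sf init})$ as perturbations, using a Grönwall-type bound on the finite window.
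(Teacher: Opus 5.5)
Your Step 1 follows the paper's route (Lemma~\ref{lem:rank_preserve}: Kamke comparison for a cooperative system), but it does not work in the coordinates you chose.

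In $(\alpha,\beta,-\tilde{\eu D})$ the off-diagonal entries $\partial_\beta\dot\alpha=\partial_\alpha\dot\beta=2p\alpha\cos{\eu D}$ are negative once $\tilde{\eu D}>\pi/2$, which is where about half the competitors start. (The correct alignment rate is also $4\beta+\alpha^2/\beta$, as in Lemma~\ref{lem:simple_dynamics_approx}, not $4\beta-\alpha^2/\beta$.) The paper first eliminates $\alpha$ through the conserved quantity $\alpha^2-2\beta^2=-\kappa_{\sf init}^2$. This quantity is the same for every frequency exactly because all frequencies share the initial magnitude. The resulting planar system, with $\dot{\tilde{\eu D}}=-p(6\beta-\kappa_{\sf init}^2/\beta)\sin\tilde{\eu D}$, is cooperative in $(-\beta,\tilde{\eu D})$ on all of $(0,\pi)$. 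Rank preservation then reduces $\max_{k\neq k^\star}\beta_m^k/\beta_m^\star$ to $\beta_m^\sharp/\beta_m^\star$ for the runner-up $k^\sharp$, so no union bound over competitors is needed.

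The genuine gap is in Steps 2--3. You reach the right shape of bound, but every ingredient is misattributed.

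(i) Your order-statistics claims are reversed. For $(p-1)/2$ i.i.d.\ ${\rm Unif}(0,\pi)$ variables, $U_{(1)}$, $U_{(2)}$ and $U_{(2)}-U_{(1)}$ are all of order $1/p$. With probability $1-\tilde O(p^{-c})$ one has $U_{(1)},\,U_{(2)}-U_{(1)}\gtrsim p^{-(c+1)}$ and $U_{(2)}\lesssim c\log p/p$. Your events each have probability only $\tilde O(p^{-c})$: $\tilde{\eu D}_m^\star(0)\lesssim p^{-(c+1)}$, runner-up misalignment $\gtrsim c\log p/p$, and a gap of order $\log p/p$.

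(ii) Competitors align too, so $\cos\tilde{\eu D}_m^k(t)\to1$ and there is no fixed reduced rate to feed into a two-rate blow-up comparison. If you bound the winner from below with rate $\cos\tilde{\eu D}_m^\star(0)$ and a competitor from above with rate $1$, the competitor's bound blows up first, which proves nothing. The winner's alignment takes time of order $(p\kappa_{\sf init})^{-1}$ (Theorem~\ref{thm:phase_align_informal}), not $\tilde{\eu D}^2/\kappa_{\sf init}$. A union bound changes probabilities, not times, so it cannot produce the $\log p$.

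(iii) The missing idea is to work with $\rho_m=\beta_m^\star/\beta_m^\sharp$. Combining rank preservation, monotone alignment on $\{\tilde{\eu D}_m^\sharp(0)<\pi/2\}$, and $\beta_m^\sharp\ge\kappa_{\sf init}$ gives
\[
\partial_t\log\rho_m\ \ge\ 2p\kappa_{\sf init}\cos(\tilde{\eu D}_m^\star(0))\,(\rho_m-1)+p\kappa_{\sf init}\bigl\{\cos\tilde{\eu D}_m^\star(t)-\cos\tilde{\eu D}_m^\sharp(t)\bigr\}.
\]
On a window $t_1\asymp\pi^2p^{-(2c+3)}/\kappa_{\sf init}$ the initial cosine gap $\gtrsim\pi^2p^{-2(c+1)}$ persists, since its drift is at most $6pt\,\beta_m^\star(t)$. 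The second term therefore seeds $\rho_m(t_1)-1\gtrsim p^{-4(c+1)}$, and this seeding time $t_1$ is the first summand of the bound. Afterwards $(\rho_m-1)/\rho_m$ grows exponentially at rate $2p\kappa_{\sf init}\cos\tilde{\eu D}_m^\star(0)\ge2p\kappa_{\sf init}\{1-2c^2\pi^2(\log p/p)^2\}$. The $(c+1)\log p$ counts the e-folds needed to lift this polynomially small seed to $1-\varepsilon$. The denominator is a lower bound on the winner's own initial alignment, not a penalty on competitors.
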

The proof is deferred to \S\ref{ap:lottery_proof}.
Corollary \ref{cor:lottery_informal} formalizes our Lottery Ticket Mechanism in Observation 6. 
It states that under a multi-frequency random initialization where all frequencies start with identical magnitudes, the frequency with the smallest initial misalignment $\tilde{\eu D}_m^\star$ will inevitably dominate. 
This dominance occurs rapidly, on a timescale of $\tilde O\big( \log p/(p\kappa_{\sf init})\big)$.

\subsection{Dynamics Beyond Quadratic Activation}\label{sec:quad_dyn}

So far, we have focused on quadratic activation for more precise interpretation. 
However, experimental results indicate that quadratic activation is not \emph{essential} or can be even \emph{problematic}.
In practice, quadratic activation often leads to \emph{unstable training} with highly imbalanced neurons.\footnote{The failure of the quadratic activation stems from the \emph{significant disparity in growth rates} among neurons due to the nature of the quadratic function. Specifically, a few neurons with more well-aligned initial phases grow faster in magnitude and come to dominate the output, leaving an insufficient growth of other neurons. This issue can be mitigated using techniques such as normalized GD \citep[][]{cortes2006finite} or spherical GD.}
In contrast, ReLU activation consistently leads to the emergence of desired features, as shown in \S\ref{sec:experiments}.
In this section, we investigate the training dynamics of ReLU activation.

\paragraph{Training Dynamics of ReLU Activation.}

\begin{figure}[t]
  \centering
  \begin{minipage}[t]{0.63\textwidth}
    \centering
    \includegraphics[width=\linewidth]{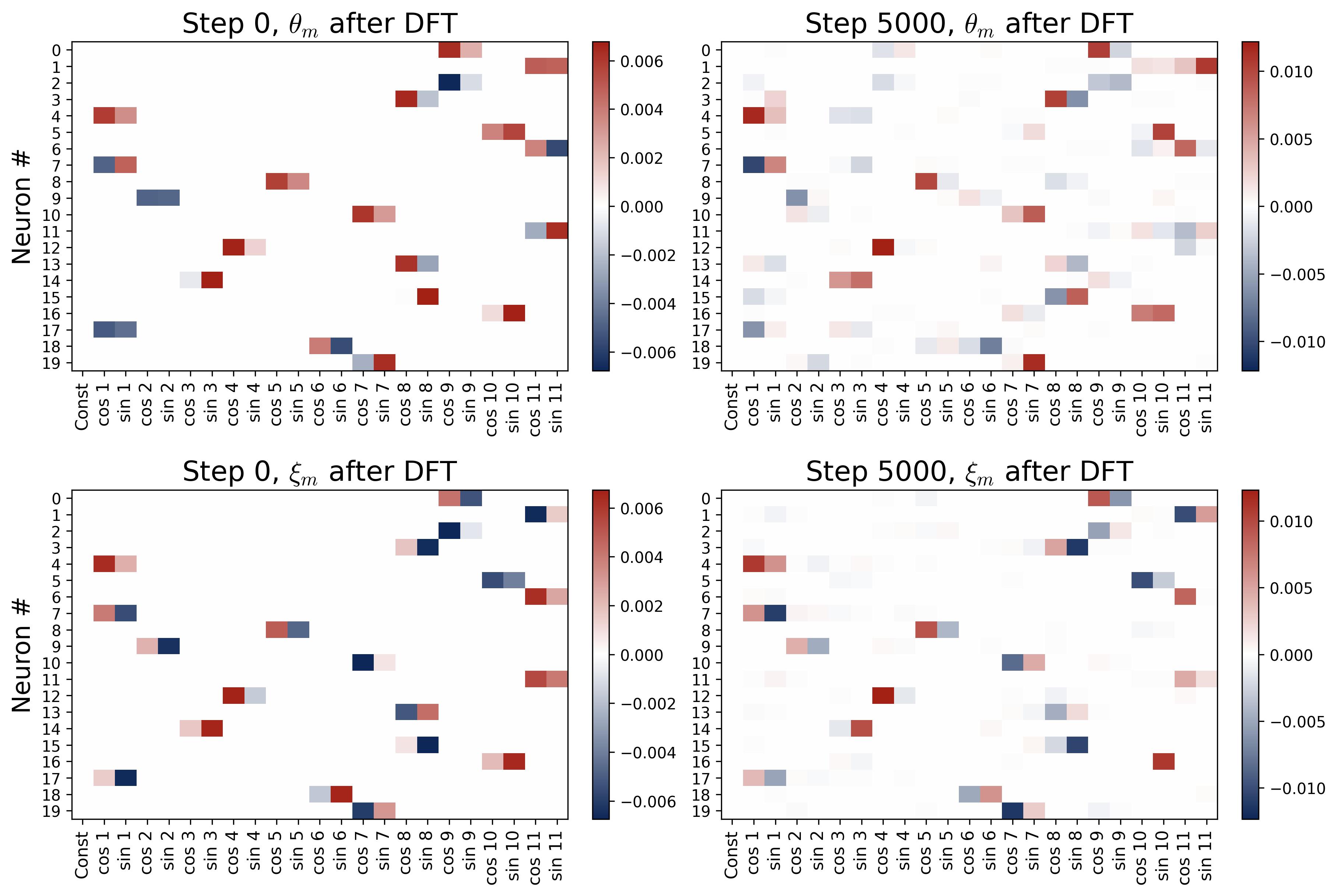}
    \subcaption{Heatmaps of parameters after discrete Fourier transform for the first $20$ neurons with  ReLU activation at the intial stage.}
    \label{fig:single_feq_relu}
  \end{minipage}
	\begin{minipage}[t]{0.355\textwidth}
    \centering
    \includegraphics[width=\linewidth]{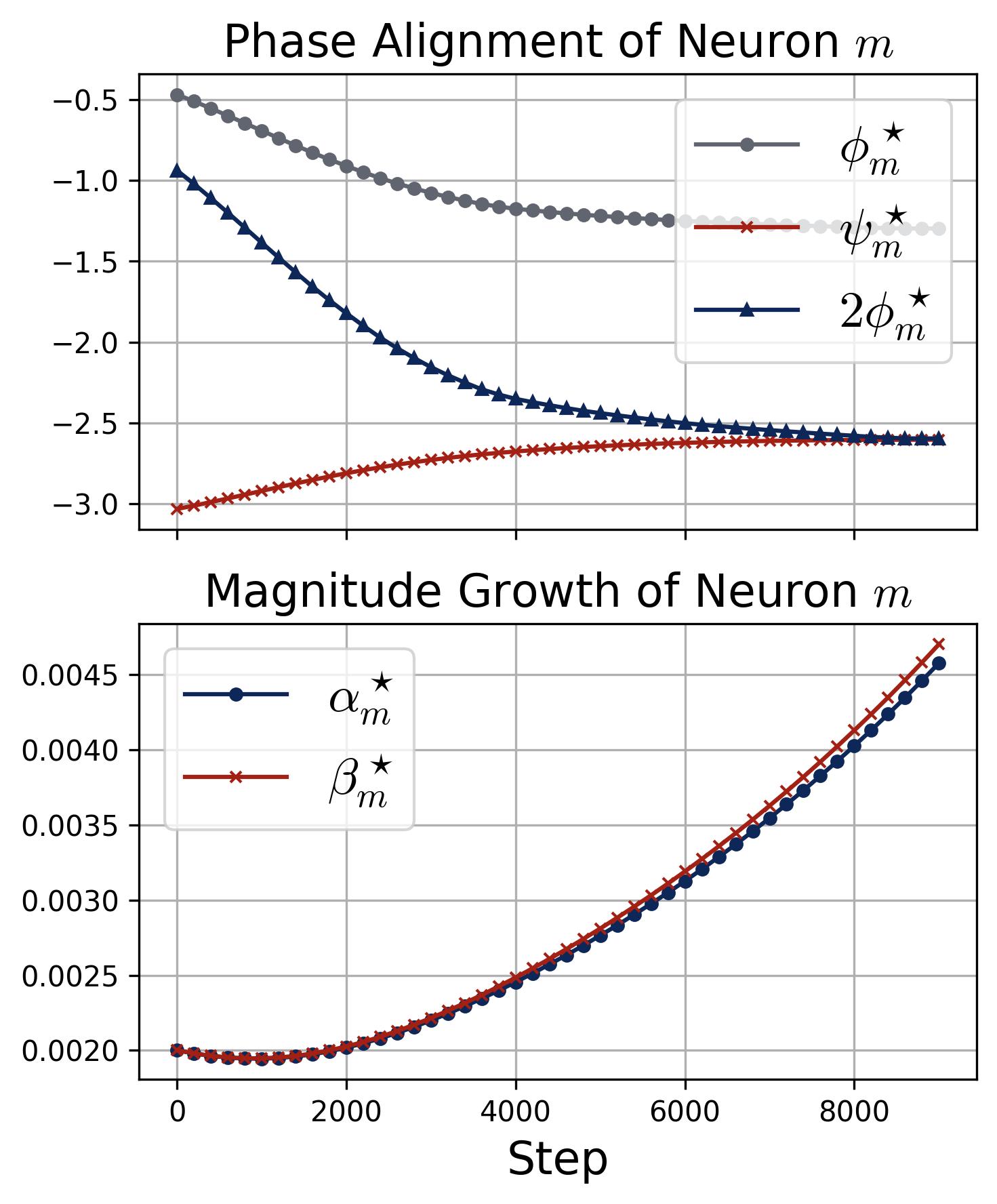}
    \subcaption{Dynamics of magnitude and phase for Neuron $m$ with ReLU activation.}
    \label{fig:phase_align_relu}
  \end{minipage}
  \caption{Learned feature and dynamics of parameters initialized at Assumption \ref{as:initialization} with $p=23$ and ReLU activation. Figure (a) shows heatmaps of the parameters after DFT at initialization and at the end of the initial stage. Similar to the quadratic activation (see Figure \ref{fig:single_freq}), the single-frequency pattern is approximately maintained, with small values emerging at frequencies ``$3k^\star$'', ``$5k^\star$'' for $\theta_m$, and ``$2k^\star$'', ``$3k^\star$'' for $\xi_m$. Figure (b) plots the dynamics of a specific neuron $m$. Here, the phase quickly aligns, i.e., $\psi_m^\star\approx2\phi_m^\star$, and the magnitudes $\alpha_m^\star$ and $\beta_m^\star$ grow rapidly and synchronously.}
  \label{fig:relu_single_init}
\end{figure}

In parallel, we adopt an experimental setup identical to that of Figure \ref{fig:single_freq} using the single-frequency initialization specified in Assumption \ref{as:initialization}, with the only modification being the replacement of quadratic activation with ReLU activation. 
The experimental results are shown in Figure \ref{fig:relu_single_init}, and the key observation is summarized below.

\begin{tcolorbox}[frame empty, left = 0.1mm, right = 0.1mm, top = 0.1mm, bottom = 1.5mm]
\textbf{Observation 7 (ReLU Leakage).} 
For ReLU activation, although each neuron is initialized with a single frequency $k^\star$, such a pattern is \highlight{preserved approximately} with \highlight{small leakage} during the training, with small values emerging at other frequencies.
For $\theta_m$, the values emerges at frequencies ``$3k^\star$'', ``$5k^\star$'' and higher \emph{odd} multiples, with magnitudes decaying gradually.
In contrast, for $\xi_m$, these appear at ``$2k^\star$'', ``$3k^\star$'', and others, which also exhibit decay with increasing multiplicative factors.
\end{tcolorbox}

As shown in Observation 5, ReLU mostly preserves the single-frequency pattern but still exhibits small leakage at other frequencies. 
For instance, in Figure \ref{fig:single_feq_relu}, Neuron 3 is initialized with dominant frequency 1. After 30,000 training steps, small values emerge at frequencies 3 and 5 in $\theta_m$, and at 2 and 3 in $\xi_m$.
In what follows, we first formalize the multiplicative relationship among frequencies.

\begin{definition}[Frequency Multiplication]
\label{def:multiplication_factor}
	Given $k,\tau\in[\frac{p-1}{2}]$, we say frequency $\tau$ is $r$-fold multiple of $k$ under modulo $p$ if $\tau=rk\bmod p$ or $p-\tau=rk\bmod p$ for some $r\in[\frac{p-1}{2}]$, denoted by $\tau\overset{p}{=}rk$.
\end{definition}

Now we are ready to present the main result for training dynamics of ReLU activation.
To state the result, we introduce $\Delta_\upsilon^k$, which measures the magnitude of the gradient component at frequency $k$ for parameter $\upsilon\in\{\theta_m,\xi_m\}$.
In other words, $\Delta_\upsilon^k$ captures how strongly a single gradient step pushes energy into frequency $k$.
The proposition compares this ``push'' at a non-feature frequency $k$ to that at the dominant frequency $k^\star$, where $r_k$ denotes the harmonic order of $k$ relative to $k^\star$.

\begin{proposition}\label{prop:relu_dyn}
	Consider gradient update with respect to the decoupled loss $\ell_m$ and assume that $(\theta_m,\xi_m)$ satisfying \eqref{eq:trig_pattern}.
	Let $\Delta^k_\upsilon=\sqrt{\langle\nabla_\upsilon\ell_m,b_{2k}\rangle^2+\langle\nabla_\upsilon\ell_m,b_{2k+1}\rangle^2}$ denote the incremental scale for  frequency $k\in[\frac{p-1}{2}]$.
	Under the asymptotic regime where $p\rightarrow\infty$, it holds that
	\begin{itemize}
	\setlength{\itemsep}{-2pt}
		\item[(\romannumeral1)] $\Delta^k_{\theta_m}/\Delta^\star_{\theta_m}=\Theta(r_k^{-2})$ and $\Delta^k_{\xi_m}/\Delta^\star_{\xi_m}=\Theta(r_k^{-2})\cdot\ind(r\text{\rm ~is~odd})$, where $k\overset{p}{=}r_kk^\star$;
		\item[(\romannumeral2)] $\mathscr{P}_{k^\star}^\parallel\nabla_\upsilon\ell_m\propto\upsilon$ for $\upsilon\in\{\theta_m,\xi_m\}$ when $\psi_m=2\phi_m\bmod p$, where $\mathscr{P}_k^\parallel=I-\sum_{j\geq1, j\neq 2k,2k+1}b_jb_j^\top$.
	\end{itemize}
\end{proposition}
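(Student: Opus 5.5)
The plan is to compute the decoupled gradients explicitly, using the small-scale approximation $\smax\circ f\approx p^{-1}\one_p$ from \eqref{eq:smax_unif_stage1}. Under this approximation neuron $m$ sees the linearized (average-margin) loss, so
$$
\nabla_{\theta_m}\ell_m[j]\propto-\sum_{y\in\ZZ_p}\sigma'(\theta_m[j]+\theta_m[y])\,\xi_m[(j+y)\bmod p],\qquad
\nabla_{\xi_m}\ell_m[j]\propto-\sum_{x+y\equiv j}\sigma(\theta_m[x]+\theta_m[y])+\tfrac1p\sum_{x,y}\sigma(\cdot).
$$
The mean term in the $\theta$-gradient vanishes because $\xi_m$ is centered. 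Plugging in \eqref{eq:trig_pattern} with frequency $k^\star$, I will use the sum-to-product identity
$$
\theta_m[x]+\theta_m[y]=A_d\cos(u_s),\qquad A_d=2\alpha_m\cos(\omega_\star d/2),\quad u_s=\omega_\star s/2+\phi_m,
$$
with $s=x+y$ and $d=x-y$. Halving is taken via the inverse of $2$ mod $p$, or equivalently by working with $\omega_\star$ acting on $\ZZ_p$.

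\textbf{Harmonic expansion.} Write $\sigma(A\cos u)=|A|\,\mathrm{ReLU}(\mathrm{sgn}(A)\cos u)$ and $\sigma'(A\cos u)=\ind\{\mathrm{sgn}(A)\cos u>0\}$. Then expand in harmonics of $u$ using the standard Fourier series. For ReLU,
$$
\mathrm{ReLU}(\cos u)=\tfrac1\pi+\tfrac12\cos u+\textstyle\sum_{n\ \mathrm{even}}c_n\cos(nu),\qquad c_n=\Theta(n^{-2}).
$$
For the step function,
$$
\ind\{\cos u>0\}=\tfrac12+\textstyle\sum_{n\ \mathrm{odd}}\tfrac{2(-1)^{(n-1)/2}}{n\pi}\cos(nu).
$$
Flipping the sign of $A$ sends $u\mapsto u+\pi$, which flips the odd harmonics and leaves the even ones unchanged. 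The sum over $d$ therefore weights even harmonics by $\sum_d|A_d|$ and odd harmonics by $\sum_d A_d$. In the regime $p\to\infty$ I will evaluate both sums as Riemann sums, so they become explicit nonzero constants times $\alpha_m p$ (or vanish, for the signed sum at the relevant parity).

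Harmonic $n$ of $u_s$ is $\cos(n\omega_\star s/2+n\phi_m)$. After reducing mod $p$, it lands on a frequency $r k^\star$ in the sense of Definition~\ref{def:multiplication_factor}, and projecting onto $b_{2k},b_{2k+1}$ reads off its amplitude. This gives the $\xi$ part of (i): the amplitude is $\Theta(c_n)=\Theta(r_k^{-2})$ relative to the $k^\star$ term, and the parity bookkeeping above decides which multiples survive, producing the indicator.

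For the $\theta$ gradient, the step-function harmonics are multiplied by $\xi_m[j+y]=\beta_m\cos(\omega_\star(j+y)+\psi_m)$ and summed over $y$. By product-to-sum, harmonic $n$ combines with the $\pm$ frequency shift from $\xi_m$, so each output frequency receives contributions from neighbouring odd $n$. The amplitudes at frequency $r k^\star$ take the form $\tfrac1n-\tfrac1{n+2}$, which is $\Theta(r_k^{-2})$. \textbf{This cancellation, and tracking the signs and phase offsets so that it really is a difference and not a sum, is the main obstacle.} I would do it by writing everything as complex exponentials and controlling the Riemann-sum error uniformly in $n$ up to order $p$.

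\textbf{Part (ii).} Restricting to the $k^\star$ component (the projector $\mathscr P_{k^\star}^\parallel$ also removes $b_1$), I will show that the projected gradient has phase exactly $\phi_m$ for $\theta$ and $\psi_m$ for $\xi$ when $\psi_m=2\phi_m$. The cleanest route is a reflection symmetry: the change of variables $x\mapsto -x-2\phi_m/\omega_\star$ fixes $\theta_m$ and, via $\psi_m=2\phi_m$, also fixes $\xi_m$. The gradients are then symmetric about the same axis, so their $k^\star$ coefficient has zero "sine part" relative to phase $\phi_m$ (respectively $\psi_m$). Exactness on $\ZZ_p$ can be recovered by noting that the explicit harmonic computation already gives a phase of $n\phi_m$ on harmonic $n$. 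The $k^\star$-projection is then real-proportional to $\cos(\omega_\star j+\phi_m)$ or $\cos(\omega_\star j+2\phi_m)$, i.e.\ proportional to $\upsilon$.
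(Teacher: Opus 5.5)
There is a genuine gap in part (i), at the parity step. If you carry out your own harmonic bookkeeping correctly, it puts \emph{no} parity restriction on the $\xi_m$-gradient and does put one on the $\theta_m$-gradient. That is the opposite of what you set out to show.

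Your $\xi_m$ computation is sound and is essentially the paper's Step 1 in other coordinates. Splitting $\mathrm{ReLU}(A_d\cos u_s)$ by the sign of $A_d$, the first harmonic is weighted by $\sum_d A_d=0$; this is the paper's cancellation of the linear part of $\max\{x,0\}=(x+|x|)/2$. The even harmonics are weighted by $\sum_d|A_d|\asymp\alpha_m p$, which is the paper's factor $\sum_x|\cos(\omega_{k^\star}x/2)|$.

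The failure is in the next step. The surviving harmonic $n=2r$ of $u_s$ is $\cos(r\omega_\star s+2r\phi_m)$, which sits at frequency $rk^\star$ for \emph{every} $r\ge1$. Its coefficient is $c_{2r}=\frac{2(-1)^{r+1}}{\pi(4r^2-1)}\neq0$. The parity of the half-angle index $n$ says nothing about the parity of $r_k=n/2$. So $\Delta^k_{\xi_m}/\Delta^\star_{\xi_m}\asymp r_k^{-2}$ for all $r_k$ (it equals $1/5$ at $r_k=2$), and no factor $\ind(r_k\text{ odd})$ can emerge. This is exactly the paper's Step 1 result, $\varsigma_{r_k}=\frac{(-1)^{r_k+1}}{\pi(4r_k^2-1)}$ with no parity restriction. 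It also matches Observation 7, where $\xi_m$ picks up $2k^\star,3k^\star,\dots$.

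The parity constraint actually lives in the $\theta_m$-gradient, and your plan for that part has a structural problem. For fixed $j$, $s=j+y$ and $d=j-y$ are tied by $s+d\equiv 2j$, so the $\sum_d$ weighting does not factor out. If $\sigma'$ is treated as a function of $u_s$ alone, then $\sum_y(\cdot)\,\xi_m[s]$ does not depend on $j$ at all. All the $j$-dependence sits in $\mathrm{sgn}(A_{2j-s})=\mathrm{sgn}(\cos(\omega_\star(2j-s)/2))$. That function has only odd harmonics $\cos(m\omega_\star(2j-s)/2)$, i.e.\ frequencies $mk^\star$ in $j$ with $m$ odd.

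The paper reaches this directly, in three steps:
\begin{itemize}
\item With $j$ fixed, $\ind\{\cos(\omega_\star y+\phi_m)\ge-\cos(\omega_\star j+\phi_m)\}$ is an arc symmetric about $0$ in $u=\omega_\star y+\phi_m$.
\item Integrating $\cos(u+\omega_\star j+\psi_m-\phi_m)$ over that arc gives $\pi^{-1}|\sin(\omega_\star j+\phi_m)|\cos(\omega_\star j+\psi_m-\phi_m)$.
\item Since $|\sin|$ has only even harmonics, the product has only odd ones. The coefficient at $r=r_k$ is proportional to $\frac{e^{i(\psi_m-(r+2)\phi_m)}}{r(r+2)}+\frac{e^{-i(\psi_m+(r-2)\phi_m)}}{r(r-2)}$ for odd $r$, and is $0$ for even $r$.
\end{itemize}
Hence your claim $\Delta^k_{\theta_m}/\Delta^\star_{\theta_m}\asymp r_k^{-2}$ for all $r_k$ fails at even $r_k$.

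The real issue is also not a $\frac1n-\frac1{n+2}$ cancellation. The two terms have relative phase $e^{2i(\psi_m-2\phi_m)}$. They add to modulus $\frac{2}{r^2-4}$ at alignment, and degrade to order $r^{-3}$ only when $2\phi_m-\psi_m\equiv\pm\pi/2$.

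What can be proved is (i) with the indicator on $\theta_m$ rather than $\xi_m$. This is also what the paper's computation establishes, despite the wording of the displayed statement and a parity slip in the last lines of its Step 2. Aimed at the version you wrote, neither half of your (i) can be completed.

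For (ii), your reflection argument is a reasonable alternative to the paper's route, which reads the phase off the explicit $r_k=1$ coefficients. It is valid in the $p\to\infty$ limit; on $\ZZ_p$ the shift $2\phi_m/\omega_\star$ is generally not an integer. However, the symmetry must be $x\mapsto-x-2\phi_m/\omega_\star$ on both inputs, together with the induced map $z\mapsto-z-4\phi_m/\omega_\star$ on the output index. The map you wrote, applied to $\xi_m$'s own index, sends $\beta_m\cos(\omega_\star z+2\phi_m)$ to $\beta_m\cos(\omega_\star z)$, so it does not fix $\xi_m$.
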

See \S\ref{ap:proof_relu_dyn} for a detailed proof.
In words, Part~(\romannumeral1) states that the leakage to a non-feature frequency $k$, i.e., the $r_k$-th harmonic of $k^\star$, decays as $1/r_k^2$ relative to the dominant frequency.
For the output layer $\xi_m$, an additional parity constraint holds: only \emph{odd} harmonics of $k^\star$ receive any leakage, while even harmonics receive zero.
Part~(\romannumeral2) states that the gradient component \emph{at the feature frequency itself} is proportional to the parameter vector, so the feature direction is reinforced without phase rotation --- consistent with the phase alignment observed for quadratic activation.
Here, $\mathscr{P}_{k^\star}^\parallel$ is the projection onto the Fourier subspace spanned by frequency $k^\star$ (i.e., it filters out all other frequencies from the gradient).
This provide a quantitative explanation of the emergence dynamics of single frequency and phase alignment pattern in Observation 1 and 2.

\section{Conclusion}

In this paper, we provide an end-to-end reverse engineering of how two-layer neural networks learn modular addition, from training dynamics to the final learned model. 
First, we show that trained networks implement a majority-voting algorithm in the Fourier domain through phase alignment and model symmetry. 
Second, we explain how these features emerge from a lottery-like mechanism where frequencies compete within each neuron, with the winner determined by initial magnitude and phase misalignment. 
Third, we characterize grokking as a three-stage process where weight decay prunes non-feature frequencies, transforming a perturbed Fourier representation into a clean, generalizable solution.
These findings offer insights into the dynamics of feature learning in neural networks, a mechanism that may extend to more general tasks.

\bibliographystyle{apalike}
\bibliography{reference}

\newpage 
\appendix
\section{Additional Experimental Details and Results}

\subsection{Detailed Interpretation of Grokking Dynamics in Section \ref{sec:grokking}}\label{ap:grokking}

\paragraph{Inverse Participation Ratio (IPR).}
To quantitatively characterize the concentration of Fourier coefficients at a specific frequency $k$, or equivalently, the sparsity level of the learned parameters in the Fourier domain, we introduce the inverse participation ratio (IPR).
This metric, originally used in physics as a localization measure \citep{kramer1993localization}, was recently adopted in \cite{doshi2023grok} as a progress measure to understand the generalization behavior in machine learning.
Specifically, given $\nu\in\RR^d$, the IPR is defined as ${\tt IPR}(\nu)=(\|\nu\|_{2r}/\|\nu\|_2)^{2r}$ for some integer $r>1$.
We calculate the IPR for all $\{\theta_m\}_{m\in[M]}$ and $\{\xi_m\}_{m\in[M]}$, and take the average.

\paragraph{Definition of Progress Measure.}
Here, we provide a formal definition of the progress measure for grokking used in Figure \ref{fig:grokk}, which is defined over the model output and parameters $\theta_m$'s and $\xi_m$'s.
\begin{align*}
    &\textbf{- Loss}:&\ell_\cD=-\sum_{(x,y)\in\cD}\big\langle\log\circ \smax\circ f(x,y;\xi,\theta),e_{(x+y)\bmod p}\big\rangle; \\
    &\textbf{- Accuracy}:&{\tt Acc}_\cD=\frac{1}{|\cD|}\sum_{(x,y)\in\cD}\ind\big\{{\rm argmax}\big(\smax\circ f(x,y;\xi,\theta)\big)=(x+y)\bmod p\big\}; \\
    &\textbf{- IPR}:&{\tt IPR}_{\theta,\xi}=\frac{1}{2M}\sum_{m=1}^M\left(\frac{\|B_p^\top\theta_m\|_4}{\|B_p^\top\theta_m\|_2}\right)^4+\frac{1}{2M}\sum_{m=1}^M\left(\frac{\|B_p^\top\xi_m\|_4}{\|B_p^\top\xi_m\|_2}\right)^4; \\
    &\textbf{- }\ell_2\textbf{-norm}:&\ell_2\text{-}{\tt norm}_{\theta,\xi}=\frac{1}{2M}\sum_{m=1}^M(\|\theta_m\|_2+\|\xi_m\|_2).
\end{align*}

\paragraph{Three-Phase Dynamics of Grokking.}
As discussed in \S\ref{sec:grokking}, the grokking process is governed by the interplay between two primary forces: loss minimization and weight decay.
The dynamics unfold across three major phases: an initial memorization stage dominated by the loss gradient, followed by two distinct generalization stages where the balance between these forces shifts.
Below, we provide a more detailed account of each phase by examining our key progress measures.
\begin{figure}[!ht]
    \centering
    \begin{minipage}{1\textwidth}
        \centering
        \includegraphics[width=\linewidth]{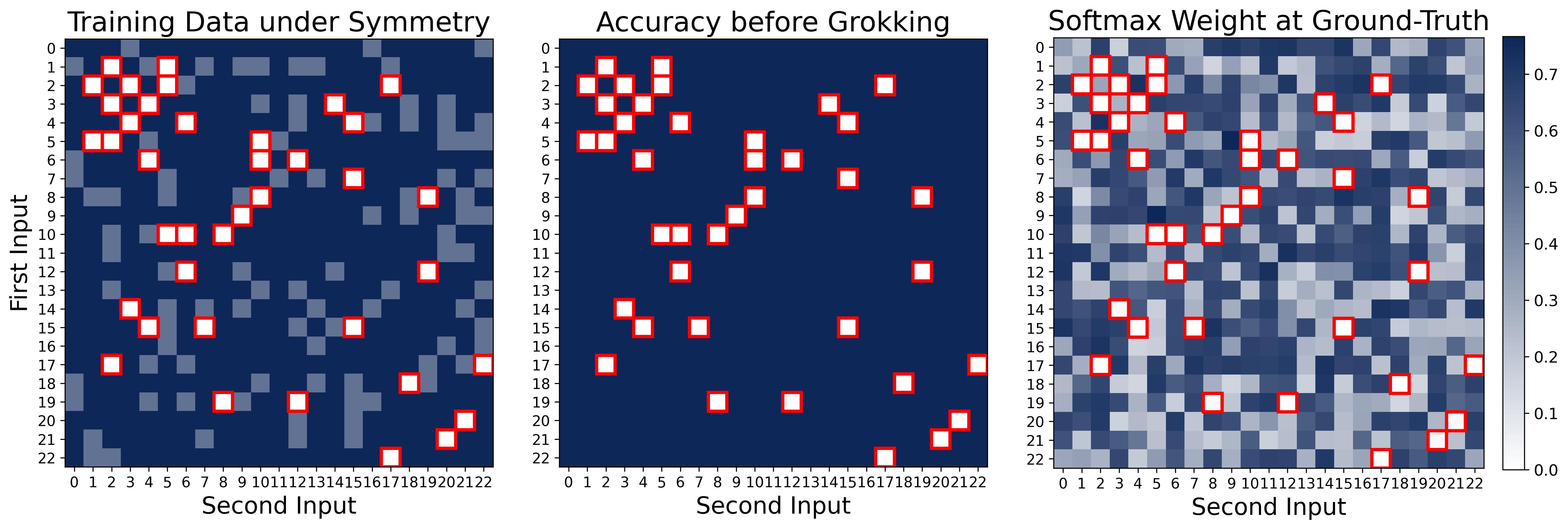}
    \end{minipage}
    \caption{Heatmaps of trained model from Figure \ref{fig:grokk} at the end of the \textbf{memorization stage}. The left panel displays the data distribution: dark blue entries represent training data, light blue entries are test data whose symmetric counterparts are in the training set, and white entries (outlined in red) are the remaining held-out test data. The middle panel shows the model's accuracy, demonstrating that it has perfectly memorized all training data and their symmetric variants but completely fails to generalize to the held-out data. Finally, the right panel visualizes the model's post-softmax output on the correct answer for each data point, further confirming the accuracy results.}
    \label{fig:grokk_memorization}
\end{figure}

\begin{figure}[!ht]
    \centering
    \begin{minipage}{1\textwidth}
        \centering
        \includegraphics[width=\linewidth]{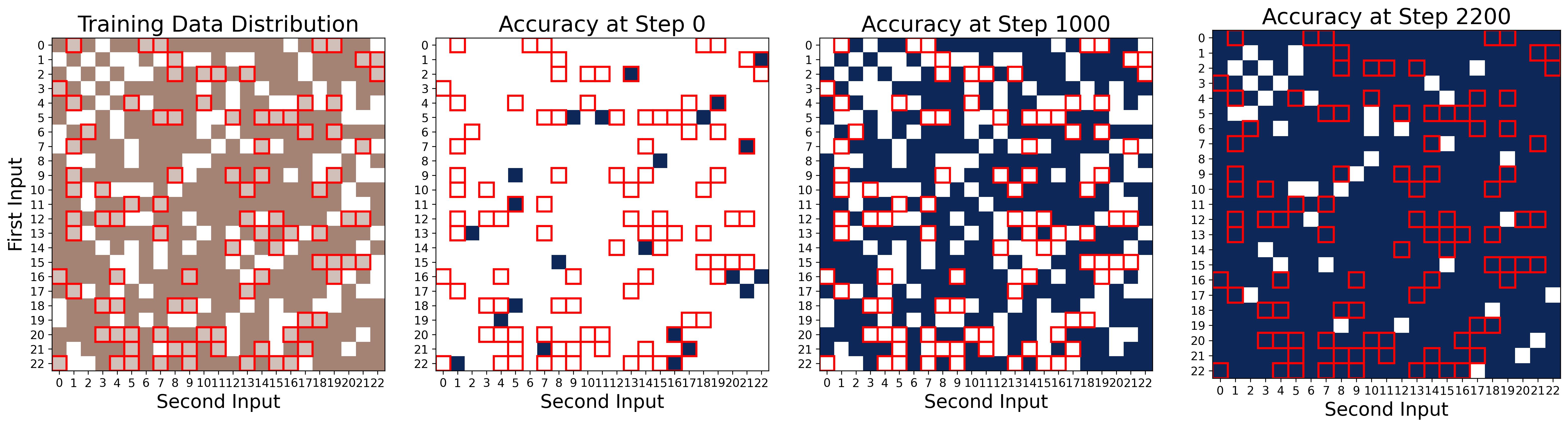}
    \end{minipage}
    \caption{Data distribution during the memorization stage. The first panel illustrates the data partitioning, which, unlike in Figure \ref{fig:grokk_memorization}, uses the following scheme: white entries denote test data, dark brown entries represent \textbf{common (symmetric)} training data, and light brown entries (outlined in red) denote \textbf{rare (asymmetric)} training data. The remaining three plots track the model's accuracy, demonstrating a two-stage memorization scheme. At initialization, the model performs at a low, chance-level accuracy. However, after approximately 1000 steps, it masters the common symmetric training data, but its performance on rare asymmetric data drops to zero, overwriting any initially correct random predictions. By the end of the memorization stage, the model finally memorizes these rare data points, achieving 100\% training accuracy}
    \label{fig:grokk_memorization_common_to_rare}
\end{figure}

\begin{figure}[!t]
    \centering
    \begin{minipage}{0.975\textwidth}
        \centering
        \includegraphics[width=\linewidth]{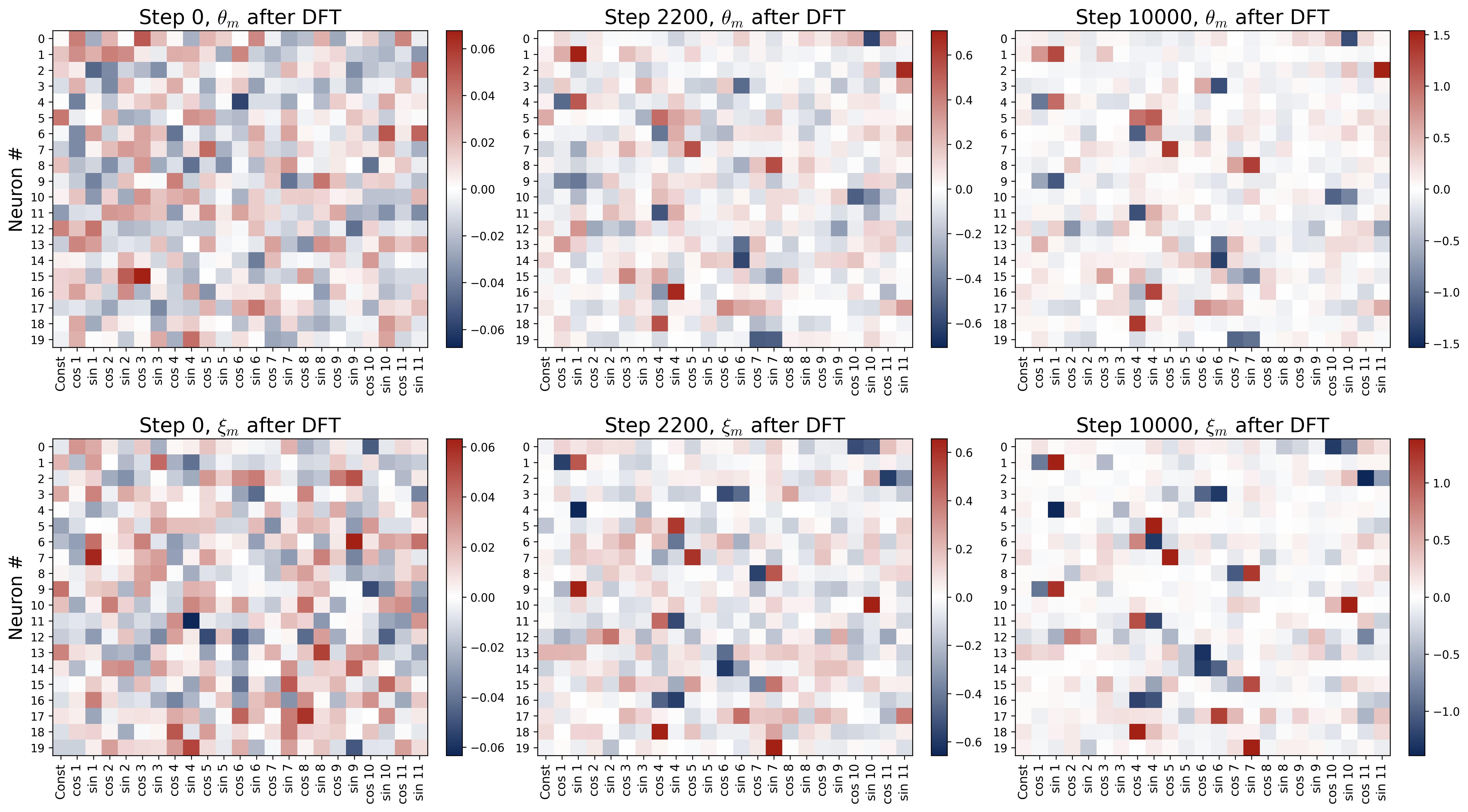}
    \end{minipage}
    \caption{Heatmaps of parameters after applying discrete Fourier transform along training epoches for the first $20$ neurons with $p=23$ under train-test split setup. At the end of the memorization stage (step 2200), a single-frequency pattern has started to emerge, accompanied by noisy perturbations in other frequencies. This initial "perturbed Fourier solution" is subsequently refined, as weight decay prunes the noisy, non-feature frequencies to reveal the final, clean pattern.}
\end{figure}

\begin{itemize}[leftmargin=*]
\setlength{\itemsep}{-1pt}
    \item[-] \textbf{Phase I: Memorization.} Initially, the network quickly memorizes the training data, reaching 100\% accuracy. Test accuracy also improves to around 70\%, aided by the model's symmetric architecture. Figure \ref{fig:grokk_memorization} provide clear empirical evidence for this perfect memorization. The model achieves flawless accuracy and high confidence on the training data (dark blue entries) and test data whose symmetric counterparts were part of the training set (light blue entries). Note that the model completely fails on the truly "unseen" held-out test data (white entries outlined in red), confirming it has learned to exploit symmetry rather than achieving true generalization at this stage. During this time, feature frequencies become roughly aligned (see Figure \ref{fig:grokkk_phase}) and their sparsity increases significantly (see Figure \ref{fig:grokk_norm_sparse}).  While these dynamics resemble a full-data setup, the incomplete data yields a \highlight{perturbed Fourier solution} that overfits the training set.
    \item[-] \textbf{Phase II: Loss-Driven Norm Growth with Rapid Feature Cleanup.} After reaching perfect training accuracy, the model's parameters continue evolving to further reduce the loss. Instead of naively amplifying parameter magnitudes, weight decay actively steers their direction. As shown in Figure \ref{fig:grokk_norm_sparse}, the dynamic is thus a balancing act: the loss gradient pushes to scale up parameters, while weight decay \highlight{prunes unnecessary frequencies} to decelerate the growth of norm.
    \item[-] \textbf{Phase III: Slow Cleanup Driven Solely by Weight Decay.} By the end of Phase II, training loss is near-zero and test accuracy approaches 100\%. Thus, in the final stage, the diminished loss gradient allows weight decay to dominate, causing the parameter norm to decrease (see Figure \ref{fig:grokk_norm_sparse}). Without the main driving force of the loss, this final ``cleanup" phase is extremely slow (see Figure \ref{fig:grokk_acc}), during which test accuracy gradually converges to 100\%. 
\end{itemize}

\subsection{Ablations Studies for Fully-Diversified Parametrization} \label{ap:ablation_diversification}
In this section, we present comprehensive ablation studies investigating the \textbf{efficiency} of the fully diversified parametrization as defined in Definition $\ref{def:diversification}$.
We evaluate the models based on the CE loss defined in Equation $\ref{eq:def_CE_loss}$ while maintaining a fixed, equivalent computational budget.

\begin{table}[!ht]
\vspace{0.2cm}
\centering
\renewcommand*{\arraystretch}{1.4} 
\setlength{\tabcolsep}{4pt}      
\begin{tabular}{ m{2cm} ccccc } 
\toprule
\multicolumn{6}{l}{\sc Part I: Frequency Diversity Ablation.} \\ 
\hline
Loss & $1$ Freqs & $2$ Freqs & $4$ Freqs & $8$ Freqs & Full Freqs \\
\hline
Average   & $1.64$  & $6.02 \times 10^{-1}$ & $2.88 \times 10^{-2}$ & $2.99 \times 10^{-8}$ & $7.41 \times 10^{-15}$ \\
Standard Deviation   & $2.01 \times 10^{-2}$ & $8.79 \times 10^{-2}$ & $1.55 \times 10^{-2}$ & $1.07 \times 10^{-7}$ & $-$ \\
\hline
\multicolumn{6}{l}{\sc Part II: Phase Diversity Ablation.} \\ 
\hline
 & $[0, 0.4\pi)$ & $[0, 0.8\pi)$ & $[0, 1.2\pi)$ & $[0, 1.6\pi)$ & $[0, 2\pi)$ \\
\hline
Loss  & $4.82 $ & $2.00 \times 10^{-3}$ & $1.19 \times 10^{-9}$ & $3.54 \times 10^{-7}$ & $7.41 \times 10^{-15}$ \\
\bottomrule

\end{tabular}
\caption{Performance of the predictor under different ablation configurations. For the frequency ablation study, the average and standard deviation of the loss are reported across all possible combinations of frequencies of the specified size $|\mathcal{K}|$. The results show that the fully diversified parametrization achieves the lowest CE loss, confirming its maximum efficiency under the fixed constraints of model scale $\alpha_m\beta_m^2=1$ and neuron budget $M=128$.}
\label{tab:ablation_summary}
\end{table}

All predictors share a fixed neuron constraint $M=128$ and scale $\alpha_m\beta_m^2=1$ for all $m\in[M]$.
The ablation is performed across two distinct dimensions of the diversification strategy:
\begin{itemize}
\setlength{\itemsep}{-1pt}
    \item \textbf{Ablation of Frequency Diversification.} We examine the impact of restricting the number of learned frequencies. We use only a subset of frequencies $\cK\subseteq[\frac{p-1}{2}]$ with $|\cK|=\{1,2,4,8\}$. The phases for each selected frequency $k$ are kept uniformly distributed over $[0,2\pi)$.
    \item   \textbf{Ablation of Phase Uniformity.} We investigate the effect of restricting the range of the phase distribution. The model utilizes the full set of frequencies, but the phase for each frequency is uniformly distributed over a restricted interval $[0,\iota\pi)$ with $\iota\in\{0.4,0.8,1.2,1.6\}$.
\end{itemize}

\begin{figure}[!h]
    \centering
    \begin{minipage}{0.975\textwidth}
        \centering
        \includegraphics[width=\linewidth]{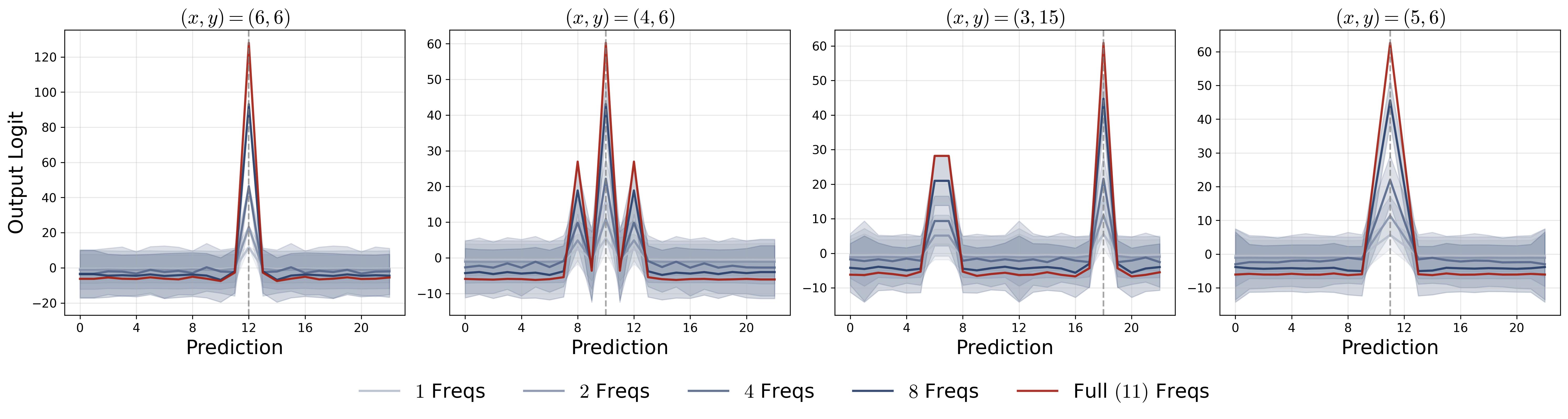}
        \includegraphics[width=\linewidth]{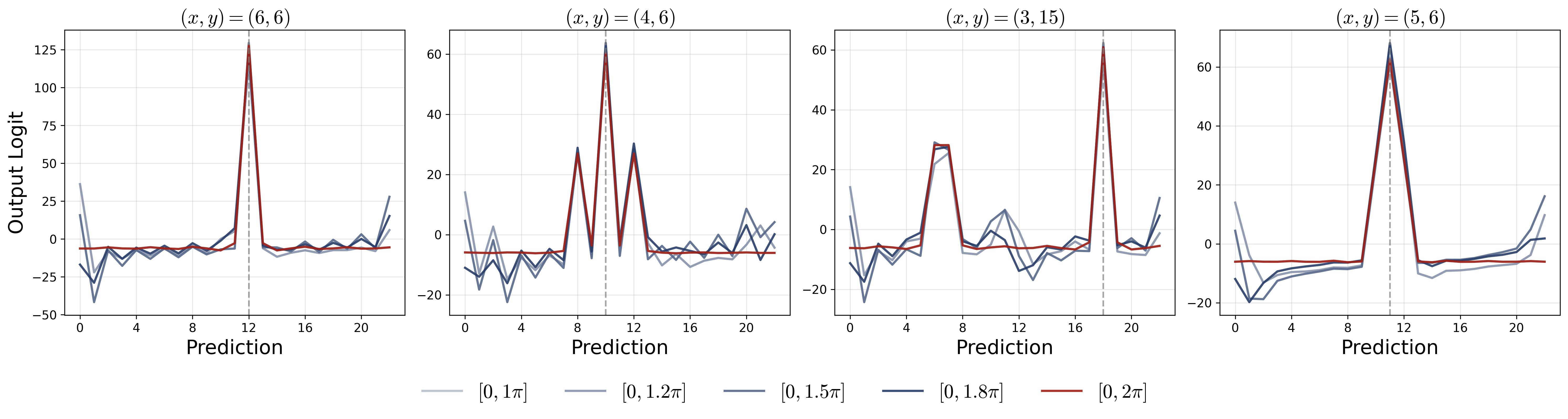}
    \end{minipage}
    \caption{Output logits for the predictor under different ablation configurations, evaluated across four distinct query points $(x, y)$. The true prediction label is indicated by the dashed vertical line in each panel. The fully diversified parametrization yields the largest logit gap between the ground truth and incorrect labels, signifying maximal prediction confidence.}
    \label{fig:ablation_predict}
\end{figure}

The ablation study results in Table \ref{tab:ablation_summary} confirm that full frequency and phase diversification is essential for maximizing parametrization efficiency under fixed constraints. 
Part I shows that the CE loss decreases rapidly as the number of frequencies increases, dropping from $1.64$ at $|\mathcal{K}|=1$ to $7.41 \times 10^{-15}$ for the full frequency set, underscoring the critical role of spectral richness. 
Part II reveals that restricting the phase distribution range significantly degrades performance.
For instance, the loss is $4.82$ for $[0, 0.4\pi)$ but achieves the minimum of $7.41 \times 10^{-15}$ only when the phases span the full $[0, 2\pi)$ interval. 
These findings collectively validate that the fully diversified parametrization achieves the maximum efficiency.
Visually, this maximum efficiency is confirmed in Figure \ref{fig:ablation_predict}, where the fully diversified parametrization generates the highest confidence prediction by creating the largest logit gap between the ground truth label and all incorrect alternatives.
Please refer to Figure \ref{fig:ablation_predict} for visualizations of model outputs under different ablation configurations.

\subsection{Training Dynamics with Quadratic Activation}

To under the training dynamics with quadratic activation, we set $p=23$ and use a two-layer neural network with width $M=512$. 
The network is trained using SGD optimizer with step size $\eta=10^{-4}$, initialized under Assumption \ref{as:initialization} with initial scale $\kappa_{\sf init}=0.02$.

\begin{figure}[!ht]
    \centering
    \begin{minipage}{0.975\textwidth}
        \centering
        \includegraphics[width=\linewidth]{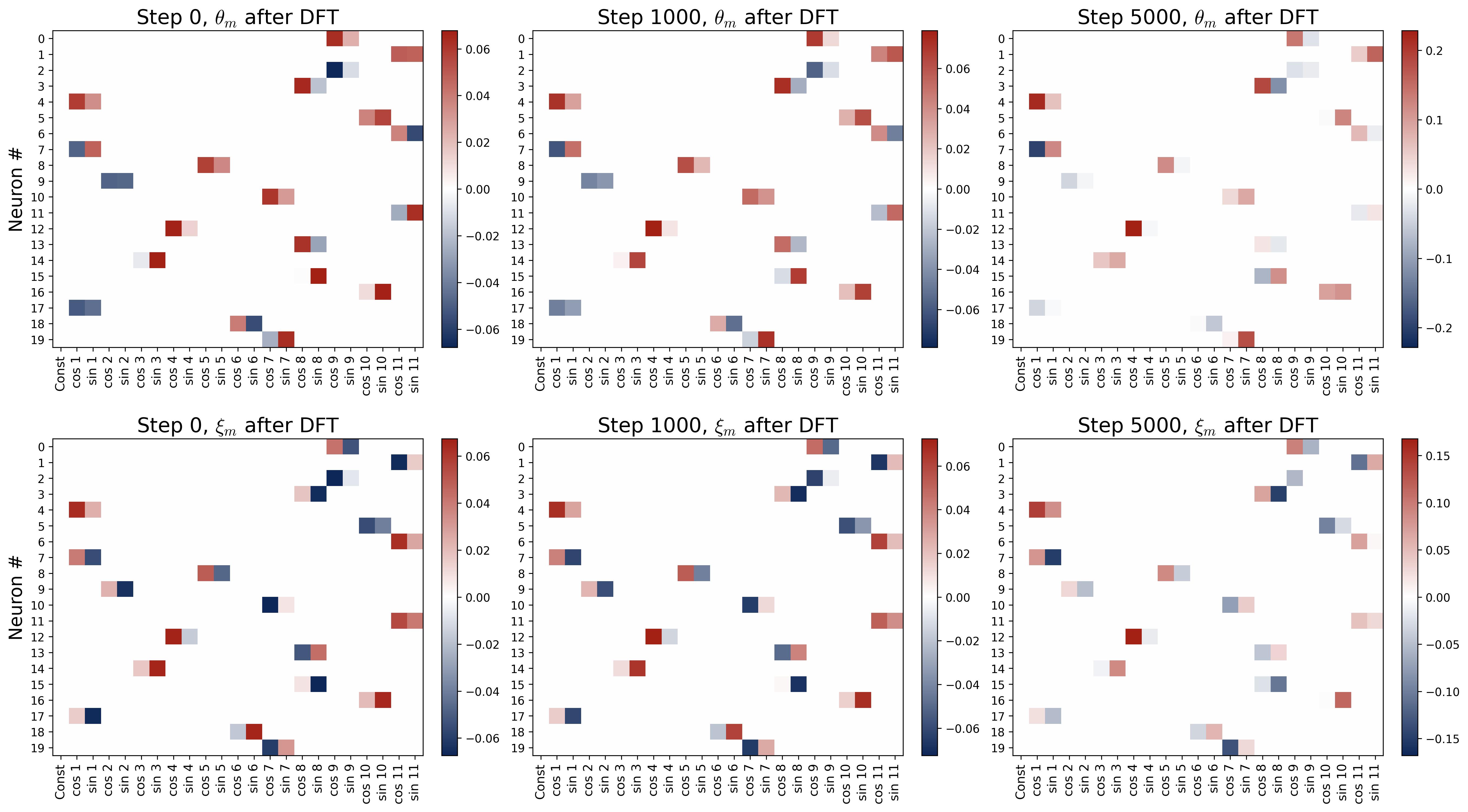}
    \end{minipage}
    \caption{Heatmaps of parameters after applying discrete Fourier transform along training epoches for the first $20$ neurons initialized under Assumption \ref{as:initialization} with $p=23$ and quadratic activation.  At the initial stage, these neurons preserve the single-frequency pattern by evolving only the Fourier coefficients corresponding to the initial frequency $k^\star$, while keeping the others $0$ throughout.}
    \label{fig:single_freq}
\end{figure}

As shown in Figure \ref{fig:single_freq}, a single-frequency pattern is preserved throughout the training process.
This empirical result aligns with our theoretical findings in Theorem \ref{thm:single_freq}, which states that under a sufficiently small initialization, the single-frequency structure will remain stable during the initial stage of training.
In other words, the neurons are fully decoupled and the main flow dominates.

\section{Proof of Results in Section \ref{sec:mech_intepret} and \ref{sec:theory}}\label{ap:proof_main}

\subsection{Proof of Proposition \ref{prop:mech}}\label{ap:proof_mech}
We first introduce a useful lemma about the softmax operation.
\begin{lemma}\label{lem:softmax_gap}
Let $\nu\in\RR^d$. If $i^*=\argmax_i \nu_i$ and $\nu_{i^*}-\nu_i\geq\tau$ for all $i\neq i^*$, then 
$$
\|\smax(\nu) - e_{i^*}\|_1\leq\frac{d-1}{\exp(\tau) + (d-1)}.
$$
\end{lemma}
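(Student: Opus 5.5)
The plan is to compute the distance exactly and then control the one quantity it depends on, the off-argmax softmax mass. Write $s=\smax(\nu)$. Since $s$ is a probability vector,
$$
\|s-e_{i^*}\|_1=(1-s_{i^*})+\sum_{i\neq i^*}s_i=2\,(1-s_{i^*}),
$$
so everything reduces to bounding $1-s_{i^*}$.

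For that bound, divide numerator and denominator of $s_{i^*}$ by $\exp(\nu_{i^*})$:
$$
s_{i^*}=\Bigl(1+\sum_{i\neq i^*}\exp(\nu_i-\nu_{i^*})\Bigr)^{-1}.
$$
The gap hypothesis gives $\exp(\nu_i-\nu_{i^*})\le e^{-\tau}$ for each of the $d-1$ indices $i\neq i^*$. Since $u\mapsto u/(1+u)$ is increasing, this yields
$$
1-s_{i^*}=\frac{\sum_{i\neq i^*}\exp(\nu_i-\nu_{i^*})}{1+\sum_{i\neq i^*}\exp(\nu_i-\nu_{i^*})}\le\frac{(d-1)e^{-\tau}}{1+(d-1)e^{-\tau}}=\frac{d-1}{\exp(\tau)+(d-1)}.
$$
This is exactly the right-hand side of Lemma~\ref{lem:softmax_gap}. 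It is sharp, with equality when all non-maximal coordinates sit exactly $\tau$ below $\nu_{i^*}$.

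The main obstacle is the final step, turning the mass bound into the stated inequality for $\|\cdot\|_1$. Combining the two steps gives only
$$
\|s-e_{i^*}\|_1=2(1-s_{i^*})\le\frac{2(d-1)}{\exp(\tau)+(d-1)}.
$$
The inequality as literally worded fails. With $d=2$ and $\nu=(\tau,0)$, the $\ell_1$ distance is $2/(e^\tau+1)$, which exceeds the stated $1/(e^\tau+1)$. So I do not expect to prove the lemma verbatim with the $\ell_1$ norm. What the argument does prove is the stated bound for $1-s_{i^*}$, which equals both $\max_i|s_i-(e_{i^*})_i|$ and the total-variation distance $\tfrac12\|s-e_{i^*}\|_1$. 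For the $\ell_1$ norm itself, the correct statement carries the factor $2$. I would record the lemma in one of these two corrected forms.

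For the downstream use in Proposition~\ref{prop:mech}, the factor $2$ is harmless. There one applies the lemma with $d=p$ and gap $\tau=aNp/8$, read off from \eqref{eq:mech_indicator} by comparing the signal peak with the noise peaks and with the baseline. Requiring $2(p-1)/(e^{\tau}+p-1)\le\epsilon$ amounts to $\tau\ge\log(2p/\epsilon)$, which still follows from $a\gtrsim(Np)^{-1}\log(p/\epsilon)$ after adjusting the absolute constant. The conclusion $\|\smax\circ f-e_{m_p}\|_{1,\infty}\le\epsilon$ is therefore unaffected.
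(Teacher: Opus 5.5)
Your argument is correct, and your diagnosis of the statement is also correct. The identity $\|s-e_{i^*}\|_1=2(1-s_{i^*})$ plus the monotone bound on $1-s_{i^*}$ is the whole proof. Your example $d=2$, $\nu=(\tau,0)$ shows the $\ell_1$ inequality as printed fails for every finite $\tau$: the left side is $2/(e^\tau+1)$ and the right side is $1/(e^\tau+1)$.

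The paper gives no argument of its own here; it simply defers to Lemma~3.6 of \cite{chen2024provably}. Your computation is therefore a self-contained replacement, and it shows two things:
\begin{itemize}
\item the printed right-hand side is the sharp bound for $1-s_{i^*}$, i.e.\ for the total-variation or $\ell_\infty$ distance;
\item the $\ell_1$ version needs the extra factor $2$.
\end{itemize}
Since your counterexample is genuine, no appeal to the cited source can rescue the statement as worded.

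Your downstream check is also right. In the proof of Proposition~\ref{prop:mech}, the gap is $\tau=aNp/8$: it separates the signal peak at $(x+y)\bmod p$ from the spurious peaks at $2x,2y$ when $x\neq y$, and the gap is larger when $x=y$. With the factor $2$, the paper's chain $\frac{p-1}{\exp(aNp/8)+p-1}\le p\exp(-aNp/8)$ becomes a bound of $2p\exp(-aNp/8)$. The requirement $a\gtrsim(Np)^{-1}\log(p/\epsilon)$ then still suffices after adjusting the constant.
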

\begin{proof}[Proof of Lemma \ref{lem:softmax_gap}]
	See Lemma 3.6 in \cite{chen2024provably} for a detailed proof.
\end{proof}

Now we are ready to present the proof of Proposition \ref{prop:mech}.

\begin{proof}[Proof of Proposition \ref{prop:mech}.]
Let $f^{[m]}$ be the logit contributed by neuron $m$, and fix $j\in[p]$.
Under the  parametrization in \eqref{eq:trig_pattern} and the phase-alignment condition $2\phi_m-\psi_m=0\bmod{2\pi}$, we have
 \begin{align}
 	&f^{[m]}(x,y;\xi,\theta)[j]\notag\\
 	&\qquad=\alpha_m\beta_m^2\cdot\cos(\omega_{\varphi(m)}j+2\phi_m)\cdot\left(\cos(\omega_{\varphi(m)}x+\phi_m)+\cos(\omega_{\varphi(m)}y+\phi_m)\right)^2\notag\\
 	&\qquad=2a\cdot\cos(\omega_{\varphi(m)}(x-y)/2)^2\cdot\cos(\omega_{\varphi(m)}j+2\phi_m)\cdot\{1+\cos(\omega_{\varphi(m)}(x+y)+2\phi_m)\}\notag\\
 	&\qquad=a\cdot\cos(\omega_{\varphi(m)}(x-y)/2)^2\cdot\{2\cos(\omega_{\varphi(m)}j+2\phi_m)\notag\\
 	&\qquad\qquad+\cos(\omega_{\varphi(m)}(x+y-j))+\cos(\omega_{\varphi(m)}(x+y+j)+4\phi_m)\},
 	\notag
 \end{align}
 where the second equality uses the homogeneous scaling, i.e., condition (\romannumeral2) in Definition \ref{def:diversification}.
 Next, summing over all neurons in the frequency-group  $\cN_k$, gives
 \begin{align}
 	&\sum_{m\in\cN_k}f^{[m]}(x,y;\xi,\theta)[j]\notag\\
 	&\qquad=a\cdot\cos(\omega_k(x-y)/2)^2\cdot \underbrace{N\cdot\cos(\omega_k(x+y-j))}_{\displaystyle\text{condition (\romannumeral1):~}|\cN_k|=N}\notag\\
 	&\qquad\qquad+a\cdot\cos(\omega_k(x-y)/2)^2\cdot\underbrace{\sum_{m\in\cN_k}\big\{2\cos(\omega_kj+2\phi_m)+\cos(\omega_k(x+y+j)+4\phi_m)\big\}}_{\displaystyle=0\text{~due to condition (\romannumeral3)}}\notag\\
 	&\qquad=aN/2\cdot\cos(\omega_k(x+y-j))+aN/4\cdot\{\cos(\omega_k(2x-j))+\cos(\omega_k(2y-j))\},
 	\label{eq:mech_freq_output}
 \end{align}
 where the second equality follows from the balanced-frequency and the high-order phase-symmetry conditions (\romannumeral1) and (\romannumeral3) in Definition \ref{def:diversification}.
Summing \eqref{eq:mech_freq_output} over all frequency $k$ yields
 \begin{align}
 	f(x,y;\xi,\theta)[j]&=\sum_{k=1}^{(p-1)/2}\sum_{m\in\cN_k}f^{[m]}(x,y;\xi,\theta)[j]\notag\\
 	&=aN/2\cdot\sum_{k=1}^{(p-1)/2}\cos(\omega_k(x+y-j))\notag\\
 	&\qquad+aN/4\cdot\bigg\{\sum_{k=1}^{(p-1)/2}\cos(\omega_k(2x-j))+\sum_{k=1}^{(p-1)/2}\cos(\omega_k(2y-j))\bigg\}. 
 	\label{eq:mech_logit_all}
 	\end{align}
 	By symmetry, for any fixed $z\in\NN$, $\sum_{k=1}^{(p-1)/2}\cos(\omega_k z)=(p-1)/2$ if $z=0\bmod p$ else $-1/2$. 
 	Then,
 	\begin{align}
 		\sum_{k=1}^{(p-1)/2}\cos(\omega_k z)=-\frac{1}{2}+\frac{p}{2}\cdot\ind(z\bmod p=0).
 		\label{eq:mech_sum_cos}
 	\end{align}
 	Thus, by combining \eqref{eq:mech_logit_all} and \eqref{eq:mech_sum_cos}, we can conclude that
\begin{align}
	f(x,y;\xi,\theta)[j]&=aN/2\cdot\big\{-1+p/2\cdot\ind(x+y\bmod p=j)+p/4\cdot\sum_{z\in\{x,y\}}\ind(2z\bmod p=j)\big\},\quad\forall j\in[p].
	\notag
\end{align}
Note that when $x\neq y$, the true-signal logit at $j=(x+y)\bmod p$ exceeds all others by $aNp/8$, and when $x=y$, the margin is even larger.
Applying Lemma \ref{lem:softmax_gap} yields
\begin{align*}
	\|\smax\circ f(x,y;\xi,\theta)-e_{(x+y)\bmod p}\|_1\leq\frac{p-1}{\exp(aNp/8)+p-1}\leq p\cdot\exp(-aNp/8).
\end{align*}
Hence, to achieve error $\epsilon$, it suffices to choose $a\gtrsim(Np)^{-1}\cdot\log(p/\epsilon)$, which completes the proof.
\end{proof}

\subsection{Preliminary: Gradient Computation}\label{ap:gradient_compute}
Recall the logit of the two-layer neural network in \eqref{eq:def_2NN} takes the form:
\begin{align}
	f(x,y):=f(x,y;\xi,\theta)=\sum_{m=1}^M\xi_m\cdot\sigma(\langle e_x+e_y,\theta_m\rangle)\in\RR^p.
	\label{eq:2NN_revisit}
\end{align}
For theoretical analysis, we consider the training  dynamics over the full dataset $\cD_{\sf full}=\{(x, y, z) \mid x, y \in \mathbb{Z}_p, z = (x + y) \bmod p\}$ and the corresponding CE loss, defined in \eqref{eq:def_CE_loss}, can be written as
\begin{align}
	\ell:=\ell(\xi,\theta;\cD_*)&=-\sum_{x\in\ZZ_p}\sum_{y\in\ZZ_p}\big\langle\log\circ \smax\circ f(x,y;\xi,\theta),e_{(x+y)\bmod p}\big\rangle\notag\\
	&=-\sum_{x\in\ZZ_p}\sum_{y\in\ZZ_p}\log\left(\frac{\exp(f(x,y)[(x+y)\bmod p])}{\sum_{j=1}^p\exp(f(x,y)[j])}\right)\notag\\
	&=\underbrace{-\sum_{x\in\ZZ_p}\sum_{y\in\ZZ_p}f(x,y)[(x+y)\bmod p]}_{\displaystyle:=\tilde\ell}+\underbrace{\sum_{x\in\ZZ_p}\sum_{y\in\ZZ_p}\log\biggl(\sum_{j=1}^p\exp(f(x,y)[j])\biggl)}_{\displaystyle:=\bar\ell}.
	\label{eq:loss_decom}
\end{align}
Following the loss decomposition in \eqref{eq:loss_decom}, we compute the gradients of these two parts respectively. 
Recall that the two-layer neural network is parametrized by $\xi=\{\xi_m\}_{m\in[M]}$ and $\theta=\{\theta_m\}_{m\in[M]}$ with $\xi_m,\theta_m\in\RR^p$.
By substituting the form of $f$ in \eqref{eq:2NN_revisit} into $\tilde\ell$ and $\bar\ell$, we have
\begin{align*}
	\tilde\ell&=-\sum_{x\in\ZZ_p}\sum_{y\in\ZZ_p}\sum_{m=1}^M\xi_m[(x+y)\bmod p]\cdot\sigma(\langle e_x+e_y,\theta_m\rangle),\\
	\bar\ell&=\sum_{x\in\ZZ_p}\sum_{y\in\ZZ_p}\log\Biggl(\sum_{j=1}^p\exp\biggl(\sum_{m=1}^M\xi_m[j]\cdot\sigma(\langle e_x+e_y,\theta_m\rangle)\biggl)\Biggl).
\end{align*}
Fix a neuron $m\in[M]$. First, we calculate the gradients for $\tilde\ell$.
By direct calculation, we have
\begin{align}
	\nabla_{\xi_m}\tilde\ell=-\sum_{x\in\ZZ_p}\sum_{y\in\ZZ_p} e_{(x+y)\bmod p}\cdot\sigma(\langle e_x+e_y,\theta_m\rangle).
	\notag
\end{align}
Following this, the entry-wise derivative with respect to $\xi_m[j]$ satisfies that
\begin{align}
	\frac{\partial\tilde\ell}{\partial\xi_m[j]}=-\sum_{x,y\in\ZZ_p:(x+y)\bmod p=j}\sigma(\langle e_x+e_y,\theta_m\rangle):=-\sum_{(x,y)\in\cS_{j}^p}\sigma(\langle e_x+e_y,\theta_m\rangle).
	\label{eq:loss1_derivative_xi}
\end{align}
Here, we define $\cS_j^p=\{x,y\in\ZZ_p:(x+y)\bmod p=j\}$ for notational simplicity.
Similarly, we can compute the gradient with respect to $\theta_m$, following that
\begin{align}
	\nabla_{\theta_m}\tilde\ell&=-\sum_{x\in\ZZ_p}\sum_{y\in\ZZ_p}\xi_m[(x+y)\bmod p]\cdot(e_x+e_y)\cdot\sigma'(\langle e_x+e_y,\theta_m\rangle)\notag\\
	&=-2\sum_{x\in\ZZ_p}e_x\cdot\sum_{y\in\ZZ_p}\xi_m[(x+y)\bmod p]\cdot\sigma'(\langle e_x+e_y,\theta_m\rangle),
	\notag
\end{align}
where the last equality uses the symmetry of $x$ and $y$. Hence, the entry-wise derivative follows
\begin{align}
	\frac{\partial\tilde\ell}{\partial\theta_m[j]}&=-2\sum_{x\in\ZZ_p}\xi_m[m_p(x,j)]\cdot\sigma'(\langle e_x+e_{j},\theta_m\rangle),
	\label{eq:loss1_derivative_theta}
\end{align}
where we re-index $x=j$ and $y\rightarrow x$ to simplify the form.
Next, we compute the gradients for $\bar\ell$. 
Following a similar argument in \eqref{eq:loss1_derivative_theta} and \eqref{eq:loss1_derivative_xi}, based on the chain rule, it holds that
\begin{align}
	\frac{\partial\bar\ell}{\partial\xi_m[j]}&= \sum_{x\in\ZZ_p}\sum_{y\in\ZZ_p}\frac{\exp\big(\sum_{m=1}^M\xi_m[j]\cdot\sigma(\langle e_x+e_y,\theta_m\rangle)\big)}{\sum_{i=1}^p\exp\big(\sum_{m=1}^M\xi_m[i]\cdot\sigma(\langle e_x+e_y,\theta_m\rangle)\big)}\cdot \sigma(\langle e_x+e_y,\theta_m\rangle).
	\label{eq:loss2_derivative_xi}
\end{align}
In addition, by direct calculation, we can obtain that
\begin{align}
	\frac{\partial\bar\ell}{\partial\theta_m[j]}&=2 \sum_{x\in\ZZ_p}\sum_{\tau=1}^p\frac{\exp\big(\sum_{m=1}^M\xi_m[\tau]\cdot\sigma(\langle e_x+e_{j},\theta_m\rangle)\big)}{\sum_{i=1}^p\exp\big(\sum_{m=1}^M\xi_m[i]\cdot\sigma(\langle e_x+e_{j},\theta_m\rangle)\big)}\cdot \xi_m[\tau]\notag\\
	&\qquad\cdot \sigma'(\langle e_x+e_{j},\theta_m\rangle),
	\label{eq:loss2_derivative_theta}
\end{align}
where the last equality results from re-indexing $x=j$, $y\rightarrow x$, and $j\rightarrow i$.
Throughout the section, we consider quadratic activation $\sigma(x)=x^2$ for theoretical convenience.

\subsection{Main Flow Approximation under Small Parameter Scaling}\label{ap:approx_result}
The key property used in Stage I is that the scale of parameters is relatively small due to the small initialization and sufficiently small constant $a$.
Following this, we have the approximation below:
\begin{equation}
\big(\smax\circ f(x,y;\xi,\theta)\big)[j]=\frac{\exp\big(\sum_{m=1}^M \xi_m[j]\cdot\langle e_x+e_y,\theta_m\rangle^2\big)}{\sum_{i=1}^p\exp\big(\sum_{m=1}^M\xi_m[i]\cdot\langle e_x+e_y,\theta_m\rangle^2\big)}\approx\frac{1}{p},\qquad\forall j\in[p].
\label{eq:def_approx_error}
\end{equation}
To formalize the approximation above, we introduce the following approximation error terms:
\begin{align*}
	{\sf Err}_{m,j}^{(1)}&=\sum_{x\in\ZZ_p}\sum_{y\in\ZZ_p}\left(\frac{\exp\big(\sum_{m=1}^M\xi_m[j]\cdot\langle e_x+e_y,\theta_m\rangle^2\big)}{\sum_{i=1}^p\exp\big(\sum_{m=1}^M\xi_m[i]\cdot\langle e_x+e_y,\theta_m\rangle^2\big)}-\frac{1}{p}\right)\cdot \langle e_x+e_y,\theta_m\rangle^2,\\
	{\sf Err}_{m,j}^{(2)}&=2\sum_{x\in\ZZ_p}\sum_{\tau=1}^p\left(\frac{\exp\big(\sum_{m=1}^M\xi_m[\tau]\cdot\langle e_x+e_j,\theta_m\rangle^2\big)}{\sum_{i=1}^p\exp\big(\sum_{m=1}^M\xi_m[i]\cdot\langle e_x+e_j,\theta_m\rangle^2\big)}-\frac{1}{p}\right)\cdot  \xi_m[\tau]\cdot\langle e_x+e_y,\theta_m\rangle,
\end{align*}
for all $(j,m)\in[p]\times[M]$. 
The approximation result is formalized in the following lemma.
\begin{lemma}\label{lem:bound_approx_loss}
Denote $\|\theta\|_\infty=\max_{m}\|\theta_m\|_\infty$ and $\|\xi\|_\infty=\max_{m]}\|\xi_m\|_\infty$. For all $(j,m)\in[p]\times[M]$, the approximation error is upper bounded by
	$$
	|{\sf Err}_{m,j}^{(1)}|\vee|{\sf Err}_{m,j}^{(2)}|\leq 8p\cdot\|\theta_m\|_\infty\cdot\max\{\|\xi_m\|_\infty,\|\theta_m\|_\infty\}\cdot(\exp(8M\cdot\|\xi\|_\infty\cdot\|\theta\|_\infty^2)-1).
	$$
\end{lemma}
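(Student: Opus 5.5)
The plan is to treat both error terms as a softmax deviation from uniform multiplied by a simple weight, and to bound the two factors separately. First I bound the logits uniformly. For any $(x,y)$ and $i\in[p]$, $\langle e_x+e_y,\theta_m\rangle^2\le 4\|\theta\|_\infty^2$, so
\begin{equation*}
|f(x,y)[i]|\le \sum_{m'=1}^M|\xi_{m'}[i]|\cdot\langle e_x+e_y,\theta_{m'}\rangle^2\le 4M\|\xi\|_\infty\|\theta\|_\infty^2=:L.
\end{equation*}
Writing $s_i=\smax(f(x,y))_i=\exp(f[i])/\sum_{i'}\exp(f[i'])$, the numerator lies in $[e^{-L},e^{L}]$ and the denominator in $[pe^{-L},pe^{L}]$. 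Hence $s_i\in[e^{-2L}/p,\,e^{2L}/p]$, and therefore
\begin{equation*}
|s_i-1/p|\le \frac{1}{p}\max\{e^{2L}-1,\,1-e^{-2L}\}=\frac{e^{2L}-1}{p},
\end{equation*}
since $1-e^{-u}\le e^{u}-1$ for $u\ge0$. Here $2L=8M\|\xi\|_\infty\|\theta\|_\infty^2$, which is exactly the argument of the exponential in the claimed bound.

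For ${\sf Err}^{(1)}_{m,j}$, the sum runs over $p^2$ pairs $(x,y)$. Each summand is at most $\frac{e^{2L}-1}{p}\cdot 4\|\theta_m\|_\infty^2$, so
\begin{equation*}
|{\sf Err}^{(1)}_{m,j}|\le 4p\|\theta_m\|_\infty^2(e^{2L}-1),
\end{equation*}
which fits within the stated constant $8p$.

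For ${\sf Err}^{(2)}_{m,j}$, the sum runs over $x\in\ZZ_p$ and $\tau\in[p]$, again $p^2$ terms, with the factor $2$ in front. Each summand is at most $\frac{e^{2L}-1}{p}\cdot\|\xi_m\|_\infty\cdot 2\|\theta_m\|_\infty$, since $|\langle e_x+e_j,\theta_m\rangle|\le 2\|\theta_m\|_\infty$. I read the paper's "$e_y$" in that term as the typo it appears to be, $e_j$. This gives
\begin{equation*}
|{\sf Err}^{(2)}_{m,j}|\le 2\cdot p^2\cdot\frac{e^{2L}-1}{p}\cdot 2\|\xi_m\|_\infty\|\theta_m\|_\infty=4p\|\theta_m\|_\infty\|\xi_m\|_\infty(e^{2L}-1).
\end{equation*}
Taking the maximum of the two bounds gives $8p\|\theta_m\|_\infty\max\{\|\xi_m\|_\infty,\|\theta_m\|_\infty\}(e^{2L}-1)$, with slack in the constant.

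The only delicate step is the two-sided softmax deviation bound: one must use the same $L$ for the numerator and the denominator, and check that the lower deviation $1-e^{-2L}$ is dominated by $e^{2L}-1$. Everything else is counting terms.
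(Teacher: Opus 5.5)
Your proof is correct and follows the paper's argument: bound the softmax deviation $|\smax(f(x,y))_i-1/p|$ uniformly by $\frac1p\bigl(\exp(8M\|\xi\|_\infty\|\theta\|_\infty^2)-1\bigr)$, then multiply by the $p^2$ summands and the bound on each weight. The only cosmetic difference is in the softmax step. The paper bounds the logit range $\Delta_{x,y}=\max_j s_j-\min_j s_j\le 8M\|\xi\|_\infty\|\theta\|_\infty^2$ and then the extremal entries $1/(1+(p-1)e^{\mp\Delta_{x,y}})$, whereas you use a two-sided bound on each logit, which reaches the same quantity more directly. Your $4p$ for ${\sf Err}^{(2)}$ is also slack within the stated $8p$, which leaves room for the factor $2$ from $\sigma'(z)=2z$ that the definition of ${\sf Err}^{(2)}$ appears to drop.
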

\begin{proof}[Proof of Lemma \ref{lem:bound_approx_loss}]
	Let $s_j(x,y) = \sum_{m=1}^{M}\xi_{m}[j]\cdot\langle e_x+e_y,\theta_{m}\rangle^2$ denote the score given by the neural network for the $j$-th entry.
Then, for fixed $(x,y)$, the softmax vector for $j$-th entry is given by
$p(x,y)[j] = \exp(s(x,y)[j])/\sum_{i=1}^{p}\exp(s(x,y)[i])$.
Note that, for any $(m,j)\in[M]\times[p]$, we have
\begin{equation}
	|{\sf Err}_{m,j}^{(1)}|=\sum_{x\in\ZZ_p}\sum_{y\in\ZZ_p}\biggl(p(x,y)[j]-\frac{1}{p}\biggl)\cdot \langle e_x+e_y,\theta_m\rangle^2\leq 4p^2\cdot\max_{(x,y)\in\ZZ_p^2}\left|p(x,y)[j]-\frac{1}{p}\right|\cdot\|\theta_m\|_\infty^2.
	\label{eq:error1_upper_bound}
\end{equation}
Let $\Delta_{x,y}=\max_{j\in[p]}s(x,y)[j]-\min_{j\in[p]}s(x,y)[j]>0$ for any $(x,y)\in\ZZ_p^2$.
It is straightforward to see that $\Delta_{x,y}$ can be effectively bounded by the scales of $\theta_m$'s and $\xi_m$'s, following that
\begin{equation}
\Delta_{x,y}\leq 2\left\|\sum_{m=1}^{M}\xi_{m}[j]\cdot\langle e_x+e_y,\theta_{m}\rangle^2\right\|_\infty\leq8M\cdot\|\xi\|_{\infty}\cdot\|\theta\|_\infty^2,\qquad\forall(x,y)\in\ZZ_p^2.
\label{eq:bound_score_diff}
\end{equation}
Following this, we upper bound the difference between the softmax-induced distribution and the uniform distribution using the small-scale score vector. 
By simple algebra, we can show that
\begin{align}
	\max_{j\in[p]}\left|p(x,y)[j]-\frac{1}{p}\right|&\leq\left|\max_{j\in[p]}p(x,y)[j]-\frac{1}{p}\right|\bigvee\left|\min_{j\in[p]}p(x,y)[j]-\frac{1}{p}\right|\notag\\
	&\leq\left|\frac{1}{1+(p-1)\cdot\exp(-\Delta_{x,y})}-\frac{1}{p}\right|\bigvee\left|\frac{1}{1+(p-1)\cdot\exp(\Delta_{x,y})}-\frac{1}{p}\right|\notag\\
	&=\frac{p-1}{p}\cdot\left\{\frac{\exp(\Delta_{x,y})-1}{\exp(\Delta_{x,y})+p-1}\bigvee\frac{1-\exp(-\Delta_{x,y})}{\exp(-\Delta_{x,y})+p-1}\right\}\notag\\
	&\leq\frac{1}{p}\cdot(\exp(\Delta_{x,y})-1)\cdot\max\left\{\exp(-\Delta_{x,y}),1\right\}\leq\frac{1}{p}\cdot(\exp(\Delta_{x,y})-1).
	\label{eq:smax_unif_upper_bound}
\end{align}
By combining \eqref{eq:error1_upper_bound}, \eqref{eq:bound_score_diff} and \eqref{eq:smax_unif_upper_bound}, we can reach the conclusion that
$$
|{\sf Err}_{m,j}^{(1)}|\leq 4p\cdot\|\theta_m\|_\infty^2\cdot(\exp(8M\cdot\|\xi\|_\infty\cdot\|\theta\|_\infty^2)-1).
$$
Building upon a similar argument, it holds that
\begin{align*}
	|{\sf Err}_{m,j}^{(2)}|&=2\sum_{x\in\ZZ_p}\sum_{\tau=1}^p\left|p(x,j)[\tau]-\frac{1}{p}\right|\cdot  \xi_m[\tau]\cdot\langle e_x+e_y,\theta_m\rangle\\
	&\leq 8p\cdot\|\theta_m\|_\infty\cdot\|\xi_m\|_\infty\cdot(\exp(8M\cdot\|\xi\|_\infty\cdot\|\theta\|_\infty^2)-1).
\end{align*}
Hence, we complete the proof of bounded approximation error.
\end{proof}
Lemma~\ref{lem:bound_approx_loss} formalizes a key technical tool for analyzing the dynamics during the initial stage: given small-scale parameters $\theta_m$'s and $\xi_m$'s, and a specified small constant $a\in\RR$ (introduced for technical convenience), the softmax components in the gradient can be effectively approximated by a uniform vector, with a controllable and small approximation error.

In the following sections, we denote ${\sf Err}_{m}^{(1)}=({\sf Err}_{m,j}^{(1)})_{j\in[p]}\in\RR^p$ and ${\sf Err}_{m}^{(2)}=({\sf Err}_{m,j}^{(2)})_{j\in[p]}\in\RR^p$ for notational simplicity and we remark that the error vectors would vary along the grdient flow.

\subsubsection{Proof Overview: Simplified Dynamics under Approximation}\label{ap:proof_sketch_dyn}
Before delving into the technical details, we provide a brief summary of the approximate dynamics of parameters and their transformations along gradient flow in Table \ref{tab:summary_dynamics}. 
This overview characterizes the training during the initial phase, when parameter magnitudes are small.
We use $\approx$ to highlight the central flow, omitting the perturbations introduced by approximation errors as defined in~\eqref{eq:def_approx_error}.
The simplification of the approximate dynamics leverages two key features that arise under the specialized initialization in Assumption~\ref{as:initialization}: \highlight{neuron-wise decoupled} loss landscape---meaning the evolution of each neuron depends only on itself---and preservation of a \highlight{single-frequency} structure---i.e., the parameters exhibit only one frequency component in the Fourier domain. These properties hold during the early stage of training.
Refer to \S\ref{sec:dynamics} for a detailed illustration and proof sketch. 
With slight abuse of notation, we let $k^\star$ denote the initial frequency of each neuron and we use the superscript $\star$ instead of $k^\star$ to simplify the notation in Table \ref{tab:summary_dynamics}.

\paragraph{Roadmap.}
In Part I, we present the dynamics of original parameters---$\{\theta_m\}_{m\in[M]}$ and $\{\xi_m\}_{m\in[M]}$ with calculation details provided in \S\ref{ap:central_flow} and \S\ref{ap:single_freq_preserve}.
In Part II, building on the results from \S\ref{ap:single_freq_preserve}, we shift focus to the dynamics of the discrete Fourier coefficients, defined in \S\ref{sec:dft}, to better understand the evolution of parameters in the Fourier domain.
Finally, based on the results in Part I and Part II, we analyze the dynamics of the magnitudes and phases of the Fourier signals (see \S\ref{sec:dft} for definitions), to interpret the alignment behavior between $\theta_m$ and $\xi_m$, and the detailed derivations are provided in \S\ref{ap:phase_alignment}.
The auxiliary equalities naturally arise from the definition of discrete Fourier coefficients and their transformations.

\begin{table}[!h]
\vspace{0.2cm}
\centering
\renewcommand*{\arraystretch}{1.4}
\begin{tabular}{ >{\centering\arraybackslash}m{2.5cm} | >{\centering\arraybackslash}m{12.5cm} } 
\toprule
\multicolumn{2}{l}{\sc Part I: Dynamics of Original Parameters.} \\ 
\hline
$\theta_m[j](t)$ &   $\partial_t \theta_m[j](t)\approx2p\cdot\alpha_m^\star(t)\cdot\beta_m^\star(t)\cdot\cos(\omega_{k} j+\psi_m^\star(t)-\phi_m^\star(t)),\quad\forall j\in[p]$\\
$\xi_m[j](t)$ & $\partial_t \xi_m[j](t)\approx p\cdot\alpha_m^\star(t)^2\cdot\cos(\omega_{k^\star}j+2\phi_m^\star(t)),\quad\forall j\in[p]$ \\
\hline

\multicolumn{2}{l}{\sc Part II: Dynamics of Dicrete Fourier Coefficients.} \\ 
\hline
$g_m[2k^\star](t)$ & $\partial_tg_m[2k^\star](t)\approx \sqrt{2}\cdot p^{3/2}\cdot\alpha_m^\star(t)\cdot\beta_m^\star(t)\cdot\cos\big(\psi_m^\star(t)-\phi_m^\star(t)\big)$\\
$g_m[2k^\star+1](t)$ & $\partial_tg_m[2k^\star+1](t)\approx -\sqrt{2}\cdot p^{3/2}\cdot\alpha_m^\star(t)\cdot\beta_m^\star(t)\cdot\sin\big(\psi_m^\star(t)-\phi_m^\star(t)\big)$\\
$r_m[2k^\star](t)$ & $\partial_tr_m[2k^\star](t)\approx p^{3/2}/\sqrt{2}\cdot\alpha_m^\star(t)^2\cdot\cos\big(2\phi_m^\star(t)\big)$\\
$r_m[2k^\star+1](t)$ & $\partial_tr_m[2k^\star+1](t)\approx -p^{3/2}/\sqrt{2}\cdot\alpha_m^\star(t)^2\cdot\sin\big(2\phi_m^\star(t)\big)$\\
\hline

\multicolumn{2}{l}{\sc Part III: Dynamics of Magnitudes and Phases.} \\ 
\hline
$\alpha_m^\star(t)$ & $\partial_t\alpha_m^\star(t)\approx2p\cdot\alpha_m^\star(t)\cdot\beta_m^\star(t)\cdot\cos\big(2\phi_m^\star(t) -\psi_m^\star(t)\big)$\\
$\beta_m^\star(t)$ & $\partial_t\beta_m^\star(t)\approx p\cdot\alpha_m^\star(t)^2\cdot\cos\big(2\phi_m^\star(t)-\psi_m^\star(t)\big)$\\
$\phi_m^\star(t)$ & $\partial_t\exp(i\phi_m^\star(t))\approx2p\cdot\beta_m^\star(t)\cdot\sin\big(2\phi_m^\star(t) -\psi_m^\star(t)\big)\cdot\exp\left(i\left\{\phi_m^\star(t)-\pi/2\right\}\right)$\\
$\psi_m^\star(t)$ & $\partial_t\exp(i\psi_m^\star(t))\approx p\cdot\frac{\alpha_m^\star(t)^2}{\beta_m^\star(t)}\cdot\sin\big(2\phi_m^\star(t) -\psi_m^\star(t)\big)\cdot\exp\left(i\left\{\psi_m^\star(t)+\pi/2\right\}\right)$\\
${\eu D}_m^\star(t)^\text{\footnotesize\dred{1}}$ & $\partial_t\exp(i{\eu D}_m^\star(t))\approx p\cdot\left(4\beta_m^\star(t)+\frac{\alpha_m^\star(t)^2}{\beta_m^\star(t)}\right)\cdot\sin({\eu D}_m^\star(t))\cdot \exp\left(i\left\{{\eu D}_m^\star(t)-\pi/2\right\}\right)$\\
\hline
\multicolumn{2}{l}{\sc Part IV: Auxiliary Equalities.} \\ 
\hline
\multicolumn{2}{l}{\qquad$\cos(\phi_m^\star(t))=\sqrt{2/p}\cdot g_m[2k^\star](t)/\alpha_m^*(t),\qquad\sin(\phi_m^\star(t))=-\sqrt{2/p}\cdot g_m[2k^\star+1](t)/\alpha_m^*(t),$} \\
\multicolumn{2}{l}{\qquad$\cos(\psi_m^\star(t))=\sqrt{2/p}\cdot r_m[2k^\star](t)/\beta_m^*(t),\qquad\sin(\psi_m^\star(t))=-\sqrt{2/p}\cdot r_m[2k^\star+1](t)/\beta_m^*(t).$} \\
\bottomrule
\multicolumn{2}{l}{\footnotesize ${}^{\dred{1}}$ We use ${\eu D}_m^\star(t)$ denote the phase misalignment level defined as ${\eu D}_m^\star(t)=2\phi_m^\star(t)-\psi_m^\star(t)\mod2\pi$.} 
\end{tabular}
\caption{Summarization of the approximate dynamics during the initial stage. Please refer to \S\ref{ap:central_flow}, \S\ref{ap:single_freq_preserve} and \S\ref{ap:phase_alignment} for  formalized arguments and  detailed derivations.}
\label{tab:summary_dynamics}
\end{table}

\subsubsection{Proof of Lemma \ref{lem:central_flow}: Main Flow of Decoupled Neurons}\label{ap:central_flow}
\begin{lemma}[Main Flow] \label{lem:central_flow}
Consider the discrete Fourier coefficients, as well as the signal magnitudes and phases, defined over $\{\theta_m\}_{m\in[M]}$ and $\{\xi_m\}_{m\in[M]}$ (see \S\ref{sec:dft} for definitions).
Then, at each time $t\in\RR^+$ and $m\in[M]$, the gradient dynamics takes the following form:
	\begin{align*}
	\partial_t\xi_m[j](t)&=p\cdot\sum_{k=1}^{(p-1)/2}\alpha_m^{k}(t)^2\cdot\cos(\omega_{k}j+2\phi_m^k(t))-{\sf Err}_{m,j}^{(1)}(t),\\
		\partial_t\theta_m[j](t)&=2p\cdot\sum_{k=1}^{(p-1)/2}\alpha_m^{k}(t)\cdot\beta_m^{k}(t)\cdot\cos(\omega_k j+\psi_m^k(t)-\phi_m^k(t))-{\sf Err}_{m,j}^{(2)}(t),
\end{align*}
where the approximation errors ${\sf Err}_{m,j}^{(1)}(t)$ and ${\sf Err}_{m,j}^{(2)}(t)$ are defined in \S\ref{ap:approx_result}
\end{lemma}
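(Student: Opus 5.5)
The plan is a direct computation. Write the gradient flow as descent, $\partial_t\Theta_t=-\nabla\ell(\Theta_t)$; the sign in the display of \S\ref{sec:dynamics} is the usual ascent/descent abuse. Split $\ell=\tilde\ell+\bar\ell$ as in \eqref{eq:loss_decom}, and use the entry-wise derivatives \eqref{eq:loss1_derivative_xi}--\eqref{eq:loss2_derivative_theta} with $\sigma(u)=u^2$ and $\sigma'(u)=2u$. In $\partial\bar\ell/\partial\xi_m[j]$ and $\partial\bar\ell/\partial\theta_m[j]$, add and subtract $1/p$ inside every softmax weight. The part carrying (softmax $-1/p$) is, by definition, ${\sf Err}^{(1)}_{m,j}$ and ${\sf Err}^{(2)}_{m,j}$, respectively. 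For ${\sf Err}^{(2)}$, the factor $2$ from $\sigma'$ has to be matched with the factor written in its definition; I would fix the constant convention there, since this only rescales the error term. What is left is a \emph{main flow} that depends only on $(\theta_m,\xi_m)$. It is then evaluated by inserting the Fourier expansion \eqref{eq:para_fourier_form} and using orthogonality of characters on $\ZZ_p$.

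\emph{Output layer.} The main part of $-\partial\ell/\partial\xi_m[j]$ is
\[
\sum_{x}(\theta_m[x]+\theta_m[j-x])^2-\frac1p\sum_{x,y}(\theta_m[x]+\theta_m[y])^2 .
\]
Expanding both squares, the $2\|\theta_m\|_2^2$ terms cancel. This leaves $2\sum_x\theta_m[x]\theta_m[j-x]-\tfrac2p(\sum_x\theta_m[x])^2$. Substitute $\theta_m[x]=\alpha_m^0+\sum_k\alpha_m^k\cos(\omega_kx+\phi_m^k)$. The constant--constant contribution to the convolution is $p(\alpha_m^0)^2$, which cancels exactly with $\tfrac1p(p\alpha_m^0)^2$. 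Constant--oscillating cross terms vanish because $\sum_x\cos(\omega_kx+c)=0$. For $k,k'\in[\frac{p-1}2]$, the sum $\sum_x\cos(\omega_kx+\phi)\cos(\omega_{k'}(j-x)+\phi')$ vanishes unless $k=k'$, because $k+k'\not\equiv0$ and $k-k'\not\equiv0\pmod p$ otherwise. When $k=k'$ it equals $\tfrac p2\cos(\omega_kj+\phi+\phi')$. Hence the main part is $p\sum_k(\alpha_m^k)^2\cos(\omega_kj+2\phi_m^k)$, as claimed.

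\emph{Input layer.} The main part of $-\partial\ell/\partial\theta_m[j]$ is
\[
4\sum_x\bigl(\xi_m[x+j]-\bar\xi_m\bigr)\bigl(\theta_m[x]+\theta_m[j]\bigr),\qquad \bar\xi_m=\frac1p\sum_\tau\xi_m[\tau].
\]
The $\theta_m[j]$ piece is killed by $\sum_x(\xi_m[x+j]-\bar\xi_m)=0$. Subtracting $\bar\xi_m$ removes $\beta_m^0$, and the $\alpha_m^0$ part of $\theta_m$ integrates to zero against the centered $\xi_m$. The remaining correlation $\sum_x\cos(\omega_k(x+j)+\psi)\cos(\omega_{k'}x+\phi)$ equals $\tfrac p2\cos(\omega_kj+\psi-\phi)\,\ind(k=k')$. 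This gives $2p\sum_k\alpha_m^k\beta_m^k\cos(\omega_kj+\psi_m^k-\phi_m^k)$.

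The main obstacle is bookkeeping rather than analysis. First, one must verify the orthogonality relations carefully, including that no cross-frequency terms survive for $k\neq k'$ in the restricted range $[\frac{p-1}2]$, since $p$ is an odd prime. Second, the constant Fourier modes must cancel exactly. Third, the prefactors and the sign and normalization of the error terms must be matched consistently with their definitions in \S\ref{ap:approx_result}. I would check the output-layer identity first, because there the constant cancellation is the delicate point.
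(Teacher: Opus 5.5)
Your proposal is correct and follows the paper's proof. You peel off the softmax-minus-uniform remainder as ${\sf Err}^{(1)}_{m,j},{\sf Err}^{(2)}_{m,j}$, then evaluate the uniform-softmax main part via the Fourier expansion \eqref{eq:para_fourier_form} and orthogonality over $\ZZ_p$; your convolution-minus-squared-mean bookkeeping just repackages the paper's terms (i)--(v), and your caution about constants is warranted: with $\sigma'(u)=2u$ the remainder in $\partial\ell/\partial\theta_m[j]$ carries a factor $4$, not the $2$ in the paper's definition of ${\sf Err}^{(2)}_{m,j}$, and the correlation $\sum_x\cos(\omega_k(x+j)+\psi)\cos(\omega_kx+\phi)$ is $\tfrac p2\cos(\omega_kj+\psi-\phi)$ as you state (not $p\cos(\cdot)$ as written for the paper's term (iv.1)), which is exactly what produces the coefficient $2p$.
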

Lemma \ref{lem:central_flow} indicates that the dynamics of $\theta_m(t)$'s and $\xi_m(t)$'s only depend on $\theta_m(t)$ and $\xi_m(t)$ such that the neurons are almost fully decoupled with small approximation errors.

\begin{proof}[Proof of Lemma \ref{lem:central_flow}]
Consider a fixed neuron $m$.
By combining the gradient computations in \eqref{eq:loss1_derivative_xi} and \eqref{eq:loss2_derivative_xi}, we can write the complete form of derivative of loss $\ell$  with respect to $\xi_m[j]$ as
\begin{align}
		\frac{\partial\ell}{\partial\xi_m[j]}&=\frac{\partial\tilde\ell}{\partial\xi_m[j]}+\frac{\partial\bar\ell}{\partial\xi_m[j]}\notag\\
		&=-\sum_{(x,y)\in\cS_j^p}\sigma(\langle e_x+e_y,\theta_m\rangle)+\frac{1}{p}\cdot\sum_{x\in\ZZ_p}\sum_{y\in\ZZ_p}\sigma(\langle e_x+e_y,\theta_m\rangle)+{\sf Err}_{m,j}^{(1)}.
		\label{eq:loss_derivative_xi}
\end{align}
Similarly, by combining \eqref{eq:loss1_derivative_theta} and \eqref{eq:loss2_derivative_theta}, we have the  derivative of $\ell$  with respect to $\theta_m[j]$:
\begin{align}
		\frac{\partial\ell}{\partial\theta_m[j]}=\frac{\partial\tilde\ell}{\partial\theta_m[j]}+\frac{\partial\bar\ell}{\partial\theta_m[j]}&=-2\sum_{x\in\ZZ_p}\xi_m[m_p(x,j)]\cdot\sigma'(\langle e_x+e_j,\theta_m\rangle)\notag\\
		&\qquad+\frac{2}{p}\cdot\sum_{x\in\ZZ_p}\sum_{\tau=1}^p\xi_m[\tau]\cdot \sigma'(\langle e_x+e_j,\theta_m\rangle)+{\sf Err}_{m,j}^{(2)}.
		\label{eq:loss_derivative_theta}
\end{align}
Motivated by Lemma \ref{lem:central_flow}, we focus on the dominant terms of the gradient and carefully manage the error terms to characterize the central flow that determines the main dynamics in the initial stage.

\paragraph{Step 1: Deriving Gradient of $\xi_m$.}
By switching from the standard canonical basis to the Fourier basis, we can write $\theta_m$ using a form of discrete Fourier expansion, as shown in \eqref{eq:para_fourier_form}.
Then, we have
\begin{align}
	\sum_{(x,y)\in\cS_j^p}\sigma(\langle e_x+e_y,\theta_m\rangle)&=\sum_{(x,y)\in\cS_j^p}\Biggl(2\alpha_m^0+\sum_{k=1}^{(p-1)/2}\alpha_m^k\sum_{z\in\{x,y\}}\cos(\omega_{k}z+\phi_m^k)\Biggl)^2\notag\\
	&=4p\cdot (\alpha_m^0)^2+\sum_{k=1}^{(p-1)/2}(\alpha_m^{k})^2\cdot\textbf{\small(\romannumeral1)}+\sum_{1\leq k\neq\tau\leq(p-1)/2}\alpha_m^{k}\alpha_m^{\tau}\cdot\textbf{\small(\romannumeral2)}\notag\\
	&\qquad+2\alpha_m^0\cdot\sum_{k=1}^{(p-1)/2}\alpha_m^{k}\cdot\textbf{\small(\romannumeral3)}.
	\label{eq:step1.1_decom}
\end{align}
where we denote each term as
\begin{align*}
	\textbf{\small(\romannumeral1)}&=\sum_{(x,y)\in\cS_j^p}\Biggl(\sum_{z\in\{x,y\}}\cos(\omega_{k}z+\phi_m^k)\Biggl)^2,\\
	\textbf{\small(\romannumeral2)}&=\sum_{(x,y)\in\cS_j^p}\sum_{z\in\{x,y\}}\cos(\omega_{k}z+\phi_m^k)\cdot\sum_{z\in\{x,y\}}\cos(\omega_{\tau}z+\phi_m^\tau),\\
	\textbf{\small(\romannumeral3)}&=\sum_{(x,y)\in\cS_j^p}\sum_{z\in\{x,y\}}\cos(\omega_{k}z+\phi_m^k).
\end{align*}
In the following, we compute \textbf{\small(\romannumeral1)}, \textbf{\small(\romannumeral2)} and \textbf{\small(\romannumeral3)} respectively using trigonometric identities and the periodicity of the module addition task over the full space $\ZZ_p^2$.
First, note that
\begin{align}
	\textbf{\small(\romannumeral1)}&=2\sum_{x\in\ZZ_p}\cos(\omega_{k}x+\phi_m^k)^2+2\sum_{(x,y)\in\cS_j^p}\cos(\omega_{k}x+\phi_m^k)\cdot \cos(\omega_{k}y+\phi_m^k)\notag\\
	&=p+\sum_{x\in\ZZ_p}\cos(\omega_{2k}x+2\phi_m^k)+\sum_{(x,y)\in\cS_j^p}\cos(\omega_{k}(x+y)+2\phi_m^k)+\sum_{(x,y)\in\cS_j^p}\cos(\omega_{k}(x-y))\notag\\
	&=p\cdot(1+\cos(\omega_{k}j+2\phi_m^k)),
	\label{eq:step1.1_term1}
\end{align}
where the last equality uses the fact that
$\sum_{(x,y)\in\cS_j^p}\cos(\omega_{k}(x-y))=\sum_{x\in\ZZ_p}\cos(\omega_{k}x)=0$ and $\cos(\omega_{k}(x+y)+2\phi_m^k)=\cos(\omega_{k}j+2\phi_m^k)$ for all $(x,y)\in\cS_j^p$.
Following a similar argument, we have
\begin{align}
	\textbf{(\romannumeral2)}&=2\sum_{x\in\ZZ_p}\cos(\omega_{k}x+\phi_m^k)\cdot \cos(\omega_{\tau}x+\phi_m^\tau)+2\sum_{(x,y)\in\cS_j^p}\cos(\omega_{k}x+\phi_m^k)\cdot \cos(\omega_{\tau}y+\phi_m^\tau)\notag\\
	&=\sum_{(x,y)\in\cS_j^p}\cos((\omega_{k}x+\omega_\tau y)+\phi_m^k+\phi_m^\tau)+\sum_{(x,y)\in\cS_j^p}\cos((\omega_{k}x-\omega_\tau y)+\phi_m^k-\phi_m^\tau)\notag\\
	&\qquad+\sum_{x\in\ZZ_p}\cos((\omega_{k}+\omega_\tau)x+\phi_m^k+\phi_m^\tau)+\sum_{x\in\ZZ_p}\cos((\omega_{k}-\omega_\tau)x+\phi_m^k-\phi_m^\tau)=0,
	\label{eq:step1.1_term2}
\end{align}
where we use $\sum_{(x,y)\in\cS_j^p}\cos((\omega_{k}x+\omega_\tau y)+\phi_m^k+\phi_m^\tau)=\sum_{x\in\ZZ_p}\cos((\omega_{k}-\omega_\tau)+\omega_\tau j+\phi_m^k+\phi_m^\tau)$ in the last inequality for the first term and a similar arguent for the second one.
In addition, it is easy to show that $\textbf{\small(\romannumeral3)}=2\sum_{x\in\ZZ_p}\cos(\omega_{k}x+\phi_m^k)=0$.
By combining \eqref{eq:step1.1_decom}, \eqref{eq:step1.1_term1} and \eqref{eq:step1.1_term2}, we have
\begin{align*}
	\sum_{(x,y)\in\cS_j^p}\sigma(\langle e_x+e_y,\theta_m\rangle)&=4p\cdot (\alpha_m^0)^2+p\cdot\sum_{k=1}^{(p-1)/2}(\alpha_m^{k})^2\cdot(1+\cos(\omega_{k}j+2\phi_m^k)).
\end{align*}
Following this, based on \eqref{eq:loss_derivative_xi}, the simplified derivative of each entry takes the form
\begin{align*}
		\frac{\partial\ell}{\partial\xi_m[j]}-{\sf Err}_{m,j}^{(1)}&=-\sum_{(x,y)\in\cS_j^p}\sigma(\langle e_x+e_y,\theta_m\rangle)+\frac{1}{p}\cdot\sum_{j=1}^p\sum_{(x,y)\in\cS_j^p}\sigma(\langle e_x+e_y,\theta_m\rangle)\notag\\
		&=-p\cdot\sum_{k=1}^{(p-1)/2}(\alpha_m^{k})^2\cdot\cos(\omega_{k}j+2\phi_m^k),\qquad\forall j\in[p].
\end{align*}

\paragraph{Step 2: Deriving Gradient of $\theta_m$.}
Next, we calculate the gradient of $\theta_m$, following a procedure analogous to the one in Step 1. 
To begin, we consider the expression:
\begin{align}
	&\sum_{x\in\ZZ_p}\xi_m[m_p(x,j)]\cdot\sigma'(\langle e_x+e_j,\theta_m\rangle)=2\underbrace{\sum_{x\in\ZZ_p}\xi_m[m_p(x,j)]\cdot\theta_m[x]}_{\textbf{(\romannumeral4)}}+2\underbrace{\theta_m[j]\cdot\sum_{x\in\ZZ_p}\xi_m[x]}_{\textbf{(\romannumeral5)}}.
	\label{eq:step1.2_decom}
\end{align}
Term \textbf{\small (\romannumeral4)} can be decomposed and simplified using the fourier expansions of $\xi_m$ and $\theta_m$ in \eqref{eq:para_fourier_form}. By carefully applying cosine product identities and rearranging the terms, we have
\begin{align}
	\textbf{\small(\romannumeral4)}&=\sum_{x\in\ZZ_p}\Biggl(\beta_m^0+\sum_{k=1}^{(p-1)/2}\beta_m^k\cdot\cos(\omega_{k}\cdot m_p(x,j)+\psi_m^k)\Biggl)\cdot\Biggl(\alpha_m^0+\sum_{k=1}^{(p-1)/2}\alpha_m^k\cdot\cos(\omega_{k}x+\phi_m^k)\Biggl)\notag\\
	&=p\cdot \alpha_m^0\cdot \beta_m^0+\sum_{k=1}^{(p-1)/2}\alpha_m^{k}\beta_m^{k}\cdot\textbf{\small(\romannumeral4.1)}+\alpha_m^0\cdot\sum_{k=1}^{(p-1)/2}\beta_m^{k}\cdot\textbf{\small(\romannumeral4.2)}\notag\\
	&\qquad+\sum_{1\leq k\neq\tau\leq(p-1)/2}\alpha_m^{k}\beta_m^{\tau}\cdot\textbf{\small(\romannumeral4.3)}+\beta_m^0\cdot\sum_{k=1}^{(p-1)/2}\alpha_m^{k}\cdot\textbf{\small(\romannumeral4.4)}.\notag
\end{align}
where we denote each term as
\begin{align*}
	\textbf{\small(\romannumeral4.1)}&=\sum_{x\in\ZZ_p}\cos(\omega_{k}\cdot m_p(x,j)+\psi_m^k)\cdot\cos(\omega_{k}x+\phi_m^k),\quad \textbf{\small(\romannumeral4.2)}=\sum_{x\in\ZZ_p}\cos(\omega_{k}\cdot m_p(x,j)+\psi_m^k),\\
	\textbf{\small(\romannumeral4.3)}&=\sum_{x\in\ZZ_p}\cos(\omega_{\tau}\cdot m_p(x,j)+\phi_m^\tau)\cdot\cos(\omega_{k}x+\psi_m^k),\quad\textbf{\small(\romannumeral4.4)}=\sum_{x\in\ZZ_p}\cos(\omega_{k}\cdot m_p(x,j)+\phi_m^k).
\end{align*}
Analogous to \eqref{eq:step1.1_term1} and \eqref{eq:step1.1_term2}, using the trigonometric identities and periodicity of the module addition task, we have $\textbf{\small(\romannumeral4.2)}=\textbf{\small(\romannumeral4.3)}=\textbf{\small(\romannumeral4.4)}=0$, and for the first term we can show that
$$
\textbf{\small(\romannumeral4.1)}=\sum_{x\in\ZZ_p}\cos(\omega_{k}\cdot m_p(x,j)+\psi_m^k)\cdot\cos(\omega_{k}x+\phi_m^k)=p\cdot\cos(\omega_kj+\psi_m^k-\phi_m^k).
$$
By combining the arguments above, we can conclude that
\begin{align}
	\textbf{\small(\romannumeral4)}&=p\cdot \alpha_m^0\cdot \beta_m^0+p\sum_{k=1}^{(p-1)/2}\alpha_m^{k}\beta_m^{k}\cdot\cos(\omega_k j+\psi_m^k-\phi_m^k).
	\label{eq:step1.2_decom_term4}
\end{align}
Besides, by substituting the fourier expansions of $\xi_m$ into \textbf{\small(\romannumeral5)}, it holds that
\begin{align}
	\textbf{\small(\romannumeral5)}&=\theta_m[j]\cdot\sum_{x\in\ZZ_p}\Biggl(\beta_m^0+\sum_{k=1}^{(p-1)/2}\beta_m^k\cdot\cos(\omega_{k}x+\psi_m^k)\Biggl)\notag\\
	&=p\cdot\theta_m[j]\cdot \beta_m^0=p\cdot \beta_m^0\cdot\Biggl(\alpha_m^0+\sum_{k=1}^{(p-1)/2}\alpha_m^k\cdot\cos(\omega_{k}j+\phi_m^k)\Biggl).
	\label{eq:step1.2_decom_term5}
\end{align}
By combining \eqref{eq:step1.2_decom}, \eqref{eq:step1.2_decom_term4}, \eqref{eq:step1.2_decom_term5} and substituting them back into \eqref{eq:loss_derivative_theta}, by simple calculation, we can show that constant frequencies are cancelled and we have
\begin{align}
		\frac{\partial\ell}{\partial\theta_m[j]}-{\sf Err}_{m,j}^{(2)}&=-2\sum_{x\in\ZZ_p}\xi_m[m_p(x,j)]\cdot\sigma'(\langle e_x+e_j,\theta_m\rangle)+\frac{2}{p}\cdot\sum_{x\in\ZZ_p}\sum_{\tau=1}^p\xi_m[\tau]\cdot \sigma'(\langle e_x+e_j,\theta_m\rangle)\notag\\
		&=-2p\cdot\sum_{k=1}^{(p-1)/2}\alpha_m^{k}\beta_m^{k}\cdot\cos(\omega_k j+\psi_m^k-\phi_m^k),\qquad\forall j\in[p].\notag
\end{align}
Recall that the gradient flow is defined as $\partial_t\Theta(t)=-\nabla\ell(\Theta(t))$.
Following this, we have
$\partial_t\theta_m(t)=-\nabla_{\theta_m}\ell$ and $\partial_t\xi_m(t)=-\nabla_{\xi_m}\ell$ for all $ m\in[M]$.
Then, by combining Step 1 and Step 2 and using the definition of gradient flow, we complete the proof.
\end{proof}

\subsection{Proof of Theorem \ref{thm:single_freq}: Single-Frequency Preservation}\label{ap:single_freq_preserve}

\begin{theorem}[Formal Statement of Theorem \ref{thm:single_freq}]\label{thm:single_freq_formal}
	Let the model be initialized according to Assumption \ref{as:initialization} with a scale $\kappa_{\sf init} > 0$. For a given threshold $C_{\sf end} > 0$, we define the initial stage as the time interval $(0, t_{\sf init}]$, where $t_{\sf init}$ is the first hit time:
\begin{align}
	t_{\sf init}:=\inf\{t\in\RR^+:\max_{m\in[M]}\|\theta_m(t)\|_\infty\vee\|\xi_m(t)\|_\infty\leq C_{\sf end}\}.\label{cond:init_scale}
\end{align}
Suppose the following conditions hold: (\romannumeral1) $\log M/M\lesssim c^{-1/2}\cdot(1+o(1))$, $\kappa_{\sf init}=o(M^{-1/3})$ and $C_{\sf end}=\Theta(\kappa_{\sf init})$, and (\romannumeral2) scale $\kappa_{\sf init}$ is sufficiently small such that the event
$\mathcal{E}_{\sf phase} =  \{\exists m\in[M]\text{~s.t.~}\cos(2\phi_m^\star(t)-\psi_m^\star(t))\geq 1-c\cdot(M^{-1}\log M)^2,~\forall t\in(0,t_{\sf init}]\}$ holds with probability greater than $1-M^{-c}$ for some constant $c>0$. Then, we have
$\max_{k\neq k^\star}\inf_{t\in(0,t_{\sf init}]}\alpha_m^k(t)\vee\beta_m^k(t)=o(\kappa_{\sf init})$.
\end{theorem}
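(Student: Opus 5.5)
We plan to view the gradient flow of each neuron as the main flow of Lemma~\ref{lem:central_flow} plus the perturbations ${\sf Err}^{(1)}_m,{\sf Err}^{(2)}_m$, and to project onto the Fourier basis $B_p$. Because $B_p$ is orthonormal, for every frequency $k$ the coefficient pair $(g_m^k,r_m^k)$ obeys an ODE whose main part depends only on $(\alpha_m^k,\beta_m^k,\phi_m^k,\psi_m^k)$, together with a forcing term $(\langle b_{2k},{\sf Err}^{(2)}_m\rangle,\langle b_{2k+1},{\sf Err}^{(2)}_m\rangle)$, and similarly for $r_m^k$ with ${\sf Err}^{(1)}_m$. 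The constant component $(g_m[1],r_m[1])$ sees no main-flow forcing at all, by \eqref{eq:decouple_dynamics_span}.

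For a non-feature frequency $k\neq k^\star$ we would work with the norm $u_k(t):=\alpha_m^k(t)+\beta_m^k(t)$, which starts at $u_k(0)=0$. From the magnitude equations,
\[
\partial_t u_k\le 2p\,\alpha_m^k\beta_m^k+p(\alpha_m^k)^2+\sqrt{2/p}\,\bigl(\|{\sf Err}^{(1)}_m\|_2+\|{\sf Err}^{(2)}_m\|_2\bigr)\le 2p\,u_k^2+E(t).
\]
On $(0,t_{\sf init}]$ we have $\|\theta\|_\infty,\|\xi\|_\infty\le C_{\sf end}\asymp\kappa_{\sf init}$. Lemma~\ref{lem:bound_approx_loss} then gives
\[
E(t)\lesssim p^{3/2}\kappa_{\sf init}^2\bigl(\exp(8M\kappa_{\sf init}^3)-1\bigr)\lesssim p^{3/2}M\kappa_{\sf init}^5,
\]
using $M\kappa_{\sf init}^3=o(1)$, which follows from $\kappa_{\sf init}=o(M^{-1/3})$. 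We compare with the Riccati ODE $\dot v=2pv^2+E$, $v(0)=0$. The ODE comparison lemma gives $u_k\le v$, and as long as $v\lesssim\kappa_{\sf init}$ we have $v(t)\lesssim E\,t\,e^{O(p\kappa_{\sf init}t)}$. The constant mode is handled the same way, and its cross terms vanish as in the computations of terms (iii) and (4.2)--(4.4).

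It remains to bound $t_{\sf init}$. The natural source is the growth of the feature frequency. On the event $\mathcal E_{\sf phase}$ some neuron is almost perfectly aligned, so $\cos{\eu D}_m^\star\ge 1-o(1)$ along the whole stage. Its magnitudes then satisfy $\partial_t\beta_m^\star\gtrsim p(\alpha_m^\star)^2$ and $\partial_t\alpha_m^\star\gtrsim p\,\alpha_m^\star\beta_m^\star$, while the error terms are lower order. This neuron therefore reaches $\|\cdot\|_\infty$ of order $C_{\sf end}$ within time $t\lesssim(p\kappa_{\sf init})^{-1}$. The alignment tolerance $c(M^{-1}\log M)^2$ and the condition $\log M/M\lesssim c^{-1/2}$ are what make the loss in the growth rate negligible. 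Inserting $t_{\sf init}\lesssim (p\kappa_{\sf init})^{-1}$ gives
\[
\sup_{t\le t_{\sf init}}u_k(t)\lesssim p^{1/2}M\kappa_{\sf init}^4=o(\kappa_{\sf init}),
\]
because $M\kappa_{\sf init}^3=o(1)$ and $p$ is fixed. This yields the claim, which is stated with an $\inf$ but is in fact implied by the stronger $\sup$ bound.

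\textbf{Main obstacle.} The delicate step is upper bounding $t_{\sf init}$ uniformly. A generic neuron with ${\eu D}_m^\star(0)$ near $\pi$ barely grows at first, so the stage length must be controlled through the best-aligned neuron supplied by $\mathcal E_{\sf phase}$. We must also check that the error-driven leakage $E\,t$ does not accumulate over that time. We would first try a bootstrap on a maximal interval where $u_k\le\kappa_{\sf init}^{2}$, closing it with the Grönwall and Riccati comparison above.
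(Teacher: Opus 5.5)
Your proposal is correct and follows essentially the same route as the paper. Both arguments control the errors uniformly on $(0,t_{\sf init}]$ via Lemma \ref{lem:bound_approx_loss}, project onto $B_p$ so the non-feature and constant modes are driven only by the error forcing, bound their growth by roughly $E\,t$ with a Gr\"onwall-type estimate, and close with $t_{\sf init}\lesssim(p\kappa_{\sf init})^{-1}$ obtained from the aligned neuron supplied by $\mathcal{E}_{\sf phase}$. The differences are minor: you use a Riccati comparison $\dot v=2pv^2+E$ where the paper linearizes using the a priori bound $\alpha_m^k,\beta_m^k\le\sqrt2\,C_{\sf end}$, and you bound the stage length through direct monotone growth of $(\alpha_m^\star,\beta_m^\star)$ rather than the approximate conservation of $\alpha_m^\star(t)^2-2\beta_m^\star(t)^2$.
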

In Theorem \ref{thm:single_freq_formal}, the initial time interval $(0,t_{\sf init})$ is defined by imposing that the parameters remain substantially small, upper bounded by $C_{\sf end}$ as stated in \eqref{cond:init_scale}.
$\mathcal{E}_{\sf phase}$ assumes during the initial stage, there exists at least one well-aligned neuron whose phase difference $2\phi_m^\star(t)-\psi_m^\star(t)$ has a uniformly lower-bounded cosine value. 
This should hold with high probability under the random initialization in Assumption \ref{as:initialization}, jointly resulting from the concentration (see Lemma \ref{lem:init_concen}) and the consistent decrease of phase difference for well-initialized neurons when $\kappa_{\sf init}\mapsto0$ (see Lemma \ref{lem:ode_constant}). Since the difference between the real dynamics for $\theta_m(t), \xi_m(t)$ and the central flow can be bounded by some error uniformly over $t\in(0,t_{\sf init}]$, where the error is a monotone function with respect to $\kappa_{\sf init}$, and the real dynamics for $\phi_m^\star(t),\psi_m^\star(t)$ is a continuous function of the real dynamics for $\theta_m(t), \xi_m(t)$, this claim holds.

\begin{proof}[Proof of Theorem \ref{thm:single_freq_formal}]
Based on Lemma \ref{lem:bound_approx_loss} and \eqref{cond:init_scale}, throughout the training, we can uniformly upper bound the approximation errors by
\begin{align}
	&\sup_{t\in(0,t_{\sf init})}\max_{m,j}|{\sf Err}_{m,j}^{(1)}(t)|\vee|{\sf Err}_{m,j}^{(2)}(t)|\notag\\
	&\qquad\leq8p\cdot \sup_{t\in(0,t_{\sf init})}\max_{m}\|\theta_m(t)\|_\infty\cdot\max\{\|\xi_m(t)\|_\infty,\|\theta_m(t)\|_\infty\}\cdot(\exp(8M\cdot\|\xi(t)\|_\infty\cdot\|\theta(t)\|_\infty^2)-1)\notag\\
    &\qquad\lesssim Mp\cdot C_{\sf end}^5,
	\label{eq:approx_error_level}
\end{align}
where the last inequality uses $\exp(x)-1\lesssim x$ for $x\in[0,1]$ and \eqref{cond:init_scale} implies $ 8M\cdot\|\xi(t)\|_\infty\cdot\|\theta(t)\|_\infty^2\leq1$ for all $t\in(0,t_{\sf init})$ under the scaling that $MC_{\sf end}^3\asymp M\kappa_{\sf init}^3\ll1$.
In the following, we show that the evolution of non-feature frequencies is governed by the bounded error terms, and the feature coefficient can grow rapidly even when perturbed by noise.

\paragraph{Step 1: Derive the Dynamics with Approximation Errors.} 
Consider a fixed neuron $m$.
By applying the chain rule, we have $\partial_t g_{m}(t)=B_p^\top\partial_t\theta_{m}(t)$ and $\partial_t r_{m}(t)=B_p^\top\partial_t\xi_{m}(t)$ such that
$$
\partial_t g_{m}[j](t)=\langle b_j,\partial_t\theta_{m}(t)\rangle,\qquad\partial_t r_{m}[j](t)=\langle b_j,\partial_t\xi_{m}(t)\rangle,\qquad\forall j\in[p].
$$
Hence, the time derivatives of constant frequency, based on Lemma \ref{lem:central_flow}, satisfy that
\begin{equation}\label{eq:constant_dyn_err}
	\partial_t r_m[1](t)=-\langle {\sf Err}_{m}^{(1)}(t),b_1\rangle,\qquad\partial_t g_{m}[1](t)=-\langle {\sf Err}_{m}^{(2)}(t),b_1\rangle,
\end{equation}
where the the RHS of \eqref{eq:constant_dyn_err} can be controlled by
\begin{align}
	|\langle {\sf Err}_{m}^{(1)}(t),b_1\rangle|\leq\|{\sf Err}_{m}^{(1)}(t)\|_2\cdot\|b_1\|_2\leq\sqrt{p}\cdot\|{\sf Err}_{m}^{(1)}(t)\|_\infty,\quad |\langle {\sf Err}_{m}^{(2)}(t),b_1\rangle|\leq\sqrt{p}\cdot\|{\sf Err}_{m}^{(2)}(t)\|_\infty\notag.
\end{align}
Based on Lemma \ref{lem:central_flow} and the orthogonality of the Fourier basis, by simple calculation, it holds that
\begin{align*}
	\partial_t r_{m}[2k](t)&=p\cdot\sum_{j=1}^p\sqrt{\frac{2}{p}}\cdot\cos(\omega_kj)\cdot\sum_{k=1}^{(p-1)/2}\alpha_m^{k}(t)^2\cdot\cos(\omega_{k}j+2\phi_m^k(t))-\sum_{j=1}^pb_{2k}[j]\cdot{\sf Err}_{m,j}^{(1)}(t)\\
    &=\sqrt{2p}\cdot\alpha_m^{k}(t)^2\cdot\sum_{j=1}^p\cdot\cos(\omega_kj)\cdot\cos(\omega_{k}j+2\phi_m^k(t))-\langle {\sf Err}_{m}^{(1)}(t),b_{2k}\rangle\\
    &=\frac{p^{3/2}}{\sqrt{2}}\cdot\alpha_m^k(t)^2\cdot\cos\big(2\phi_m^\star(t)\big)-\langle {\sf Err}_{m}^{(1)}(t),b_{2k}\rangle,
\end{align*}
and similarly, we have
\begin{align*}
	\partial_t r_{m}[2k+1](t)&=-\frac{p^{3/2}}{\sqrt{2}}\cdot\alpha_m^k(t)^2\cdot\sin(2\phi_m^k(t))-\langle {\sf Err}_{m}^{(1)}(t),b_{2k+1}\rangle.
\end{align*}
Following this, by applying the chain rule, we have
\begin{align}
	\partial_t \beta_m^{k}(t)&=\sqrt{\frac{2}{p}}\cdot\partial_t \sqrt{r_{m}[2k](t)^2+r_{m}[2k+1](t)^2}\notag\\
	&=\frac{2}{p}\cdot\biggl\{\frac{r_{m}[2k](t)}{\beta_m^k(t)}\cdot\partial_tr_{m}[2k](t)+\frac{r_{m}[2k+1](t)}{\beta_m^k(t)}\cdot\partial_tr_{m}[2k+1](t)\biggl\}\notag\\
	&=\frac{2}{p}\cdot\frac{p^{3/2}}{\sqrt{2}}\cdot\sqrt{\frac{p}{2}}\cdot\alpha_m^k(t)^2\cdot\big\{\cos(\psi_m^k(t))\cdot\cos(2\phi_m^k(t))+\sin(\psi_m^k(t))\cdot\sin(2\phi_m^k(t))\big\}+\tilde{\sf Err}_m^{(1)}(t)\notag\\
	&=p\cdot\alpha_m^k(t)^2\cdot \cos\big(2\phi_m^k(t)-\psi_m^k(t)\big)+\tilde{\sf Err}_m^{(1)}(t),
	\label{eq:dyn_beta_err}
\end{align}
where we define the approximation-induced error term as:
$$
\tilde{\sf Err}_m^{(1)}(t):=-\frac{2}{p}\cdot \left\{\frac{r_{m}[2k](t)}{\beta_m^{k}(t)}\cdot\langle {\sf Err}_{m}^{(1)}(t),b_{2k}\rangle-\frac{r_{m}[2k+1](t)}{\beta_m^{k}(t)}\cdot\langle {\sf Err}_{m}^{(1)}(t),b_{2k+1}\rangle\right\}.
$$
Here, notice that the error terms can be upper bounded by
\begin{align*}
	|\tilde{\sf Err}_m^{(1)}(t)|&\leq\sqrt{\frac{2}{p}}\cdot\sqrt{\langle {\sf Err}_{m}^{(1)}(t),b_{2k}\rangle^2+\langle {\sf Err}_{m}^{(1)}(t),b_{2k+1}\rangle^2}\\
	&\leq\sqrt{\frac{2}{p}}\cdot\|{\sf Err}_{m}^{(1)}(t)\|_2\cdot\sqrt{\|b_{2k}\|_2^2+\|b_{2k+1}\|_2^2}\leq2\|{\sf Err}_{m}^{(1)}(t)\|_\infty,
\end{align*}
where the first inequality uses the Cauchy-Schwarz inequality and the fact that $r_m[2k](t)^2+r_m[2k+1](t)^2=p/2\cdot\beta_m^k(t)^2$ by definition.
Moreover, following a similar argument above, we have
\begin{align*}
	\partial_t g_{m}[2k](t)&=\sqrt{2}p^{3/2}\cdot\alpha_m^{k}(t)\cdot\beta_m^{k}(t)\cdot\cos(\psi_m^k(t)-\phi_m^k(t))+\langle {\sf Err}_{m}^{(2)}(t),b_{2k}\rangle,
\end{align*}
and also
\begin{align*}
	\partial_t g_{m}[2k+1](t)&=-\sqrt{2}p^{3/2}\cdot\alpha_m^{k}(t)\cdot\beta_m^{k}(t)\cdot\sin(\psi_m^k(t)-\phi_m^k(t))+\langle {\sf Err}_{m}^{(2)}(t),b_{2k+1}\rangle.
\end{align*}
Thus, by applying the chain rule, we can reach that
\begin{align}
	\partial_t \alpha_m^k(t)&=\sqrt{\frac{2}{p}}\cdot\partial_t \sqrt{g_{m}[2k](t)^2+g_{m}[2k+1](t)^2}\notag\\
	&=2p\cdot\alpha_m^{k}(t)\cdot\beta_m^{k}(t)\cdot\cos\big(2\phi_m^k(t)-\psi_m^k(t)\big)+\tilde{\sf Err}_{m}^{(2)}(t),
	\label{eq:dyn_alpha_err}
\end{align}
where the approximation error satisfies that 
\begin{align*}
	|\tilde{\sf Err}_{m}^{(2)}(t)|&=\frac{2}{p}\cdot \left|\frac{g_{m}[2k](t)}{\alpha_m^{k}(t)}\cdot\langle {\sf Err}_{m}^{(2)}(t),b_{2k}\rangle-\frac{g_{m}[2k+1](t)}{\alpha_m^{k}(t)}\cdot\langle {\sf Err}_{m}^{(2)}(t),b_{2k+1}\rangle\right|\leq2\|{\sf Err}_{m}^{(2)}(t)\|_\infty.
\end{align*}

\paragraph{Step 2.1: Bound the Growth of Non-feature Frequency.} 
By combining \eqref{eq:constant_dyn_err}, \eqref{eq:dyn_beta_err} and \eqref{eq:dyn_alpha_err}, since $\cos\big(2\phi_m^k(t)-\psi_m^k(t)\big)$, we can upper bound the growth of non-feature frequencies as
\begin{subequations}
\begin{align}
	&\partial_t \alpha_m^k(t)\leq 2p\cdot\alpha_m^{k}(t)\cdot\beta_m^{k}(t)+\tilde{\sf Err}_{m}^{(2)}(t),\label{eq:err_alpha_ub}\\
    &\partial_t \beta_m^{k}(t)\leq p\cdot\alpha_m^k(t)^2+\tilde{\sf Err}_m^{(1)}(t),\label{eq:err_beta_ub}\\
    &\partial_t r_m[1](t)\leq\sqrt{p}\cdot\|{\sf Err}_{m}^{(1)}(t)\|_\infty,\quad\partial_t g_{m}[1](t)\leq\sqrt{p}\cdot\|{\sf Err}_{m}^{(2)}(t)\|_\infty,\label{eq:err_const_ub}\\
    &|\tilde{\sf Err}_m^{(i)}(t)|\lesssim\|{\sf Err}_{m}^{(i)}(t)\|_\infty,\quad\forall i\in\{0,1\}.
    \label{eq:err_error_ub}
\end{align}
\end{subequations}
for all $k\neq k^\star$ and $m\in[M]$.
For the growth of constant coefficients, \eqref{eq:err_const_ub} indicates that
\begin{align}
|\alpha_m^0(t)|\vee |\beta_m^0(t)|&=1/\sqrt{p}\cdot|g_{m}[1](t)|\vee |r_m[1](t)|\notag\\
&\leq\max_{t\in(0,t_{\sf init}]}\|{\sf Err}_{m}^{(1)}(t)\|_\infty\vee\|{\sf Err}_{m}^{(2)}(t)\|_\infty\cdot t
\lesssim Mp\cdot C_{\sf end}^5\cdot t,
\label{eq:const_coeff_ub}
\end{align}
where the inequality results from \eqref{eq:approx_error_level}.
Following this, by combining \eqref{eq:err_alpha_ub}, \eqref{eq:err_beta_ub}, \eqref{eq:err_const_ub} and \eqref{eq:err_error_ub}, it holds that
\begin{align*}
	\partial_t\{\alpha_m^k(t)/\sqrt{2}+\beta_m^k(t)\}&\leq p\cdot\alpha_m^k(t)\cdot\{\alpha_m^k(t)+\sqrt{2}\beta_m^k(t)\}+\tilde{\sf Err}_m^{(1)}(t)+\tilde{\sf Err}_m^{(2)}(t)/\sqrt{2}\\
    &\leq \sqrt{2}p\cdot C_{\sf end}\cdot\{\alpha_m^k(t)+\sqrt{2}\beta_m^k(t)\}+\tilde{\sf Err}_m^{(1)}(t)+\tilde{\sf Err}_m^{(2)}(t)/\sqrt{2},
\end{align*}
where the last inequality uses \eqref{cond:init_scale} and $\|\theta_m(t)\|_2^2=p\cdot\alpha_m^0(t)^2+\frac{p}{2}\cdot\sum_{k=1}^{(p-1)/2}\alpha_m^k(t)^2$ such that \begin{align}
    \alpha_m^k(t)\leq\sqrt{{2}/{p}}\cdot\|\theta_m(t)\|_2\leq\sqrt{2}\cdot\|\theta_m(t)\|_\infty\leq\sqrt2\cdot C_{\sf end},\qquad\forall t\in(0,t_{\sf init}),
    \label{eq:equiv_alpha_up}
\end{align}
for all frequency $k$ and similarly we have $\beta_m^k(t)\leq\sqrt2\cdot C_{\sf end}$.
For $k\neq k^\star$, Lemma \ref{lem:lin_ode_solution} shows that
\begin{align}
    \alpha_m^k(t)/\sqrt{2}+\beta_m^k(t)&\leq\{\alpha_m^k(0)/\sqrt{2}+\beta_m^k(0)\}\cdot\exp(\sqrt{2}p\cdot C_{\sf end}\cdot t)\notag\\
    &\qquad+\underbrace{\int_0^t\{\tilde{\sf Err}_m^{(1)}(s)+\tilde{\sf Err}_m^{(2)}(s)/\sqrt{2}\}\cdot\exp(\sqrt{2}p\cdot C_{\sf end}\cdot (t-s))\rd s}_{\textbf{(\romannumeral1)}},
    \label{eq:non_freq_scale_up}
\end{align}
where the first term can be eliminated due to the zero initialization for non-feature frequencies as specified in Assumption \ref{as:initialization}. 
To upper bound \eqref{eq:non_freq_scale_up}, we can show that
\begin{align}
    \textbf{\small (\romannumeral1)}&\leq\int_0^t\{2\|{\sf Err}_m^{(1)}(s)\|_\infty+\sqrt{2}\|{\sf Err}_m^{(1)}(s)\|_\infty\}\cdot\exp(\sqrt{2}p\cdot C_{\sf end}\cdot (t-s))\rd s\notag\\
    &\leq4\sup_{t\in(0,t_{\sf init})}\max_{m}\|{\sf Err}_{m}^{(1)}(t)\|_\infty\vee\|{\sf Err}_{m}^{(2)}(t)\|_\infty\cdot\int_0^t\exp(\sqrt{2}p\cdot C_{\sf end}\cdot (t-s))\rd s\notag\\
    &\lesssim Mp\cdot C_{\sf end}^5\cdot\int_0^t\exp(\sqrt{2}p\cdot C_{\sf end}\cdot (t-s))\rd s\lesssim Mp\cdot C_{\sf end}^5\cdot t,
     \label{eq:non_freq_scale_2}
\end{align}
where the first inequality follows \eqref{eq:err_error_ub} and the last inequality results from $\exp(x)-1\leq2x$ for $x\in(0,1)$.
By combining \eqref{eq:non_freq_scale_up} and \eqref{eq:non_freq_scale_2}, we can conclude that
\begin{align}
    \alpha_m^k(t)\vee\beta_m^k(t)\lesssim Mp\cdot C_{\sf end}^5\cdot t\cdot\max\{p\cdot C_{\sf end}\cdot t,1\}\leq Mp\cdot C_{\sf end}^5\cdot t,
    \label{eq:ub_non_feature_freq}
\end{align}
for all non-feature frequencies $k\neq k^\star$ if we consider time $t\leq(\sqrt{2}p\cdot C_{\sf end})^{-1}\wedge t_{\sf init}$.
For the remainder of this analysis, we will adhere to this interval, and we will later show that $t_{\sf init}\lesssim(p\cdot C_{\sf end})^{-1}$.

\paragraph{Step 2.2: Bound the Time of Initial Stage.} 

Based on \eqref{eq:dyn_beta_err} and \eqref{eq:dyn_alpha_err}, we first show that during the initial stage, the change in the quantity $\alpha_m^\star(t)^2-2\beta_m^\star(t)^2$ remains small. 
Note that
\begin{align*}
\partial_t\{\alpha_m^\star(t)^2-2\beta_m^\star(t)^2\}&=2\alpha_m^\star(t)\cdot\partial_t\alpha_m^\star(t)-4\beta_m^\star(t)\cdot\partial_t\beta_m^\star(t)\\
&=4p\cdot\alpha_m^\star(t)^2\cdot\beta_m^\star(t)\cdot\cos\big(2\phi_m^\star(t)-\psi_m^\star(t)\big)+2\alpha_m^\star(t)\cdot\tilde{\sf Err}_{m}^{(2)}(t)\\
&\qquad-4p\cdot\alpha_m^\star(t)^2\cdot\beta_m^\star (t)\cdot \cos\big(2\phi_m^\star(t)-\psi_m^\star(t)\big)-4\beta_m^\star(t)\cdot\tilde{\sf Err}_m^{(1)}(t)\\
&=2\alpha_m^\star(t)\cdot\tilde{\sf Err}_{m}^{(2)}(t)-4\beta_m^\star(t)\cdot\tilde{\sf Err}_m^{(1)}(t).
\end{align*}
Following this, by integrating on both sides, we can show that
\begin{align}
    \alpha_m^\star(t)^2-2\beta_m^\star(t)^2&\geq\alpha_m^\star(0)^2-2\beta_m^\star(0)^2-\int_0^t|\partial_t\{\alpha_m^\star(s)^2-2\beta_m^\star(s)^2\}|\rd s\notag\\
    &\geq-\kappa_{\sf init}^2-6\sqrt{2}\cdot C_{\sf end}\cdot\sup_{t\in(0,t_{\sf init})}|\tilde{\sf Err}_{m}^{(1)}(t)|\vee|\tilde{\sf Err}_m^{(2)}(t)|\cdot t\notag\\
    &\geq -\kappa_{\sf init}^2-O(Mp\cdot C_{\sf end}^6)\cdot t,
    \label{eq:mag_diff_err_lb}
\end{align}
where the second inequality uses \eqref{eq:equiv_alpha_up}.
Recall that we choose a sufficiently small $\kappa_{\sf init}$ such that $\mathcal{E}_{\sf phase}$ holds.
Thus, there exists a neuron $m$ such that
$\inf_{t\in(0,t_{\sf init})}\cos(2\phi_m^\star(t)-\psi_m^\star(t))\geq C_D$.
Leveraging this result along with \eqref{eq:dyn_beta_err} and \eqref{eq:mag_diff_err_lb}, it follows that:
\begin{align}
	\partial_t \beta_m^\star(t)&\geq p\cdot C_D\cdot\alpha_m^\star(t)^2+\tilde{\sf Err}_m^{(1)}(t)\notag\\
    &= 2p\cdot C_D\cdot\beta_m^\star(t)^2+p\cdot C_D\cdot\{\alpha_m^\star(t)^2-2\beta_m^\star(t)^2\}+\tilde{\sf Err}_m^{(1)}(t)\notag\\
    &\geq 2p\cdot C_D\cdot\beta_m^\star(t)^2-p\cdot C_D\cdot\kappa_{\sf init}^2-O(Mp^2\cdot C_{\sf end}^6)\cdot C_D\cdot t-O(Mp\cdot C_{\sf end}^5)\notag\\
    &\geq2p\cdot C_D\cdot\beta_m^\star(t)^2-p\cdot\{\kappa_{\sf init}^2+O(M\cdot C_{\sf end}^5)\}\notag\\
    &\geq2p\cdot C_D\cdot\beta_m^\star(t)^2-p\cdot(1+o(1))\cdot \kappa_{\sf init}^2,
    \label{eq:beta_dyn_lb}
\end{align}
where the second inequality results from \eqref{eq:err_error_ub}, the third is guaranteed by the time interval constraint $t\leq(\sqrt{2}p\cdot C_{\sf end})^{-1}\wedge t_{\sf init}$, and the last one uses $M\kappa_{\sf init}^3=o(1)$ and $C_{\sf end}=\Theta(\kappa_{\sf init})$.

Given the Riccati ODE in \eqref{eq:beta_dyn_lb} and the initialization $\beta_m^\star(0)=\kappa_{\sf init}$, $\beta_m^\star(t)$ is monotone increasing as long as $2C_D\geq1+o(1)$, which can be guaranteed by choosing a sufficiently large $M$ such that $\log M/M\lesssim c^{-1/2}\cdot(1+o(1))$.
Following this, we can further show that
\begin{align}
	\partial_t \beta_m^\star(t)&\geq2p\kappa_{\sf init}\cdot C_D\cdot\beta_m^\star(t)-p\cdot(1+o(1))\cdot \kappa_{\sf init}^2,\qquad \forall t\leq(\sqrt{2}p\cdot C_{\sf end})^{-1}\wedge t_{\sf init}.
    \label{eq:beta_dyn_lb_lin}    
\end{align}
By combining \eqref{eq:beta_dyn_lb_lin} and Lemma \ref{lem:lin_ode_solution}, we can get
$$
\beta_m^\star(t)\geq\kappa_{\sf init}\cdot\exp(2p\kappa_{\sf init}\cdot C_D\cdot t)-(1+o(1))\cdot \kappa_{\sf init}/(2C_D)\cdot\{\exp(2p\kappa_{\sf init}\cdot C_D\cdot t)-1\}.
$$
Recall that, by definition $\beta_m^\star(t_{\sf init})\lesssim C_{\sf end}\asymp\kappa_{\sf init}$.
Thus, we can upper bound the hitting time $t_{\sf init}$ by
\begin{align}
    t_{\sf init}\lesssim\frac{1}{2p\kappa_{\sf init}\cdot C_D}\cdot\log\left(\frac{C_{\sf end}/\kappa_{\sf init}-(1+o(1))/(2C_D)}{1-(1+o(1))/(2C_D)}\right)\lesssim(p\kappa_{\sf init})^{-1}.
    \label{eq:hit_time_ub}
\end{align}

\paragraph{Step 3: Conclude the Proof.} 
Based on \eqref{eq:const_coeff_ub}, \eqref{eq:ub_non_feature_freq} and \eqref{eq:hit_time_ub}, it holds that
$$
\max_{k\neq k^\star}\inf_{t\in(0,t_{\sf init}]}\alpha_m^k(t)\vee\beta_m^k(t)\lesssim Mp\cdot C_{\sf end}^5\cdot t_{\sf init}\leq o(\kappa_{\sf init}),
$$
which completes the proof.

\end{proof}

\subsubsection{Proof of Auxiliary Lemma \ref{lem:lin_ode_solution}}
\begin{lemma}
\label{lem:lin_ode_solution}
Let $\iota\neq0$ denote a non-zero constant and $\zeta:[0,\infty)\mapsto\mathbb{R}^n$ denote a continuous function.  
For any initial condition $x(0)\in\mathbb{R}^n$, the unique solution of
$\partial_tx(t)=\iota x(t)+\zeta(t)$
is given by
$$
x(t)=x(0)\cdot\exp(\iota t)+\int_{0}^{t} \zeta(s)\cdot\exp(\iota(t-s))\rd s.
$$
In particular, if $\zeta(t)\equiv \zeta\in\mathbb{R}$ is constant, then $x(t)=x(0)\cdot\exp(\iota t)+{\zeta}/{\iota}\cdot(\exp(\iota t)-1)$.
\end{lemma}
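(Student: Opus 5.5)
The plan is the standard variation-of-constants (integrating factor) argument, carried out componentwise since $\iota$ is a scalar and $\zeta$ is vector-valued. Because $\zeta$ is continuous and the right-hand side $x\mapsto \iota x+\zeta(t)$ is globally Lipschitz in $x$ with constant $|\iota|$, the Picard--Lindel\"of theorem gives existence and uniqueness of a $C^1$ solution on $[0,\infty)$. It therefore suffices to exhibit one solution with the stated form, or equivalently to show that every solution must have that form.

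\textbf{Derivation of the formula.} Set $y(t)=\exp(-\iota t)\,x(t)$. By the product rule and the ODE,
\[
\partial_t y(t)=\exp(-\iota t)\{\partial_t x(t)-\iota x(t)\}=\exp(-\iota t)\,\zeta(t).
\]
The right-hand side is continuous, so the fundamental theorem of calculus gives $y(t)=y(0)+\int_0^t \exp(-\iota s)\zeta(s)\,\rd s$. Multiplying back by $\exp(\iota t)$ and using $y(0)=x(0)$ yields exactly the displayed expression. The argument shows that any solution equals this expression, which is itself the uniqueness statement; so an appeal to Picard--Lindel\"of is not even strictly needed.

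\textbf{Constant forcing.} For constant $\zeta$, evaluate $\int_0^t \exp(\iota(t-s))\,\rd s=(\exp(\iota t)-1)/\iota$. This step is where the hypothesis $\iota\neq 0$ is used.

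There is no substantive obstacle. The only point needing care is how the lemma is applied earlier: in \eqref{eq:non_freq_scale_up} and \eqref{eq:beta_dyn_lb_lin} it is used with differential \emph{inequalities} rather than equalities. For that usage I would add a one-line comparison remark. If $\partial_t x\le \iota x+\zeta$, then $\partial_t y\le \exp(-\iota t)\zeta(t)$, and integrating preserves the inequality, so the formula becomes an upper bound. The reverse inequality similarly gives a lower bound.
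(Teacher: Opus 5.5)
Your proposal is correct and matches the paper's proof: the paper also differentiates $x(t)\exp(-\iota t)$, obtains $\zeta(t)\exp(-\iota t)$, and integrates from $0$ to $t$. Your extra remarks are also correct. The integrating-factor computation itself yields uniqueness, and the one-sided comparison version covers how the lemma is actually used with differential inequalities in the single-frequency preservation proof, a point the paper's proof leaves implicit.
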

\begin{proof}[Proof of Lemma \ref{lem:lin_ode_solution}] Note that, by chain rule, we have
	$$
	\partial_t\{x_t\cdot\exp(-\iota t)\}=-\iota x(t)\cdot\exp(-\iota t)+\partial_tx(t)\cdot\exp(-\iota t)=\zeta(t)\cdot\exp(-\iota t).
	$$
	By integrating both sides from $0$ to $t$, we can obtain the desired result.
\end{proof}
\begin{lemma}
	Under the initialization in Assumption \ref{as:initialization}, with probability greater that $1-M^{-c}$, it holds that $\max_{m\in[M]}\cos({\eu D}_m^\star)>1-c^2\pi^2\cdot M^{-2}(\log M)^2$, where $c>0$ is a constant.
	\label{lem:init_concen}
\end{lemma}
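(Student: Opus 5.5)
The plan is to exploit the explicit distribution of ${\eu D}_m^\star$ under Assumption \ref{as:initialization}. The initialization sets $\theta_m$ along the frequency-$k$ plane with direction $\varrho_1\sim{\rm Unif}(\mathbb{S}^1)$, so the input phase $\phi_m^\star={\rm atan}(g_m^{k})$ is uniform on $(-\pi,\pi]$. Likewise, the output phase $\psi_m^\star$ is uniform and independent of $\phi_m^\star$, because $\varrho_2$ is independent of $\varrho_1$.

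Hence $2\phi_m^\star \bmod 2\pi$ is again uniform (the doubling map pushes the uniform law forward to the uniform law), and so ${\eu D}_m^\star=(2\phi_m^\star-\psi_m^\star)\bmod 2\pi$ is uniform on $[0,2\pi)$. This holds independently across $m\in[M]$, since the neurons are initialized independently. The random frequency $k$ plays no role here, because the phases are uniform conditional on any $k$.

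Next, reduce the cosine event to an arc event. We use $1-\cos u\le u^2/2$, applied to the signed distance of ${\eu D}_m^\star$ to $0$ modulo $2\pi$. Set $\eta:=c\pi\log M/M$. Whenever ${\eu D}_m^\star\in[0,\eta)\cup(2\pi-\eta,2\pi)$, we get
$$
\cos({\eu D}_m^\star)>1-\eta^2/2\ge 1-c^2\pi^2 M^{-2}(\log M)^2 .
$$
By uniformity, each neuron lands in this arc with probability $q=\eta/\pi=c\log M/M$.

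The final step is a union-type bound over independent trials:
$$
\mathbb{P}\bigl(\text{no } m \text{ hits the arc}\bigr)=(1-q)^M\le \exp(-qM)=\exp(-c\log M)=M^{-c}.
$$
This gives the claim with probability at least $1-M^{-c}$. We need $\eta<\pi$ for the arc to make sense, and this holds for $M$ large relative to $c$; that is the regime implicitly assumed, consistent with the condition $\log M/M\lesssim c^{-1/2}$ in Theorem \ref{thm:single_freq_formal}.

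The only step needing care is the first one: checking that the paper's ${\rm atan}$ convention makes $\phi_m^\star$ and $\psi_m^\star$ exactly uniform and independent, and that the doubling map preserves uniformity. That argument is just rotation invariance of ${\rm Unif}(\mathbb{S}^1)$. The constants in the statement ($c^2\pi^2$ versus $c^2\pi^2/2$) leave slack, so no sharper estimate is needed.
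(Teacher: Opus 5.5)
Your proof is correct and follows essentially the same route as the paper. Both arguments use uniformity of ${\eu D}_m^\star$ on $[0,2\pi)$, independently across neurons, and then bound the per-neuron probability of landing within $c\pi\log M/M$ of $0$ from below by $c\log M/M$. The paper phrases that step as $\arccos(1-c^2\pi^2M^{-2}(\log M)^2)\ge c\pi M^{-1}\log M$ via $\cos x\ge 1-x^2$, whereas you exhibit the arc directly. Both then finish with $(1-q)^M\le e^{-qM}=M^{-c}$, and the largeness condition on $M$ relative to $c$ that you flag is likewise assumed in the paper, which imposes $M>c\pi\log M$.
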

\begin{proof}[Proof of Lemma \ref{lem:init_concen}]
Throughout the proof, we drop the initial time $(0)$ for simplicity.
Recall that, as specified in Assumption \ref{as:initialization}, the parameters are initialized as below
$$
\theta_m\sim\kappa_{\sf init}\cdot\sqrt{p/2}\cdot(\varrho_1[1]\cdot b_{2k^\star}+\varrho_1[2]\cdot b_{2k^\star+1}),\quad\xi_m\sim\kappa_{\sf init}\cdot\sqrt{p/2}\cdot(\varrho_2[1]\cdot b_{2k^\star}+\varrho_2[2]\cdot b_{2k^\star+1}).
$$
By definition, we have $\cos(\phi_m^\star)=\varrho_{1}[1]$ and $\sin(\phi_m^\star)=-\varrho_{1}[2]$.
Thus, it holds that
$$
(\cos(\phi_m^\star),\sin(\phi_m^\star))=(\varrho_{1}[1],-\varrho_{1}[2])\overset{d}{=}(\varrho_{1}[1],\varrho_{1}[2]),
$$
following the symmetry of the uniform distribution on the unit circle.
Hence, $\phi_m^\star(0)\sim{\rm Unif}(-\pi,\pi)$.
Similarly, we have $\psi_m^\star\sim{\rm Unif}(-\pi,\pi)$ such that ${\eu D}_m^\star=2\phi_m^\star-\psi_m^\star\bmod2\pi\sim{\rm Unif}(0,2\pi)$.
Following this, the tail probability takes the form:
\begin{align}
	&\PP\Big(\max_{m\in[M]}\cos({\eu D}_m^\star)>1-c^2\pi^2\cdot M^{-2}(\log M)^2\Big)\notag\\
	&\qquad=1-\PP\Big(\forall m\in[M],~\cos({\eu D}_m^\star)\leq 1-c^2\pi^2\cdot M^{-2}(\log M)^2\Big)\notag\\
	&\qquad=1-\big(1-\arccos\big(1-c^2\pi^2\cdot M^{-2}(\log M)^2\big)/\pi\big)^M.
	\label{eq:phase_concen_tail}
\end{align}
Suppose $M>c\pi\log M$ such that $c\pi\cdot M^{-1}\log M\in(0,1)$, then we have
\begin{align}
\arccos\big(1-c^2\pi^2\cdot M^{-2}(\log M)^2)\big)\geq\arccos\big(\cos(c\pi\cdot M^{-1}\log M)\big)=c\pi\cdot M^{-1}\log M,
\label{eq:phase_concen_ineq}
\end{align}
where the inequality follows from $\cos(x)\geq1-x^2$ for all $x\in\RR$ and fact that $\arccos(\cdot)$ is monotonely decreasing on $[-1,1]$.
By combining \eqref{eq:phase_concen_tail} and \eqref{eq:phase_concen_ineq}, we obtain
\begin{align*}
	&\PP\Big(\max_{m\in[M]}\cos({\eu D}_m^\star)>1-c^2\pi^2\cdot M^{-2}(\log M)^2\Big)\\\
	&\qquad\geq1-\big(1-c\cdot M^{-1}\log M\big)^M\geq1-\exp(-c\log M)=1-M^{-c}.
\end{align*}
Here, we use $(1-x)^M\leq\exp(-xM)$ for all $x\in[0,1]$ and then complete the proof.
\end{proof}

\subsection{Proof of Theorem \ref{thm:phase_align_informal}: Phase Alignment}\label{ap:phase_alignment}

In this section, due to the inherent difficulty of tracking a multi-particle dynamical system with error terms---even when the approximation errors are provably small---we focus on the central flow dynamics presented in Lemma~\ref{lem:central_flow}, directly omitting the error terms caused by unpredictable drift.
In summary, the resulting dynamical system can be described by the following ODEs:
\begin{subequations}
\begin{align}
		\partial_t\theta_m[j](t)&=-2p\cdot\sum_{k=1}^{(p-1)/2}\alpha_m^{k}(t)\cdot\beta_m^{k}(t)\cdot\cos(\omega_k j+\psi_m^k(t)-\phi_m^k(t)),\label{eq:central_flow_theta}\\
		\partial_t\xi_m[j](t)&=p\cdot\sum_{k=1}^{(p-1)/2}\alpha_m^{k}(t)^2\cdot\cos(\omega_{k}j+2\phi_m^k(t)),\label{eq:central_flow_xi}
\end{align}
\end{subequations}
for a fixed neuron $m$ and all $j\in[p]$. We formalize the phase alignment in the following theorem.

\begin{theorem}[Formal Statement of Theorem \ref{thm:phase_align_informal}]\label{thm:formal_phase_alignment}
	Consider the main flow dynamics defined in \eqref{eq:central_flow_theta} and \eqref{eq:central_flow_xi}, under the initialization in Assumption \ref{as:initialization}.
	 Let $\delta=o(1)$ be a sufficiently small tolerance. 
     For any ${\eu D}_m^\star(0)\in(0,2\pi]$, define the convergence time $t_\delta = \inf\{t\in\RR^+ : |{\eu D}_m^\star(t)| \leq \delta\}$. 
     Then, $t_\delta$ satisfies
$$
t_\delta\asymp(p\kappa_{\sf init})^{-1}\cdot\big\{1-(\sin({\eu D}_m^\star(0))/\delta\}^{-1/3}+\max\{\pi/2-|{\eu D}_m^\star(0)-\pi|,0\}\big),
$$
Furthermore, the magnitude at this time is given by $\beta_m^\star(t_\delta)\asymp\kappa_{\sf init}\cdot\{\sin({\eu D}_m^\star(0))/\delta\}^{1/3}$. 
Moreover, in the mean-field regime $m\rightarrow\infty$, let $\rho_t = \mathrm{Law}\big(\phi_m^\star(t),\psi_m^\star(t)\big)$ for all $t\in\RR^+$ and let $\lambda_{\sf unif}$ denote the uniform law on $(-\pi,\pi]$.
    Then, $\rho_0=\lambda_{\sf unif}^{\otimes2}$ and $\rho_\infty=T_{\#}\lambda_{\sf unif}$, where $T:\varphi\mapsto(\varphi,2\varphi)\bmod2\pi$.
\end{theorem}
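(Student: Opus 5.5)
Under the main flow \eqref{eq:central_flow_theta}--\eqref{eq:central_flow_xi} and the single-frequency initialization, the analysis of \S\ref{ap:single_freq_preserve} shows that only the $k^\star$ coefficients move. The neuron is therefore governed exactly by the closed three-dimensional system
\[
\partial_t\alpha=2p\alpha\beta\cos{\eu D},\qquad \partial_t\beta=p\alpha^2\cos{\eu D},\qquad \partial_t{\eu D}=-p\,(4\beta-\alpha^2/\beta)\sin{\eu D}.
\]
Here we drop the index $m$ and the superscript $\star$. The ${\eu D}$ equation comes from the polar-coordinate computation in Part III of Table~\ref{tab:summary_dynamics}. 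I will first extract conserved quantities.

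The first is $\alpha^2-2\beta^2$: its derivative vanishes by the computation in Step 2.2 of the previous proof with the error terms set to zero. Since $\alpha(0)=\beta(0)=\kappa_{\sf init}$, this gives $\alpha^2=2\beta^2-\kappa_{\sf init}^2$. Substituting into the ${\eu D}$ equation yields $\partial_t{\eu D}=-p(2\beta^2+\kappa_{\sf init}^2)\beta^{-1}\sin{\eu D}$.

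Dividing this by $\partial_t\beta=p(2\beta^2-\kappa_{\sf init}^2)\cos{\eu D}$ gives the separable equation
\[
\frac{d\log\sin{\eu D}}{d\beta}=-\frac{2\beta^2+\kappa_{\sf init}^2}{\beta(2\beta^2-\kappa_{\sf init}^2)}=-\frac{d}{d\beta}\log\bigl(\beta(2\beta^2-\kappa_{\sf init}^2)\bigr).
\]
This yields the conservation law $\sin{\eu D}(t)=\sin{\eu D}(0)/\{{\eu R}(2{\eu R}^2-1)\}$ with ${\eu R}=\beta/\kappa_{\sf init}$, as announced in \S\ref{sec:phase_alignment_dyn}. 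Setting $|\sin{\eu D}|=\delta$ immediately gives ${\eu R}(t_\delta)\asymp\{\sin{\eu D}(0)/\delta\}^{1/3}$, and hence the magnitude claim. In the regime where $\sin{\eu D}(0)$ is small and the relation ${\eu R}(2{\eu R}^2-1)\ge 1$ degenerates near ${\eu R}=1$, I would handle the degenerate case separately. The same monotonicity also shows that ${\eu D}$ never crosses $0$ or $\pi$, so the sign of $\sin{\eu D}$ is preserved and ${\eu D}$ is driven toward $0$ (or $2\pi$).

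For the time estimate I split the trajectory into two phases. In phase (a), if ${\eu D}(0)\in(\pi/2,3\pi/2)$ then $\cos{\eu D}<0$ and $\beta$ decreases. Here $\beta\in[\kappa_{\sf init}/\sqrt2,\kappa_{\sf init}]$ from the constraint $\alpha^2\ge0$ together with the conservation law, so $|\partial_t{\eu D}|\asymp p\kappa_{\sf init}|\sin{\eu D}|$. Comparing with the linear flow shows that ${\eu D}$ reaches $\pi/2$ (or $3\pi/2$) within time $\asymp(p\kappa_{\sf init})^{-1}\cdot(\pi/2-|{\eu D}(0)-\pi|)$, up to logarithmic factors near $\pi$. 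These factors I would absorb into constants, or else assume ${\eu D}(0)$ bounded away from $\pi$.

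In phase (b), once $\cos{\eu D}\ge0$, I change variables to ${\eu R}$. I use $dt=d\beta/(p\alpha^2\cos{\eu D})$ and express $\cos{\eu D}=\sqrt{1-s^2/({\eu R}(2{\eu R}^2-1))^2}$ with $s=\sin{\eu D}(0)$, which gives
\[
t=(p\kappa_{\sf init})^{-1}\int\frac{d{\eu R}}{(2{\eu R}^2-1)\cos{\eu D}({\eu R})}.
\]
The integrand behaves like ${\eu R}^{-2}$ for large ${\eu R}$, so integrating up to ${\eu R}_\delta\asymp(s/\delta)^{1/3}$ contributes $\asymp 1-{\eu R}_\delta^{-1}=1-(s/\delta)^{-1/3}$, matching the first term of the claimed bound.

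I expect the main obstacle to be the integrable singularity at the start of phase (b), where $\cos{\eu D}=0$. I would first bound it via the substitution $u={\eu R}(2{\eu R}^2-1)$ near the turning point, which turns it into an $\int du/\sqrt{u^2-s^2}$-type term. I would then check that this term is $O(1)$ so that it only affects constants.

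Finally, for the mean-field statement: each particle's ${\eu D}(t)\to0$ as $t\to\infty$, with ${\eu R}\to\infty$ in finite-or-infinite time. I would show that the total rotation of $\phi$ is a deterministic odd function of ${\eu D}(0)$ that commutes with the rotation symmetry $(\phi,\psi)\mapsto(\phi+c,\psi+2c)$. From Table~\ref{tab:summary_dynamics}, $\partial_t\phi$ depends only on $({\eu D},\beta)$, so $\phi_\infty=\phi_0+G({\eu D}_0)$. Under $\rho_0=\lambda_{\sf unif}^{\otimes2}$, $\phi_0$ is uniform and independent of ${\eu D}_0$ (Lemma~\ref{lem:init_concen}), so $\phi_\infty$ is uniform, and $\psi_\infty=2\phi_\infty$. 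This gives $\rho_\infty=T_\#\lambda_{\sf unif}$.
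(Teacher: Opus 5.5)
Your ${\eu D}$-equation has the wrong sign in front of $\alpha^2/\beta$, and your separation-of-variables step only reaches the right invariant through a compensating error. The polar computation gives $\partial_t\phi=-2p\beta\sin{\eu D}$ and $\partial_t\psi=+p\alpha^2\beta^{-1}\sin{\eu D}$. Hence $\partial_t{\eu D}=-p(4\beta+\alpha^2/\beta)\sin{\eu D}$, as in Part III of Table~\ref{tab:summary_dynamics} and Lemma~\ref{lem:simple_dynamics_approx}. The minus sign you used is a typo carried by \eqref{eq:align_dynamic} and \eqref{eq:decouple_ODE_restate}. After substituting $\alpha^2=2\beta^2-\kappa_{\sf init}^2$, the correct equation is $\partial_t{\eu D}=-p(6\beta^2-\kappa_{\sf init}^2)\beta^{-1}\sin{\eu D}$. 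Your displayed identity is false as written, since $\frac{d}{d\beta}\log\bigl(\beta(2\beta^2-\kappa_{\sf init}^2)\bigr)=\frac{6\beta^2-\kappa_{\sf init}^2}{\beta(2\beta^2-\kappa_{\sf init}^2)}$. Integrating the equation you actually wrote would give $\sin{\eu D}\cdot\alpha^2/\beta=\mathrm{const}$, hence $\beta(t_\delta)\asymp\kappa_{\sf init}\sin{\eu D}(0)/\delta$, which has the wrong exponent. Once the sign is corrected, your computation yields exactly the invariant $\sin{\eu D}\cdot\beta\alpha^2=C_{\sf prod}$ of Lemma~\ref{lem:ode_constant}. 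Your phase-(a) bound is unaffected, because $6\beta-\kappa_{\sf init}^2/\beta\in[2\sqrt2\kappa_{\sf init},5\kappa_{\sf init}]$ on $[\kappa_{\sf init}/\sqrt2,\kappa_{\sf init}]$.

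With that fixed, your argument is sound but times the flow differently from the paper.

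\textbf{The paper's route.} It never integrates through the turning point ${\eu D}=\pi/2$. Instead it uses the time-reversal symmetry of Lemma~\ref{lem:ode_symmetry}: for ${\eu D}(0)\in(\pi/2,\pi)$, $\beta$ returns to $\kappa_{\sf init}$ exactly when ${\eu D}=\pi-{\eu D}(0)$. So $t_\delta$ is twice the phase-(a) time plus a run started at $\beta=\kappa_{\sf init}$ with ${\eu D}\in(0,\pi/2)$. That run is timed via $\partial_t\log\frac{\beta-\kappa_{\sf init}/\sqrt2}{\beta+\kappa_{\sf init}/\sqrt2}=2\sqrt2\,p\kappa_{\sf init}\cos{\eu D}$ together with $\cos{\eu D}\in[\cos{\eu D}(0),1]$.

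\textbf{Your route.} Your quadrature in ${\eu R}$ is more direct and already contains that symmetry. Phase (a) is the same integral over $[{\eu R}_{\min},1]$, so you can evaluate it exactly rather than by comparison. It also stays uniform as ${\eu D}(0)\to\pi/2$, where the paper's factor $\cos{\eu D}(0)$ loosens its upper bound.

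\textbf{The turning-point piece is not uniformly bounded.} After the substitution $u={\eu R}(2{\eu R}^2-1)$, this piece is $\asymp\int_s^1 du/\sqrt{u^2-s^2}\asymp\log(1/s)$ for small $s=\sin{\eu D}(0)$, since it is the mirror image of phase (a). So you need ${\eu D}(0)$ bounded away from $\pi$ in phase (b) as well, and absorbing the logarithm into constants is not legitimate. The paper's proof has the same restriction, because its Part~I upper bound carries a factor $1/\sin{\eu D}(0)$. The stated formula cannot hold uniformly as ${\eu D}(0)\to\pi$.

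\textbf{Mean-field part.} This is the paper's argument. You are right that blow-up occurs in finite time, since $\int^\infty d{\eu R}/(2{\eu R}^2-1)<\infty$. The terminal phases exist because $\int\beta\sin{\eu D}\,dt=C_{\sf prod}\int\alpha^{-2}dt<\infty$.
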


\begin{figure}[t]
  \centering
  \begin{minipage}[t]{0.46\textwidth}
    \centering
    \includegraphics[width=\linewidth]{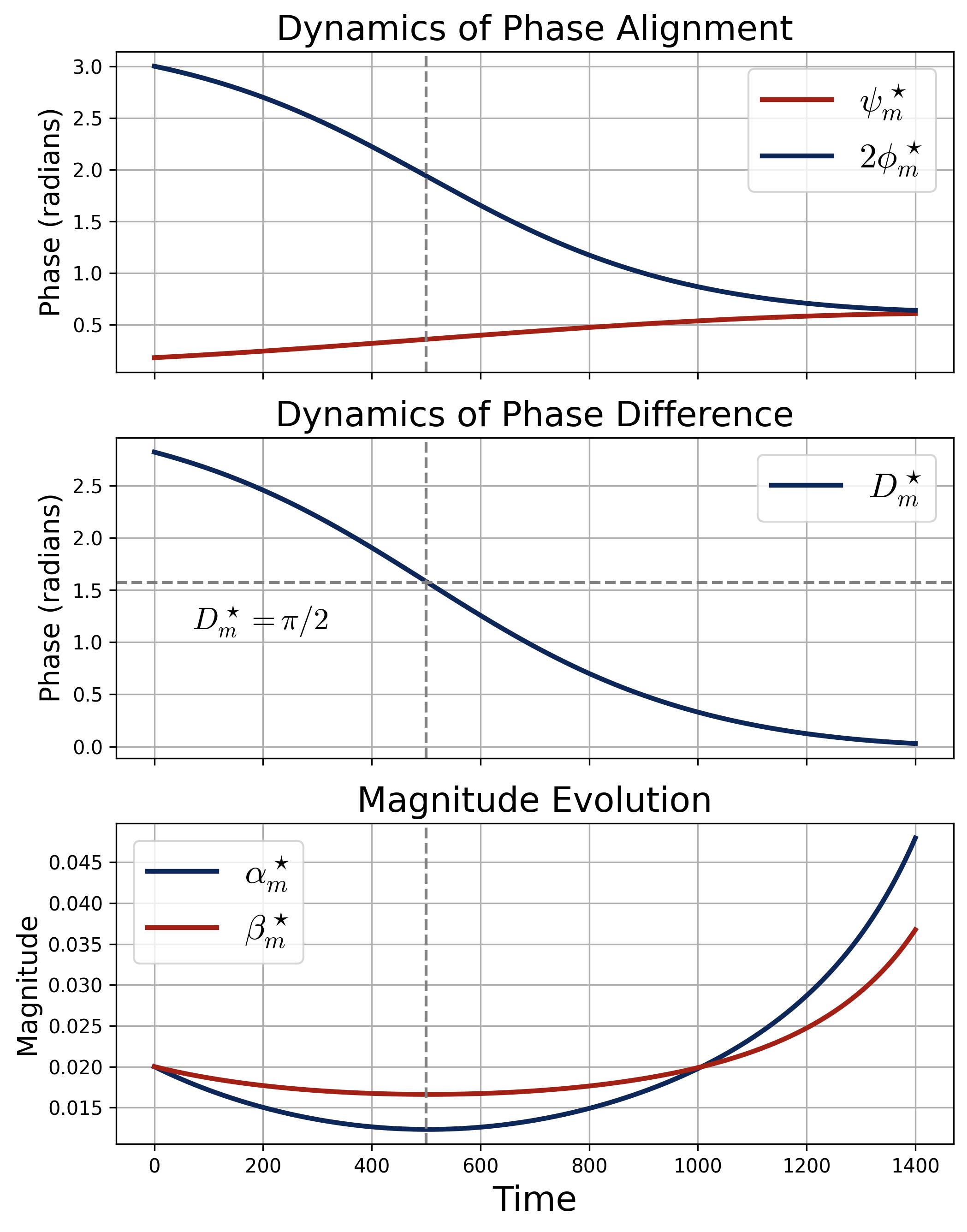}
    \subcaption{Simplified Dynamics  with ${\eu D}_m^\star(0)\in(\pi/2,\pi)$.}
    \label{fig:phase_align_mag_and_phase_1}
  \end{minipage}
	\begin{minipage}[t]{0.46\textwidth}
    \centering
    \includegraphics[width=\linewidth]{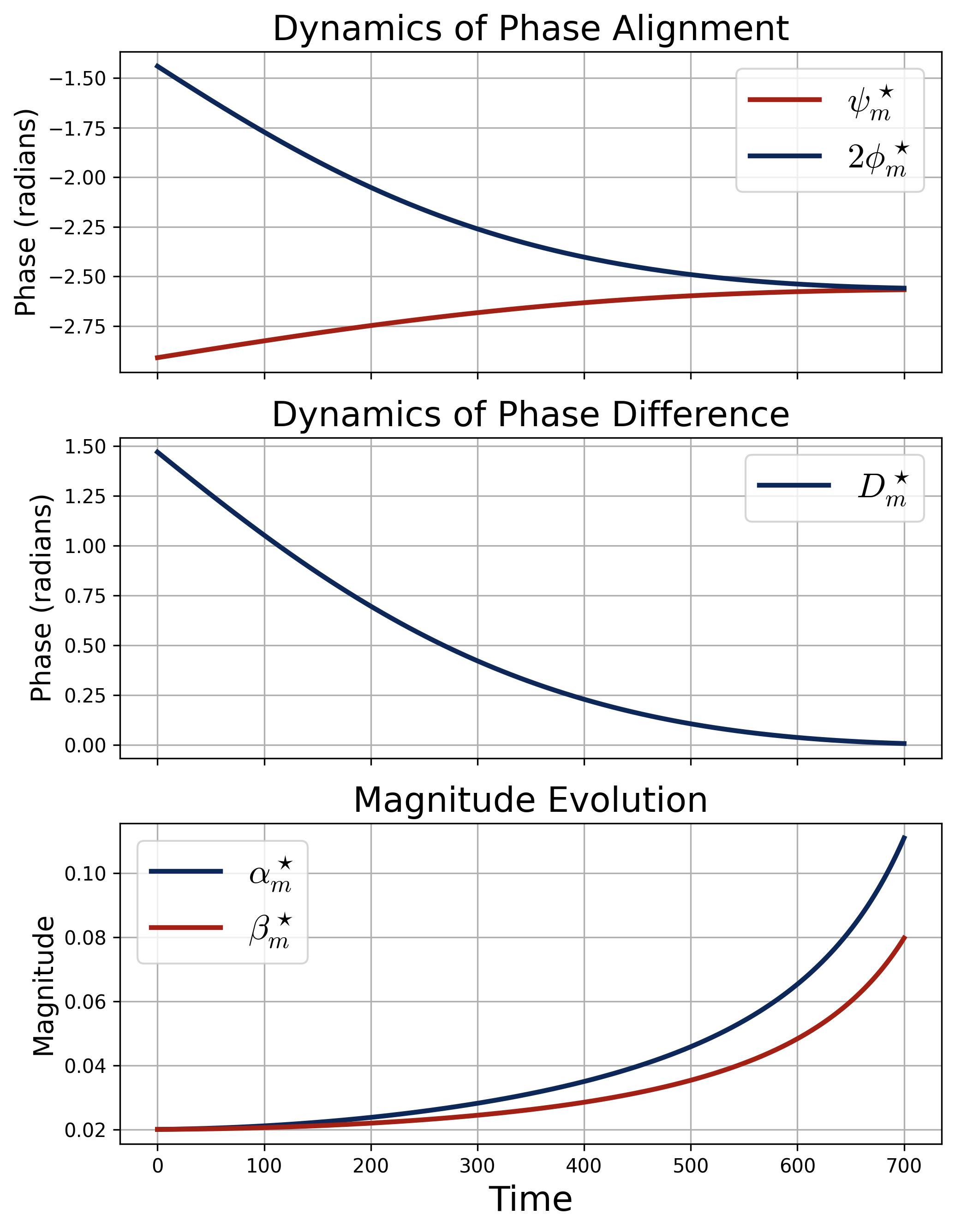}
    \subcaption{Simplified Dynamics  with ${\eu D}_m^\star(0)\in(0,\pi/2)$.}
    \label{fig:phase_align_mag_and_phase_2}
  \end{minipage}
  \caption{Training dynamics of a specific decoupled neuron characterized by \eqref{eq:simple_dyn_theta} and \eqref{eq:simple_dyn_xi} with identical initial scales $\alpha_m^\star(0)=\beta_m^\star(0)$ and different phase difference ${\eu D}_m^\star(0)$. Figure (a) plots the dynamics of phases, phase difference, and the magnitudes with ${\eu D}_m^\star(0)\in(\pi/2,\pi)$, whose behavior is detailedly characterized in Theorem \ref{thm:formal_phase_alignment}. The difference decreases monotonically to $0$, while the magnitudes first decay slightly when ${\eu D}_m^\star(t)\in(\pi/2,\pi)$ and then increase rapidly when ${\eu D}_m^\star(t)$ falls below $\pi/2$. Figure (b) plots the dynamics under ${\eu D}_m^\star(0)\in(0,\pi/2)$ where ${\eu D}_m^\star$ is initialized closer to the convergence point, resulting in a shorter convergence time compared to the case in  Figure (a). Moreover, the simplified dynamics shown in Figure (b) align well with the full dynamics in Figure \ref{fig:phase_alignment_dynamics} with the same initialization, indicating the effectiveness of the approximation.
  }
\end{figure}

Before presenting the proof of Theorem \ref{thm:formal_phase_alignment}, we first introduce several key intermediate results that help elucidate the dynamics.
We begin with a lemma that characterizes the simplified dynamics of the system, leveraging the Fourier domain and the single-frequency initialization.

\begin{lemma}[Main Flow under Fourier Domain]\label{lem:simple_dynamics_approx}
Under the initialization in Assumption \ref{as:initialization}, let $k^\star$ denote the initial frequency of each neuron, and we use the superscript $\star$ for notational simplicity. 
We define ${\eu D}_m^\star(t)=2\phi_m^\star(t)-\psi_m^\star(t)\mod2\pi$, then the main flow can be equivalently described as
\begin{equation}
	\begin{aligned}
    &\partial_t \alpha_m^\star(t) = 2p\cdot \alpha_m^\star(t)\cdot  \beta_m^\star(t)\cdot \cos({\eu D}_m^\star(t)),\quad\partial_t \beta_m^\star(t)= p\cdot \alpha_m^\star(t)^2\cdot \cos({\eu D}_m^\star(t)),\\
    &\partial_t\exp(i{\eu D}_m^\star(t))= p\cdot\left(4\beta_m^\star(t)+\frac{\alpha_m^\star(t)^2}{\beta_m^\star(t)}\right)\cdot \sin\big({\eu D}_m^\star(t)\big)\cdot\exp\left(i\{{\eu D}_m^\star(t)-\pi/2\}\right).
	\end{aligned}
	\label{eq:simplified_dynamics_fourier}
\end{equation}
\end{lemma}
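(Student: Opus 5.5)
The plan is to pass from the entrywise main flow of Lemma~\ref{lem:central_flow} with the error terms dropped (i.e.\ \eqref{eq:central_flow_theta}--\eqref{eq:central_flow_xi}) to Fourier coordinates. I use the sign convention of Lemma~\ref{lem:central_flow}, namely $\partial_t\theta_m=+2p\sum_k\alpha_m^k\beta_m^k\cos(\cdot)$, since that is what gradient descent gives. The argument then works in polar form, in three steps.

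\textbf{Step 1 (Fourier coefficients and exact single-frequency preservation).} I will project both equations onto the basis columns $b_1,b_{2k},b_{2k+1}$ using $\partial_t g_m=B_p^\top\partial_t\theta_m$ and $\partial_t r_m=B_p^\top\partial_t\xi_m$. Orthogonality gives the formulas of Part~II of Table~\ref{tab:summary_dynamics} for every $k$, together with $\partial_t g_m[1]=\partial_t r_m[1]=0$. For each $k\neq k^\star$ the right-hand sides are polynomial in that frequency's own coefficients, vanishing when $g_m^k=r_m^k=0$. The zero initialization from Assumption~\ref{as:initialization} and uniqueness of ODE solutions (the vector field is locally Lipschitz) therefore keep those coefficients identically zero. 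The constant coefficients stay zero for the same reason. Only the $k^\star$ block survives, and the system becomes a closed four-dimensional ODE.

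\textbf{Step 2 (complex/polar form).} Set $z_\theta=\sqrt{2/p}\,(g_m[2k^\star]-i\,g_m[2k^\star+1])=\alpha_m^\star e^{i\phi_m^\star}$ and $z_\xi=\beta_m^\star e^{i\psi_m^\star}$; the identification is justified by Part~IV of the table. Combining the Part~II formulas (and checking the $\sqrt{2/p}$ and $p^{3/2}$ constants) yields
\[
\partial_t z_\theta=2p\,\alpha_m^\star\beta_m^\star e^{i(\psi_m^\star-\phi_m^\star)},\qquad \partial_t z_\xi=p\,(\alpha_m^\star)^2e^{2i\phi_m^\star}.
\]
From $\partial_t(\alpha e^{i\phi})=(\dot\alpha+i\alpha\dot\phi)e^{i\phi}$, I multiply by $e^{-i\phi_m^\star}$, respectively $e^{-i\psi_m^\star}$, and split into real and imaginary parts. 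This gives
\[
\dot\alpha_m^\star=2p\alpha_m^\star\beta_m^\star\cos{\eu D}_m^\star,\qquad \dot\phi_m^\star=-2p\beta_m^\star\sin{\eu D}_m^\star,
\]
\[
\dot\beta_m^\star=p(\alpha_m^\star)^2\cos{\eu D}_m^\star,\qquad \dot\psi_m^\star=p\,(\alpha_m^\star)^2\sin{\eu D}_m^\star/\beta_m^\star.
\]
These are the magnitude equations claimed in the lemma.

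\textbf{Step 3 (phase difference).} From Step~2, $\dot{\eu D}_m^\star=2\dot\phi_m^\star-\dot\psi_m^\star=-p\,(4\beta_m^\star+(\alpha_m^\star)^2/\beta_m^\star)\sin{\eu D}_m^\star$. The reduction mod $2\pi$ is harmless because we differentiate $\exp(i{\eu D}_m^\star)$. Then $\partial_t e^{i{\eu D}}=i\dot{\eu D}e^{i{\eu D}}$ and $-i e^{i{\eu D}}=e^{i({\eu D}-\pi/2)}$, which gives exactly the stated expression with the $+$ sign.

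The only delicate point I expect is that polar coordinates require $\alpha_m^\star,\beta_m^\star>0$, since the phase equations divide by the magnitudes. I would handle this by noting the magnitudes start at $\kappa_{\sf init}>0$. Near zero, $|\dot\alpha_m^\star|\le 2p\beta_m^\star\alpha_m^\star$, so $\alpha_m^\star$ is bounded below by an exponential of a locally bounded integral and cannot vanish in finite time. For $\beta_m^\star$, I would work on the maximal interval where it is positive, which contains the whole time range considered later. The remaining work is routine bookkeeping of the normalizing constants.
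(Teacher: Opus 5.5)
Your proposal is correct and follows essentially the same route as the paper. Both arguments project the main flow onto the Fourier basis, use the zero initialization and uniqueness to reduce to the $k^\star$ block, pass to polar coordinates, and then combine the two phase equations into the equation for $\exp(i{\eu D}_m^\star)$. Your differences are refinements rather than a different argument: you use the complex coordinates $z_\theta=\alpha_m^\star e^{i\phi_m^\star}$ and $z_\xi=\beta_m^\star e^{i\psi_m^\star}$ instead of differentiating $\cos\phi_m^\star,\sin\phi_m^\star$ and applying a quotient rule to $e^{2i\phi_m^\star}/e^{i\psi_m^\star}$, you argue uniqueness in Cartesian coordinates, and you check that the magnitudes stay positive — which makes explicit points the paper leaves implicit.
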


This lemma allows us to largely simplify the analysis, reducing it from tracking a $2p$-dimensional system to a three-particle dynamical system of $\alpha_m^\star(t)$, $\beta_m^\star(t)$ and ${\eu D}_m^\star(t))$.
Building on this, the next two lemmas further show that the dynamics is indeed one-dimensional, and the trajectory exhibits a symmetry property that aids in understanding the evolutions under different initializations.

\begin{lemma}\label{lem:ode_constant}
	Consider the ODE in \eqref{eq:simplified_dynamics_fourier}, the following quantities remain constant:
	$$
	\alpha_m^\star(t)^2-2\beta_m^\star(t)^2=C_{\sf diff},\qquad\sin({\eu D}_m^\star(t))\cdot\beta_m^\star(t)\cdot\alpha_m^\star(t)^2=C_{\sf prod},\qquad\forall t\in\RR^+.
	$$
	Building upon this, we can further simplify the dynamics of ${\eu D}_m^\star(t))$ in as
	\begin{align}\label{eq:D_dyn_lem}
	\partial_t{\eu D}_m^\star(t)= -p\cdot\left(4\beta_m^\star(t)+\alpha_m^\star(t)^2/\beta_m^\star(t)\right)\cdot \sin\left({\eu D}_m^\star(t)\right),
	\end{align}
	due to its well-regularized behavior ensured by the constant relationship.
\end{lemma}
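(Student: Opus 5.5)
The plan is to differentiate the two candidate quantities along the main flow \eqref{eq:simplified_dynamics_fourier} of Lemma \ref{lem:simple_dynamics_approx} and check that the derivatives vanish identically. First, though, I will convert the complex-valued phase equation into a real scalar ODE for ${\eu D}_m^\star$, since that ODE is itself the claimed identity \eqref{eq:D_dyn_lem}.

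\textbf{Step 1 (scalar phase ODE).} Write $\partial_t\exp(i{\eu D}_m^\star)=i\,\partial_t{\eu D}_m^\star\cdot\exp(i{\eu D}_m^\star)$, which holds wherever ${\eu D}_m^\star$ is differentiable. The right-hand side of the third line of \eqref{eq:simplified_dynamics_fourier} equals $-i\,p\,(4\beta_m^\star+\alpha_m^{\star2}/\beta_m^\star)\sin({\eu D}_m^\star)\exp(i{\eu D}_m^\star)$, because $\exp(-i\pi/2)=-i$. Comparing the two expressions gives \eqref{eq:D_dyn_lem} immediately. As a consistency check, I would re-derive the third line from the phase equations in Table \ref{tab:summary_dynamics} using $\exp(i{\eu D})=\exp(2i\phi)\exp(-i\psi)$. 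Apply the product rule together with $\partial_t\exp(2i\phi)=2\exp(i\phi)\,\partial_t\exp(i\phi)$ and $\partial_t\exp(-i\psi)=-\exp(-2i\psi)\,\partial_t\exp(i\psi)$. The two terms then combine into $p\,(4\beta+\alpha^2/\beta)\sin{\eu D}\,\exp(i\{{\eu D}-\pi/2\})$, which fixes the ``$+$'' sign in front of $\alpha^2/\beta$.

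\textbf{Step 2 (conservation laws).} For the first quantity,
\[
\partial_t(\alpha^2-2\beta^2)=2\alpha\cdot 2p\alpha\beta\cos{\eu D}-4\beta\cdot p\alpha^2\cos{\eu D}=0 .
\]
For the second, I use \eqref{eq:D_dyn_lem} from Step 1:
\[
\partial_t(\sin{\eu D}\,\beta\alpha^2)=\cos{\eu D}\,\partial_t{\eu D}\,\beta\alpha^2+\sin{\eu D}\,\bigl(\alpha^2\partial_t\beta+2\alpha\beta\,\partial_t\alpha\bigr).
\]
The first term equals $-p\sin{\eu D}\cos{\eu D}\,(4\alpha^2\beta^2+\alpha^4)$. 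The second equals $p\sin{\eu D}\cos{\eu D}\,(\alpha^4+4\alpha^2\beta^2)$. They cancel exactly, so $C_{\sf diff}$ and $C_{\sf prod}$ are constant.

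\textbf{Step 3 (well-posedness).} I expect this to be the main obstacle: the polar variables and the ratio $\alpha^2/\beta$ must remain well defined for all $t$, meaning $\alpha_m^\star,\beta_m^\star>0$, so that Steps 1--2 are legitimate globally rather than only locally. Under Assumption \ref{as:initialization}, $\alpha_m^\star(0)=\beta_m^\star(0)=\kappa_{\sf init}$, so $C_{\sf diff}=-\kappa_{\sf init}^2$. Hence $2\beta^2=\alpha^2+\kappa_{\sf init}^2\ge\kappa_{\sf init}^2$, which keeps $\beta$ bounded below by $\kappa_{\sf init}/\sqrt2$.

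For $\alpha$ I split into two cases. If $\sin{\eu D}_m^\star(0)\neq0$, then $\sin{\eu D}\,\beta\alpha^2=C_{\sf prod}\neq0$, so $\alpha$ can never vanish; the same identity also shows $\sin{\eu D}$ never changes sign. If $\sin{\eu D}_m^\star(0)=0$, then ${\eu D}\equiv{\eu D}(0)\in\{0,\pi\}$ is stationary by \eqref{eq:D_dyn_lem}, and the magnitudes obey a closed ODE. In the case ${\eu D}=\pi$, that ODE makes $\alpha$ decay but not reach zero in finite time: its growth rate is proportional to $\alpha$, and $\beta$ is bounded above. With positivity in hand, a standard continuation argument for ODEs extends the local identities to all $t\in\RR^+$, which completes the lemma.
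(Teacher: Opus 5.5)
Your proof is correct and essentially follows the paper's route. Both arguments differentiate $\alpha_m^\star(t)^2-2\beta_m^\star(t)^2$ and $\sin({\eu D}_m^\star)\,\beta_m^\star(\alpha_m^\star)^2$ along \eqref{eq:simplified_dynamics_fourier}, then use $C_{\sf prod}\neq 0$ to keep $\sin({\eu D}_m^\star)$ away from zero, so that ${\eu D}_m^\star$ never wraps modulo $2\pi$ and \eqref{eq:D_dyn_lem} is legitimate. The differences are minor: the paper proves conservation first by taking $\partial_t\sin({\eu D}_m^\star)$ from the imaginary part of the complex equation, which needs no differentiability of ${\eu D}_m^\star$; you derive the scalar ODE first and close that loop by continuation, and you also make explicit the positivity of $\alpha_m^\star,\beta_m^\star$ that the paper leaves implicit.

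One caveat on your final sentence: $\partial_t\beta_m^\star=p\,(2(\beta_m^\star)^2+C_{\sf diff})\cos({\eu D}_m^\star)$ blows up in finite time once $\cos({\eu D}_m^\star)$ is bounded below. Continuation therefore gives the identities on the maximal interval of existence, not on all of $\RR^+$; the lemma's statement shares this imprecision.
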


We highlight that \eqref{eq:D_dyn_lem} is not a direct corollary from \eqref{eq:simplified_dynamics_fourier} due to the potential jump from $0$ to $2\pi$ in the discontinuous definition of ${\rm mod}~2\pi$.
However, thanks to the constant relationship revealed in Lemma \ref{lem:ode_constant}, we can show that ${\eu D}_m^\star(t)$ is ``well-behaved" by staying in the half-space where it is initialized, and consistently approaching zero throughout the gradient flow.

\begin{lemma}\label{lem:ode_symmetry}
	Consider the ODE given in \eqref{eq:simplified_dynamics_fourier} with initial condition ${\eu D}_m^\star(0) \in (\pi/2, \pi)$. 
	Let $t_{\pi/2}$ denote the hit time that ${\eu D}_m^\star(t_{\pi/2})=\pi/2$, then for any $\Delta t\in(0,t_{\pi/2})$, we have
	$$
	\beta_m^\star(t_{\pi/2}-\Delta t)=\beta_m^\star(t_{\pi/2}+\Delta t),\qquad {\eu D}_m^\star(t_{\pi/2}-\Delta t)+{\eu D}_m^\star(t_{\pi/2}+\Delta t)=\pi.
	$$
\end{lemma}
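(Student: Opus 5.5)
The plan is to use a \emph{time-reversal symmetry} of the reduced system, combined with uniqueness of ODE solutions. By Lemma \ref{lem:ode_constant}, along the main flow \eqref{eq:simplified_dynamics_fourier} we have $\alpha_m^\star(t)^2=C_{\sf diff}+2\beta_m^\star(t)^2$, so $\alpha_m^\star$ is a function of $\beta_m^\star$ alone. Here we use $\alpha_m^\star>0$, which holds because $\alpha_m^\star(0)=\kappa_{\sf init}>0$ and $\partial_t\log\alpha_m^\star$ is bounded on bounded time intervals. Lemma \ref{lem:ode_constant} also gives the scalar equation \eqref{eq:D_dyn_lem} for ${\eu D}_m^\star$, with no jump across $0$ or $2\pi$. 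Writing $A(\beta):=C_{\sf diff}+2\beta^2$, the pair $(\beta,{\eu D}):=(\beta_m^\star,{\eu D}_m^\star)$ therefore solves the autonomous planar system
\begin{equation*}
\partial_t\beta=p\,A(\beta)\cos{\eu D},\qquad \partial_t{\eu D}=-p\,\bigl(4\beta+A(\beta)/\beta\bigr)\sin{\eu D}.
\end{equation*}
Its vector field is locally Lipschitz on $\{\beta>0\}$. Moreover, since $\sin({\eu D})\,\beta\,\alpha^2=C_{\sf prod}>0$ for ${\eu D}_m^\star(0)\in(\pi/2,\pi)$, the trajectory stays in $\{\beta>0\}\times(0,\pi)$.

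Set $t^\ast:=t_{\pi/2}$ and, for $s\in[0,t^\ast]$, define the reflected, time-reversed curve
\begin{equation*}
\tilde\beta(s):=\beta(t^\ast-s),\qquad \tilde{\eu D}(s):=\pi-{\eu D}(t^\ast-s).
\end{equation*}
A direct check shows this curve solves the same system. For the first component,
\begin{equation*}
\partial_s\tilde\beta(s)=-pA(\beta)\cos{\eu D}=pA(\tilde\beta)\cos\tilde{\eu D},
\end{equation*}
using $\cos(\pi-x)=-\cos x$. For the second,
\begin{equation*}
\partial_s\tilde{\eu D}(s)=\partial_t{\eu D}(t^\ast-s)=-p\bigl(4\tilde\beta+A(\tilde\beta)/\tilde\beta\bigr)\sin\tilde{\eu D},
\end{equation*}
using $\sin(\pi-x)=\sin x$. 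At $s=0$ we get $\tilde\beta(0)=\beta(t^\ast)$ and $\tilde{\eu D}(0)=\pi-\pi/2=\pi/2={\eu D}(t^\ast)$. So $(\tilde\beta,\tilde{\eu D})$ and the forward trajectory $s\mapsto(\beta(t^\ast+s),{\eu D}(t^\ast+s))$ satisfy the same initial value problem.

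By Picard--Lindelöf uniqueness, the two curves agree on their common interval of existence. The curve $(\tilde\beta,\tilde{\eu D})$ exists on all of $[0,t^\ast]$ and stays in a compact subset of $\{\beta>0\}\times(0,\pi)$. The forward solution therefore cannot blow up or leave the domain before $s=t^\ast$, and it coincides with $(\tilde\beta,\tilde{\eu D})$ there. Taking $s=\Delta t\in(0,t^\ast)$ then gives $\beta_m^\star(t^\ast+\Delta t)=\beta_m^\star(t^\ast-\Delta t)$ and ${\eu D}_m^\star(t^\ast+\Delta t)=\pi-{\eu D}_m^\star(t^\ast-\Delta t)$, which is the claim.

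The main obstacle is not the symmetry computation itself but justifying that the reduction is legitimate. Specifically, we must ensure that (i) ${\eu D}_m^\star$ can be treated as a smooth real variable in $(0,\pi)$ rather than a quantity taken mod $2\pi$, and (ii) $\alpha_m^\star$ is a function of $\beta_m^\star$, so the phase space is genuinely two-dimensional and the reflection closes up. Both facts come from the conservation laws in Lemma \ref{lem:ode_constant}, so the care lies in invoking them correctly, in particular checking $\beta_m^\star>0$ and $\sin{\eu D}_m^\star>0$ throughout. We should also confirm that $t_{\pi/2}$ is finite. This follows because on $(\pi/2,\pi)$ we have $\partial_t{\eu D}\le -4p\beta\sin{\eu D}<0$, with $\beta$ and $\sin{\eu D}$ bounded below while ${\eu D}\ge\pi/2$.
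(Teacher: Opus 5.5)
Your proof is correct, but it follows a different route from the paper's.

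\textbf{The paper's route.} It uses both conservation laws of Lemma \ref{lem:ode_constant} to eliminate ${\eu D}_m^\star$ and obtain a scalar equation $\partial_t\beta_m^\star=\varsigma(\beta_m^\star)\cdot\mathrm{sgn}\{\cos({\eu D}_m^\star)\}$, with $\varsigma(\beta)=p(2\beta^2+C_{\sf diff})\sqrt{1-C_{\sf prod}^2/(\beta^2(2\beta^2+C_{\sf diff})^2)}$. It then notes that the time-reversed branch before $t_{\pi/2}$ and the forward branch after it solve the same scalar ODE from the common value $\beta_m^\star(t_{\pi/2})$, and appeals to uniqueness. Only afterwards does it recover ${\eu D}_m^\star(t_{\pi/2}-\Delta t)+{\eu D}_m^\star(t_{\pi/2}+\Delta t)=\pi$, from $C_{\sf prod}$ and the monotonicity of ${\eu D}_m^\star$.

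\textbf{Your route.} You keep ${\eu D}_m^\star$ as a coordinate and use the reversibility of the planar system under $(\beta,{\eu D},t)\mapsto(\beta,\pi-{\eu D},-t)$. This yields both identities at once.

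\textbf{Why yours is the more rigorous one.} At $t_{\pi/2}$ one has $\sin{\eu D}_m^\star=1$, so the radicand in $\varsigma$ vanishes there, and nearby $\varsigma(\beta)$ behaves like $\sqrt{\beta-\beta_m^\star(t_{\pi/2})}$. The paper's assertion that $\varsigma$ is locally Lipschitz therefore fails precisely at the initial value where uniqueness is invoked; the constant function is a second solution of the scalar problem. Your planar field is smooth on $\{\beta>0\}$, so Picard--Lindel\"of applies without caveat. Your check that the forward solution survives up to $s=t_{\pi/2}$ is also needed, since the aligned main flow is of Riccati type and blows up in finite time.

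\textbf{What the scalar reduction buys.} It gives an explicitly one-dimensional picture in which travel times are integrals $\int\rd\beta/\varsigma(\beta)$ with an integrable singularity. Parametrizing each monotone branch by $\beta$ in this way is how the paper's argument can be repaired. It is also the form closest to the time estimates in Theorem \ref{thm:formal_phase_alignment}.
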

\begin{proof}[Proof of Lemma \ref{lem:simple_dynamics_approx}, \ref{lem:ode_constant} and \ref{lem:ode_symmetry}]
	Please refer to \S\ref{ap:phase_lemma_proof} for a detailed proof.
\end{proof}

Now we are ready to present the proof of Theorem \ref{thm:formal_phase_alignment}.

\begin{proof}[Proof of Theorem \ref{thm:formal_phase_alignment}]
Without loss of generality, we focus on the case where ${\eu D}_m^\star(0) \in (0,\pi)$.
The case ${\eu D}_m^\star(0) \in (-\pi,0)$ can be extended identically owing to the symmetry of dynamics in \eqref{eq:simplified_dynamics_fourier} as established in Lemmas \ref{lem:simple_dynamics_approx} and \ref{lem:ode_constant}.
Specifically, the trajectories of $\alpha_m^\star(t)$ and $\beta_m^\star(t)$ are invariant under a sign flip of ${\eu D}_m^\star(t)$ such that the entire dynamics evolves symmetrically, with ${\eu D}_m^\star(t)$ mirrored from $(0, \pi)$ to $(-\pi, 0)$ at each time $t$.

\paragraph{Roadmap.}
In the following, we establish the convergence time by further dividing into two cases---${\eu D}_m^\star(0) \in (0,\pi/2)$ and ${\eu D}_m^\star(0) \in (\pi/2,\pi)$.
Notably, thanks to the symmetry established in Lemma \ref{lem:ode_symmetry}, we only need to characterize two time intervals 
(\romannumeral1) the traveling time from ${\eu D}_m^\star(0)$ to $\pi/2$ for any ${\eu D}_m^\star(0)\in(\pi/2,\pi)$, denoted by $\Delta t^\shortrightarrow_{\pi/2}$, both initialized at $\beta_m^\star(0)=\kappa_{\sf init}$, 
(\romannumeral2) the convergence time from ${\eu D}_m^\star(0)$ to $0$ for an arbitrary initial phase ${\eu D}_m^\star(0)\in(0,\pi/2)$, denoted by $\Delta t^\shortrightarrow_\delta $
This is because, 
\begin{itemize}
\setlength{\itemsep}{-1pt}
	\item For ${\eu D}_m^\star(0)\in(0,\pi/2)$, the convergence time can be captured by $\Delta t^\shortrightarrow_\delta $
	\item For ${\eu D}_m^\star(0)\in(\pi/2,\pi)$, the time is given by $2\Delta t^\shortrightarrow_{\pi/2}+\Delta t^\shortrightarrow_\delta $, where with slight abuse of notation we let $\Delta t^\shortrightarrow_\delta $ denote the time traveling from $\pi-{\eu D}_m^\star(0)$ to $0$. Such argument is supported by Lemma \ref{lem:ode_symmetry}, as it takes equal time for ${\eu D}_m^\star(t)$ to travel from $\pi-{\eu D}_m^\star(0)$ to $\pi/2$ and from $\pi/2$ to $\pi-{\eu D}_m^\star(0)$. Also, when ${\eu D}_m^\star(t)$ reaches $\pi-{\eu D}_m^\star(0)$, we have $\beta_m^\star(t)=\kappa_{\sf init}$ due to the symmetry, such that the remaining convergence time is equal to $\Delta t^\shortrightarrow_\delta $.
\end{itemize}
Below are some useful properties.
Under the initialization in Assumption \ref{as:initialization}, Lemma \ref{lem:ode_constant} ensures 
\begin{align}
	\label{eq:alpha_beta_relation}
	\alpha_m^\star(t)^2=2\beta_m^\star(t)^2-\kappa_{\sf init}^2,\qquad\forall t\in\RR^+.
\end{align}
Following this, we can characterize the dynamics as follows:
\begin{subequations}
\begin{align}
    &\partial_t \beta_m^\star(t)= p\cdot (2\beta_m^\star(t)^2-\kappa_{\sf init}^2)\cdot \cos({\eu D}_m^\star(t)),\label{eq:beta_simple_dyn}\\
    &\partial_t{\eu D}_m^\star(t)= -p\cdot\left(6\beta_m^\star(t)-\kappa_{\sf init}^2/\beta_m^\star(t)\right)\cdot \sin\left({\eu D}_m^\star(t)\right).	\label{eq:D_simple_dyn}
\end{align}
\end{subequations}
Hence, we have ${\eu D}_m^\star(t)$ is monotonely decreasing, and $\beta_m^\star(t)$ first decreases when ${\eu D}_m^\star(t)\in(\pi/2,\pi)$ and increases thereafter.
Besides, it follows from \eqref{eq:alpha_beta_relation} that $\beta_m^\star(t)\geq\kappa_{\sf init}/\sqrt{2}$ for all $t\in\RR^+$.

\paragraph{Part I: Travelling time from ${\eu D}_m^\star(0)$ to $\pi/2$ with ${\eu D}_m^\star(0)\in(\pi/2,\pi)$.}
We consider $t\in(0,\Delta t^\shortrightarrow_{\pi/2}]$ where we define
$\Delta t^\shortrightarrow_{\pi/2}=\min\{t\in\RR^+:{\eu D}_m^\star(t)\leq\pi/2\}.$
Based on \eqref{eq:D_simple_dyn}, by definition, we have
\begin{align*}
    \partial_t{\eu D}_m^\star(t)&\geq -p\cdot\left(6\beta_m^\star(t)-\kappa_{\sf init}^2/\beta_m^\star(t)\right)\geq -5p\cdot\kappa_{\sf init},
\end{align*}
where the last inequality uses $6\beta_m^\star(t)-\kappa_{\sf init}^2/\beta_m^\star(t)$ is monotonically increasing on $\RR^+$ and $\beta_m^\star(t)\in[\kappa_{\sf init}/\sqrt{2},\kappa_{\sf init}]$ since $\beta_m^\star(t)$ is monotonically decreasing throughout the stage.
Following this, we can lower bound ${\eu D}_m^\star(t)$ by ${\eu D}_m^\star(t)\geq {\eu D}_m^\star(0)-5p\cdot\kappa_{\sf init}\cdot t$ for all $t\leq t_1^\epsilon$.
Thus, we have
\begin{align*}
	\Delta t^\shortrightarrow_{\pi/2}\geq\frac{{\eu D}_m^\star(0)-{\eu D}_m^\star(\Delta t^\shortrightarrow_{\pi/2})}{5p\cdot\kappa_{\sf init}}=\frac{{\eu D}_m^\star(0)-\pi/2}{5p\cdot\kappa_{\sf init}},
\end{align*}
On the other side, \eqref{eq:D_simple_dyn} implies that $\partial_t{\eu D}_m^\star(t)\leq0$ such that ${\eu D}_m^\star(t)\leq{\eu D}_m^\star(0)$.
Then, we have
\begin{align*}
    \partial_t{\eu D}_m^\star(t)&\leq -p\cdot\left(6\beta_m^\star(t)-\kappa_{\sf init}^2/\beta_m^\star(t)\right)\cdot\sin({\eu D}_m^\star(0))\leq -2\sqrt{2}p\cdot\kappa_{\sf init}\cdot\sin({\eu D}_m^\star(0)).
\end{align*}
Similarly, we can upper bound $\Delta t^\shortrightarrow_{\pi/2}$. 
By combining the arguments above, we have
$$
	\Delta t^\shortrightarrow_{\pi/2}\asymp(p\cdot\kappa_{\sf init})^{-1}\cdot\{{\eu D}_m^\star(0)-\pi/2\}.
$$

\paragraph{Part II: Convergence time from ${\eu D}_m^\star(0)$ to $0$ with ${\eu D}_m^\star(0)\in(0,\pi/2)$.}
Consider a small error level $\delta>0$, and the convergence time is formalized as $\Delta t^\shortrightarrow_\delta =\min\{t\in\RR^+:\sin({\eu D}_m^\star(t))\leq\delta\}$.
Note that ${\eu D}_m^\star(t)$ is monotonically decreasing and $\beta_m^\star(t)$ is monotonically increasing in this stage. Also,
$$
\sin({\eu D}_m^\star(t))\cdot\beta_m^\star(t)\cdot\alpha_m^\star(t)^2=\sin({\eu D}_m^\star(t))\cdot\beta_m^\star(t)\cdot(2\beta_m^\star(t)^2-\kappa_{\sf init}^2)=\sin({\eu D}_m^\star(0))\cdot\kappa_{\sf init}^3,
$$
following \eqref{eq:alpha_beta_relation}, Lemma \ref{lem:ode_constant} and $\beta_m^\star(0)=\kappa_{\sf init}$ as specified in Assumption \ref{as:initialization}.
By definition,
$$
\sin({\eu D}_m^\star(0))/\delta\cdot\kappa_{\sf init}^3=\beta_m^\star(\Delta t^\shortrightarrow_\delta )\cdot(2\beta_m^\star(\Delta t^\shortrightarrow_\delta )^2-\kappa_{\sf init}^2)\asymp\beta_m^\star(\Delta t^\shortrightarrow_\delta )^3.
$$
Hence, we have $\beta_m^\star(\Delta t^\shortrightarrow_\delta )/\kappa_{\sf init}\asymp\sqrt[3]{\sin({\eu D}_m^\star(0))/\delta}$.
Following \eqref{eq:beta_simple_dyn}, it holds that
\begin{align*}
	\partial_t\log\left(\frac{\beta_m^\star(t)-\kappa_{\sf init}/\sqrt{2}}{\beta_m^\star(t)+\kappa_{\sf init}/\sqrt{2}}\right)&=\frac{2\sqrt{2}\cdot\kappa_{\sf init}\cdot\partial_t\beta_m^\star(t)}{2\beta_m^\star(t)^2-\kappa_{\sf init}^2}=2\sqrt{2}\cdot\kappa_{\sf init}\cdot p \cdot \cos({\eu D}_m^\star(t))\asymp\kappa_{\sf init}\cdot p,
\end{align*}
since $\cos({\eu D}_m^\star(t))\in[\cos({\eu D}_m^\star(0)),1]$.
Hence, by integrating over time $(0,\Delta t^\shortrightarrow_\delta]$, we can show that
\begin{align}
	\log\left(\frac{\beta_m^\star(\Delta t^\shortrightarrow_\delta )-\kappa_{\sf init}/\sqrt{2}}{\beta_m^\star(\Delta t^\shortrightarrow_\delta )+\kappa_{\sf init}/\sqrt{2}}\right)+\log(3 +2\sqrt{2})\asymp\kappa_{\sf init}\cdot p \cdot \Delta t^\shortrightarrow_\delta .
	\label{eq:part2_time_scale}
\end{align}
Next, we bound the scale of the term within the logarithm.
For a small tolerance $\delta=o(1)$, we have
\begin{align}
	\frac{\beta_m^\star(\Delta t^\shortrightarrow_\delta )-\kappa_{\sf init}/\sqrt{2}}{\beta_m^\star(\Delta t^\shortrightarrow_\delta )+\kappa_{\sf init}/\sqrt{2}}=1-2\big(\sqrt{2}\cdot\beta_m^\star(\Delta t^\shortrightarrow_\delta )/\kappa_{\sf init}+1\big)^{-1}=1-\Theta\big(\sqrt[3]{\delta/\sin({\eu D}_m^\star(0))}\big).
	\label{eq:part2_beta_scale}
\end{align}
Thus, by combing the arguments in \eqref{eq:part2_time_scale} and \eqref{eq:part2_beta_scale}, we can conclude that
$$
\Delta t^\shortrightarrow_\delta \asymp (p\cdot\kappa_{\sf init})^{-1}\cdot\big\{1-\sqrt[3]{\delta/\sin({\eu D}_m^\star(0))}\big\},
$$
where we use the fact that $\log(1-x)\asymp x$ for small $x>0$.

Based on the results in Part I and Part II, for any initial phase difference ${\eu D}_m^\star(0)\in(0,\pi)$ and sufficiently small error tolerance $\delta\in(0,1)$, by symmetry, the convergence time is of level
$$
t_\delta\asymp(p\kappa_{\sf init})^{-1}\cdot\big\{1-(\sin({\eu D}_m^\star(0))/\delta\}^{-1/3}+\max\{\pi/2-|{\eu D}_m^\star(0)-\pi|,0\}\big),
$$
where we let $(x)_+=\max\{x,0\}$ denote the ReLU function. 
\paragraph{Part III: Preservation of Uniform Phase Distribution and Double-Phase Convergence.}

Recall that Lemma \ref{lem:ode_constant} gives there exists constant $C_{\sf prod}\in\RR$ such that 
$$
\sin({\eu D}_m^\star(t))\cdot\beta_m^\star(t)\cdot\alpha_m^\star(t)^2 = C_{\sf prod}.
$$
Following this, we can write the dynamics of $\phi_m^\star(t)$ and $\psi_m^\star(t)$  as
\begin{equation}
\begin{aligned}
     &\partial_t\exp(i\phi_m^\star(t))=2p\cdot C_{\sf prod}\cdot\alpha_m^\star(t)^{-2}\cdot\exp\left(i\left\{\phi_m^\star(t)-\pi/2\right\}\right),\\
 & \partial_t\exp(i\psi_m^\star(t))=p\cdot C_{\sf prod}\cdot\beta_m^\star(t)^{-2}\cdot\exp\left(i\left\{\psi_m^\star(t)+\pi/2\right\}\right).
\end{aligned}
\label{eq:phase_dynamics}
\end{equation}
As established previously, the magnitudes of the learned parameters, $\alpha_m^\star(t)$ and $\beta_m^\star(t)$, tend to infinity as $t\to\infty$.
This divergence drives the convergence of the corresponding phases to fixed values, $\phi_m^\star(\infty)$ and $\psi_m^\star(\infty)$, which are determined by the initialization.
Furthermore, Theorem \ref{thm:phase_align_informal} proves that the misalignment term ${\eu D}_m^\star(t)$ converges to zero. This directly implies that the limiting phases must satisfy the phase alignment condition: $2\phi_m^\star(\infty)=\psi_m^\star(\infty)$.


Let $\exp(i\phi_m^\star(t)) = z(t)$.
By \eqref{eq:phase_dynamics}, $z(t)$ is continuously differentiable with respect to $t$. Consider 
$$
\Phi_m^\star(t) = \phi_m^\star(0) + \int_0^t \Im (\bar{z}(s) \cdot\partial_sz(s))\mathrm d s,
$$ 
then we can check that $\Phi_m^\star(t)$ is continuously differentiable.
By differentiating both sides, we can also check that it satisfies $\exp(i\Phi_m^\star(t)) = z(t)$ since
\begin{align*}
    \partial_t \exp(i\Phi_m^\star(t))&=\exp(i\Phi_m^\star(t))\cdot\partial_t\int_0^t \Im (\bar{z}(s)\cdot \partial_sz(s))\mathrm d s\cdot i\\
    &=\exp(i(\Phi_m^\star(t)-\pi/2))\cdot\Im (\bar{z}(t)\cdot\partial_tz(t))\\
    & = 2p\cdot C_{\sf prod}\cdot\alpha_m^\star(t)^{-2}\cdot \exp(i(\Phi_m^\star(t)-\pi/2))\cdot\Im (\bar{z}(t)\cdot z(t) \cdot (-i))\\
     & = 2p\cdot C_{\sf prod}\cdot\alpha_m^\star(t)^{-2}\cdot \exp(i(\Phi_m^\star(t)-\pi/2)),
\end{align*}
where the third equality results from \eqref{eq:phase_dynamics} and the last line we use the fact that $|z(t)|=1$ by definition. 
Using the uniqueness of ODE and initial condition $\exp(i\Phi_m^\star(0)) = z(0)$, we can conclude that $\exp(i\Phi_m^\star(t)) = z(t)$ and thus  $\phi_m^\star(t) \bmod 2\pi= \Phi_m^\star(t) \bmod 2\pi$. 
By direct calculation, we have
\[
\partial_t \Phi_m^\star(t)=-2p\cdot C_{\sf prod}\cdot\alpha_m^\star(t)^{-2},
\]
which indicates that
\[
\Phi_m^\star(t) = \phi_m^\star(0) -2p \cdot C_{\sf prod}\cdot\int_0^t \alpha_m^\star(s)^{-2}\mathrm d s.
\]
Recall that the dynamics of $\alpha_{m}^\star(t)$ is jointly given by \eqref{eq:alpha_beta_relation}, \eqref{eq:beta_simple_dyn} and \eqref{eq:D_simple_dyn}.
Following this, given $\{\alpha_m^\star(0), \beta_m^\star(0),\eu D_m^\star(0)\}$, we can write
$$
\phi_m^\star(t) \bmod 2\pi=\Phi_m^\star(t) \bmod 2\pi:= \phi_m^\star(0)\bmod 2\pi+G(\alpha_m^\star(0), \beta_m^\star(0),\eu D_m^\star(0))\bmod2\pi.
$$
By simple calculation, we can show that $\phi_m^\star(0)\!\perp\!\!\!\perp\!{\eu D}_m^\star(0)$ and $\phi_m^\star(0)\iid(-\pi,\pi]$ for all $m$. 
Combining these arguments and applying a similar one to $\psi_m^\star(t)$ establishes that 
$$
\phi_m^\star(t)\iid{\rm Unif}(-\pi,\pi),\qquad\psi_m^\star(t)\iid{\rm Unif}(-\pi,\pi),\qquad\forall(t,m)\in\RR^+\times[M].
$$
Thus, $\phi_m^\star(\infty)$ and $\psi_m^\star(\infty)$ are both uniformly distributed over $[0,2\pi)$.
Recall that $2\phi_m^\star(\infty)=\psi_m^\star(\infty)$ for any given initialization, then the joint measure of $(\phi_m^\star(\infty),\psi_m^\star(\infty))$ degenerates on the (periodic) line $2\phi=\psi$ inside the support. Since the marginals of them are both uniform, the joint limiting measure is given by
$\rho_\infty=T_{\#}\lambda_{\sf unif}$ with $T:\varphi\mapsto(\varphi,2\varphi)\bmod2\pi$, which completes the proof. 
\end{proof}

\subsubsection{Proof of Auxiliary Lemma \ref{lem:simple_dynamics_approx}, \ref{lem:ode_constant} and \ref{lem:ode_symmetry}}
\label{ap:phase_lemma_proof}
\begin{proof}[Proof of Lemma \ref{lem:simple_dynamics_approx}]
	Following the same argument in the proof of Theorem \ref{thm:single_freq}, by pushing the approximation error to $0$, we can show an exact single-frequency pattern:
	$$
	\alpha_m^k(t)=\beta_m^k(t)\equiv0,\qquad\forall t\in\RR^+,k\neq k^\star.
	$$ 
	Formally, this result holds under the initialization in Assumption \ref{as:initialization}, which can be justified using a matrix ODE argument over $u_m^k(t)=(\alpha_m^k(t),\beta_m^k(t))^\top$ with zero initial value.
	Then, the dynamics of the original parameter can be simplified to a coefficient only related to $k^\star$. For all $j\in[p]$, we have
	\begin{subequations}
	\begin{align}
	\partial_t \theta_m[j](t)&=2p\cdot\alpha_m^\star(t)\cdot\beta_m^\star(t)\cdot\cos(\omega_{k} j+\psi_m^\star(t)-\phi_m^\star(t)),\label{eq:simple_dyn_theta}\\
	\partial_t \xi_m[j](t)&=p\cdot\alpha_m^\star(t)^2\cdot\cos(\omega_{k^\star}j+2\phi_m^\star(t)).\label{eq:simple_dyn_xi}
	\end{align}
	\end{subequations}
	Recall $\partial_t g_{m}[j](t)=\langle b_j,\partial_t\theta_{m}(t)\rangle$, by simple calculation, it holds that
	\begin{align*}
		\partial_tg_m[2k^\star](t)&=\sqrt{2}\cdot p^{3/2}\cdot\alpha_m^\star(t)\cdot\beta_m^\star(t)\cdot\cos\big(\psi_m^\star(t)-\phi_m^\star(t)\big),\\
		\partial_tg_m[2k^\star+1](t)&= -\sqrt{2}\cdot p^{3/2}\cdot\alpha_m^\star(t)\cdot\beta_m^\star(t)\cdot\sin\big(\psi_m^\star(t)-\phi_m^\star(t)\big),
	\end{align*}
	and similarly, by using $\partial_t r_{m}[j](t)=\langle b_j,\partial_t\xi_{m}(t)\rangle$, we can obtain that
	\begin{align*}
		\partial_tr_m[2k^\star](t)&=p^{3/2}/\sqrt{2}\cdot\alpha_m^\star(t)^2\cdot\cos\big(2\phi_m^\star(t)\big),\\
		\partial_tr_m[2k^\star+1](t)&= -p^{3/2}/\sqrt{2}\cdot\alpha_m^\star(t)^2\cdot\sin\big(2\phi_m^\star(t)\big),
	\end{align*}
	where the additional $\sqrt{2/p}$ arises from the normalization factor in $b_j$'s (see \S\ref{sec:dft}). 
	Since the magnitudes follows $\alpha_m^\star=\sqrt{2/p}\cdot\|g_m^\star\|$ and $\beta_m^\star=\sqrt{2/p}\cdot\|r_m^\star\|$, by applying the chain rule, then 
	\begin{subequations}
	\begin{align}
		\partial_t\alpha_m^\star(t)&=2p\cdot\alpha_m^\star(t)\cdot\beta_m^\star(t)\cdot\cos\big(2\phi_m^\star(t) -\psi_m^\star(t)\big),\label{eq:mag_in_ode}\\
	\partial_t\beta_m^\star(t)&= p\cdot\alpha_m^\star(t)^2\cdot\cos\big(2\phi_m^\star(t)-\psi_m^\star(t)\big).\label{eq:mag_out_ode}
	\end{align}
	\end{subequations}
	Next, we understand the evolution of phases by tracking the dynamics of $\exp(i\phi_m^\star(t))$ and $\exp(i\psi_m^\star(t))$ via Euler's formula.
	Note that $\phi_m^\star(t)$ and $\psi_m^\star(t)$ cannot be directly tracked via ODEs due to abrupt jumps from $-\pi$ to $\pi$, which arise from the use of ${\rm atan2}(\cdot)$ function in definitions (see \S\ref{sec:dft}).
	By definition and the chain rule, it follows that
\begin{align*}
	\partial_t\cos(\phi_m^\star(t))&=\sqrt{\frac{2}{p}}\cdot\partial_t\left(\frac{g_m[2k^\star](t)}{\alpha_m^*(t)}\right)\\
	&=\sqrt{\frac{2}{p}}\cdot\left\{\frac{\partial_tg_m[2k^\star](t)}{\alpha_m^*(t)}-\frac{\partial_t\alpha_m^*(t)}{\alpha_m^*(t)}\cdot\frac{g_m[2k^\star](t)}{\alpha_m^*(t)}\right\}\\
	&=2p\cdot\beta_m^\star(t)\cdot\cos\big(\psi_m^\star(t)-\phi_m^\star(t)\big)\\
	&\qquad- 2p\cdot\beta_m^\star(t)\cdot\cos(\phi_m^\star(t))\cdot\cos\big(2\phi_m^\star(t) -\psi_m^\star(t)\big)\\
	&=2p\cdot\beta_m^\star(t)\cdot\sin(\phi_m^\star(t))\cdot\sin\big(2\phi_m^\star(t) -\psi_m^\star(t)\big),
\end{align*}
where the second equality uses $\cos(\phi_m^\star(t))=\sqrt{2/p}\cdot g_m[2k^\star](t)/\alpha_m^*(t)$ and the last one results from the trigonometric indentity.
Similarly, we have
\begin{align*}
	&\partial_t\sin(\phi_m^\star(t))=-2p\cdot\beta_m^\star(t)\cdot\cos(\phi_m^\star(t))\cdot\sin\big(2\phi_m^\star(t) -\psi_m^\star(t)\big),
\end{align*}
which gives that
\begin{align}
	\partial_t\exp(i\phi_m^\star(t))&=2p\cdot\beta_m^\star(t)\cdot\sin\big(2\phi_m^\star(t) -\psi_m^\star(t)\big)\cdot\exp\left(i\left\{\phi_m^\star(t)-\pi/2\right\}\right).
	\label{eq:phase_in_ode}
	\end{align}
Following a similar argument, we can show that
	\begin{align}
	\partial_t\exp(i\psi_m^\star(t))&=p\cdot\frac{\alpha_m^\star(t)^2}{\beta_m^\star(t)}\cdot\sin\big(2\phi_m^\star(t) -\psi_m^\star(t)\big)\cdot\exp\left(i\left\{\psi_m^\star(t)+\pi/2\right\}\right).\label{eq:phase_out_ode}
	\end{align}
Thanks to the initialization and preservation of the single-frequency, the $2p$-dimensional dynamical system can be tracked via a four-particle system with $\alpha_m^\star$, $\beta_m^\star$, $\phi_m^\star$, and $\psi_m^\star$, whose dynamics are given by \eqref{eq:mag_in_ode}, \eqref{eq:mag_out_ode}, \eqref{eq:phase_in_ode} and \eqref{eq:phase_out_ode}.
	Furthermore, note that
	\begin{align}
		\partial_t\exp(2i\phi_m^\star(t))&=2\exp(i\phi_m^\star(t))\cdot\partial_t\exp(i\phi_m^\star(t))\notag\\
		&=4p\cdot\beta_m^\star(t)\cdot\sin\big(2\phi_m^\star(t) -\psi_m^\star(t)\big)\cdot\exp\left(i\left\{2\phi_m^\star(t)-\pi/2\right\}\right).
		\label{eq:phase_2phi_ode}
	\end{align}
	Based on \eqref{eq:phase_out_ode} and \eqref{eq:phase_2phi_ode}, by denoting ${\eu D}_m^\star(t)=2\phi_m^\star(t)-\psi_m^\star(t)\bmod 2\pi$, we obtain that
	\begin{align}
	\partial_t\exp(i{\eu D}_m^\star(t))&=\frac{\partial_t\exp(2i\phi_m^\star(t))}{\exp(i\psi_m^\star(t))}-\frac{\exp(2i\phi_m^\star(t))\cdot\partial_t\exp(i\psi_m^\star(t))}{\exp(2i\psi_m^\star(t))}\notag\\
	&=4p\cdot\beta_m^\star(t)\cdot\sin\big({\eu D}_m^\star(t)\big)\cdot\exp\left(i\left\{{\eu D}_m^\star(t)-\pi/2\right\}\right)\notag\\
	&\qquad-p\cdot\frac{\alpha_m^\star(t)^2}{\beta_m^\star(t)}\cdot\sin\big({\eu D}_m^\star(t)\big)\cdot\exp\left(i\left\{{\eu D}_m^\star(t)+\pi/2\right\}\right)\notag\\
	&=p\cdot\left(4\beta_m^\star(t)+\frac{\alpha_m^\star(t)^2}{\beta_m^\star(t)}\right)\cdot \sin\big({\eu D}_m^\star(t)\big)\cdot\exp\left(i\{{\eu D}_m^\star(t)-\pi/2\}\right).	
    \label{eq:dynamics_phase_align}
\end{align}
By combining \eqref{eq:mag_in_ode}, \eqref{eq:mag_out_ode} and \eqref{eq:dynamics_phase_align}, we complete the proof.
\end{proof}

\begin{proof}[Proof of Lemma \ref{lem:ode_constant}.]
Following the simplified main flow in the Fourier domain (see Lemma \ref{lem:simple_dynamics_approx}), it is easy to show that $\alpha_m^\star(t)^2-2\beta_m^\star(t)$ is a constant throughout the gradient flow since
	\begin{align*}
		\partial_t\{\alpha_m^\star(t)^2-2\beta_m^\star(t)^2\}=2\alpha_m^\star(t)\cdot\partial_t\alpha_m^\star(t)-4\beta_m^\star(t)\cdot\partial_t\beta_m^\star(t)=0.
	\end{align*}
	Hence, there exists an initialization-dependent constant $C_{\sf diff}$ such that
	$$
	\alpha_m^\star(t)^2=2\beta_m^\star(t)^2+C_{\sf diff},\qquad\forall t\in\RR^+.
	$$
Moreover, by applying the chain rule, we can deduce that
\begin{align*}
\partial_t\{\alpha_m^\star(t)^2\cdot\beta_m^\star(t)\}&= \alpha_m^\star(t)^2\cdot\partial_t\beta_m^\star(t)+ \partial_t\{\alpha_m^\star(t)^2\}\cdot\beta_m^\star(t)\\
&=\alpha_m^\star(t)^2\cdot\partial_t\beta_m^\star(t)+ 2\partial_t\{\beta_m^\star(t)^2\}\cdot\beta_m^\star(t)\\
&=p\cdot \alpha_m^\star(t)^4\cdot \cos({\eu D}_m^\star(t))+4\beta_m^\star(t)^2\cdot p\cdot \alpha_m^\star(t)^2\cdot \cos({\eu D}_m^\star(t))\\
&=p\cdot \alpha_m^\star(t)^2\cdot\{\alpha_m^\star(t)^2+4\beta_m^\star(t)^2\}\cdot \cos({\eu D}_m^\star(t)).
\end{align*}
Following this, we can compute the time derivative of $\sin({\eu D}_m^\star(t))\cdot\beta_m^\star(t)\cdot\alpha_m^\star(t)^2$, following that
\begin{align*}
	&\partial_t\{\sin({\eu D}_m^\star(t))\cdot\beta_m^\star(t)\cdot\alpha_m^\star(t)^2\}\\
	&\qquad=\partial_t\sin({\eu D}_m^\star(t))\cdot\beta_m^\star(t)\cdot\alpha_m^\star(t)^2+\sin({\eu D}_m^\star(t))\cdot\partial_t\{\alpha_m^\star(t)^2\cdot\beta_m^\star(t)\}\\
	&\qquad=-\cos({\eu D}_m^\star(t))\cdot\alpha_m^\star(t)^2\cdot p\cdot\left(4\beta_m^\star(t)^2+\alpha_m^\star(t)^2\right)\cdot \sin\left({\eu D}_m^\star(t)\right)\\
	&\qquad\qquad+\sin({\eu D}_m^\star(t))\cdot p\cdot \alpha_m^\star(t)^2\cdot\{\alpha_m^\star(t)^2+4\beta_m^\star(t)^2\}\cdot \cos({\eu D}_m^\star(t))=0,
\end{align*}
where the second equality uses \eqref{eq:simplified_dynamics_fourier} in Lemma \ref{lem:simple_dynamics_approx}.
Therefore, there exists constant $C_{\sf prod}$ such that $\sin({\eu D}_m^\star(t))\cdot\beta_m^\star(t)\cdot\alpha_m^\star(t)^2=C_{\sf prod}$ for all $t\in\RR^+$.

Finally, we show that ${\eu D}_m^\star(t)$ remains within the half-space where it is initialized, which means ${\eu D}_m^\star(t) \in (\iota\pi, (\iota+1)\pi)$ for $\iota\in\{-1,0\}$ determined by the initial state ${\eu D}_m^\star(0) \in (\iota\pi, (\iota+1)\pi)$.
By Lemma \ref{lem:ode_constant}, we always have $\sin({\eu D}_m^\star(t))\neq 0$, so ${\eu D}_m^\star(t)$ will never reach $\iota \pi$ for any $\iota$.
This ensures no jump behavior occurs for ${\eu D}_m^\star(t)$, allowing us to directly track its dynamics.
Following this, by applying chain rule over \eqref{eq:dynamics_phase_align}, we can reach that
\begin{align*}
	\partial_t{\eu D}_m^\star(t)&= -p\cdot\left(4\beta_m^\star(t)+\alpha_m^\star(t)^2/\beta_m^\star(t)\right)\cdot \sin\left({\eu D}_m^\star(t)\right),
\end{align*}
which completes the proof.
\end{proof}

\begin{proof}[Proof of Lemma \ref{lem:ode_symmetry}.]
Based on the results in Lemma \ref{lem:simple_dynamics_approx} and \ref{lem:ode_constant}, we reduce the main flow into a one-dimensional dynamical system characterized by $\beta_m^\star(t)$.\
Specifically, we have
	\begin{align*}
		\partial_t \beta_m^\star(t)&= p\cdot \alpha_m^\star(t)^2\cdot \cos({\eu D}_m^\star(t))\\
		&=p\cdot(2\beta_m^\star(t)^2+C_{\sf diff})\cdot\sgn\{\cos({\eu D}_m^\star(t))\}\cdot\sqrt{1-\frac{C_{\sf prod}^2}{\beta_m^\star(t)^2\cdot(2\beta_m^\star(t)^2+C_{\sf diff})^2}}\\
		&:=\varsigma(\beta_m^\star(t))\cdot\sgn\{\cos({\eu D}_m^\star(t))\}.
	\end{align*}
	As given in \eqref{eq:D_dyn_lem}, due to the nonnegativity of the magnitudes, we can show that ${\eu D}_m^\star(t)$ is monotonely decreasing if ${\eu D}_m^\star(0)\in(\pi/2,\pi)$. 
    We consider $s= t-t_{\pi/2}$ for $t\in[t_{\pi/2}, 2t_{\pi/2})$ and $r= t_{\pi/2} - t$ for $t\in(0,t_{\pi/2}]$, where $t_{\pi/2}$ denote the hit time that ${\eu D}_m^\star(t_{\pi/2})=\pi/2$.
    Following this, we have 
    $$
    \partial_s \beta_m^\star(s) = \partial_t \beta_m^\star(t-t_{\pi/2}) = -\varsigma(\beta_m^\star(s)),\quad\partial_r \beta_m^\star(r) = -\partial_t \beta_m^\star(t_{\pi/2}-t) = -\varsigma(\beta_m^\star(r)).
    $$ 
    Here, we decompose $\partial_t \beta_m^\star(t)$ within time $[0,2t_{\pi/2}]$ into a backward process within  time $(0,t_{\pi/2}]$ and a forward process  within time $[t_{\pi/2},2t_{\pi/2}]$ respectively. 
    Starting from time $s=r=0$, where the initial value is both given by $\beta_m^\star(t_{\pi/2})$, since $\varsigma$ is locally Lipschitz, by the uniqueness of the ODE solution, for $s=r$, we have $\beta_m^\star(s) = \beta_m^\star(r)$, i.e., $\beta_m^\star (t_{\pi/2} + \Delta t) = \beta_m^\star (t_{\pi/2} - \Delta t)$ for all $\Delta t\in[0,t_{\pi/2})$.
    Furthermore, by combining Lemma \ref{lem:ode_constant}, the monotonicity of ${\eu D}_m^\star(t)$ and the arguments above, we can show that ${\eu D}_m^\star(t_{\pi/2}-\Delta t)+{\eu D}_m^\star(t_{\pi/2}+\Delta t)=\pi$, which completes the proof.
\end{proof}

\section{Proof of Results for Theoretical Extensions in Section \ref{sec:extention}}

\subsection{Proof of Corollary \ref{cor:lottery_informal}: Phase Lottery Ticket}
\label{ap:lottery_proof}

We first formalize the random multiple frequency initialization as follows.
\begin{assumption}
\label{as:multi_freq_init}
For each neuron $m\in[M]$, the parameters $(\xi_m,\theta_m)$ are initialized as 
	\begin{align*}
		&\theta_m(0)\sim\kappa_{\sf init}\cdot\sqrt{p/2}\cdot\sum_{k=1}^{(p-1)/2}\left(\varrho_{1,k}[1]\cdot b_{2k}+\varrho_{1,k}[2]\cdot b_{2k+1}\right),\\
		&\xi_m(0)\sim\kappa_{\sf init}\cdot\sqrt{p/2}\cdot\sum_{k=1}^{(p-1)/2}\left(\varrho_{2,k}[1]\cdot b_{2k}+\varrho_{2,k}[2]\cdot b_{2k+1}\right),
	\end{align*}
	where $\varrho_{r,k}\iid{\rm Unif}(\mathbb{S}^1)$ for all $k$ and $r\in\{1,2\}$, and $\kappa_{\sf init}>0$ denotes a small initialization scale.
\end{assumption}
This is the natural extension of Assumption \ref{as:initialization} to multiple frequencies, and the arguments in \S\ref{ap:proof_main}, i.e., Lemma \ref{lem:simple_dynamics_approx}, \ref{lem:ode_constant} and \ref{lem:ode_symmetry}, go through with only routine modifications thanks to the neuron decoupling and the orthogonality of frequencies.
We first state the formal version of Corollary \ref{cor:lottery}.

\begin{corollary}[Formal Statement of Corollary \ref{cor:lottery_informal}]\label{cor:lottery}
	Consider a random initialization following Assumption \ref{as:multi_freq_init}, and let $k^\star$ denote the winning frequency given by $k^\star=\min_k\tilde{\eu D}_m^k(0)$.
    For a given $\varepsilon\in(0,1)$, define the dominance time $t_\varepsilon$ as 
    $$
    t_\varepsilon:=\inf\{t\in\RR^+:\max_{k\neq k^\star}\beta_m^k(t)/\beta_m^\star(t)\leq\varepsilon\}.
    $$
    Then, with probability at least $1-\tilde\Theta(p^{-c})$, where $c>0$ satisfying $p\gtrsim c^4\pi^2e^{-2(1-c)}$, it holds that
    $$
    t_\varepsilon\lesssim \frac{\pi^2 p^{-(2c+3)}}{\kappa_{\sf init}}+\frac{(c+1)\log p+\log\frac{1}{1-\varepsilon}}{p\kappa_{\sf init}\cdot\{1-2c^2\pi^2\cdot(\log p/p)^2\}}.
    $$
\end{corollary}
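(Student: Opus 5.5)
The plan is to work entirely with the main flow, as in \S\ref{ap:phase_alignment}. Under Assumption \ref{as:multi_freq_init}, Lemma \ref{lem:central_flow} together with the orthogonality argument of Lemma \ref{lem:simple_dynamics_approx} shows that, for a fixed neuron $m$, each frequency $k$ evolves by its own copy of the three-particle system \eqref{eq:simplified_dynamics_fourier}. The only coupling between frequencies is the shared initial scale $\alpha_m^k(0)=\beta_m^k(0)=\kappa_{\sf init}$.

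Lemma \ref{lem:ode_constant} then applies frequency by frequency. It gives $\alpha_m^k(t)^2=2\beta_m^k(t)^2-\kappa_{\sf init}^2$ and $\sin({\eu D}_m^k(t))\,\beta_m^k(t)\,\alpha_m^k(t)^2=\sin({\eu D}_m^k(0))\,\kappa_{\sf init}^3$. So each frequency reduces to the scalar pair \eqref{eq:beta_simple_dyn}--\eqref{eq:D_simple_dyn}, driven only by its own initial misalignment $\tilde{\eu D}_m^k(0)$. By the mirror symmetry used at the start of the proof of Theorem \ref{thm:formal_phase_alignment}, I may assume every ${\eu D}_m^k(0)\in(0,\pi)$ and measure misalignment by $\tilde{\eu D}$.

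The probabilistic step fixes two events. The initial misalignments ${\eu D}_m^k(0)$ are i.i.d.\ ${\rm Unif}(0,2\pi)$ over $k\in[\frac{p-1}{2}]$. First, I redo the computation of Lemma \ref{lem:init_concen} with $M$ replaced by $(p-1)/2$. This gives, with probability $1-O(p^{-c})$, that the winner satisfies $\cos({\eu D}_m^\star(0))\ge 1-c^2\pi^2(\log p/p)^2$. Second, a union bound over pairs gives a spacing event: every other frequency has $\tilde{\eu D}_m^k(0)\ge \tilde{\eu D}_m^\star(0)+\Omega(p^{-(c+1)})$ with probability $1-\tilde O(p^{-c})$, since each pair is closer than this gap with probability $O(p^{-(c+1)})$. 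The condition $p\gtrsim c^4\pi^2e^{-2(1-c)}$ is what I expect is needed to make the $\arccos$ estimate in Lemma \ref{lem:init_concen} valid at this scale.

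Next come the ODE comparison and the timing. For the winner, $\cos({\eu D}_m^\star(t))\ge\cos({\eu D}_m^\star(0))$ for all $t$, because ${\eu D}_m^\star$ decreases monotonically by \eqref{eq:D_simple_dyn}. Integrating the log-ratio identity from Part II of the proof of Theorem \ref{thm:formal_phase_alignment}, namely $\partial_t\log\frac{\beta-\kappa/\sqrt2}{\beta+\kappa/\sqrt2}=2\sqrt2\kappa p\cos{\eu D}$, gives an explicit lower bound on $\beta_m^\star(t)$. This trajectory blows up by time $T^\star\approx \log(3+2\sqrt2)/(2\sqrt2p\kappa_{\sf init}\cos{\eu D}_m^\star(0))$. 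This is where the factor $\{1-2c^2\pi^2(\log p/p)^2\}^{-1}$ in the bound comes from.

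For any loser $k$, I invoke the ODE comparison lemma \citep{smith1995monotone} on the planar system $(\beta,{\eu D})$. The system is monotone: a larger ${\eu D}$ slows the growth of $\beta$, and a larger $\beta$ speeds the decrease of ${\eu D}$. Hence the trajectory started from the larger misalignment $\tilde{\eu D}_m^k(0)$ is dominated pathwise by the winner's trajectory, and in particular its blow-up time is strictly later. The required margin in the blow-up time comes from the spacing event, and I will quantify it through the first-order sensitivity of the blow-up time in ${\eu D}(0)$. The resulting delay between alignment of the winner and alignment of the runner-up should produce the additive term $\pi^2p^{-(2c+3)}/\kappa_{\sf init}$.

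Finally, I evaluate the ratio $\beta_m^k/\beta_m^\star$ near $T^\star$. Write $1-u$ for the log-ratio argument, with $u\approx \sqrt2\kappa_{\sf init}/\beta$. The ratio is then controlled by $u^\star(t)/u^k(t)$, and each $u$ decays at rate $\asymp p\kappa_{\sf init}\cos{\eu D}$. Requiring $\max_{k\neq k^\star}\beta_m^k/\beta_m^\star\le\varepsilon$ translates into an extra integration time of order $\{(c+1)\log p+\log\frac{1}{1-\varepsilon}\}/(p\kappa_{\sf init}\cos{\eu D}_m^\star(0))$. The $\log p$ here comes from the polynomially small spacing $p^{-(c+1)}$.

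The step I expect to be the main obstacle is this last ratio estimate. All frequencies with ${\eu D}<\pi/2$ eventually blow up, so dominance must be extracted from a separation of blow-up times that is only polynomially small. The pathwise comparison alone gives only $\beta_m^k\le\beta_m^\star$, not a ratio bounded by $\varepsilon$. I would first try to linearize the blow-up time in ${\eu D}(0)$ using the conserved product $C_{\sf prod}$, which expresses $\sin{\eu D}$ in terms of $\beta$ and makes the $\beta$-equation autonomous. I would then bound the derivative of the blow-up time from below. If that proves too delicate, the fallback is a cruder comparison in which the loser's cosine is frozen at its initial value $\cos\tilde{\eu D}_m^k(0)$.
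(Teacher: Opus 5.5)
\textbf{There is a genuine gap: the last step you flag is unresolved, and the plan you sketch for it points the wrong way.} The real obstacle is not a polynomially small separation of blow-up times.

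\emph{Why no extra time is needed.} In the main flow $\beta_m^\star$ explodes at some $T^\star\le \log(3+2\sqrt2)/(2\sqrt2\,p\kappa_{\sf init}\cos\tilde{\eu D}_m^\star(0))$. Suppose every loser has $\beta_m^k(T^\star)<\infty$. Then $\max_{k\neq k^\star}\beta_m^k$ stays bounded on $[0,T^\star]$ while $\beta_m^\star(t)\to\infty$, so the ratio tends to $0$ and $t_\varepsilon<T^\star$ for every $\varepsilon\in(0,1)$. On your event (a), $T^\star$ is already below the second term of the stated bound, up to an absolute constant, because $\log(3+2\sqrt2)/(2\sqrt2)<1<\log p$. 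An additional time of order $(c+1)\log p/(p\kappa_{\sf init})$ would exceed $T^\star$ itself for large $p$, so it cannot be what is needed. Likewise, a larger delay between blow-up times speeds up dominance; it is not where the $\pi^2p^{-(2c+3)}/\kappa_{\sf init}$ term comes from.

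\emph{What is actually missing: strictness $T^k>T^\star$.} You assert it, but the non-strict comparison lemma only gives $\beta_m^k\le\beta_m^\star$, i.e.\ $T^k\ge T^\star$. Your own $C_{\sf prod}$ idea supplies it.
\begin{itemize}
\item For $\tilde{\eu D}_m^\star(0)<\tilde{\eu D}_m^k(0)<\pi/2$, the orbit is $\sin{\eu D}\cdot\beta(2\beta^2-\kappa_{\sf init}^2)=s\kappa_{\sf init}^3$ with $s=\sin\tilde{\eu D}_m^k(0)$. The $\beta$-equation is then autonomous, and the blow-up time
\[
T(s)=\int_{\kappa_{\sf init}}^{\infty}\frac{\mathrm{d}\beta}{p(2\beta^2-\kappa_{\sf init}^2)\sqrt{1-s^2\kappa_{\sf init}^6\beta^{-2}(2\beta^2-\kappa_{\sf init}^2)^{-2}}}
\]
is strictly increasing in $s$.
\item For $\tilde{\eu D}_m^k(0)>\pi/2$, the loser needs a time $\tau_k>0$ to reach $\pi/2$ and arrives with $\beta_m^k(\tau_k)<\kappa_{\sf init}$. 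Comparing that state with the winner's initial state in the cooperative order $(-\beta,{\eu D})$ gives $T^k\ge\tau_k+T^\star$.
\item Ties have probability zero.
\end{itemize}
This also makes your spacing event unnecessary, which matters because its probability estimate is off. A union bound over all $\Theta(p^2)$ pairs, each of probability $O(p^{-(c+1)})$, gives only $O(p^{1-c})$, not $\tilde O(p^{-c})$. Getting $O(p^{-c})$ requires the law of $U_{(2)}-U_{(1)}$, which equals that of $U_{(1)}$; this is what the paper uses.

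\textbf{Comparison with the paper.} Both arguments share the rank-preservation step (Lemma \ref{lem:rank_preserve}). After that, the paper compares only the winner with the runner-up via $\rho_m=\beta_m^\star/\beta_m^\sharp$.
\begin{itemize}
\item It proves $\partial_t\log\frac{\rho_m-1}{\rho_m}\ge 2p\kappa_{\sf init}\cos\tilde{\eu D}_m^\star(0)$.
\item It uses the cosine-gap event $\cos\tilde{\eu D}_m^\star(0)-\cos\tilde{\eu D}_m^\sharp(0)\ge\pi^2p^{-2(c+1)}$ on a short window $t_1\asymp\pi^2p^{-(2c+3)}/\kappa_{\sf init}$ to get $\rho_m(t_1)-1\gtrsim p^{-4(c+1)}$.
\item It then pays $\asymp(c+1)\log p/(p\kappa_{\sf init})$ for the logistic escape from that nearly tied state.
\end{itemize}
That is where both terms of the stated bound originate. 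Your blow-up route, once repaired as above, is shorter and needs neither the cosine-gap nor the spacing event. It gives the sharper $t_\varepsilon\lesssim 1/(p\kappa_{\sf init}\cos\tilde{\eu D}_m^\star(0))$, which shows the $\log p$ factor is an artifact of the ratio argument. In exchange, the paper's route yields an explicit lower bound on $\rho_m(t)$ along the whole trajectory, rather than only locating $t_\varepsilon$ before the explosion.
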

Before delving into the proof, we first establish a key property of the decoupled dynamics under this initialization---order preservation---under the initialization specified in \ref{as:multi_freq_init}.
\begin{lemma}
	\label{lem:rank_preserve}
	Let $\sigma$ be the permutation that sorts the initial phase differences in non-decreasing order:
	$$
	\tilde{\eu D}_m^{\sigma(1)}(0) \le \tilde{\eu D}_m^{\sigma(2)}(0) \le \dots \le \tilde{\eu D}_m^{\sigma\left(\frac{p-1}{2}\right)}(0),
	$$
	where $\tilde{\eu D}_m^k(0) = \min\{{\eu D}_m^k(0), 2\pi - {\eu D}_m^k(0)\}$ represents the shortest circular distance for the initial phase. Under the initialization in Assumption~\ref{as:multi_freq_init}, the rank-ordering of the corresponding magnitudes $\beta_m^k(t)$ is inverted and preserved for all time $t \ge 0$:
	$$
	\beta_m^{\sigma(1)}(t) \ge \beta_m^{\sigma(2)}(t) \ge \dots \ge \beta_m^{\sigma\left(\frac{p-1}{2}\right)}(t).
	$$
\end{lemma}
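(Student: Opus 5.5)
The plan is to work with the decoupled main flow under Assumption~\ref{as:multi_freq_init}. For each frequency $k$, the triple $(\alpha_m^k,\beta_m^k,{\eu D}_m^k)$ obeys the same autonomous system \eqref{eq:simplified_dynamics_fourier}. It starts from the common magnitudes $\alpha_m^k(0)=\beta_m^k(0)=\kappa_{\sf init}$ and differs only in ${\eu D}_m^k(0)$. Lemma~\ref{lem:ode_constant} applies frequency-wise, which gives two facts:
\begin{itemize}
\item the difference $\alpha_m^k(t)^2=2\beta_m^k(t)^2-\kappa_{\sf init}^2$ holds for every $k$;
\item $\sin({\eu D}_m^k(t))\,\beta_m^k(t)(2\beta_m^k(t)^2-\kappa_{\sf init}^2)=\sin({\eu D}_m^k(0))\kappa_{\sf init}^3$.
\end{itemize}
Using the reflection symmetry noted in the proof of Theorem~\ref{thm:formal_phase_alignment}, I replace ${\eu D}_m^k$ by $\tilde{\eu D}_m^k\in(0,\pi)$, since $\beta_m^k$ is invariant under ${\eu D}\mapsto 2\pi-{\eu D}$. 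Each frequency is then described by the planar autonomous system
\[
\partial_t\beta=p(2\beta^2-\kappa_{\sf init}^2)\cos\tilde{\eu D},\qquad \partial_t\tilde{\eu D}=-p\bigl(6\beta-\kappa_{\sf init}^2/\beta\bigr)\sin\tilde{\eu D},
\]
from \eqref{eq:beta_simple_dyn}--\eqref{eq:D_simple_dyn}. The point is that all frequencies are trajectories of one and the same vector field, started on the line $\{\beta=\kappa_{\sf init}\}$ at different heights $\tilde{\eu D}$.

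The core step is an order-preservation claim. If $\tilde{\eu D}^a(0)\le\tilde{\eu D}^b(0)$, I want to show $\beta^a(t)\ge\beta^b(t)$ and $\tilde{\eu D}^a(t)\le\tilde{\eu D}^b(t)$ for all $t$. I would get this from the Kamke/Müller comparison theorem for cooperative systems \citep{smith1995monotone}, in the coordinates $(\beta,-\tilde{\eu D})$. The system is cooperative when two off-diagonal Jacobian entries are nonnegative:
\begin{itemize}
\item $\partial_{(-\tilde{\eu D})}\dot\beta=p(2\beta^2-\kappa_{\sf init}^2)\sin\tilde{\eu D}\ge0$, which holds because $\beta\ge\kappa_{\sf init}/\sqrt2$ and $\tilde{\eu D}\in(0,\pi)$;
\item $\partial_\beta(-\dot{\tilde{\eu D}})=p(6+\kappa_{\sf init}^2/\beta^2)\sin\tilde{\eu D}\ge0$.
\end{itemize}
So the flow preserves the product order $\beta^a\ge\beta^b$, $-\tilde{\eu D}^a\ge-\tilde{\eu D}^b$. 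The initial data are ordered in exactly this way, since the $\beta$'s are equal and the $\tilde{\eu D}$'s are ordered. The claimed ranking for every $t\ge0$ then follows by chaining over the sorted permutation $\sigma$.

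I would also give a self-contained backup argument via the conserved quantity. Trajectories of a planar autonomous system cannot cross, and two trajectories meeting at a point coincide. If the pairs $(\beta^a,\tilde{\eu D}^a)$ and $(\beta^b,\tilde{\eu D}^b)$ ever swapped order, then by continuity there would be a first time at which one coordinate coincides. At that time I can compare the $\tilde{\eu D}$'s using $C_{\sf prod}^a\ge C_{\sf prod}^b$ whenever $\sin\tilde{\eu D}^a(0)\ge\sin\tilde{\eu D}^b(0)$. This requires a careful split into the cases $\tilde{\eu D}\lessgtr\pi/2$, using Lemma~\ref{lem:ode_symmetry}.

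The main obstacle is making the comparison rigorous on the whole half-line. Two points need care:
\begin{itemize}
\item Cooperativity must hold on an open convex domain containing all trajectories. I would take $\{\beta>\kappa_{\sf init}/\sqrt2,\ \tilde{\eu D}\in(0,\pi)\}$, which is invariant by Lemma~\ref{lem:ode_constant}, because $\sin\tilde{\eu D}$ never vanishes and $\beta$ stays above $\kappa_{\sf init}/\sqrt2$.
\item The boundary case $\beta=\kappa_{\sf init}/\sqrt2$ makes the first Jacobian entry vanish, but this only weakens the order to non-strict, which is all the statement claims.
\end{itemize}
Ties in $\tilde{\eu D}_m^k(0)$ give identical trajectories, so they are harmless.
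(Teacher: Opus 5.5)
Your argument is essentially the paper's proof. The paper also reduces to $\tilde{\eu D}_m^k\in(0,\pi)$ using the equivariance of the reduced $(\beta,{\eu D})$-system under ${\eu D}\mapsto 2\pi-{\eu D}$. It then applies the Kamke comparison lemma to the cooperative system in the coordinates $(-\beta,{\eu D})$, which is equivalent to your $(\beta,-\tilde{\eu D})$.

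Your off-diagonal Jacobian entries are stated more carefully than the paper's: you keep the factors $p(2\beta^2-\kappa_{\sf init}^2)$ and $\sin\tilde{\eu D}$, and you restrict to the invariant set $\{\beta>\kappa_{\sf init}/\sqrt2\}$ where they are nonnegative. One caveat, shared with the paper: each $\beta_m^k$ blows up in finite time under this flow, so ``for all $t\ge0$'' should be read as ``on the common interval of existence''.
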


\begin{proof}[Proof of Lemma \ref{lem:rank_preserve}]
	Please refer to \S\ref{ap:comparison_proof} for a detailed proof.
\end{proof}

Lemma \ref{lem:rank_preserve} states that, when neurons are decoupled and each frequency is initialized at the same scale $\kappa_{\sf init}>0$, the ordering of frequencies by magnitude $\beta_m^k$'s within each neuron remains fixed throughout the gradient flow, with larger magnitudes corresponding to smaller initial phase difference.
Now we are ready to present the proof of Corollary \ref{cor:lottery}.

\begin{proof}[Proof of Corollary \ref{cor:lottery}]
	As specified in Assumption \ref{as:multi_freq_init}, for all $m\in[M]$, we initialize $\varrho_{r,k}\iid{\rm Unif}(\SSS^1)$ for all $r\in\{1,2\}$ and $k\in[\frac{p-1}{2}]$.
	Thanks to the orthogonality among frequencies, each frequency evolves independently, so Lemmas \ref{lem:simple_dynamics_approx}, \ref{lem:ode_constant} and \ref{lem:ode_symmetry} apply to every frequency $k$, not just the feature frequency $k^\star$. 
	For fixed neuron $m$, by defining $\tilde{\eu D}_m^k(0) = \min\{{\eu D}_m^k(0), 2\pi - {\eu D}_m^k(0)\}$, we have
    \begin{subequations}
	\begin{align}
    &\partial_t \beta_m^k(t)= p\cdot (2\beta_m^k(t)^2-\kappa_{\sf init}^2)\cdot \cos(\tilde{\eu D}_m^k(t)),\label{eq:lottery_beta_dyn}\\
    &\partial_t\tilde{\eu D}_m^k(t)= -p\cdot\big(6\beta_m^k(t)-\kappa_{\sf init}^2/\beta_m^k(t)\big)\cdot \sin\big(\tilde{\eu D}_m^k(t)\big).
    \label{eq:lottery_diff_dyn}	
\end{align}
\end{subequations}
\paragraph{Step 1: Deriving Winning Frequency and Initial Phase Gap.}
By Lemma \ref{lem:rank_preserve}, the dynamics preserves the ordering of $\tilde{\eu D}_m^k$'s and $\beta_m^k$'s throughout the gradient flow.
Specifically, at any time $t\in\RR^+$, the ordering remains unchanged.  
Thus, the lottery ticket winner, i.e., frequency $k$ such that $\beta_m^k(t)\geq\beta_m^\tau(t)$ for all $\tau\neq k$, is given by  $k^\star=\argmin_k\tilde{\eu D}_m^k(0)$.

To demystify the dominance phenomenon, it suffices to focus on the growth of the magnitude of the winning frequency $k^\star$ and the second-dominant frequency $k^\sharp=\argmin_{k\neq k^\star}\tilde{\eu D}_m^k(0)$.
Under the initialization as specified in Assumption \ref{as:multi_freq_init}, with probability greater than $1-\tilde\Theta(p^{-c})$ for some constant $c\in(0,1)$, we have the following good initialization:
\begin{align}
\cE_{\sf init}&=\cE_{\sf init}^1\cap\cE_{\sf init}^2\cap\cE_{\sf init}^3\notag\\
&:=\big\{\tilde{\eu D}_m^\sharp(0)<{\pi}/{2}\big\}\cap\big\{\cos(\tilde{\eu D}_m^\star(0))\geq\cos(\tilde{\eu D}_m^\sharp(0))+\pi^2p^{-2(c+1)}\}\notag\\
&\qquad\cap\{\cos(\tilde{\eu D}_m^\star(0))\leq1-2c^2\pi^2\cdot(\log p/p)^2\}.
\label{eq:good_init}
\end{align}
This is because $\tilde{\eu D}_m^k(0)\iid{\rm Unif}(0,\pi)$ based on a similar argument in Lemma \ref{lem:init_concen}, and thus $\tilde{\eu D}_m^\star(0)$ and $\tilde{\eu D}_m^\sharp(0)$ are respectively the first- and the second- order statistics of  $\frac{p-1}{2}$ i.i.d copies of ${\rm Unif}(0,\pi)$, denoted by $U_{(i)}$'s.
Notice that
\begin{align}
\PP\big(\cE_{\sf init}^{1,c}\big)&=\PP\big(\forall i,~U_{(i)}\geq\pi/2\big)+\PP\big(\forall i>1,~U_{(i)}\geq\pi/2,~U_{(1)}\leq\pi/2\big)=(p+1)\cdot2^{-\frac{p+1}{2}}\lesssim p^{-c}.
\label{eq:prob_good_1}
\end{align}
Furthermore, if $p\gtrsim c^4\pi^2e^{-2(1-c)}$, it holds that
\begin{align}
	\PP\big(\cE_{\sf init}^{2,c}\big)&\leq\PP\big(\{\cos(U_{(1)})\leq\cos(U_{(2)})+\pi^2p^{-2(c+1)}\}\cap\cE_{\sf init}^1\big)+\PP\big(\cE_{\sf init}^{1,c}\big)\notag\\
	&\lesssim\PP\big(\{U_{(2)}^2-U_{(1)}^2-U_{(2)}^4/12\leq 2\pi^2p^{-2(c+1)}\}\cap\cE_{\sf init}^1\big)+p^{-c}\notag\\
	&\leq\PP\big(\{U_{(2)}^2-U_{(1)}^2\leq 2\pi^2p^{-2(c+1)}+2(c\pi/p \cdot\log p)^4\}\cap\cE_{\sf init}^1\big)\notag\\
	&\qquad+\PP\big(\{U_{(2)}^4\geq 24\cdot(c\pi/p \cdot\log p)^4\}\cap\cE_{\sf init}^1\big)+p^{-c}\notag\\
	&\leq\PP(U_{(2)}^2-U_{(1)}^2\leq8\pi^2p^{-2(c+1)})+\PP(U_{(2)}\geq 2c\pi/p\cdot\log p)+p^{-c},
	\label{eq:prob_good_2}
\end{align}
where the second inequality uses $1-x^2/2\leq\cos(x)\leq 1-x^2/2+x^4/24$ for $x\in(0,\pi/2)$. 
Moreover, to bound the RHS of \eqref{eq:prob_good_2}, we can show that
\begin{align}
	\PP(U_{(2)}^2-U_{(1)}^2\leq 8\pi^2p^{-2(c+1)})&\leq\PP(U_{(1)}\cdot (U_{(2)}-U_{(1)})\leq4\pi^2p^{-2(c+1)})\notag\\
	&\leq\PP(U_{(1)}\leq 2\pi p^{-(c+1)})+\PP(U_{(2)}-U_{(1)}\leq 2\pi p^{-(c+1)})\notag\\
	&=2-2(1-2p^{-(c+1)})^{\frac{p-1}{2}}\lesssim p^{-c},
    \label{eq:prob_good_2_part1}
\end{align}
where the second inequality follows $U_{(1)}\overset{d}{=}U_{(2)}-U_{(1)}$.
Furthermore, it holds that
\begin{align}
	\PP(U_{(2)}\geq 2c\pi/p\cdot\log p)&=(1-2c/p\cdot\log p)^{\frac{p-1}{2}}+c(p-1)/p\cdot\log p\cdot(1-2c/p\cdot\log p)^{\frac{p-3}{2}}\notag\\
	&\leq(1+c\log p)\cdot(1-2c/p\cdot\log p)^{\frac{p-3}{2}}\lesssim p^{-c}\log p.
    \label{eq:prob_good_2_part2}
\end{align}
By combining \eqref{eq:prob_good_2}, \eqref{eq:prob_good_2_part1} and \eqref{eq:prob_good_2_part2}, we have $\PP\big(\cE_{\sf init}^{2,c}\big)\lesssim p^{-c}\log p$.
Similarly, we can derive that
\begin{align}
    \PP\big(\cE_{\sf init}^{3,c}\big)&=\PP\big(\cos(U_{(1)})\leq1-2c^2\pi^2\cdot(\log p/p)^2\big)\notag\\
    &\leq\PP(U_{(1)}\geq 2c\pi/p\cdot\log p)=(1-2c/p\cdot\log p)^{\frac{p-1}{2}}\lesssim p^{-c},
    \label{eq:prob_good_3}
\end{align}
where the inequality also uses $\cos(x)\geq1-x^2/2$ for $x\in(0,\pi/2)$
Based on \eqref{eq:prob_good_1},\eqref{eq:prob_good_2} and \eqref{eq:prob_good_3}, the good initialization event $\cE_{\sf init}$ holds with a probability of at least $1-\Theta(p^{-c}\log p)$.
In the subsequent analysis, we assume that this event occurs.

\paragraph{Step 2: Growth of Gap between Winning Frequency and Others.}

Based on \eqref{eq:lottery_beta_dyn}, the dynamics for the log-magnitude follows
$$
\partial_t\log \beta_m^k(t)=\frac{\partial_t\beta_m^k(t)}{\beta_m^k(t)}=p\cdot(2\beta_m^k(t)-\kappa_{\sf init}^2/\beta_m^k(t))\cdot\cos(\tilde{\eu D}_m^k(t)).
$$
To compare the winning frequency ($\star$) against the runner-up ($\sharp$), we examine the dynamics of their log-ratio $\partial_t\log\frac{\beta_m^\star(t)}{\beta_m^\sharp(t)}$, which measures the exponential rate at which the winner pulls ahead:
\begin{align}
	\partial_t\log\frac{\beta_m^\star(t)}{\beta_m^\sharp(t)}&=p\cdot (2\beta_m^\star(t)-\kappa_{\sf init}^2/\beta_m^\star(t))\cdot \cos(\tilde{\eu D}_m^\star(t))-p\cdot (2\beta_m^\sharp(t)-\kappa_{\sf init}^2/\beta_m^\sharp(t))\cdot \cos(\tilde{\eu D}_m^\sharp(t))\notag\\
	&=p\cdot (\beta_m^\star(t)-\beta_m^\sharp(t))\cdot\{2+\kappa_{\sf init}^2/(\beta_m^\star(t)\cdot\beta_m^\sharp(t))\}\cdot \cos(\tilde{\eu D}_m^\star(t))\notag\\
	&\qquad+p\cdot(2\beta_m^\sharp(t)-\kappa_{\sf init}^2/\beta_m^\sharp(t))\cdot \{\cos(\tilde{\eu D}_m^\star(t))-\cos(\tilde{\eu D}_m^\sharp(t))\}\notag\\
	&\geq2p\cdot \cos((\tilde{\eu D}_m^\star(0))\cdot (\beta_m^\star(t)-\beta_m^\sharp(t))+p\cdot \kappa_{\sf init}\cdot \{\cos(\tilde{\eu D}_m^\star(t))-\cos(\tilde{\eu D}_m^\sharp(t))\}.
    \label{eq:mult_rato_dyn}
\end{align}
Here, we use
(i) $\beta_m^\star(t)\geq\beta_m^\sharp(t)$ and $\tilde{\eu D}_m^\star(t)\leq\tilde{\eu D}_m^\sharp(t)$ for all $t\in\RR^+$ based on the order preservation property in Lemma \ref{lem:rank_preserve}, and (ii) under the good initialization $\cE_{\sf init}$ where $\tilde{\eu D}_m^\star(0),\tilde{\eu D}_m^\sharp(0)\leq\frac{\pi}{2}$, we have $\partial_t\tilde{\eu D}_m^\diamond(t)<0$ and $\partial_t\beta_m^\diamond(t)>0$ for all $(\diamond,t)\in\{\star,\sharp\}\cup\RR^+$. 
Therefore, we have $\cos(\tilde{\eu D}_m^\star(t))\geq\cos(\tilde{\eu D}_m^\star(0))$, $\beta_m^\sharp(t)\geq\beta_m^\sharp(0)=\kappa_{\sf init}$ and $2\beta_m^\sharp(t)-\kappa_{\sf init}^2/\beta_m^\sharp(t)\geq2\beta_m^\sharp(0)-\kappa_{\sf init}^2/\beta_m^\sharp(0)=\kappa_{\sf init}$
under the initialization in Assumption \ref{as:multi_freq_init}. 
Let $\rho_m(t)=\beta_m^\star(t)/\beta_m^\sharp(t)$. Following \eqref{eq:mult_rato_dyn}, we have
\begin{align*}
	\partial_t\log\rho_m(t)
    \geq2p\cdot\kappa_{\sf init}\cdot \cos(\tilde{\eu D}_m^\star(0))\cdot\left(\rho_m(t)-1\right)\vee p\cdot \kappa_{\sf init}\cdot \{\cos(\tilde{\eu D}_m^\star(t))-\cos(\tilde{\eu D}_m^\sharp(t))\},
\end{align*}
Based on the first term in the right-hand side, a simple calculation shows that the dynamics satisfy:
$$
\partial_t\log\bigg(\frac{\rho_m(t)-1}{\rho_m(t)}\bigg)\geq2p\cdot\kappa_{\sf init}\cdot \cos(\tilde{\eu D}_m^\star(0))>0.
$$
Thus, we can integrate this result over any interval $[s,t]$ to obtain a lower bound: 
\begin{align}
	&\rho_m(t)\geq\{1+(1/\rho_m(s)-1)\cdot\exp(2p\cdot \cos(\tilde{\eu D}_m^\star(0))\cdot\kappa_{\sf init}\cdot (t-s))\}^{-1},\qquad\forall s\in(0,t].
    \label{eq:ratio_growth}
\end{align}
Following this, once the ratio $\rho_m(t)$ is larger than $1$, the ratio $\rho_m(t)$ surpasses 1, it begins to grow super-exponentially, accelerating rapidly towards infinity.
Motivated by this dynamics, our analysis proceeds in two stages: first, we show that $\rho_m(t)$ does not get stuck at the initial stationary point $\rho_m(t)\equiv1$, and second, we quantify its rate of growth using \eqref{eq:ratio_growth}.

\paragraph{Step 2.1. Initial Growth of the Ratio Beyond Unity.}
Consider a short initial time interval $(0,t_1]$, during which the model parameters remain close to their initial values while the ratio $\rho_m(t)$ quickly exceeds $1$.
Based on \eqref{eq:lottery_diff_dyn}, we have
\begin{align}
	&|\cos(\tilde{\eu D}_m^\star(t))-\cos(\tilde{\eu D}_m^\sharp(t))-\cos(\tilde{\eu D}_m^\star(0))+\cos(\tilde{\eu D}_m^\sharp(0))|\notag\\
    &\qquad\leq 2\max_{\diamond\in\{\star,\sharp\}}|\cos(\tilde{\eu D}_m^\diamond(t))-\cos(\tilde{\eu D}_m^\diamond(0))|\notag\\
    &\qquad\leq2\max_{\diamond\in\{\star,\sharp\}}\cos(\tilde{\eu D}_m^\diamond(t))=2\max_{\diamond\in\{\star,\sharp\}}\int_0^t\partial_s\cos(\tilde{\eu D}_m^\diamond(s))\rd s\notag\\
	&\qquad=2p\cdot\max_{\diamond\in\{\star,\sharp\}}\int_0^t\big(6\beta_m^\diamond(s)-\kappa_{\sf init}^2/\beta_m^\diamond(s)\big)\cdot \sin(\tilde{\eu D}_m^\diamond(s))^2\rd s\notag\\
    &\qquad\leq6p\cdot\max_{\diamond\in\{\star,\sharp\}}\int_0^t\beta_m^\diamond(s)\rd s\leq6pt\cdot\max_{\diamond\in\{\star,\sharp\}}\max_{0\leq s\leq t}\beta_m^\diamond(s)=6pt\cdot\beta_m^\star(t),
    \label{eq:change_cos_gap_ub}
\end{align}
where the last inequality results from $\beta_m^\star(s)\leq \beta_m^\star(t)$ for all $s\in(0,t]$ and the rank preservation property, i.e., $\beta_m^\diamond(t)\leq\beta_m^\star(t)$ at any time $t$, as shown in Lemma \ref{lem:rank_preserve}.
Following \eqref{eq:lottery_beta_dyn}, we get
\begin{align}
	\partial_t \beta_m^\star(t)\leq p\cdot (2\beta_m^\star(t)^2-\kappa_{\sf init}^2)\Longrightarrow\beta_m^\star(t)\leq\kappa_{\sf init}/\sqrt{2}\cdot\coth(-\sqrt{2}p\kappa_{\sf init}\cdot t-\iota_1),\quad \forall t\in\RR^+,
    \label{eq:beta_ub}
\end{align}
where we denote $\iota_1={\rm arccoth}(\sqrt{2})$.
By choosing $c_g\in(0,1) $, we define
$$
t_1:=\inf\big\{s\in(0,t]:3\sqrt{2}p\kappa_{\sf init}\cdot s\cdot\coth(-\sqrt{2}p\kappa_{\sf init}\cdot s-\iota_1)>c_g\cdot\pi^2p^{-2(c+1)}\big\}.
$$
Here, we choose a sufficiently small $c_g$ to ensure that $t_1$ is well-defined and finite before the system explodes.
This choice makes $t_1$ correspondingly small and the following asymptotic result holds:
\begin{align}
\coth(-\sqrt{2}p\kappa_{\sf init}\cdot t_1-\iota_1)\asymp \sqrt{2}+p\kappa_{\sf init}\cdot t_1\Longrightarrow t_1\asymp c_g\cdot{\pi^2 p^{-(2c+3)}}/{\kappa_{\sf init}}.
\label{eq:time_t1_scale}
\end{align}
Recall from \eqref{eq:good_init} that under the good initialization $\cE_{\sf init}$, the initial cosine gap $\cos({\eu D}_m^\star(0))-\cos({\eu D}_m^\sharp(0))$ is lower bounded by $\pi^2p^{-2(c+1)}$. 
By combining \eqref{eq:change_cos_gap_ub}, \eqref{eq:beta_ub} and definition of $t_1$, we have
\begin{align}
&\cos(\tilde{\eu D}_m^\star(t))-\cos(\tilde{\eu D}_m^\sharp(t))\notag\\
&\qquad\geq\cos(\tilde{\eu D}_m^\star(0))-\cos(\tilde{\eu D}_m^\sharp(0))-|\cos(\tilde{\eu D}_m^\star(t))-\cos(\tilde{\eu D}_m^\sharp(t))-\cos(\tilde{\eu D}_m^\star(0))+\cos(\tilde{\eu D}_m^\sharp(0))|\notag\\
&\qquad\geq\pi^2p^{-2(c+1)}-6p\cdot\sup_{t\in(0,t_1]}t\cdot\beta_m^\star(t)\geq(1-c_g)\cdot\pi^2p^{-2(c+1)},
\label{eq:cos_gap_after_short_time_lb}
\end{align}
for all $t\in(0,t_1]$. Building upon \eqref{eq:time_t1_scale} and \eqref{eq:cos_gap_after_short_time_lb}, we can show that
\begin{align*}
\log\rho_m(t_1)&=\log\rho_m(0)+\int_0^{t_1}\cos({\eu D}_m^\star(s))-\cos({\eu D}_m^\sharp(s))\rd s\\
&\gtrsim c_g(1-c_g)\cdot p\kappa_{\sf init}\cdot t_1\cdot\pi^2p^{-2(c+1)}\asymp\pi^4p^{-4(c+1)},
\end{align*}
and thus $\rho_m(t_1)\gtrsim\exp(1+\pi^4p^{-4(c+1)})\asymp1+\pi^4p^{-4(c+1)}$ for sufficiently large $p$.

\paragraph{Step 2.2. Super-exponential Growth.}
Let $\varepsilon>0$ be the dominance threshold. 
We now derive the time $t_2$ required for the lower bound of the ratio to exceed this threshold, i.e., $\rho_m(t_2)>1/\varepsilon$, such that $t_\varepsilon\leq t_2b$.
Our starting point is the state at time $t_1$, after which we have $\rho_m(t_1)\asymp1+\pi^4p^{-4(c+1)}$.
Following \eqref{eq:ratio_growth}, we have
\begin{align*}
    \rho_m(t)^{-1}&\leq1+(1/\rho_m(t_1)-1)\cdot\exp(2p\cdot \cos(\tilde{\eu D}_m^\star(0))\cdot\kappa_{\sf init}\cdot (t-t_1))\\
    &\lesssim1-\pi^4p^{-4(c+1)}\cdot\exp(2p\cdot \{1-2c^2\pi^2\cdot(\log p/p)^2\}\cdot\kappa_{\sf init}\cdot (t-t_1)),
\end{align*}
where the last inequality results from $1/\rho_m(t_1)-1\asymp1-\rho_m(t_1)$ given $\rho_m(t_1)$ is close to 1, and the good initialization $\cos(\tilde{\eu D}_m^\star(0))\geq1-2c^2\pi^2\cdot(\log p/p)^2$ in \eqref{eq:good_init}.
By choosing
$$
t_2= t_1+\frac{4(c+1)\log p+\log\frac{1}{1-\varepsilon}-4\log\pi}{2p\kappa_{\sf init}\cdot\{1-2c^2\pi^2\cdot(\log p/p)^2\}}\asymp \frac{\pi^2 p^{-(2c+3)}}{\kappa_{\sf init}}+\frac{(c+1)\log p+\log\frac{1}{1-\varepsilon}}{p\kappa_{\sf init}\cdot\{1-2c^2\pi^2\cdot(\log p/p)^2\}},
$$
we can guarantee that $\rho_m(t_\varepsilon)^{-1}<\varepsilon$, which completes the proof.
\end{proof}

\subsubsection{Proof of Auxiliary Lemma \ref{lem:rank_preserve}}\label{ap:comparison_proof}

We begin by recalling the foundational results for a celebrated class of dynamical systems--known as cooperative systems--which enjoy a useful rank-preservation property \citep[e.g.,][]{smith1995monotone}.
Before stating this formally, let us give a precise definition.

\begin{definition}[Cooperative System]
	Consider a $p$-convex set $\cS\subset\RR^d$ such that $tx+(1-t)xy\in\cS$ for all $t\in[0,1]$ whenever $x,y\in\cS$ and $x_-\leq x_+$.
	Suppose $f:\cS\mapsto\cS$ is continuously differentiable.
	 The dynamical system, defined by $\partial_tx_t=f(x_t)$, is called cooperative if $\tfrac{\partial f_i}{\partial x_j}(x)\geq0$ for all $i\neq j$.
\end{definition}

In other words, a cooperative system's Jacobian has \emph{nonnegative off-diagonal entries}, so increasing any coordinate of the state cannot decrease another in the next iteration.
With this definition in hand, we can now state the key \emph{monotonicity} property of cooperative systems.

\begin{lemma}
\label{lem:ode_comparison}
    Consider a cooperative system $\partial_t x_t=f(x_t)$, and write $x\leq y$ for $x,y\in\RR^d$ if $x_i\leq y_i$ for all $i\in\RR^d$.
    Given two initial values $x_0^1\leq x_0^2$, then we have $x_t^1\leq x_t^2$ at all times $t\in\RR^+$.
\end{lemma}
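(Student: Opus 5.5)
The plan is the classical Kamke–Müller argument. First prove a \emph{strict} comparison for a slightly perturbed system, then pass to the limit using continuous dependence of ODE solutions on data. Throughout, fix a horizon $T>0$ on which both trajectories $x^1_t,x^2_t$ exist and stay in $\cS$; since $T$ is arbitrary, this gives the claim for all $t\in\RR^+$. Write $\one\in\RR^d$ for the all-ones vector and $x<y$ for strict coordinatewise inequality.

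\textbf{Step 1 (order-monotonicity of $f$).} Let $x\le y$ in $\cS$ with $x_i=y_i$ for some coordinate $i$. By the $p$-convexity of $\cS$ the segment $x+s(y-x)$, $s\in[0,1]$, lies in $\cS$, and $f$ is $C^1$. The fundamental theorem of calculus then gives
\[
f_i(y)-f_i(x)=\int_0^1\sum_{j=1}^d \frac{\partial f_i}{\partial x_j}\big(x+s(y-x)\big)\,(y_j-x_j)\,\rd s .
\]
The diagonal term $j=i$ vanishes because $y_i-x_i=0$. Every off-diagonal term is nonnegative by cooperativity and $y_j\ge x_j$. Hence $f_i(y)\ge f_i(x)$. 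This is the only place the cooperative structure enters.

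\textbf{Step 2 (strict comparison for a perturbed flow).} For $\varepsilon>0$, let $y^\varepsilon$ solve $\partial_t y^\varepsilon_t=f(y^\varepsilon_t)+\varepsilon\one$ with $y^\varepsilon_0=x^2_0+\varepsilon\one$. Then $x^1_0<y^\varepsilon_0$. Suppose strictness fails on $[0,T]$, and let $\tau>0$ be the first time some coordinate $i$ satisfies $x^1_{\tau,i}=y^\varepsilon_{\tau,i}$. At $\tau$ we still have $x^1_\tau\le y^\varepsilon_\tau$, so Step 1 yields
\[
\partial_t\big(y^\varepsilon_{t,i}-x^1_{t,i}\big)\big|_{t=\tau}=f_i(y^\varepsilon_\tau)-f_i(x^1_\tau)+\varepsilon\ge\varepsilon>0 .
\]
On the other hand, the difference is positive on $[0,\tau)$ and zero at $\tau$, so its derivative at $\tau$ must be $\le 0$. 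This contradiction shows $x^1_t<y^\varepsilon_t$ for all $t\in[0,T]$.

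\textbf{Step 3 (limit $\varepsilon\to0$).} Since $f$ is $C^1$, it is locally Lipschitz. Standard continuous dependence on initial data and parameters (a Gronwall argument on the compact tube around $x^2_{[0,T]}$) gives $y^\varepsilon_t\to x^2_t$ uniformly on $[0,T]$. Letting $\varepsilon\to0$ in $x^1_t<y^\varepsilon_t$ gives $x^1_t\le x^2_t$.

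I expect the main technical obstacle to be the bookkeeping rather than the idea. One must ensure that the perturbed trajectory $y^\varepsilon$ remains in $\cS$ and exists on $[0,T]$ for small $\varepsilon$, so that Steps 1 and 3 apply. My first choice is to work on a compact neighborhood of the trajectory $x^2_{[0,T]}$ inside the domain and take $\varepsilon$ small enough that $y^\varepsilon$ stays in it. If the boundary of $\cS$ causes trouble, I would instead extend $f$ to an open neighborhood in a $C^1$, cooperative way. This step is all that is needed for Lemma~\ref{lem:rank_preserve}, where the relevant system (\eqref{eq:lottery_beta_dyn}--\eqref{eq:lottery_diff_dyn} in suitable coordinates) is smooth on the region visited by the flow.
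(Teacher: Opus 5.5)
Your argument is correct. The paper gives no proof of its own here; it defers to Kamke (1932) and Hirsch (1982), and your $\varepsilon$-perturbation, first-touching-time and continuous-dependence argument is the classical Kamke--M\"uller proof behind those citations. The domain bookkeeping you flag is handled in that literature by taking the p-convex domain to be open, which also covers the use in Lemma~\ref{lem:rank_preserve}: there the trajectories stay in the open box $\{\beta>\kappa_{\sf init}/\sqrt{2}\}\times(0,\pi)$, on which the field in the transformed coordinates is cooperative.
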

\begin{proof}[Proof of Lemma \ref{lem:ode_comparison}]
    Please refer to \cite{kamke1932theorie,hirsch1982systems} for a detailed proof.
\end{proof}

In what follows, we prove Lemma \ref{lem:rank_preserve}, which is a direct application of Lemma \ref{lem:ode_comparison}.

\begin{proof}[Proof of Lemma \ref{lem:rank_preserve}]
Recall that, by Lemmas \ref{lem:ode_constant} and \ref{lem:ode_symmetry}, together with the orthogonality of the frequency basis, for every $k\in[\frac{p-1}{2}]$, the dynamical system is given by \eqref{eq:lottery_beta_dyn} and \eqref{eq:lottery_diff_dyn}
with initial condition $\beta_m^k(0)=\kappa_{\mathrm{init}}$ for every frequency $k$.

We first show that the evolution of ${\eu D}_m^k(t)$ consistently shares the symmetric trajectory at any time $t$ if initialized symmetrically.
Let $x(t) = (\beta_m^k(t),{\eu D}_m^k(t))$ and denote by $\varsigma(x(t))$ right-hand side of \eqref{eq:lottery_beta_dyn}, \eqref{eq:lottery_diff_dyn}, such that $\partial_tx(t)=\varsigma(x(t))$.
Define the involution $I(\beta,{\eu D}) = (\beta,\,2\pi - {\eu D})$ with its Jacobian following $\rd I\equiv\diag(1,-1)$.
A direct calculation shows that
$$
\varsigma\circ I(\beta_m^k(t),{\eu D}_m^k(t))-\rd I\cdot\varsigma(\beta_m^k(t),{\eu D}_m^k(t))\equiv0,
$$
i.e., the system is equivariant under $I$.
By uniqueness of solutions, the solution with initial $x(0)=(\beta_m^k(0),2\pi-{\eu D}_m^k(0))$ satisfies $x(t)=I\bigl(\beta_m^k(t),{\eu D}_m^k(t)\bigr)$, so the two trajectories remain symmetric.

Hence, it suffices to consider the dynamics with standardized initialization $\min\{{\eu D}_m^k(0),2\pi-{\eu D}_m^k(0)\}\in(0,\pi]$.
Following a similar argument in Lemma \ref{lem:ode_constant}, under the standardized initialization, we have ${\eu D}_m^k(t)\in(0,\pi)$ at all time $t$.
To verify cooperativeness, we introduce $\tilde\beta_m^k=-\beta_m^k$ and rewrite the dynamics in the new coordinates $(\bar\beta_m^k,{\eu D}_m^k)$.
From \eqref{eq:lottery_beta_dyn} and \eqref{eq:lottery_diff_dyn} one obtains
\begin{align*}
    &\partial_t \tilde\beta_m^k(t)=-p\cdot (2\tilde\beta_m^k(t)^2-\kappa_{\sf init}^2)\cdot \cos({\eu D}_m^k(t)):=\varsigma_1(\beta_m^k(t),{\eu D}_m^k(t)),\\
    &\partial_t{\eu D}_m^k(t)= p\cdot\big(6\tilde\beta_m^k(t)-\kappa_{\sf init}^2/\tilde\beta_m^k(t)\big)\cdot \sin\big({\eu D}_m^k(t)\big):=\varsigma_2(\beta_m^k(t),{\eu D}_m^k(t)),
\end{align*}
and it is easy to check that the vector field is cooperative by
$$
   \frac{\partial\varsigma_1}{\partial{\eu D}_m^k}=\sin({\eu D}_m^k(t))>0,\quad-\frac{\partial\varsigma_2}{\partial\tilde\beta_m^k}=p\cdot\big(6+\kappa_{\sf init}^2/\tilde\beta_m^k(t)^2\big)>0.
$$
Thus, $(-\beta_m^k,{\eu D}_m^k)$ is cooperative, and by Lemma \ref{lem:ode_comparison}, it preserves the initial ordering.
Since $\beta_m^k(0)=\kappa_{\sf init}$ for all $k$ and phase difference ${\eu D}_m^k(0)$'s are distinct, it follows that
$$
~{\eu D}_m^k(0)\leq{\eu D}_m^\tau(0)\implies\forall t\in\RR^+,~\tilde\beta_m^k(t)\leq\tilde\beta_m^\tau(t)\implies\forall t\in\RR^+,~\beta_m^k(t)\geq\beta_m^\tau(t),
$$
for every pair $k,\tau\in[\frac{p-1}{2}]$, which completes the proof.
\end{proof}

\subsection{Proof of Proposition \ref{prop:relu_dyn}: Dynamics of ReLU Activation}\label{ap:proof_relu_dyn}

\begin{proof}[Proof of Proposition \ref{prop:relu_dyn}]
We begin by recalling from \S\ref{ap:gradient_compute} that, for each fixed index $m$, the gradient with respect to the decoupled loss $\ell_m$ takes the form
\begin{subequations}
\begin{align}
		\frac{\partial\ell}{\partial\theta_m[j]}&=-2\sum_{x\in\ZZ_p}\xi_m[m_p(x,j)]\cdot\ind(\langle e_x+e_j,\theta_m\rangle\geq0)\notag\\
		&\qquad+\frac{2}{p}\sum_{x\in\ZZ_p}\sum_{\tau=1}^p\xi_m[\tau]\cdot\ind(\langle e_x+e_j,\theta_m\rangle\geq0),\label{eq:relu_dyn_theta}\\
		\frac{\partial\ell_m}{\partial\xi_m[j]}&=-\sum_{(x,y)\in\cS_j^p}\max\{\langle e_x+e_y,\theta_m\rangle,0\}+\frac{1}{p}\sum_{j=1}^p\sum_{(x,y)\in\cS_j^p}\max\{\langle e_x+e_y,\theta_m\rangle,0\},\label{eq:relu_dyn_xi}
\end{align}
\end{subequations}
for all $j\in[p]$.
We first evaluate these gradients at the single-frequency $\theta_m[j]=\alpha_m^\star\cdot\cos(\omega_{k^\star}j+\phi_m^\star)$ and $\xi_m[j]=\beta_m^\star\cdot\cos(\omega_{k^\star}j+\psi_m^\star)$ for all $j$, and then to extract the DFT coefficients.
\paragraph{Step 1: Gradient of $\xi_m$.}
First observe that $\max\{x,0\}=(x+|x|)/2$. Then, we have
\begin{align}
	\sum_{(x,y)\in\cS_j^p}\sigma(\langle e_x+e_y,\theta_m\rangle)&=\frac{1}{2}\sum_{(x,y)\in\cS_j^p}\langle e_x+e_y,\theta_m\rangle	+\frac{1}{2}\sum_{(x,y)\in\cS_j^p}|\langle e_x+e_y,\theta_m\rangle|\notag\\
	&=\frac{\alpha_m^*}{2}\sum_{(x,y)\in\cS_j^p}|\cos(\omega_{k^\star}x+\phi_m^\star)+\cos(\omega_{k^\star}y+\phi_m^\star)|,
	\label{eq:relu_dyn_xi_1}
\end{align}
Moreover, by applying the sum-to-product trigonometric identities, we can show that
\begin{align}
	&\frac{1}{2}\sum_{(x,y)\in\cS_j^p}|\cos(\omega_{k^\star}x+\phi_m^\star)+\cos(\omega_{k^\star}y+\phi_m^\star)|\notag\\
	&\qquad=\sum_{(x,y)\in\cS_j^p}|\cos(\omega_k(x+y)/2+\phi_m^\star)|\cdot|\cos(\omega_k(x-y)/2)|\notag\\
	&\qquad=|\cos(\omega_k j/2+\phi_m^\star)|\cdot\sum_{x\in\ZZ_p}|\cos(\omega_kx/2)|\overset{p\rightarrow\infty}{=}\frac{2p}{\pi}\cdot|\cos(\omega_k j/2+\phi_m^\star)|.
	\label{eq:relu_dyn_xi_2}
\end{align}
The last inequality uses the fact that for an odd prime $p$, $\{\omega_kx\}_{x\in\ZZ_p}=\{2kx\pi/p\}_{x\in\ZZ_p}=\{2\pi x/p\}_{x\in\ZZ_p}$, which is a uniform sample of $[0,1]$. 
Thus, in the limit $p\rightarrow\infty$, we have 
$$
\frac{1}{p}\sum_{x\in\ZZ_p}|\cos(\omega_kx/2)|\overset{p\rightarrow\infty}{=}\int_0^1|\cos(\pi x)|\rd x=\frac{1}{\pi}\int_0^\pi|\cos(u)|\rd u=\frac2\pi.
$$
By putting these two asymptotic expressions \eqref{eq:relu_dyn_xi_1} and \eqref{eq:relu_dyn_xi_2} into \eqref{eq:relu_dyn_xi}, we obtain that
\begin{align*}
		\frac{\partial\ell_m}{\partial\xi_m[j]}&=-\frac{p\alpha_m^\star}{\pi}\cdot\biggl(|\cos(\omega_k j/2+\phi_m^\star)|-\frac{1}{p}\sum_{i=1}^p|\cos(\omega_k i/2+\phi_m^\star)|\biggl),\qquad\forall j\in[p].
\end{align*}
Next, we apply DFT with respect to $\nabla_{\xi_m}\ell_m$ in the asymptotic regime $p\rightarrow\infty$.
Let $r_k\in[p]$ denote the multiplication factor in Definition \ref{def:multiplication_factor}, i.e.,  $r_kk^\star=k\bmod p$ for $k,k^\star\in[\frac{p-1}{2}]$.
Then, we have
\begin{align}
\frac{1}{2p}\sum_{j=1}^p|\cos(\omega_{k^\star} j/2+\phi_m^\star)|\cdot\exp(i\cdot\omega_k j)\overset{p\rightarrow\infty}{=}\underbrace{\dfrac{(-1)^{r_k+1}}{\pi(4r_k^2-1)}}_{\displaystyle :=\varsigma_{r_k}}\cdot\exp(-2r_k\phi_m^\star\cdot i).
\label{eq:relu_dyn_xi_grad}
\end{align}
A cosine derivation of \eqref{eq:relu_dyn_xi_grad} proceeds as follows:
\begin{align*}
	&\frac{1}{p}\sum_{j=1}^p|\cos(\omega_{k^\star} j/2+\phi_m^\star)|\cdot\cos(\omega_k j)
	\overset{p\rightarrow\infty}{=}\int_0^1|\cos(\pi k^\star x+\phi_m^\star)|\cdot\cos(2r_k\pi k^\star x)\rd x\\
	&\qquad=\frac{1}{\pi}\int_0^\pi|\cos(u)|\cdot\cos(2r_k\cdot(u-\phi_m^\star))\rd u=\frac{\cos(2r_k\phi_m^\star)}{\pi}\cdot\int_0^\pi|\cos(u)|\cdot\cos(2r_ku)\rd u\\
	&\qquad=\frac{\cos(2r_k\phi_m^\star)}{\pi}\cdot\left(\int_0^{\frac{\pi}{2}}\cos((2r_k+1)u)\rd u+\int_0^{\frac{\pi}{2}}\cos((2r_k-1)u)\rd u\right)=\dfrac{2(-1)^{r_k+1}}{\pi(4r_k^2-1)}\cdot\cos(2r_k\phi_m^\star),
\end{align*}
where the third equality follows from trigonometric identities, evenness of $\sin(2ru)$, and periodicity.
A similar calculation applies to the sine, and combining both real and imaginary parts yields  \eqref{eq:relu_dyn_xi_grad}.
Therefore,  we have
\begin{align*}
	\langle\nabla_{\xi_m}\ell_m,b_{2k}\rangle&=2\sqrt{2}\cdot\alpha_m^\star/\pi\cdot p^{3/2}\cdot\varsigma_{r_k}\cdot\cos(2r_k\phi_m^\star),\\
	\langle\nabla_{\xi_m}\ell_m,b_{2k+1}\rangle&=-2\sqrt{2}\cdot\alpha_m^\star/\pi\cdot p^{3/2}\cdot\varsigma_{r_k}\cdot\sin(2r_k\phi_m^\star),
\end{align*}
and thus $\Delta^k_{\xi_m}/\Delta^\star_{\xi_m}=|\varsigma_{r_k}|/|\varsigma_{r_{k^\star}}|=\Theta(r_k^{-2})$. 
Moreover, it follows by simple calculation
\begin{align*}
	(\mathscr{P}^\parallel_{k^\star}\nabla_{\xi_m}\ell_m)[j]&=\langle\nabla_{\xi_m}\ell_m,b_{2k^\star}\rangle\cdot b_{2k^\star}[j]+\langle\nabla_{\xi_m}\ell_m,b_{2k^\star+1}\rangle\cdot b_{2k^\star+1}[j]\propto\cos(2k^\star j+2\phi_m^\star),
\end{align*}
for all $j\in[p]$ such that we have $\mathscr{P}_{k^\star}^\parallel\nabla_{\xi_m}\ell_m\propto\xi_m$.

\paragraph{Step 2: Gradient of $\theta_m$.}
Following \eqref{eq:relu_dyn_theta}, first notice that
\begin{align*}
	&\sum_{x\in\ZZ_p}\xi_m[m_p(x,j)]\cdot\ind(\langle e_x+e_j,\theta_m\rangle)\\
	&\qquad=\beta_m^\star\cdot\sum_{x\in\ZZ_p}\cos(\omega_{k^\star}(x+j)+\psi_m^\star)\cdot\ind(\cos(\omega_{k^\star}x+\phi_m^\star)+\cos(\omega_{k^\star}j+\phi_m^\star)\geq0)\\
	&\qquad\overset{p\rightarrow\infty}{=}\frac{p\beta_m^\star}{\pi}\cdot|\sin(\omega_{k^\star}j+\phi_m^\star)|\cdot\cos(\omega_{k^\star}j+\psi_m^\star-\phi_m^\star),
\end{align*}
where the last equality results from the following calculation under the asymptotic regime:
\begin{align*}
	&\frac{1}{p}\sum_{x\in\ZZ_p}\cos(\omega_{k^\star}(x+j)+\psi_m^\star)\cdot\ind(\cos(\omega_{k^\star}x+\phi_m^\star)+\cos(\omega_{k^\star}j+\phi_m^\star)\geq0)\\
	&\qquad\overset{p\rightarrow\infty}{=}\int_0^1\cos(2\pi x+\omega_{k^\star}j+\psi_m^\star)\cdot\ind(\cos(2\pi x+\phi_m^\star)+\cos(\omega_{k^\star}j+\phi_m^\star))\rd x\\
	&\qquad=\frac{1}{2\pi}\int_{\substack{\phi_m^\star\leq u\leq\phi_m^\star+2\pi\\\cos(u)\geq-\cos(\omega_{k^\star}j+\phi_m^\star)}}\cos(u+\omega_{k^\star}j+\psi_m^\star-\phi_m^\star)\rd u\\
	&\qquad=\frac{1}{2\pi}\cdot\cos(\omega_{k^\star}j+\psi_m^\star-\phi_m^\star)\cdot\int_{\substack{0\leq u\leq2\pi\\\cos(u)\geq-\cos(\omega_{k^\star}j+\phi_m^\star)}}\cos(u)\rd u\\
	&\qquad=\frac{1}{\pi}\cdot\underbrace{\sin(\arccos(-\cos(\omega_{k^\star}j+\phi_m^\star)))}_{\displaystyle =|\sin(\omega_{k^\star}j+\phi_m^\star)|}\cdot\cos(\omega_{k^\star}j+\psi_m^\star-\phi_m^\star).
\end{align*}
By applying DFT over $\nabla_{\theta_m}\ell_m$ in the asymptotic regime $p\rightarrow\infty$, we can show that
\begin{align}
	&\frac{1}{p}\sum_{j=1}^p|\sin(\omega_{k^\star}j+\phi_m^\star)|\cdot\cos(\omega_{k^\star}j+\psi_m^\star-\phi_m^\star)\cdot\exp(i\cdot\omega_k j)\notag\\
	&\quad\overset{p\rightarrow\infty}{=}-\dfrac{1}{\pi}\cdot\biggl\{
\dfrac{\exp(\{\psi^*_m-(r_k+2)\phi^*_m\}\cdot i)}{r_k(r_k+2)}+\dfrac{\exp(-\{\psi^*_m+(r_k-2)\phi^*_m\}\cdot i)}{r_k(r_k-2)}\biggr\}\cdot\ind(r_k\text{~is~odd}),
\label{eq:relu_dyn_theta_grad}
\end{align}
where $r_kk^\star=k\bmod p$.
The above results follow the calculation below:
\begin{align}
	&\frac{1}{p}\sum_{j=1}^p|\sin(\omega_{k^\star}j+\phi_m^\star)|\cdot\cos(\omega_{k^\star}j+\psi_m^\star-\phi_m^\star)\cdot\cos(\omega_k j)\notag\\
	&\qquad\overset{p\rightarrow\infty}{=}\int_0^1|\sin(2\pi k^\star x+\phi_m^\star)|\cdot\cos(2\pi k^\star x+\psi_m^\star-\phi_m^\star)\cdot\cos(2\pi r_kk^\star x)\rd x\notag\\
	&\qquad=\frac{1}{2\pi}\int_{-\phi_m^\star}^{2\pi-\phi_m^\star}|\sin(u)|\cdot\cos(u+\psi_m^\star-2\phi_m^\star)\cdot\cos(r_k(u-\phi_m^\star))\rd u\notag\\
	&\qquad=\frac{1}{4\pi}\int_0^{2\pi}|\sin(u)|\cdot\cos((r_k+1)u+\psi_m^\star-(r_k+2)\phi_m^\star)\rd u\notag\\
	&\qquad\qquad+\frac{1}{4\pi}\int_0^{2\pi}|\sin(u)|\cdot\cos((r_k-1)u-\psi_m^\star-(r_k-2)\phi_m^\star)\rd u,
	\label{eq:relu_dyn_theta_int1}
\end{align}
where for $h_1=r_k\pm1$ and $h_2=\psi_m^\star-(r_k+2)\phi_m^\star/-\psi_m^\star-(r_k-2)\phi_m^\star$, we can further show show 
\begin{align}
	&\int_0^{2\pi}|\sin(u)|\cdot\cos(h_1u+h_2)\rd u=\cos(h_2)\cdot\int_0^{2\pi}|\sin(u)|\cdot\cos(h_1u)\rd u\notag\\
	&\qquad=(1+(-1)^{h_1})\cdot\cos(h_2)\cdot\int_0^\pi\sin(u)\cos(h_1u)\rd u=\frac{4}{1-h_1^2}\cdot\cos(h_2)\cdot\ind(h_1~\text{is even}).
	\label{eq:relu_dyn_theta_int2}
\end{align}
By combining \eqref{eq:relu_dyn_theta_int1} and \eqref{eq:relu_dyn_theta_int2}, and performing a similar calculation for the sine component, we obtain the result in \eqref{eq:relu_dyn_theta_grad}
This implies that for even $r_k$, we have
\begin{align*}
	\langle\nabla_{\theta_m}\ell_m,b_{2k}\rangle&=-\sqrt{2}\beta_m^\star/\pi\cdot p^{3/2}\cdot\Big\{\frac{\cos(\psi^*_m-(r_k+2)\phi^*_m)}{r_k(r_k+2)}+\frac{\cos(\psi^*_m+(r_k-2)\phi^*_m)}{r_k(r_k-2)}\Big\},\\
	\langle\nabla_{\theta_m}\ell_m,b_{2k+1}\rangle&=-\sqrt{2}\beta_m^\star/\pi\cdot p^{3/2}\cdot\Big\{\frac{\sin(\psi^*_m-(r_k+2)\phi^*_m)}{r_k(r_k+2)}-\frac{\sin(\psi^*_m+(r_k-2)\phi^*_m)}{r_k(r_k-2)}\Big\},
\end{align*}
Hence, $\Delta^k(\theta_m)/\Delta^\star(\theta_m)=\Theta(r_k^{-2})\cdot\ind(r_k\text{~is~even})$ and for all $j\in[p]$
\begin{align*}
	(\mathscr{P}^\parallel_{k^\star}\nabla_{\theta_m}\ell_m)[j]&=\langle\nabla_{\theta_m}\ell_m,b_{2k^\star}\rangle\cdot b_{2k^\star}[j]+\langle\nabla_{\theta_m}\ell_m,b_{2k^\star+1}\rangle\cdot b_{2k^\star+1}[j]\propto\cos(w_{k^\star}j+\phi_m^\star),
\end{align*}
which gives that $\mathscr{P}^\parallel_{k^\star}\nabla_{\theta_m}\ell_m\propto\theta_m$ and completes the proof.
\end{proof}

\section{Comparison with Existing Results}
\label{ap:comparison_with_existing_results}

Our work is closely related to that of \citet{tian2024composing} and \citet{wang2025neural}, who studied a two-layer network for learning group multiplication on an Abelian group, which is a generalization of the standard modular addition task. 
For theoretical convenience, they adopt a modified $\ell_2$-loss to mitigate noisy interactions induced by the constant frequency.
Let $\mathscr{P}_1^{\perp} = \mathrm{I} - \tfrac{1}{p} \mathbf{1}\mathbf{1}^\top$ denote the mean-zero projection, then the loss is defined as
\begin{equation}
 	\tilde\ell(\xi,\theta)=-\sum_{x\in\ZZ_p}\sum_{y\in\ZZ_p}\Big\| \mathscr{P}_1^{\perp}\Big(1/2p\cdot f(x,y;\xi,\theta) -e_{(x+y)\bmod p}\Big)\Big\|^2,
 	\label{eq:def_MSE_loss}
 \end{equation}
where the output of the network is normalized by $1/2p$ within loss calculation.
Unlike \eqref{eq:def_MSE_loss}, we show that minimizing a standard CE loss with a small initialization naturally decouples the dynamics of each frequency (see Theorem \ref{thm:single_freq}), with the constant frequency having a zero gradient throughout training and therefore remaining zero under zero-initialization (see Corollary \ref{cor:lottery_informal}).

\paragraph{Notation Clarifications.}
We begin by explaining the notation used in  \citet{tian2024composing}. 
In their analysis, the (modified) complex Fourier coefficients of the weights are given by $z_{qkm}\in\CC$, where the indices $q\in\{\xi,\theta\}$, $m\in[M]$ and $k\in[p-1]\cup\{0\}$ correspond to the layer, neuron, and frequency, respectively. 
This complex representation is equivalent to the real-valued cosine-sine pairs used in our DFT definition in \S\ref{sec:dft}.
Specifically, for all $k\leq(p-1)/2$, we can show that
\begin{align*}
    z_{\theta k m}=\alpha_m^k/\sqrt{2}\cdot \exp({i\phi_m^k}),\qquad z_{\xi k m}=\beta_m^k/\sqrt{2}\cdot \exp(-{i\psi_m^k}).
\end{align*}
By the conjugate symmetry of the DFT coefficients, our single real component at frequency $k$  determines the complex coefficients for both $k$ and $p-k$.
Therefore, for the higher frequencies $(p+1)/2\leq k\leq p$, the relationship is given by
$$
z_{\theta k m} = \bar z_{\theta (p-k) m}=\alpha_m^k/\sqrt{2}\cdot \exp({-i\phi_m^k}),\qquad z_{\xi k m} = \bar z_{\xi (p-k) m}=\beta_m^k/\sqrt{2}\cdot \exp({i\psi_m^k}),
$$
which completes the one-to-one correspondence between our basis and the one used by \citet{tian2024composing}.

\vspace{-10pt}
\paragraph{Loss Landscape within Fourier Domain.}
\citet{tian2024composing} expresses the loss $\tilde\ell$ from \eqref{eq:def_MSE_loss} in the Fourier domain using $\{z_{qkm}\}$. 
In Theorem 1, they show that the loss $\tilde \ell$ decouples into per-frequency terms  $\tilde \ell = p^{-1}\cdot\sum_{k\neq 0}\tilde \ell_k  + (p-1)/p$, where $\tilde \ell_k$ is a quadratic polynomial whose variables $\{\rho_{k_1k_2k}\}_{k_1,k_2\in[p-1]}$ are third-order monomials of the Fourier coefficients. 
Formally, we have
\begin{align}
\tilde{\ell}_k={\rm poly}\big(\{\rho_{k_1k_2k}\}_{k_1,k_2\in[p-1]}\big),\quad\text{where~~}\rho_{k_1k_2k} = \sum_{m=1}^M z_{\theta k_1 m}z_{\theta k_2 m}z_{\xi k m}.
\label{eq:loss_fourier_poly}
\end{align}
\vspace{-20pt}
\paragraph{Mean-Field Dynamics.}
Building on their analysis of the loss, Theorem 7 in \citet{tian2024composing} presents a heuristic result for the gradient dynamics. By considering a truncated loss polynomial from \eqref{eq:loss_fourier_poly}, a symmetric Gaussian initialization, and the mean-field limit $M\rightarrow\infty$, they show that 
\begin{align}
    \partial_t \rho_{k_1k_2k}(t)=2\cdot\zeta_{k_1k_2k}(t)\cdot\{\ind(k_1=k_2=k)-\rho_{k_1k_2k}(t)\},
    \label{eq:tian_ode}
\end{align}
where $\zeta_{k_1k_2k}(t)$ is a term of constant order along the training.
The solution to the ODE in \eqref{eq:tian_ode} provides a more high-level theoretical basis for the emergence of the key structural properties we identified in our work.
Consider the case $k_1=k_2=k$, we have
$$
\rho_{kkk}(t)=\sum_{m=1}^Mz_{\theta k m}^2(t)\cdot z_{\xi k m}(t)\propto\sum_{m=1}^M\alpha_m^k(t)^2\cdot\beta_m^k(t)\cdot\exp(i\{\psi_m^k(t)-2\phi_m^k(t)\}) \overset{t\rightarrow\infty}{\longrightarrow} 1.
$$
For this to hold, the imaginary part of $\rho_{kkk}(t)$ should converge to $0$:
\begin{align*}
\Im(\rho_{kkk}(t))\propto\sum_{m=1}^M\alpha_m^k(t)^2\cdot\beta_m^k(t)\cdot\sin(\psi_m^k(t)-2\phi_m^k(t))\overset{t\rightarrow\infty}{\longrightarrow} 0.
\end{align*}
This convergence is a direct consequence of the phase alignment dynamic $(2\phi_m^k(t)-\psi_m^k(t))\bmod 2\pi \to 0$ as revealed in \S\ref{sec:phase_alignment_dyn}.
Moreover, if we consider $k_1,k_2\neq k$, then we have
\begin{align*}
\rho_{k_1k_2k}(t)\propto\sum_{m=1}^M\alpha_m^{k_1}(t)\cdot \alpha_m^{k_2}(t)\cdot\beta_m^k(t)\cdot\exp(i\{\psi_m^k(t)-\phi_m^{k_1}(t)-\phi_m^{k_2}(t)\}) \overset{t\rightarrow\infty}{\longrightarrow} 0.
\end{align*}
A sufficient condition for this is that the product of amplitudes $\alpha_m^{k_1}(t)\cdot \alpha_m^{k_2}(t)\cdot\beta_m^k(t)$ goes to zero for all $m\in[M]$.
This corresponds precisely to the single-frequency sparsity we observed in \S\ref{sec:lottery}.
Beyond these, \citet{tian2024composing} also discussed data with a general algebraic structure and its relationship with properties of global optimizers.
Recently, \citet{wang2025neural} formalized these mean-field dynamics by modeling the network's parameters as a continuous distribution. 
This approach allows the training process to be rigorously described as a Wasserstein gradient flow on the measure space.

\end{document}